\newcommand\independent{\protect\mathpalette{\protect\independent}{\perp}}
\def\independent#1#2{\mathrel{\rlap{$#1#2$}\mkern2mu{#1#2}}}
\theoremstyle{plain}
\newtheorem{definition}{Definition}
\theoremstyle{plain}
\newtheorem{theorem}{Theorem}
\theoremstyle{plain}
\theoremstyle{plain}
\theoremstyle{plain}
\theoremstyle{plain}
\newtheorem{proposition}{Proposition}
\theoremstyle{plain}
\newtheorem{claim}{Claim}
\theoremstyle{plain}
\newtheorem{lemma}{Lemma}
\theoremstyle{plain}
\newtheorem{corollary}{Corollary}
\theoremstyle{plain}
\theoremstyle{plain}
\theoremstyle{remark}
\newtheorem{remark}{Remark}
\theoremstyle{remark}
\theoremstyle{remark}
\theoremstyle{plain}
\definecolor{DSgray}{cmyk}{0,0,0,0.7}
\definecolor{DSred}{cmyk}{0,0.7,0,0.7}
\newcommand{\Cov}{\mathrm{cov}}
\newcommand{\pr}{\mathbb{P}}
\newcommand{\E}{\mathbb{E}}
\DeclareMathOperator{\Var}{Var}
\newcommand{\eps}{\varepsilon}
\DeclareMathOperator{\GAL}{GAL}
\DeclareMathOperator{\ReLU}{ReLU}
\DeclareMathOperator{\CReLU}{CReLU}
\DeclareMathOperator{\Rad}{Rad}
\DeclareMathOperator{\Unif}{Unif}
\DeclareMathOperator{\sign}{sgn}
\DeclareMathOperator{\NN}{NN}
\DeclareMathOperator{\TV}{TV}
\DeclareMathOperator{\KL}{KL}
\DeclareMathOperator{\poly}{poly}
\DeclareMathOperator{\Id}{Id}
\DeclareMathOperator{\Bin}{Bin}
\newcommand{\cE}{\mathcal{E}}
\newcommand{\cF}{\mathcal{F}}
\newcommand{\cX}{\mathcal{X}}
\newcommand{\cD}{\mathcal{D}}
\newcommand{\cN}{\mathcal{N}}
\newcommand{\bR}{\mathbb{R}}
\newcommand{\bN}{\mathbb{N}}
\author[1]{Emmanuel Abbe}
\author[1,2,3]{Elisabetta Cornacchia}
\author[4]{Jan Hązła}
\author[4]{Donald Kougang-Yombi}
\affil[1]{\small Ecole Polytechnique F\'{e}d\'{e}rale de Lausanne (EPFL) \\ {\tt\{first.last\}@epfl.ch}.}
\affil[2]{\small Massachusetts Institute of Technology (MIT).}
\affil[3]{\small Inria Paris, DI ENS, PSL University\\ {\tt\{first.last\}@inria.fr}.}
\affil[4]{\small African Institute for Mathematical Sciences (AIMS),
Kigali, Rwanda \\ {\tt\{first.last\}@aims.ac.rw}.}
\date{}
\title{
Learning High-Degree Parities: \\
The Crucial Role of the Initialization
}
\begin{document}

\maketitle

\begin{abstract}
Parities have become a standard benchmark for evaluating learning algorithms. Recent works show that regular neural networks trained by gradient descent can efficiently learn degree $k$ parities on uniform inputs for constant $k$, but fail to do so when $k$ and $d-k$ grow with $d$ (here $d$ is the ambient dimension). However, the case where $k=d-O_d(1)$, including the degree $d$ parity (the \textit{full parity}), has remained unsettled. This paper shows that for gradient descent on regular neural networks, learnability depends on the initial weight distribution. On one hand, the discrete Rademacher initialization enables efficient learning of full parities, while on the other hand, its Gaussian perturbation with large enough constant standard deviation $\sigma$ prevents it. The positive result for full parity is shown to hold up to $\sigma=O(d^{-1})$, pointing to questions about a sharper threshold phenomenon. Unlike statistical query (SQ) learning, where a singleton function class like the full parity is  trivially learnable, our negative result applies to a fixed function and relies on an \emph{initial gradient alignment} measure of potential broader relevance to neural networks learning. 

\end{abstract}

\section{Introduction}
Initialization plays a crucial role in the performance of neural network training algorithms. It has been shown that a proper initialization can help avoiding issues such as vanishing or exploding gradients, or set the foundation for efficient convergence and improved generalization~(\cite{he2015delving,glorot2010understanding,sutskever2013importance,kumar2017weight}). In this work, we show that the choice of initialization can be critical when learning complex functions, such as high-degree parities. 

Parity functions are a well-known class of challenging problems for differentiable learning models, where the task is to determine the parity of bits belonging to an unknown subset of input coordinates. Due to their inherent non-linearity and extreme sensitivity to small input changes, parity functions often serve as a challenging benchmark for evaluating and comparing learning algorithms, including gradient descent on neural networks~(\cite{AS20,daniely2020learning}). For instance, they have been used for showing the advantages of using convolutional architectures over fully connected ones~(\cite{malach2020computational}), the superiority of differentiable models compared to kernel methods~(\cite{abbe2021power}), and the efficacy of curriculum learning in contrast to standard training~(\cite{abbe2024provable,cornacchia2023mathematical}).

Previous research has mainly focused on the family of \textit{sparse} parities, also known as $k$-parities, where the size of the support of the target parity, $k$, is bounded,
i.e., it does not grow with input dimension $d$. It has been shown that on uniform inputs, $k$-parities can be learned by gradient descent algorithms (GD/SGD) on standard architectures, such as $2$-layer fully connected~(\cite{barak2022hidden,abbe2022non-universality,glasgow2023sgd,kou2024matching}), with a sample complexity of $\tilde O(d^{k-1})$\footnote{Where $\tilde O(d^c) = O(d^c \poly \log(d))$, for $c \in \bR$.}~(\cite{kou2024matching}). 

In contrast, for \textit{dense} parities, where the support of the target parity is unbounded ($k=\omega_d(1)$), the picture is less clear. It has been shown that when both $k $ and $d-k$ are unbounded, stochastic gradient descent (SGD) with large batch size and limited gradient precision on fully connected architectures cannot learn dense parities \textit{with any initialization}\footnote{Assuming the initialization is invariant to permutation of the input neurons.}~(\cite{AS20}). 
The difficulty in learning parities stems from their orthogonality on uniform inputs,
leading to a low cross-predictability (CP)~(\cite{AS20}).
However, this only occurs if a given class
of $k$-parities is sufficiently
large. Since the cardinality of this class is $\binom{d}{k}=\binom{d}{d-k}$, this hardness result
 does not extend to \textit{almost-full} parities, where $k = d - O_d(1)$, including the special case of the single $d$-parity (the \textit{full parity}).

In fact, it is known that the full parity, as a symmetric function, is learnable by gradient descent methods with specific initialization~(\cite{nachum2020symmetry}), such as setting all first layer weights to $1$.
For random and symmetric initializations, \cite{abbe2022non-universality} showed that almost-full parities are weakly-learnable\footnote{i.e., an inverse polynomial edge over the trivial estimator is achieved with constant probability.} by gradient descent on a two-layer fully connected network with discrete Rademacher initialization.

In this paper, we focus on almost-full parity functions and provide a deeper understanding of how the initialization impacts their learning. First, we show that SGD on a two-layer fully connected $\ReLU$ network with Rademacher initialization can achieve perfect accuracy for $k=d-O_d(1)$, thus going beyond weak learning. 
Next, we investigate the robustness of this positive result and argue that it is a special case. In particular, we prove that with Gaussian initialization GD with limited gradient precision with the correlation loss cannot learn high degree parities on
two layer ReLU networks. We then introduce an intermediate case of \textit{perturbed}-Rademacher initialization, where the weights are initialized from a mixture of two Gaussian distributions with means $+1$ and $-1$ and a standard deviation of $\sigma$. 
In the case of full parity we prove that when $\sigma = O(d^{-1})$, the positive result still holds, while if $\sigma$ is a large enough constant, learning does not occur. We leave the analysis for the remaining range of $\sigma$ and the investigation of a potential threshold to future work.
While our theoretical analysis focuses on Gaussian perturbations, our experiments also explore other perturbations, both discrete and continuous, supporting our claim that the success of the Rademacher initialization is a special case. In our experiments, we also explore other settings beyond our theoretical analysis
in order to justify the robustness of our findings.

Crucially, the proof technique for our negative result does not rely on constructing an orbit class or using measures for function classes (as in the cross-predictability case). Instead, it introduces a new approach centered on a novel measure, the \emph{initial gradient alignment}, which may be relevant for evaluating the suitability of an initialization for a target distribution beyond the specific parity setting discussed in this paper.

\section{Related Work}
\paragraph{Learning Parities.} Learning parities on uniform inputs is easy with specialized techniques like Gaussian elimination over two-element fields or through emulation networks trained with Stochastic Gradient Descent (SGD) using small batch sizes~(\cite{AS20}). However, in the statistical query (SQ) setting~(\cite{kearns1998efficient}) and with gradient descent methods that have limited gradient precision~(\cite{AS20}), learning parities presents computational barriers. Recent works have focused on sparse parities, or $k$-parities (where $k = O_d(1)$), as a classical benchmark for evaluating learning algorithms~(\cite{suzuki2024feature,edelman2023pareto,barak2022hidden,daniely2020learning,malach2021quantifying,abbe2022non-universality,malach2020computational,abbe2024provable,cornacchia2023mathematical,wei2019regularization,ji2019polylogarithmic}).
In particular, in the special case of $k=2$ (i.e. the XOR problem),~\cite{glasgow2023sgd} proved a sample complexity upper bound of $\tilde O(d)$ on a 2-layer network of logarithmic width, while for general $k$,~\cite{kou2024matching} proved a sample complexity of $\tilde O(d^{k-1})$, matching the SQ lower bounds in both cases. 
For dense parities, it has been established that if both $k$ and $d-k$ grow with $d$, SGD with large batch sizes 
fail at learning in polynomial time~(\cite{AS20}).
We build on top of~\cite{abbe2022non-universality}, which showed that almost-full parities are weakly-learnable by gradient descent on a two-layer fully connected network with Rademacher initialization. 
We provide a more complete picture on the role of the initialization for learning the full parity, and we argue that the Rademacher initialization is in some sense a special case.

\paragraph{The Role of the Initialization.} Several studies have shown that initialization is crucial for optimizing neural networks, preventing vanishing or exploding gradients~(\cite{glorot2010understanding}), speeding up convergence~(\cite{he2015delving}), ensuring informative gradient flow in early layers~(\cite{sutskever2013importance}), and enabling learning challenging targets~(\cite{zhang2019fixup, hanin2018start}). While these works focus on improving learning through tailored initializations, our paper addresses the more fundamental question of what can gradient descent learn with standard initializations. Thus, our work aligns more closely with~(\cite{abbe2022non-universality,AS20}), which characterize functions learnable by gradient descent on shallow networks, but without exploring initialization. Another work~(\cite{edelman2023pareto}) shows that sparse initialization aids in learning sparse parities. However, the main challenge in their case is identifying the support of the sparse parity. In contrast, when learning the full parity, sparsifying the Rademacher initialization does not aid in learning the full parity (see Figure~\ref{fig:other_init}, Section~\ref{sec:experiments}).

\paragraph{Complexity Measures.} Previous works have studied the sample and time complexity of learning with SGD on neural networks, proposing various measures, such as: the noise sensitivity (\cite{o'donnell_2014,Zhang2021PointerVR,abbe2022learning,hahn2024sensitive}), which applies mostly to settings with i.i.d.~inputs and is related to the degree of the functions, is known to be loose for strong learning   (\cite{abbe2022merged,abbe2023sgd}); the {\it globality degree} (\cite{abbe2024far}), which generalizes the degree and sensitivity notions to non-i.i.d.~settings but remains focused on weak rather than strong learning; the statistical query (SQ) dimension (\cite{kearns1998efficient,feldman2016general}) and the cross-predictability (\cite{AS20}), which are usually defined for a class of targets/distributions rather than a single distribution (in particular the full parity is efficiently SQ learnable since there is a single function); the neural tangent kernel (NTK) alignment~(\cite{jacot2018neural,cortes}) that are limited to the NTK framework; 
the information exponent (\cite{arous2021online,bruna2023single}), generative exponent~(\cite{damian2024computational,dandi2024benefits}), leap~(\cite{abbe2023sgd}) and Approximate Message Passing (AMP) complexity~(\cite{troiani2024fundamental}), which measure when fully connected neural networks can strongly learn target functions on i.i.d.\ or isotropic input distributions and sparse or single/multi-index functions. In particular, few works studied measures based on the alignment between the networks initialization and the target distribution, as in this paper.
(\cite{mok2022demystifying,ortiz2021linearized}) studied the label-gradient-alignment (LGA), defined as the norm of the target function in the RKHS induced by the NTK~(\cite{jacot2018neural}) at initialization, showing its empirical relevance for predicting network performance. In contrast, we focus on a theoretical analysis, with our measure of initial gradient alignment being loss-dependent. \cite{AbbeINAL} defined the initial alignment (INAL) as the maximum average correlation of any neuron with the target, providing a lower bound for functions with small INAL, though their result relies on input embedding and orbit hardness, which does not apply to almost-full parities.

\section{Setting and Informal Contributions}
We consider learning with a neural network of $P$ parameters, $\NN(x; \theta)$, $\theta \in \mathbb{R}^P$, initialized as $\theta^0 \sim \mathcal{D}^0$, for some distribution $\cD^0$, and trained using noisy stochastic gradient descent (noisy-SGD, see Def.~\ref{eq:noisy_SGD}). We assume that the network has access to data samples $(x,f(x))$, where $x \sim \cD$, for $\cD$ being a distribution in $\bR^d$ and $f: \mathbb{R}^d \to \{ \pm 1 \}$ is an unknown target function.  
We focus on learning parity functions on uniform inputs ($\cD= \Unif\{ \pm 1 \}^d$). A parity function over a subset $S$ of the input coordinates $[d] := \{1,2,\dots,d\}$ is a function $\chi_S : \{\pm 1 \}^d \to \{\pm 1\}$, defined as $\chi_S(x) := \prod_{i \in S} x_i$, where $S \subseteq [d]$. We will focus on the case where $S= [d]$ (full parity) or $|S| = d - O_d(1)$ (almost full parity). 
Let us define our notion of perturbed initialization. 

\begin{definition}[Perturbed Initialization] \label{def:perturbed_init}
Consider a neural network with parameters $\theta \in \bR^P$ and two independent random vectors
$A,H_{\sigma}\in\mathbb{R}^P$ with independent coordinates where $A$ is arbitrary and $H_\sigma$ has independent entries $(H_{\sigma})_p\sim \cN(0,\sigma^2 \cdot \mathbb{I}_P)$. We say that a neural network $\NN(x;\theta)$ has a \emph{$(A,\sigma)$-perturbed initialization} with noise level $\sigma$ if its parameters are initialized to 
$\theta^0_p = A_p+\sqrt{\Var A_p}(H_{\sigma})_p$.
\end{definition}
We will mostly consider the case where $A \sim \Unif \{ \pm 1 \}^P$ (Rademacher initialization). In this scenario, we refer to the initialization as \textit{$\sigma$-perturbed} Rademacher.

\begin{theorem}[Informal, Positive Full Parity] \label{thm:pos_informal}
Let $f(x) = \chi_d(x)$. A two-layer $\ReLU$ network with some $\poly(d)$ hidden units and $\sigma$-perturbed Rademacher initialization with $\sigma =O(d^{-1})$, trained by GD or SGD with any batch-size with the correlation\footnote{The correlation loss is defined as $ L_{\rm corr}(y,\hat y) = - y \hat y$.} or the hinge loss, will learn $f$ to perfect accuracy in $\poly(d)$ steps.
\end{theorem}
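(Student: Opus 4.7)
The plan is to reduce the full-parity problem to an exact representation of a symmetric function as a short sum of shifted ReLUs, and then to analyze how the two-layer dynamics find this representation.

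First, I would establish a structural fact. Because $\chi_d(x)=\prod_i x_i$ is a symmetric Boolean function, it depends only on $z:=\sum_i x_i$, with $\chi_d(x) = (-1)^{(d-z)/2}$. For any Rademacher weight $w\in\{\pm 1\}^d$ and uniform $x$, $w\cdot x$ has the same distribution as $z$ and takes the same integer values in $\{-d,-d+2,\dots,d\}$. Viewing the corresponding univariate function $f:z\mapsto (-1)^{(d-z)/2}$ as a piecewise-linear interpolant on these $d+1$ grid points yields an exact expansion $\chi_d(x)=\sum_{k=1}^{O(d)}c_k\ReLU(w_k\cdot x+b_k)$ with integer biases $b_k$ ranging over $\{-d,\dots,d\}$ and explicit coefficients $c_k$. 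This places $\chi_d$ in the representable hypothesis class of a $\poly(d)$-wide two-layer ReLU network whose first layer is drawn from the Rademacher distribution.

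Second, I would analyze the training dynamics. For the correlation loss, the output-layer gradient after one step aligns the second-layer weights with the correlations $\E[\chi_d(x)\,\ReLU(w_k\cdot x+b_k)]$, which are nontrivial for enough $(w_k,b_k)$; this is essentially the weak-learning content of \cite{abbe2022non-universality}. To upgrade from weak to strong learning I would argue that (i) the first-layer gradients concentrate at scale $O(d^{-1/2})$ per coordinate, so the first-layer weights remain near their Rademacher initialization over $\poly(d)$ iterations and the features stay approximately intact; and (ii) the remaining second-layer problem becomes a linear classification problem in these features whose feasibility is guaranteed by the structural expansion of step~1. Under the hinge loss, a perceptron-style margin-amplification argument combined with this feasibility witness pushes the error to zero in $\poly(d)$ steps, giving perfect accuracy for both GD and SGD at any batch size.

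Third, for the $\sigma$-perturbed Rademacher initialization with $\sigma=O(d^{-1})$, the Gaussian corruption $\sigma h$ contributes $\sigma h\cdot x$ to each preactivation, with standard deviation $\sigma\sqrt{d}=O(d^{-1/2})$, far smaller than the $2$-spacing between consecutive values of $z$. A union bound over the $\poly(d)$ SGD iterations keeps this contribution below a constant threshold, so away from a vanishing slab around each threshold the ReLU activation patterns match the unperturbed case, and the expansion in step~1 remains approximately valid. The hinge loss absorbs this $O(d^{-1/2})$ additive noise without losing the margin provided by the feasibility witness, so the same perceptron-style analysis yields perfect accuracy.

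The main obstacle is the weak-to-strong upgrade: the prior work only guarantees an inverse-polynomial edge, whereas perfect accuracy requires both the explicit ReLU-sum representation and a careful joint control of the two-layer dynamics in the presence of stochastic gradients. A secondary difficulty is, for the perturbed case, preventing the Gaussian noise from accumulating across $\poly(d)$ SGD updates beyond the constant safety margin afforded by the integer structure of $z$; handling this requires coupling the perturbed and unperturbed trajectories and exploiting the fact that the hinge loss only updates on margin-violating samples, which are rare once the feasibility witness has been approximated.
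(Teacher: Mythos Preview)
Your hinge-loss and perturbed-initialization sketches are in the right spirit and roughly match the paper: linear separability of the hidden-layer embedding plus a perceptron-type bound on nonzero updates (the paper invokes \cite{nachum2020symmetry}), and for $\sigma=O(d^{-1})$ a concentration bound on $\sum_i|g_i\cdot x|$ showing the margin survives.

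There is, however, a genuine gap in the correlation-loss part. You stop at weak learning (``nontrivial correlations for enough $(w_k,b_k)$'') and then pivot to a hinge-loss argument to upgrade to strong learning; you never explain why \emph{correlation loss itself} reaches perfect accuracy. The paper's route is much more direct and does not need any weak-to-strong upgrade or first-layer drift control: it freezes the first layer and trains only $v$. The key identity is that for every Rademacher $w_i$,
\[
\E_x\big[\chi_d(x)\,\ReLU(w_i\cdot x+b)\big]=\Big(\prod_j w_{ij}\Big)\,\Delta_{d,b},\qquad
\Delta_{d,b}:=\E_x\big[(-1)^{(d-\sum_j x_j)/2}\ReLU\big(\textstyle\sum_j x_j+b\big)\big],
\]
so after one GD step from $v^0=0$ one has $v_i^1=\gamma(\prod_j w_{ij})\Delta_{d,b}$ and
$N^1(x)=\gamma\sum_i \Delta_{d,b}(\prod_j w_{ij})\,\ReLU(w_i\cdot x+b)$.
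For fixed $x$ this is a sum of $n$ i.i.d.\ bounded terms with mean $\gamma n\Delta_{d,b}^2\,\chi_d(x)$; Hoeffding plus a union bound over $2^d$ inputs yields $\sign(N^1(x))=\chi_d(x)$ for all $x$ once $n\ge\Omega(dR^2/\Delta^2)$. A short combinatorial computation gives $\Delta_{d,b}^2=\Theta(1/d)$ for the stated bias choice. Your proposal misses this one-step concentration argument entirely.

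A secondary issue is your structural step. The piecewise-linear expansion you describe writes $\chi_d(x)=\sum_k c_k\ReLU(1^d\cdot x+b_k)$ with a \emph{single} direction $w=1^d$ and a \emph{range} of biases $b_k$. This does not directly certify separability for the actual random-feature map, where each neuron has an independent Rademacher $w_i$ and the paper uses a \emph{single} bias value. Bridging that gap is exactly what the $\Delta$ computation above does: it shows every Rademacher neuron contributes the same magnitude of correlation (with sign $\prod_j w_{ij}$), so the random embedding with one bias already separates $\chi_d$ with margin $\Theta(\sqrt{n/d})$. Your deterministic representation, while correct, does not by itself give you this.
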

For our negative result, we introduce the following notion of Gradient Alignment. 
\begin{definition}[Gradient Alignment]
For a neural network $\NN(x;\theta)$, an input distribution $ \cD$,  a target function $f: \bR^d \to \bR$, and a loss function $L: \bR \times \bR \to \bR$, we denote the population gradient as
\begin{align}
    \Gamma_f(\theta) := \E_x \left[ \nabla_\theta 
    L(\NN(x;\theta),f(x)) \right]\;.
\end{align}
If $\theta$ is a random initialization
then we define
the \emph{gradient alignment} of $\theta$ as
\begin{align}
\GAL_f(\theta)&:=\E_{\theta}\|\Gamma_f(\theta) - \Gamma_r(\theta)\|^2_2\;,
\end{align}
where $\Gamma_r(\theta):=\E_{x,y}[\nabla_\theta L(\NN(x;\theta),y)]$
for $y\sim\Rad(1/2)$ and independent of $x$.
That is, $\Gamma_r(\theta)$ is the gradient
of a random classification task.
\end{definition}
We remark that for the squared and the correlation loss, the Gradient Alignment at initialization corresponds to the Label-Gradient-Alignment (LGA) of~\cite{ortiz2021linearized,mok2022demystifying}, thus our GAL generalizes LGA to other losses.

We first prove that, under some conditions, if the Gradient Alignment at initialization is small, the network does not learn. We remark that this result holds for general input distributions (beyond Boolean and uniform) and for all networks with a linear output layer (see Section~\ref{sec:small_INAL_no_learning} for details).
\begin{theorem}[Informal, Negative General] \label{thm:neg_general_informal} 
Let $f: \bR^d \to \{ \pm 1 \}$ be a target function, and let $\NN(x;\theta)$ be a neural network with a linear output layer, trained by noisy-GD with noise level $\tau$ and the correlation loss. Assume either: 1) Gaussian initialization of the weights and homogeneous activation, or 2) $(A,\sigma)$-perturbed initialization, polynomially bounded gradients, and $\tau$ small enough (see details in Corollary~\ref{cor:poly-bounded-gradient}). If $\GAL_f(\theta^0) < \exp(-\Omega(d))$, then after $\poly(d)$ training steps, the network will achieve an accuracy of at most $\frac{1}{2} + O(\exp(-\Omega(d)))$.
\end{theorem}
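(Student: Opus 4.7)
The plan is a noise-injection / statistical-query-style coupling argument. From the same initialization $\theta^0 \sim \cD^0$ I would run two noisy-GD processes in parallel: the trajectory $(\theta^t_f)_{t \leq T}$ driven by the true target $f$, and the trajectory $(\theta^t_r)_{t \leq T}$ driven by independent random labels $y \sim \Rad(1/2)$. Writing $\Delta(\theta) := \Gamma_f(\theta) - \Gamma_r(\theta)$, the hypothesis is precisely $\E \|\Delta(\theta^0)\|^2 = \GAL_f(\theta^0) < \exp(-\Omega(d))$.

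The first step is to compute the per-step KL divergence between the two transition kernels. Because noisy-GD injects isotropic Gaussian noise of variance $\tau^2$ at every step, conditional on the current parameter $\theta$ the one-step KL equals $\|\Delta(\theta)\|^2/(2\tau^2)$. By the chain rule for KL divergence, the total KL between the joint laws of $(\theta^0, \dots, \theta^T_f)$ and $(\theta^0, \dots, \theta^T_r)$ after $T = \poly(d)$ steps is at most
\begin{equation*}
\frac{T}{2\tau^2}\,\sup_{t \leq T}\, \E\,\|\Delta(\theta^t_r)\|^2 .
\end{equation*}
Pinsker's inequality then yields a total-variation bound between the laws of $\theta^T_f$ and $\theta^T_r$, and if this bound is $\exp(-\Omega(d))$, any classifier extracted from $\theta^T_f$ has accuracy on $f$ within $\exp(-\Omega(d))$ of the same extraction applied to $\theta^T_r$.

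The main obstacle, and hence the heart of the proof, is propagating the small gradient alignment from time $0$ to all times $t \leq T$. The crucial simplification offered by the correlation loss is that $\Gamma_r(\theta) \equiv 0$ for every $\theta$: since $L_{\rm corr}$ is linear in $y$ and $y$ is mean zero and independent of $x$, the population gradient under random labels vanishes identically. Consequently the random-label trajectory collapses to $\theta^t_r = \theta^0 + \eta\tau\sum_{s < t} \xi_s$, i.e., the initialization plus accumulated isotropic Gaussian noise. In Case~1 (centered Gaussian initialization, positively homogeneous activation), the marginal law of $\theta^t_r$ remains a centered Gaussian whose variance grows by at most a $\poly(d,T)$ factor, and a change-of-variables using the positive homogeneity of $\NN$ expresses $\E\|\Delta(\theta^t_r)\|^2$ as $\poly(d)\cdot\GAL_f(\theta^0)$, still exponentially small. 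In Case~2 (perturbed initialization, polynomially bounded gradients), $\|\Delta(\theta)\|^2 \leq \poly(d)$ holds pointwise; choosing $\tau$ small enough that the accumulated noise over $T$ steps stays of order $\sigma^2$ keeps the Radon--Nikodym derivative between the law of $\theta^t_r$ and that of $\theta^0$ polynomially bounded, after which a Cauchy--Schwarz step transfers the exponentially small bound at $t = 0$ to a uniform bound over $t \leq T$.

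Finally, the TV bound is converted to an accuracy guarantee by an elementary symmetrization: the process $(\theta^t_r)$ is a measurable function of $\theta^0$ and the noise sequence, hence independent of $f$, so averaging over an equivalent target $-f$ shows that the expected accuracy on $(x,f(x))$ of any classifier read out from $\theta^T_r$ is at most $1/2$. Combining with the TV bound gives the claimed $\tfrac{1}{2} + \exp(-\Omega(d))$.
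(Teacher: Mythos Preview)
Your plan is essentially the paper's proof: couple with a ``junk flow'' driven by random labels, bound the step-wise KL by $\|\Delta(\theta)\|^2/(2\tau^2)$, apply Pinsker, use $\Gamma_r\equiv 0$ under correlation loss so the junk flow is just initialization plus Gaussian noise, and then propagate the $\GAL$ bound to all $t\le T$ via homogeneity (Case~1) or a density comparison under small $\tau$ (Case~2). The paper does the TV bound by applying Pinsker step by step rather than chain rule for KL once, and in Case~2 it splits into a high-probability region where the Gaussian density ratio is pointwise controlled rather than using Cauchy--Schwarz on the Radon--Nikodym derivative, but these are cosmetic differences.

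The one genuine slip is your final ``symmetrization'' step. Saying that $(\theta^t_r)$ is independent of $f$ and then ``averaging over an equivalent target $-f$'' only tells you that the average of the accuracies on $f$ and on $-f$ is $1/2$; for a \emph{fixed} $f$ this does not give accuracy $\le 1/2$. The paper's argument here is what actually uses the linear-output-layer and $0$-symmetric initialization hypotheses: under correlation loss the junk flow only adds centered Gaussian noise, so the output-layer weights $v$ in $\theta^T_r$ remain symmetric about $0$ and independent of the hidden weights. Since $\NN(x;\theta)$ is linear in $v$, the law of $\NN(x;\theta^T_r)$ is symmetric about $0$ for every fixed $x$, and hence $\Pr[\sign(\NN(x;\theta^T_r))=f(x)]=1/2$ exactly. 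Replace your symmetrization sentence with this observation and the argument goes through.
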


We then apply this result to the case of almost-full parities on uniform inputs.

\begin{theorem}[Informal, Negative Almost-Full Parities] 
\label{thm:neg_almostfull_informal}
    Let $f(x) = \chi_S(x)$, for $S \subseteq [d] $ such that $|S|\ge d/2$.  
    Noisy-GD with correlation loss and any noise level $\tau= \Omega(1/\poly(d))$ on any two-layer fully connected $\ReLU$ network of $\poly(d)$ size, initialized with Gaussian initialization will not achieve accuracy better than random guessing in $\poly(d)$ training steps. 
\end{theorem}

We expect \Cref{thm:neg_almostfull_informal} to hold also in
case of $\sigma$-perturbed Rademacher initialization for $\sigma>\sigma^*$
for some fixed $\sigma^*>0$. To that end in \Cref{sec:perturbed-gal}
we prove the gradient alignment bound for the hidden layer weights
in the perturbed case. Together with a similar bound for the output
layer weights (which we omit from this version of the paper)
that would give the statement of \Cref{thm:neg_almostfull_informal}
also for the $\sigma$-perturbed initialization, with $\sigma>\sigma^*$.

Full versions of Theorems~\ref{thm:pos_informal} and~\ref{thm:neg_almostfull_informal} presented in the following sections
provide the following rigorous separation between Rademacher and Gaussian initializations:
Noisy-GD for correlation loss, when applied to a two-layer fully connected $\ReLU$ network with some $\poly(d)$ hidden neurons, can learn the full parity function in $\poly(d)$ steps if the network is initialized with Rademacher weights. However, using Gaussian initialization while leaving all
other aspects of the algorithm unchanged requires exponential time to learn.
Furthermore, the negative result is robust to details like changing hyperparameters,
and as discussed above, both positive and negative results are also valid for
some ranges of $\sigma$-perturbed Rademacher initializations.

\section{Positive Result for Rademacher Initialization}
\label{sec:positive}

In both positive and negative results we will be working with the 
noisy SGD and GD algorithm specified below:
\begin{definition}[Noisy-(S)GD] \label{def:noisy-SGD}
Consider a neural network $\NN(.;\theta)$, with initialization of the weights $\theta^{0}$. Let $f:\cX \to \bR$ be a target function defined on an input space $\cX$. Assume we are given fresh samples $ x \sim \cD$, for some input distribution $\cD$ defined on $\cX$.
Given a weakly differentiable loss function $L$, the updates of the noisy-SGD algorithm with learning rate $\gamma$ are defined by
    \begin{align} \label{eq:noisy_SGD}
    \theta^{t+1} = \theta^{t} - \gamma \left(  \frac{1}{B} \sum_{s=1}^B \nabla_{\theta^{t}} L(\NN(x^{s};\theta^{t}), f(x^s))  + Z^t \right),
\end{align}
where for all $t \in \{0,\ldots,T-1\}$, $Z^{t}$ are i.i.d.~$\cN(0,\tau^2)$, for some \emph{noise level} $\tau$, and they are independent from other variables, and $B$ is the batch size. If the average over the batch size $ \frac{1}{B} \sum_{s=1}^B \nabla_{\theta^{t}} L(\NN(x^{s};\theta^{t}), f(x^s))$ is replaced by the population mean $\E_{x\sim \cD} \left[ \nabla_{\theta^{t}} L(\NN(x;\theta^{t}), f(x)) \right] $, we refer to the algorithm as (full batch)~\emph{noisy-GD}.
\end{definition}

In this section we consider two layer neural networks with Rademacher
initialization for the hidden
layer weights. Our results imply 
that with large enough poly$(d)$ number of hidden neurons, 
the hidden layer embedding induced by the Rademacher distribution
makes the almost-full parities for $k=d-O_d(1)$ linearly separable. 
Then:
\begin{enumerate}
\item
When trained with the correlation loss on the uniform input distribution, 
the network achieves \emph{perfect accuracy in one step of full GD} 
or in poly$(d)$ steps of SGD.
\item 
When trained with the hinge loss on \emph{any input distribution},
the neural network achieves classification error $\eps$
in $\poly(d)/\eps$ steps of SGD.
(For simplicity we restrict this result to full parity.)
\end{enumerate}

As mentioned, our positive 
result for the full parity holds also for a perturbed Rademacher initialization
with deviation up to $C/d$ for some constant $C>0$. We demonstrate this
for hinge loss, see \Cref{sec:positive-hinge}.

\subsection{GD and SGD with correlation loss}
\label{sec:positive-correlation}

We consider a fully connected network $N(x)=\sum_{i=1}^n v_i\sigma(w_i.x+b_i)$, where $\sigma$ is an arbitrary activation function. In the corollaries we will take $\sigma$
to be either ReLU or its clipped version.
The network is trained with correlation
loss $L(y,\hat{y})=-y\hat{y}$ where only the output layer weights
$v$ are trained. This is in contrast to the hinge loss result in \Cref{sec:positive-hinge}
where we allow training of both layers.
The gradient of output weights on input $x\in\{\pm 1\}^d$ is given by
$\nabla_{v} L = -f_a(x)\sigma(Wx+b),$
where $W$ is an $n\times d$ matrix with rows $w_1,\ldots,w_n$
and $f_a(x)=\prod_{i=1}^{d-a} x_i$ is the almost full parity function.
During training, the inputs are sampled from the uniform distribution
on $\{\pm 1\}^d$.

The hidden layer weights
$w_i$ are initialized as i.i.d.~Rademacher
and the output weights as $v_i=0$. 
The biases are i.i.d.~according to some distribution $b_i\sim \mathcal{B}$.
Our result depends on the following quantity:
\begin{align}
    \label{eq:delta_general}
    \Delta^{(a)}_{d,b,\sigma}:=\E_{x\sim\{\pm 1\}^d}\left[(-1)^{(d-a-\sum_{j=1}^{d-a} x_j)/2}
\sigma\left(\sum_{j=1}^d x_j+b\right)\right]\;.
\end{align}
In the following, let us assume that $\E_{b\sim\mathcal{B}}\left(\Delta^{(a)}_{d,b,\sigma}\right)^2=\Delta^2$
and $|\sigma(w\cdot x+b)|\le R$, where both
$\Delta^2$ and $R$ can vary with $d$. Furthermore, we assume that there exists
a constant $C$ not depending on $d$ such that for every
$b$ in the support of $\mathcal{B}$ it holds
$|\Delta_{d,b,\sigma}|\le C\Delta$.
(The last assumption is satisfied for any distribution $\mathcal{B}$
with a support of constant size. The distributions we consider in the corollaries
have this property.)

\begin{theorem}
\label{thm: Positive-result-gd}
Consider a network as above trained 
for one step with the GD algorithm.
If $n\ge\Omega(d\frac{R^2}{\Delta^2})$, then, except with probability
at most $2\exp(-d)$ over the choice of initialization, we have
$\sign(N^1(x))=f_a(x)$
for every $x\in\{\pm 1\}^d$, where $N^t(x)$ denotes the output
of the network at time $t$.
This conclusion holds also
in the presence of GD noise of magnitude $\tau$ up to
$O(\Delta^2/R)$.
\end{theorem}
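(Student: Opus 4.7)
The plan is to carry out a direct computation after the single GD step, leveraging the symmetry between Rademacher hidden weights and the parity target via a change of variables. With correlation loss and $v^0=0$, after one step of full-batch GD we have $v_i^1 = \gamma\,\E_x[f_a(x)\sigma(w_i\cdot x+b_i)]$. For each fixed Rademacher vector $w_i\in\{\pm 1\}^d$ and bias $b_i$, I substitute $y := w_i\odot x$; since $w_{i,j}^2=1$, this is a measure-preserving bijection on $\{\pm 1\}^d$ that sends $w_i\cdot x$ to $\sum_{j=1}^d y_j$ and $f_a(x)=\prod_{j=1}^{d-a}x_j$ to $\epsilon_i\prod_{j=1}^{d-a}y_j$, where $\epsilon_i := \prod_{j=1}^{d-a}w_{i,j}\in\{\pm 1\}$. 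Recognizing $\prod_{j=1}^{d-a}y_j = (-1)^{(d-a-\sum_{j=1}^{d-a}y_j)/2}$ then yields the clean identity
\begin{align*}
v_i^1 = \gamma\,\epsilon_i\,\Delta^{(a)}_{d,b_i,\sigma}.
\end{align*}

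Next, for any fixed input $x\in\{\pm 1\}^d$ the signed margin after one step is
\begin{align*}
f_a(x)\,N^1(x) = \gamma\sum_{i=1}^n \epsilon_i\,\Delta^{(a)}_{d,b_i,\sigma}\,f_a(x)\,\sigma(w_i\cdot x+b_i),
\end{align*}
and I would show that each summand is a bounded random variable over the initialization $(w_i,b_i)$ with strictly positive mean. Applying the same change of variables once more, this time averaging over $w_i$ with $x$ fixed, the factor $\epsilon_i f_a(x)$ becomes $\prod_{j=1}^{d-a}y_j$, which again combines with $\sigma(\sum_j y_j + b_i)$ and $\Delta^{(a)}_{d,b_i,\sigma}$ to give expectation $\E_{b_i}[(\Delta^{(a)}_{d,b_i,\sigma})^2] = \Delta^2$, independently of $x$. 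Each summand is bounded in absolute value by $|\Delta^{(a)}_{d,b_i,\sigma}|\cdot R \le C\Delta R$, so Hoeffding's inequality gives
\begin{align*}
\pr\!\left[\frac{1}{n}\sum_{i=1}^n \epsilon_i\,\Delta^{(a)}_{d,b_i,\sigma}\,f_a(x)\,\sigma(w_i\cdot x+b_i)\ge \tfrac{\Delta^2}{2}\right]\ge 1-2\exp\!\left(-c\,\frac{n\Delta^2}{R^2}\right)
\end{align*}
for an absolute constant $c>0$, and a union bound over the $2^d$ inputs shows that whenever $n\ge\Omega(dR^2/\Delta^2)$ we have, with probability at least $1-2\exp(-d)$, that $f_a(x)N^1(x)>0$ simultaneously for every $x\in\{\pm 1\}^d$, i.e.\ $\sign(N^1(x))=f_a(x)$.

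For the noisy-GD extension, the injected noise changes $v_i^1$ to $\gamma\epsilon_i\Delta^{(a)}_{d,b_i,\sigma}-\gamma Z_i$ with $Z_i\sim\cN(0,\tau^2)$, adding a term $-\gamma\sum_i Z_i\sigma(w_i\cdot x+b_i)$ to the margin. Conditional on the initialization this is Gaussian with variance at most $\gamma^2\tau^2 nR^2$, so a standard Gaussian tail estimate combined with another union bound over $x$ shows it is dominated by the main term $\gamma n\Delta^2/4$ as long as $\tau=O(\Delta^2/R)$, preserving the sign conclusion. The hard part is really just recognizing that one and the same change-of-variables identity cleanly isolates $\Delta^{(a)}_{d,b_i,\sigma}$ in both averaging steps -- over $x$ in the gradient and over $w_i$ in the prediction -- and that the assumption $|\Delta^{(a)}_{d,b_i,\sigma}|\le C\Delta$ on the support of $\mathcal{B}$ is exactly what upgrades per-$x$ concentration into a uniform statement over the exponentially large domain; once this symmetry is exploited, only bounded-differences concentration and Gaussian tail estimates remain.
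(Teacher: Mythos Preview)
Your proposal is correct and follows essentially the same route as the paper: the same Rademacher change of variables $y=w_i\odot x$ to extract $\epsilon_i\Delta^{(a)}_{d,b_i,\sigma}$ from the gradient, the same second application to compute $\E_{w_i,b_i}[\text{summand}]=\Delta^2$, and the same Hoeffding-plus-union-bound to finish. The only cosmetic difference is in the noise term: the paper bounds $R\sum_i|\xi_i|$ once via sub-Gaussian concentration (uniform in $x$, so no second union bound), whereas you bound the Gaussian $\sum_i Z_i\sigma(w_i\cdot x+b_i)$ per input and then union-bound over $x$; both arguments go through under $\tau=O(\Delta^2/R)$ and $n\ge\Omega(dR^2/\Delta^2)$.
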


\begin{theorem}
\label{thm:positive-sgd}
Consider the above network trained
with SGD of any batch size. Let $n\ge\Omega(d\frac{R^2}{\Delta^2})$.
Then, except with probability $3\exp(-d)$, after $T\ge\Omega(\frac{R^4}{\Delta^4}(d+\log n))$ steps, the network
predicts correctly $\sign(N^T(x))=f_a(x)$ for every $x\in\{\pm 1\}^d$
in the presence of GD noise of magnitude $\tau$ up to 
$O(\frac{\sqrt{T}\Delta^2}{R})$.
\end{theorem}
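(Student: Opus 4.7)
The plan is to exploit two simplifications: the correlation loss $L(y,\hat{y})=-y\hat y$ is linear in $v$, and only the output weights $v$ are trained (with $v^0=0$). Consequently, SGD produces a closed-form expression
\begin{align}
v_i^T \;=\; \frac{\gamma}{B}\sum_{t=0}^{T-1}\sum_{s=1}^{B} f_a(x^{t,s})\,\sigma(w_i\!\cdot\! x^{t,s}+b_i) \;-\; \gamma\sum_{t=0}^{T-1} Z_i^t\;.
\end{align}
Performing the change of variables $y^{(i)}_j := w_{ij}x_j$ (which preserves the uniform law on $\{\pm 1\}^d$, sends $w_i\!\cdot\! x$ to $\sum_j y^{(i)}_j$, and turns $f_a(x)$ into $s_i\prod_{j=1}^{d-a} y^{(i)}_j$ with $s_i=\prod_{j=1}^{d-a} w_{ij}$) shows that the population mean of each sample contribution is $s_i\,\Delta^{(a)}_{d,b_i,\sigma}$. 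Hence
$v_i^T = \gamma T\, s_i\,\Delta^{(a)}_{d,b_i,\sigma} + \gamma E_i - \gamma\sum_t Z_i^t$,
where $E_i$ is the zero-mean sample fluctuation, a sum of $T$ independent increments each bounded by $2R$ in absolute value.

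Substituting this into $N^T(x)\,f_a(x) = \sum_i v_i^T\,\sigma(w_i\!\cdot\! x + b_i)\,f_a(x)$ splits the target quantity into a signal term, a sample-fluctuation term, and a Gaussian-noise term. The signal term $\gamma T\sum_i s_i\,\Delta^{(a)}_{d,b_i,\sigma}\,\sigma(w_i\!\cdot\! x + b_i)\, f_a(x)$ is handled exactly as in the proof of \Cref{thm: Positive-result-gd}: the same change of variables $y^{(i)}=w_i\odot x$ makes the summands i.i.d.\ across $i$ with common mean $\Delta^2$ (after averaging over $b$) and with each term bounded by $O(R\Delta)$ (using the assumption $|\Delta^{(a)}_{d,b,\sigma}|\le C\Delta$); Hoeffding over the $n$ neurons combined with a union bound over the $2^d$ inputs $x$ yields a lower bound of $\gamma T n\Delta^2/2$ simultaneously for every $x$, provided $n \ge \Omega(d R^2/\Delta^2)$, with failure probability at most $\exp(-d)$.

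The sample-fluctuation term is controlled next. Hoeffding for each fixed $i$ (sum of $T$ increments bounded by $2R$) together with a union bound over the $n$ neurons gives $\max_i|E_i|\le O(R\sqrt{T(d+\log n)})$ with probability $1-\exp(-d)$; then the $\ell_\infty$--$\ell_1$ estimate
$\bigl|\sum_i E_i\,\sigma(w_i\!\cdot\! x + b_i)\bigr| \le \|E\|_\infty\sum_i|\sigma(w_i\!\cdot\! x + b_i)|\le O(nR^2\sqrt{T(d+\log n)})$
holds uniformly in $x$, and is at most $\gamma T n\Delta^2/8$ as soon as $T \ge \Omega(R^4(d+\log n)/\Delta^4)$. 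The Gaussian-noise contribution, for fixed $x$, is a centered Gaussian with variance at most $T\tau^2 n R^2$; a Gaussian tail bound with a union bound over the $2^d$ inputs yields $O(\tau R\sqrt{nTd})$, which is dominated by the signal whenever $\tau \le O(\sqrt{T}\,\Delta^2/R)$ (using $n\ge d$, which itself follows from $n\ge\Omega(dR^2/\Delta^2)$ and $R\ge |\Delta|$). Putting the three bounds together gives $N^T(x)\,f_a(x) \ge \gamma T n\Delta^2/4 > 0$ simultaneously for every $x\in\{\pm 1\}^d$, with total failure probability at most $3\exp(-d)$.

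The main obstacle is the sample-fluctuation step: even though the $E_i$ are dependent across neurons (all neurons share the sample $x^{t,s}$), the $x$-dependence in $\sum_i E_i\,\sigma(w_i\!\cdot\! x + b_i)$ factors cleanly through the $\ell_\infty$--$\ell_1$ split, so only per-neuron time-concentration of $E_i$ is needed. This avoids a second union bound over the exponentially many inputs $x$, which would have injected an extra factor of $d$ into the fluctuation bound; this is precisely what makes the required number of steps scale as $T=\Omega(R^4(d+\log n)/\Delta^4)$ rather than a worse power of $d$.
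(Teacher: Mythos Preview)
Your proof is correct and follows essentially the same approach as the paper: decompose $v_i^T$ into the population signal $\gamma T s_i\Delta^{(a)}_{d,b_i,\sigma}$, the sample fluctuation $E_i$, and the Gaussian noise; use the GD Hoeffding bound for the signal; and control each $|E_i|$ per-neuron via Hoeffding plus a union bound over $n$, then pass to the error through the crude $|\sigma(\cdot)|\le R$ bound. The only minor divergence is the Gaussian-noise term: the paper bounds the input-independent quantity $\sum_i\bigl|\sum_t\xi_i^t\bigr|$ by sub-Gaussian concentration, whereas you bound the actual Gaussian $\sum_i(\sum_t Z_i^t)\sigma(w_i\cdot x+b_i)$ for each fixed $x$ and union bound over $\{\pm 1\}^d$; both routes yield the same $\tau=O(\sqrt{T}\Delta^2/R)$ threshold, though the paper's avoids the extra union bound over $x$.
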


We present an application of~\Cref{thm: Positive-result-gd} to 
a specific setting.
By estimating $\Delta$, we prove a corollary for the full parity function
for ReLU activation and its bounded variant, i.e., clipped ReLU. 
For clipped ReLU, order of $d^2$ neurons are sufficient for learning
with high probability. We also provide a result for the almost full $(d-a)$-parities for $\ReLU$ activation and any $a=O(1)$:

\begin{corollary}
\label{cor:positive-full}
In case of the full parity $a=0$
and $\sigma=\ReLU$,
let $b_i=0$ if $d$ is even or
$b_i=-1$ if $d$ is odd.
Then, we have $\Delta^2=\Theta(1/d)$
and $R=d+1$. Hence,
$\Omega(d^4)$ hidden neurons are 
sufficient for strong learning
in one step of GD.
In the case of clipped ReLU
$\sigma(x)=\max(0,\min(x,5))$
it holds $\Delta^2=\Theta(1/d)$
and $R=5$, hence $\Omega(d^2)$
hidden neurons are sufficient.
\end{corollary}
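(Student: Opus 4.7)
The plan is to reduce to Theorem~\ref{thm: Positive-result-gd} by computing the two inputs $R$ and $\Delta^2$ in each case. The bound $R$ is immediate: for $\sigma=\ReLU$, $|\sigma(\sum_j x_j+b)|\le d+|b|\le d+1$; and for clipped ReLU, $|\sigma|\le 5$ by construction. The bias distribution is a point mass in both cases, so the constraint $|\Delta_{d,b,\sigma}|\le C\Delta$ over the support of $\mathcal{B}$ holds trivially with $C=1$.

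The main step is estimating $\Delta=\Delta^{(0)}_{d,b,\sigma}$. Writing $K$ for the number of $-1$'s among the $x_j$, one has $\prod_j x_j=(-1)^{K}$ and $S:=\sum_j x_j=d-2K$, so $(-1)^{(d-S)/2}=\prod_j x_j$. Since the integrand depends on $x$ only through $S$, the expectation reduces to the one-dimensional alternating sum
\begin{equation}
\Delta \;=\; 2^{-d}\sum_{k=0}^{d}\binom{d}{k}(-1)^k\,\sigma(d-2k+b).
\end{equation}
For $\sigma=\ReLU$ with $d$ even and $b=0$, only the terms $k<d/2$ survive. Splitting the factor $d-2k$ and applying the classical partial-sum identities
\begin{equation}
\sum_{k=0}^{m}\binom{d}{k}(-1)^k=(-1)^m\binom{d-1}{m},\qquad \sum_{k=0}^{m}\binom{d}{k}(-1)^k\,k=-d(-1)^{m-1}\binom{d-2}{m-1},
\end{equation}
at $m=d/2-1$, followed by one Pascal step, presents $\Delta$ as a constant multiple of $2^{-d}\binom{d-2}{d/2-1}$. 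Stirling then gives $|\Delta|=\Theta(1/\sqrt d)$, hence $\Delta^2=\Theta(1/d)$. The case of odd $d$ with $b=-1$ is identical after reindexing; the bias $b=-1$ is chosen precisely so that $\sigma(d-2k-1)$ again selects half the terms with an aligned parity pattern.

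For clipped ReLU, decompose $\sigma(y)=\ReLU(y)-\ReLU(y-5)$ and use linearity to write $\Delta_{\text{clip}}=\Delta_{\ReLU,b}-\Delta_{\ReLU,b-5}$. Both terms are of order $1/\sqrt d$ by the same calculation (the $-5$ shift moves the cutoff by an $O(1)$ number of indices, whose contributions are still captured by the same partial-sum identities); a direct check of the leading Stirling constants rules out cancellation, so $|\Delta_{\text{clip}}|=\Theta(1/\sqrt d)$ as well. Plugging $R$ and $\Delta$ into the hypothesis $n\ge\Omega(dR^2/\Delta^2)$ of Theorem~\ref{thm: Positive-result-gd} yields $n\ge\Omega(d^4)$ for ReLU and $n\ge\Omega(d^2)$ for clipped ReLU, as claimed.

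The main obstacle is the asymptotic estimate of the alternating binomial partial sum: establishing the exact $\Theta$-order (not merely an upper or lower bound) and, in the clipped case, confirming that the two shifted ReLU contributions do not cancel to leading order.
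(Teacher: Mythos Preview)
Your approach coincides with the paper's: both reduce $\Delta$ to a closed form via the alternating binomial partial-sum identities (the paper packages these as Claim~\ref{cl:binomial_formula} and Lemma~\ref{lem:delta-relu}) and then invoke Stirling and Theorem~\ref{thm: Positive-result-gd}. The one imprecision is in your clipped-ReLU step: in the decomposition $\Delta_{\mathrm{clip}}=\Delta_{\ReLU,b}-\Delta_{\ReLU,b-5}$, the shifted term is not $\Theta(1/\sqrt d)$ but $\Theta(d^{-3/2})$ --- the odd shift $5$ flips the parity of $d-b$, which introduces one extra cancellation in the closed form --- so your worry about matching leading Stirling constants is moot, since the second term is already subdominant (the paper's explicit computation makes this visible).
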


\begin{corollary}
\label{cor:positive-almost}
Let $a\in\mathbb{N}$. Take
$b\sim\mathcal{B}$ such that
$b_i=a+2$ with probability 1/2
and $b_i=a+2+0.1$ with probability 1/2.
Then, for $\sigma=\ReLU$ it holds $\Delta^2\ge \Omega(d^{-1-2\lceil a/2\rceil]})$.
Accordingly, $n\ge\Omega(d^{4+2\lceil a/2\rceil})$ hidden neurons are sufficient for strong learning in one GD step.
\end{corollary}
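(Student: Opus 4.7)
The plan is to verify the hypotheses of \Cref{thm: Positive-result-gd}. For Rademacher weights and bias $b\in\{a+2,a+2.1\}$ one has $|\ReLU(w\cdot x+b)|\le d+O(1)$, so $R=O(d)$, and $|\Delta_{d,b,\sigma}|\le C\Delta$ is automatic since $\mathcal{B}$ has only two atoms. It therefore suffices to prove $\Delta^2\ge\Omega(d^{-1-2\lceil a/2\rceil})$, after which the theorem yields $n\ge\Omega(dR^2/\Delta^2)=\Omega(d^{4+2\lceil a/2\rceil})$ hidden neurons.

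Splitting $S:=\sum_{i\le d-a}x_i$ and $T:=\sum_{i>d-a}x_i$ (independent), since $f_a=\chi_{[d-a]}$ depends only on $x_1,\ldots,x_{d-a}$, conditioning on $T$ gives
\begin{equation*}
\Delta^{(a)}_{d,b,\ReLU}=\E_T\bigl[A_n(\ReLU(\cdot+T+b))\bigr],\qquad A_n(g):=2^{-n}\sum_{k=0}^n(-1)^k\binom{n}{k}g(n-2k),
\end{equation*}
where $n:=d-a$ and $A_n(g)$ is the top Fourier coefficient of the symmetric function $g(S(x))$ on $\{\pm 1\}^n$. Using the integral representation $\ReLU(u)=\int_0^\infty\mathbf{1}[u>t]\,dt$ with $\sum_{k=0}^K(-1)^k\binom{n}{k}=(-1)^K\binom{n-1}{K}$, I would obtain a closed form expressing $A_n(\ReLU(\cdot+c))$ as a weighted sum of $\binom{n-1}{K_0}$ and $\binom{n-2}{K_0-1}$ (with $K_0=\lfloor(n+c)/2\rfloor$), where the weight on the first term equals twice the fractional part of $(n+c)/2$---this is where the integer-vs-non-integer dichotomy between $b=a+2$ and $b=a+2.1$ enters. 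Substituting into $h_{a,b}(u):=\E_T[\ReLU(u+T+b)]$ and telescoping via Pascal's rule yields
\begin{equation*}
A_n(h_{a,b})=2^{-n}(-1)^{K_0}\sum_{j=-\lceil a/2\rceil}^{\lceil a/2\rceil}\alpha_j^{(a,b)}\binom{n-2}{K_0+j},
\end{equation*}
with the $\alpha_j^{(a,b)}$ independent of $n$.

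The crucial claim is that the symmetry $\binom{a}{j}=\binom{a}{a-j}$ of the distribution of $T$, combined with the alternating signs of $A_n$, forces the $\alpha_j^{(a,b)}$ to annihilate every polynomial in $j$ of degree $<\lceil a/2\rceil$. Expanding $\binom{n-2}{K_0+j}/\binom{n-2}{K_0}$ as a rational function of $n$ equal to $1+O(j/n)$ and exploiting these vanishing moments collapses the sum to $\binom{n-2}{K_0}\cdot P_{a,b}(n)/Q_{a,b}(n)$ with $\deg Q_{a,b}-\deg P_{a,b}=\lceil a/2\rceil$; the Stirling estimate $\binom{n-2}{K_0}=\Theta(2^{n-2}/\sqrt n)$ then gives $|A_n(h_{a,b})|=\Theta(n^{-1/2-\lceil a/2\rceil})$ whenever $P_{a,b}(n)\ne 0$. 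For $a=1,b=3$ direct calculation produces $A_n(h_{1,3})=2^{1-n}(-1)^{(n+1)/2}\binom{n-2}{(n+1)/2}\cdot(n-9)/((n-3)(n+3))$, which both confirms the $n^{-3/2}$ rate and exhibits the pathology that $P_{a,b}$ may vanish at isolated integer $n$ (here $n=9$).

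Handling these isolated zeros is the main obstacle and motivates the second bias value. The $0.1$ perturbation alters only the fractional-part weight in the single-$c$ formula, so $P_{a,a+2}$ and $P_{a,a+2.1}$ are polynomials of the same degree with different coefficients, and their integer root sets are disjoint---this is directly verified for small $a$ (for $a=1$ one finds $P_{1,3.1}(n)=5.2n-37.2$, whose only root $n\approx 7.15$ is non-integer), and in general follows because the perturbation $P_{a,a+2.1}-P_{a,a+2}$ is itself a nonzero polynomial of degree $\le\lceil a/2\rceil$, so it cannot share all integer roots of $P_{a,a+2}$. Consequently, for every $n$ at least one of $A_n(h_{a,a+2})^2,A_n(h_{a,a+2.1})^2$ is $\Theta(n^{-1-2\lceil a/2\rceil})$, and averaging gives $\Delta^2\ge\Omega(d^{-1-2\lceil a/2\rceil})$. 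The hardest step will be proving the $\lceil a/2\rceil$-fold moment vanishing for general $a$; I would approach this by induction using the convolution identity $h_{a,b}=\tfrac12(h_{a-1,b+1}+h_{a-1,b-1})$, which on the Fourier side corresponds to an averaging step that adds one more vanishing moment for each parity-changing increment of $a$.
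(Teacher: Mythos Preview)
Your overall strategy---reduce to \Cref{thm: Positive-result-gd} by lower-bounding $\Delta^2$---is the right one, and the decomposition via conditioning on $T=\sum_{i>d-a}x_i$ matches the paper's starting point. But the route you take from there is genuinely different from the paper's, and your version has two real gaps.

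The paper does not attempt to analyze the two bias values separately and then argue about disjoint polynomial roots. Instead it observes that, for integer $b$, the formula from \Cref{lem:delta-relu} makes $\Delta^{(a)}_{d,b,\ReLU}$ an \emph{affine} function of $b$ (because $c=\lceil(d-b)/2\rceil$ is the same for $b=a+2$ and $b=a+2.1$), and writes $\Delta^{(a)}_{d,b,\ReLU}=dT(d,c,a)+C(d,c,a)+bB(d,c,a)$. The whole problem then reduces to showing $|B(d,c,a)|=\Theta(d^{-1/2-\lceil a/2\rceil})$: once that is known, a one-line triangle inequality (Claim~\ref{cl:lower_bound_delta}) gives that at least one of $|\Delta^{(a)}_{d,a+2}|,|\Delta^{(a)}_{d,a+2.1}|$ exceeds $\tfrac{1}{100}|B|$. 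The bound on $B$ is obtained by identifying $B$ with a sum of iterated finite differences $A_a(d,k):=A_{a-1}(d,k)-A_{a-1}(d,k+1)$ of central binomial coefficients, and proving $|A_a(d,k)|=\Theta(d^{-1/2-\lceil a/2\rceil})$ via a recursive polynomial structure (\Cref{lem:a-bound}). This affine-in-$b$ trick is what makes the $0.1$ perturbation work cleanly, with no root analysis needed.

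Your argument, by contrast, has two unfinished pieces. First, the ``moment vanishing'' claim---that the coefficients $\alpha_j^{(a,b)}$ annihilate every polynomial of degree $<\lceil a/2\rceil$---is exactly the depth of the paper's \Cref{lem:a-bound}, and you only sketch an inductive approach via $h_{a,b}=\tfrac12(h_{a-1,b+1}+h_{a-1,b-1})$ without carrying it out; this is the substance of the corollary and cannot be left as a plan. Second, your disjoint-roots argument is not sound as stated: you assert that $P_{a,a+2.1}-P_{a,a+2}$ is a nonzero polynomial, but you give no proof of this for general $a$, and the sentence ``so it cannot share all integer roots of $P_{a,a+2}$'' is not the right implication (what you need is that the difference is nonzero at each large integer root of $P_{a,a+2}$, which would follow from nonzeroness of the difference plus the fact that a nonzero polynomial has finitely many roots---but the nonzeroness itself is the missing step). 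The paper's linearity observation sidesteps this entirely: it gives $\Delta^{(a)}_{d,a+2.1}-\Delta^{(a)}_{d,a+2}=0.1\,B(d,c,a)$ directly, so nonvanishing of the difference is equivalent to nonvanishing of $B$, which is what one proves anyway.
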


In the corollaries above, we have chosen convenient bias values for simplicity,
but the precise values are not crucial except for ``unlucky''
choices where $\Delta$ can become too small.
In particular learning should hold for random biases for most reasonable
distributions. 
For the clipped ReLU activation, we expect
(but do not prove)
that the bound on the number of neurons in \Cref{cor:positive-almost} could be improved
to $n\ge\Omega(d^{2+2\lceil a/2\rceil})$
using a similar modification as in \Cref{cor:positive-full}.

\subsection{SGD analysis for hinge loss}
\label{sec:positive-hinge}
One of the implications of \Cref{thm: Positive-result-gd} is that 
under Rademacher initialization, with high probability
the hidden layer embeddings of the parity function are linearly separable.
We use known techniques (in particular,
we borrow parts of the analysis from~\cite{nachum2020symmetry}) to show
that this implies learning for SGD under the hinge loss. For simplicity in this section
we restrict ourselves to the ReLU activation
and full parity.
We refer to \Cref{app:positive-hinge}
for details.

\section{Negative Results}
\subsection{Negative Results for General Targets}
\label{sec:small_INAL_no_learning}
In this section we prove a negative result that holds for all neural networks with a linear output layer:
\begin{definition}[Linear Output Layer]
    We say that a neural network
    $\NN(x;\theta)$ has \emph{linear output layer} if its output can be written as
    $\NN(x;\theta)=
    \sum_{i=1}^n v_i\NN_i(x;\psi),$
    where $\theta=(v,\psi)$ are the
    trainable weights of the network, and $n$ denotes the number of neurons in the last hidden layer.
\end{definition}

In the context of binary classification,
the network's $\pm 1$ label prediction
is given by $\sign(\NN(x;\theta))$. Let us state our main negative result.
\begin{theorem}[Negative Result for General Targets]
\label{thm:negative_general}
    Let $\NN(x;\theta)$ be a network with a linear output layer.
    Let the weights $\theta^0$ be initialized according to an $(A,\sigma)$-perturbed initialization
    (Def.~\ref{def:perturbed_init}), for $A \in \bR^P$
    with independent coordinates with distributions symmetric around 0. Assume the network is given samples $(x,f(x))$ where $x \sim \cD$, for $\cD$ being a distribution on $\bR^d$.
     Let $\NN(x;\theta^T)$ be the output of the noisy-GD algorithm with noise level $\tau$ and learning rate $\gamma$ after $T$ steps of training with the correlation loss. Assume that there exists some bound $\varepsilon>0$ such that
    for every $0\le\lambda^2\le T\gamma^2\tau^2$ we have
    \begin{align}\label{eq:01}
    \GAL_f(\theta^0+\lambda H)
    \le\varepsilon\;,
    \end{align}
    where $H\sim\mathcal{N}(0,\mathbb{I}_P)$. Then,
    $\pr\Big[ \sign(\NN(x;\theta^{T}))= f(x) \Big] \leq \frac{1}{2}+ \frac{T \sqrt{\varepsilon}}{2\tau}\;.$
   
\end{theorem}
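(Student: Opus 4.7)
The plan is to compare the true noisy-GD trajectory $\theta^t$, trained against $f$, with an auxiliary null trajectory $\theta_r^t$ produced by the same algorithm but with target replaced by an independent random $\pm 1$ label $y\sim\Rad(1/2)$, coupling the two to use the same initialization $\theta^0$ and the same noise sequence $\{Z^t\}$. For the correlation loss $L(\hat y,y)=-y\hat y$ the population gradient under random labels vanishes identically, $\Gamma_r(\theta)\equiv 0$, so $\theta_r^t=\theta^0-\gamma\sum_{s<t}Z^s$ is simply a Gaussian random walk around $\theta^0$; as a random variable it is distributed as $\theta^0+\lambda_t H$ with $H\sim\mathcal{N}(0,\mathbb{I}_P)$ independent of $\theta^0$ and $\lambda_t^2=t\gamma^2\tau^2\le T\gamma^2\tau^2$, which is exactly the range for which the hypothesis controls $\GAL_f$.

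First I would show that under $P_r$ the classifier achieves exactly chance accuracy, $\pr[\sign(\NN(x;\theta_r^T))=f(x)]=1/2$. Because each coordinate of $A$ is symmetric around zero and the Gaussian perturbations $H_\sigma$ and noises $Z^t$ are symmetric, the joint law of $\theta_r^T$ is invariant under sign flips of any subset of its coordinates. Flipping the signs of only the output-layer weights $v$ negates $\NN(x;\theta_r^T)=\sum_i v_i\NN_i(x;\psi)$ pointwise in $x$, so for every fixed $x$ the distribution of $\NN(x;\theta_r^T)$ is symmetric around zero, and up to a measure-zero event its sign is uncorrelated with $f(x)$.

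Next I would bound the total variation between the full path laws $P_r$ and $P_f$. The standard TV chain rule decomposed along $P_r$ gives
\begin{align*}
\TV(P_r,P_f)\le \sum_{t=0}^{T-1}\E_{\theta^t\sim P_r}\TV\!\left(P_r(\theta^{t+1}\mid\theta^t),\,P_f(\theta^{t+1}\mid\theta^t)\right).
\end{align*}
Both conditionals are Gaussians on $\mathbb{R}^P$ with common covariance $\gamma^2\tau^2\mathbb{I}_P$ and with means differing by $\gamma(\Gamma_f(\theta^t)-\Gamma_r(\theta^t))$, so Pinsker combined with the closed-form Gaussian $\KL$ gives the inner bound $\|\Gamma_f(\theta^t)-\Gamma_r(\theta^t)\|/(2\tau)$. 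Using Jensen's inequality $\E\|\cdot\|\le\sqrt{\E\|\cdot\|^2}$ together with $\theta^t\stackrel{d}{=}\theta^0+\lambda_t H$ under $P_r$, the hypothesis yields $\E\|\Gamma_f(\theta^t)-\Gamma_r(\theta^t)\|^2=\GAL_f(\theta^0+\lambda_t H)\le\varepsilon$, so each summand is at most $\sqrt{\varepsilon}/(2\tau)$. Summing over $t$ gives $\TV(P_r,P_f)\le T\sqrt{\varepsilon}/(2\tau)$, and combining with the first step yields $\pr[\sign(\NN(x;\theta^T))=f(x)]\le \tfrac{1}{2}+T\sqrt{\varepsilon}/(2\tau)$.

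The main obstacle in making this plan rigorous is aligning the GAL hypothesis, stated only for Gaussian perturbations of $\theta^0$, with the distribution of the training iterates appearing in the TV chain rule. The point of the plan is that this alignment is automatic once one applies the chain rule along $P_r$ rather than $P_f$: for correlation loss $\Gamma_r\equiv 0$ makes $P_r$'s iterates literally equal in distribution to $\theta^0+\lambda_t H$, and the noise range $\lambda_t^2\le T\gamma^2\tau^2$ is precisely what the hypothesis permits. A secondary technicality is the symmetry argument for $P_r$, which needs the coordinates of $A$ to be independent and symmetric around zero and the output layer to be linear; all of these are granted by the hypotheses.
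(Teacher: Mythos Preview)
Your proposal is correct and follows essentially the same approach as the paper: couple the true dynamics to a ``junk-flow'' trained on random labels, use that for correlation loss this junk-flow is just $\theta^0$ plus accumulated Gaussian noise so its iterates are distributed as $\theta^0+\lambda_t H$, bound the TV step-by-step via Pinsker and the Gaussian KL formula, and invoke the output-layer sign symmetry under the junk-flow to get the $1/2$ baseline. Your use of the path-space TV chain rule decomposed along $P_r$ is a slightly cleaner packaging of the same recursion the paper writes out via triangle inequality and data processing, but the ingredients and constants are identical.
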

In words, this theorem states that if \eqref{eq:01} holds for $\varepsilon$ which is small compared to the noise level $\tau$, then noisy gradient descent will require a large number of training steps to achieve performance better than random guessing. Therefore,
even the weakest form of learning is impossible. We provide here a brief outline of the proof, and refer to  Appendix~\ref{app:proofs_small_INAL_no_learning} for the full proof.
\paragraph{Proof Outline of Theorem~\ref{thm:negative_general}.} Our proof is composed of three steps: 1) We define the `junk-flow', i.e. the training dynamics of a network trained on random noise (Definition~\ref{def:junk-flow}); 2) We show that if the GAL remains small along the junk-flow, then the noisy-GD dynamics stay close to the junk flow in total variation (TV) distance, meaning that the network does not learn (Lemma~\ref{lem:TV_junkflow}); 3) For correlation loss, we demonstrate that if~\eqref{eq:01} holds, then the GAL remains small along the junk flow. Notably, steps (1) and (2) apply to any architecture with a linear output layer, symmetric initialization, and any loss function. However, step (3) is currently limited to the correlation loss, as tracking junk-flow dynamics for other losses is more complex. 

Let us make a few remarks.
\begin{remark}
        For simplicity, we present Theorem~\ref{thm:negative_general} in the context of full batch noisy-GD. However, we note that the proof can be extended to noisy-SGD with a sufficiently large batch size, by leveraging the concentration of the effective gradient around the population mean, similarly to e.g.~(\cite{AS20}, Theorem 3). 
\end{remark}
\begin{remark}
    We propose using $\GAL_f$ as a measure for hardness of learning. However, the condition in \eqref{eq:01} requires verifying that $\GAL_f$ remains small for all Gaussian perturbations of the initialization, with variance within the specified range. In Corollaries~\ref{cor:poly-bounded-gradient} and~\ref{cor:gaussian_init_homog}, we demonstrate that, in some settings, the condition in~\eqref{eq:01} can be simplified and expressed uniquely in terms of $\GAL_f(\theta^0)$.
\end{remark}
\begin{remark}
    We emphasize that Theorem~\ref{thm:negative_general} applies to any binary classification task and network architecture with a linear output layer, unlike, for example,~\cite{AbbeINAL}, which is specific to Boolean functions and product measures. Importantly, our result is restricted to the correlation loss, as the proof relies on coupling the gradient descent dynamics with the 'junk flow', as mentioned in the proof outline. 
    We empirically observe that also for hinge loss, the $\GAL_f$ remains small along the junk flow over time (see Figure~\ref{fig:log_INAL_hingeloss} in Section~\ref{sec:experiments}).
\end{remark}

As a first corollary, we show that when the GD noise level $\tau$ is small compared to the variances in the initial $H_\sigma$, the distributions of $H_\sigma$ and $H_\sigma + \lambda H$ are similar. As a result,~\eqref{eq:01} can be expressed in terms of $\GAL_f(\theta^0)$.

\begin{corollary}\label{cor:poly-bounded-gradient}
    Let $f: \bR^d \to \{\pm 1 \}$ be a target function under a given input distribution $\cD$. Let $\NN(x;\theta)$ be network with linear output layer, 
    with weights initialized according to an $(A,\sigma)$-perturbed initialization, for 0-symmetric independent $A \in \bR^P$. Assume that $\|\E_x |\nabla \NN(x;\theta)|\|_2^2 \leq R$ for all $\theta$.\footnote{This always holds if we assume gradient clipping.}
     Let $\NN(x;\theta^T)$ be the output of the noisy-GD algorithm with noise level $\tau$ and learning rate $\gamma$ such that
     $\tau^2\le\frac{\sigma^2\min_{p}\Var A_p}{PT\gamma^2}$, 
     after $T $ steps with the correlation loss. Then,
    \begin{align}
        \pr( \NN(x;\theta^T) = f(x) ) \leq \frac{1}{2} + \frac{ T \sqrt{4R+1}}{2 \tau} \cdot \GAL_f(\theta^0)^{1/18}.
        \label{eq:corollary-small-tau}
    \end{align}
\end{corollary}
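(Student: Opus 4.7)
The plan is to invoke Theorem~\ref{thm:negative_general} with $\varepsilon=(4R+1)\GAL_f(\theta^0)^{1/9}$; substituting this into the conclusion of that theorem and simplifying $\sqrt{\varepsilon}$ recovers exactly the claimed inequality. Thus the task reduces to establishing
\[
\GAL_f(\theta^0+\lambda H)\le(4R+1)\GAL_f(\theta^0)^{1/9}\qquad\text{for every }\lambda\text{ with }\lambda^2\le T\gamma^2\tau^2,
\]
where $H\sim\mathcal{N}(0,\mathbb{I}_P)$ is independent of $\theta^0$.

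For the correlation loss $L(\hat y,y)=-y\hat y$ the random-target gradient $\Gamma_r(\theta)=-\E_{x,y}[y\nabla_\theta\NN(x;\theta)]$ vanishes, since the Rademacher label has mean zero, so the GAL integrand $G(\theta):=\|\Gamma_f(\theta)-\Gamma_r(\theta)\|_2^2=\|\E_x[f(x)\nabla_\theta\NN(x;\theta)]\|_2^2$ is pointwise bounded by $\|\E_x|\nabla_\theta\NN(x;\theta)|\|_2^2\le R$ (by Jensen and $|f|=1$). The core of the argument is a change of measure between the law $\mu_0$ of $\theta^0$ and the law $\mu_\lambda$ of $\theta^0+\lambda H$: conditional on $A$, both are product Gaussians with common mean $A_p$ and respective variances $\sigma^2\Var A_p$ and $\sigma^2\Var A_p+\lambda^2$. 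Applying H\"older with conjugate exponents $(9,9/8)$ together with the pointwise bound $G^9\le R^8 G$ gives
\[
\int G\,d\mu_\lambda^A\le R^{8/9}\Bigl(\int G\,d\mu_0^A\Bigr)^{1/9}M(A)^{8/9},\qquad M(A):=\int\bigl(d\mu_\lambda^A/d\mu_0^A\bigr)^{9/8}d\mu_0^A,
\]
and a second H\"older applied to the outer expectation over $A$ yields $\GAL_f(\theta^0+\lambda H)\le R^{8/9}\GAL_f(\theta^0)^{1/9}(\E_A M(A))^{8/9}$.

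It remains to show that $\E_A M(A)$ is bounded by a small universal constant, so that $R^{8/9}(\E_A M(A))^{8/9}\le 4R+1$. A direct one-dimensional Gaussian computation yields the per-coordinate factor $(1+r_p)^{-1/16}(1-r_p/8)^{-1/2}$ with $r_p:=\lambda^2/(\sigma^2\Var A_p)$, and the hypothesis $\tau^2\le\sigma^2\min_p\Var A_p/(PT\gamma^2)$ combined with $\lambda^2\le T\gamma^2\tau^2$ forces $r_p\le 1/P$ for every coordinate. A Taylor expansion of $\log M(A)$ shows that the first-order contributions in $r_p$ cancel coordinatewise, leaving $\log M(A)=O(\sum_p r_p^2)=O(1/P)$ uniformly in $A$; hence $M(A)$ is a uniformly bounded constant. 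This first-order cancellation is the main subtlety: without it, the R\'enyi moment would scale like $e^{\Theta(P)}$ in the parameter dimension and the change of measure would be useless. The specific H\"older pair $(9,9/8)$ is further calibrated so that after taking a square root inside Theorem~\ref{thm:negative_general} the final accuracy bound carries the advertised exponent $1/18$.
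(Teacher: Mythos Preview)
Your proposal is correct and arrives at the same intermediate inequality $\GAL_f(\theta^0+\lambda H)\le(4R+1)\GAL_f(\theta^0)^{1/9}$ that the paper needs, but the route is genuinely different. The paper proves this via a truncation argument: it introduces a radial event $\cE_M=\{\sum_p(\psi_p-\mu_p)^2/\sigma_p^2>M^2\}$, bounds the density ratio $\varphi_\psi/\varphi_\theta\le\exp(M^2/(2P))$ on the complement of $\cE_M$, bounds the contribution of $\cE_M$ by $4R\exp(-M^2/(16P))$ using Gaussian concentration, and then optimizes $M$ to balance the two terms (which is where the exponent $1/9$ and the constant $4R+1$ come from). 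Your approach instead applies H\"older with exponents $(9,9/8)$ and computes the R\'enyi-type moment $M(A)=\prod_p(1+r_p)^{-1/16}(1-r_p/8)^{-1/2}$ explicitly; the first-order cancellation you identify is exactly what keeps this product bounded by a constant close to $1$ when $r_p\le 1/P$, so that $R^{8/9}M(A)^{8/9}\le 4R+1$ (with room to spare). The paper's argument is more elementary and produces the specific constants directly; yours is more systematic, yields a somewhat sharper prefactor, and makes transparent that the choice of the H\"older pair $(9,9/8)$ is not essential---any pair $(q,q')$ with $q>1$ would work once one observes the same first-order cancellation, at the cost of a different final exponent.
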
  

The proof of Corollary~\ref{cor:poly-bounded-gradient} is deferred to 
\Cref{app:small-noise-corollary-proof}.
While the above corollary applies to general perturbed initializations, it relies on the assumption that the GD noise level $\tau$
is sufficiently small.
However, we also show that 
in the specific case of Gaussian initialization and assuming a homogeneous architecture, this assumption can be removed.

\paragraph{Gaussian Initialization.} Let us restrict ourselves to the special case of Gaussian initialization, i.e. when $A = 0_P$. 
We assume that the activation $h$ satisfies the following homogeneity property. 
\begin{definition}[$H$-Weakly Homogeneous.] \label{def:weakly-homogeneous}
Let $h: \bR \to \bR$ be an activation function. We say that $h$ is $H$-weakly homogeneous if for all $x \in \bR$ and $C\ge 0$,
$h(Cx) = C^H h(x)$.
\end{definition}
For example, $\ReLU(x) = \max\{ 0,x \}$ is $1$-weakly homogeneous. $x^k$ is $k$-weakly homogeneous, for all $k \in \bN$.
We prove the following result.
\begin{corollary} \label{cor:gaussian_init_homog}
   Let $\NN(x;\theta)$ be a fully connected network of depth $L$, with $H$-weakly homogeneous activation and with weights initialized as $\theta^{0}_p \sim \cN(0, \sigma^2_{l_p})$ where $l_p$ denotes the layer of parameter $\theta_p$, for $p \in [P]$. Let $f: \bR^d \to \{\pm 1 \}$ be a balanced target function. Let $\NN(x;\theta^T)$ be the output of the noisy-GD algorithm with noise-level $\tau$, after $T$ steps of training with the correlation loss. Then,  
\begin{align}
       \pr( \NN(x;\theta^{T})= f(x) ) \leq \frac{1}{2}+ \frac{T}{2\tau} \prod_{l=1}^L \left(1+\frac{T \gamma^2 \tau^2}{\sigma^2_{l}}\right)^{H} \cdot \GAL_f(\theta^0)^{1/2}.
\end{align}
\end{corollary}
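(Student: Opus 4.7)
The plan is to specialize Theorem~\ref{thm:negative_general} using the fact that, when $A=0_P$, the perturbed Gaussian initialization is distributionally a layerwise rescaling of the original initialization. Concretely, each coordinate $(\theta^0+\lambda H)_p\sim\cN(0,\sigma_{l_p}^2+\lambda^2)$ is equal in law to $c_{l_p}\theta^0_p$, where $c_l:=\sqrt{1+\lambda^2/\sigma_l^2}\ge 1$. So the whole argument reduces to tracking how the population gradient of the correlation loss transforms under the multiplicative action $\theta_p\mapsto c_{l_p}\theta_p$ on the parameters.

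First, I would establish by induction on the depth, using the $H$-weak homogeneity of the activation, a global scaling identity $\NN(x;(c_{l_p}\theta_p)_p)=F(c)\,\NN(x;\theta)$ for some deterministic $F(c)>0$ which is a product of non-negative powers of the $c_l$. Differentiating and applying the chain rule then gives $\partial_{\Theta_p}\NN(x;\Theta)\big|_{\Theta=(c_{l_{p'}}\theta_{p'})_{p'}}=(F(c)/c_{l_p})\,\partial_{\theta_p}\NN(x;\theta)$. For the correlation loss $\Gamma_r\equiv 0$ (since the independent symmetric label has mean zero) and $\Gamma_f(\theta)=-\E_x[f(x)\nabla_\theta \NN(x;\theta)]$, so substituting the previous identity into $\GAL_f(\theta^0+\lambda H)=\E\|\Gamma_f(\theta^0+\lambda H)\|^2$ and using $c_{l_p}\ge 1$ to drop the $1/c_{l_p}^2$ factors inside the squared norm yields $\GAL_f(\theta^0+\lambda H)\le F(c)^2\,\GAL_f(\theta^0)\le \prod_l(1+\lambda^2/\sigma_l^2)^{2H}\,\GAL_f(\theta^0)$, where the last inequality is the key bookkeeping step: it absorbs all compounded depth exponents arising in $F(c)^2$ into the clean per-layer exponent $2H$, using $c_l\ge 1$ and $H\ge 1$. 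Since this bound is monotone in $\lambda^2$, it is maximized at $\lambda^2=T\gamma^2\tau^2$, producing the uniform-over-$\lambda$ bound required by Theorem~\ref{thm:negative_general}.

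Plugging the resulting $\varepsilon=\prod_l(1+T\gamma^2\tau^2/\sigma_l^2)^{2H}\,\GAL_f(\theta^0)$ into the conclusion of Theorem~\ref{thm:negative_general} and taking the square root delivers exactly the stated inequality. The main technical obstacle is the scaling identity in the first step: since the homogeneity factor produced by an inner layer re-enters the next pre-activation and is raised again to $H$, the natural exponents in $F(c)^2$ compound as $H^{L-\ell}$ along the depth, so some care is needed to upper-bound them uniformly by $(1+\lambda^2/\sigma_l^2)^{2H}$ via $c_l\ge 1$ and $H\ge 1$. Everything else---the vanishing of $\Gamma_r$ for correlation loss, the distributional identity for $\theta^0+\lambda H$, and the monotone maximization in $\lambda^2$---is routine.
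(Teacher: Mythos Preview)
Your approach is essentially the same as the paper's: identify $\theta^0+\lambda H$ distributionally with the layerwise rescaling $\overline{C}\theta^0$ where $\overline{C}_{pp}=\sqrt{1+\lambda^2/\sigma_{l_p}^2}$, use $H$-weak homogeneity (packaged in the paper as Lemma~\ref{lem:ReLU_rescaling} and Proposition~\ref{prop:hinge_gaus_ReLU}) to transfer the gradient and bound $\GAL_f(\theta^0+\lambda H)$ by a product-of-layers multiple of $\GAL_f(\theta^0)$, then maximize over $\lambda^2\le T\gamma^2\tau^2$ and invoke Theorem~\ref{thm:negative_general}. The depth-compounding obstacle you flag is exactly the content of the paper's chain-rule induction in Lemma~\ref{lem:ReLU_rescaling}; the only cosmetic difference is that the paper's intermediate Proposition~\ref{prop:hinge_gaus_ReLU} records the slightly sharper constant $\prod_l(1+\lambda^2/\sigma_l^2)^{H}$ rather than your $\prod_l(1+\lambda^2/\sigma_l^2)^{2H}$, but after the square root in Theorem~\ref{thm:negative_general} both deliver the stated exponent $H$.
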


\subsection{Negative Results for High-Degree Parities}
\subsubsection{Small Alignment for Gaussian Initialization}

In this section we state a rigorous lower bound for learning of
large degree parities with pure Gaussian initialization.
This is in the setting of two layer ReLU neural networks.
Then we will discuss extending this result to a perturbed Rademacher initialization
for a large enough constant perturbation.

\begin{theorem}
    \label{thm:gaussian-no-learning}
    Let $\theta=(w,b,v)$ for $w\in\mathbb{R}^{n \times d},b\in\mathbb{R}^n,
    v\in\mathbb{R}^n$ and
        $\NN(x;\theta)=\sum_{i=1}^n v_i\ReLU(w_i\cdot x+b_i)$.
    Let $a=a(d)\le d/2$ and $f_a(x)=\prod_{i=1}^{d-a}x_i$.
    Consider the i.i.d.~initialization 
    $w\sim\mathcal{N}\left(0,\frac{1}{d}\Id_{n\times d}\right)$, 
    $b\sim\mathcal{N}(0,\sigma^2\Id_n)$ for any $\sigma^2=O(1)$,
    $v\sim\mathcal{N}\left(0,\frac{1}
    {n}\Id_n\right)$.
    
    Then, for any number of hidden neurons $n=\exp(o(d))$,
    any number of time steps $T=\exp(o(d))$,
    any learning rate $0\le\gamma\le\exp(o(d))$,
    any noise level $\exp(-o(d))\le\tau\le\exp(o(d))$,
    after $T$ steps of the noisy GD algorithm with correlation loss,
    \begin{align}
    \Pr\left[\sign(\NN^T(x;\theta))=f_a(x)\right]
    \le \frac{1}{2}+\exp(-\Omega(d))\;.
    \end{align}
\end{theorem}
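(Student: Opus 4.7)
My strategy is to invoke Corollary~\ref{cor:gaussian_init_homog} and thereby reduce the statement to establishing
$\GAL_{f_a}(\theta^0)\le\exp(-\Omega(d))$. The corollary applies because $\ReLU$ is $1$-weakly homogeneous ($H=1$) and $f_a=\chi_{[d-a]}$ is balanced. Under the stated hypotheses ($n,T,\gamma\le\exp(o(d))$, $\tau\ge\exp(-o(d))$, $\sigma^2=O(1)$) together with $\sigma_1^2=1/d$ and $\sigma_2^2=1/n$, the pre-factor $\tfrac{T}{2\tau}\prod_{l=1}^{2}(1+T\gamma^2\tau^2/\sigma_l^2)^{H}$ is $\exp(o(d))$, so any bound of the form $\GAL_{f_a}(\theta^0)\le\exp(-\Omega(d))$ immediately yields the claimed accuracy $\tfrac12+\exp(-\Omega(d))$.

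\textbf{Reducing $\GAL$ to Fourier coefficients.} For correlation loss $L(\hat y,y)=-y\hat y$, the random-label gradient vanishes, $\Gamma_r(\theta)=-\E_{x,y}[y\,\nabla_\theta\NN(x;\theta)]=0$ (since $\E y=0$), so $\GAL_{f_a}(\theta^0)=\E_{\theta^0}\,\bigl\|\E_x[f_a(x)\,\nabla_\theta\NN(x;\theta^0)]\bigr\|_2^2$. Componentwise, $\partial_{v_i}\NN=\ReLU(w_i\cdot x+b_i)$, $\partial_{b_i}\NN=v_i\mathbf{1}[w_i\cdot x+b_i\ge 0]$, and $\partial_{w_{ij}}\NN=v_i x_j\mathbf{1}[w_i\cdot x+b_i\ge 0]$. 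Setting $S=[d-a]$, each squared coordinate of $\E_x[f_a(x)\nabla_\theta\NN]$ is, up to a $v_i^2$ factor, the square of a Fourier coefficient $\widehat{\psi_i}(T)$ of a function $\psi_i\in\{\ReLU(w_i\cdot x+b_i),\mathbf{1}[w_i\cdot x+b_i\ge 0]\}$ at a set $T\in\{S,\,S\triangle\{j\}\}$ with $|T|\ge d/2-1$. Using the independence of $v_i$ from $(w_i,b_i)$ and $\E[v_i^2]=1/n$, it remains to establish, for every such $T$,
$$\E_{w_i,b_i}\bigl[(\widehat{\psi_i}(T))^2\bigr]\le\exp(-\Omega(d)).$$

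\textbf{Fourier decay under Gaussian weights.} The key identity is
$$\E_{x\sim\Unif\{\pm1\}^d}\bigl[\chi_T(x)\,e^{\mathbf{i}\xi(w_i\cdot x+b_i)}\bigr]=\mathbf{i}^{|T|}e^{\mathbf{i}\xi b_i}\prod_{j\in T}\sin(\xi w_{ij})\prod_{j\notin T}\cos(\xi w_{ij}),$$
where $\mathbf{i}$ is the imaginary unit. For $w_{ij}\sim\cN(0,1/d)$ independent, $\E[\sin(\xi w_{ij})\sin(\xi' w_{ij})]=\tfrac12\bigl(e^{-(\xi-\xi')^2/(2d)}-e^{-(\xi+\xi')^2/(2d)}\bigr)$, which is of order $\xi\xi'/d$ in the dominant range, while $|\E[\cos(\xi w_{ij})\cos(\xi' w_{ij})]|\le 1$. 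Taking the product over $j\in T$ therefore contributes a suppression factor of order $(\xi\xi'/d)^{|T|}$, which is exponentially small in $d$ once $|T|\ge d/2-1$. Since $\ReLU$ and the Heaviside function are only tempered distributions, I would make this rigorous by first restricting to the event $\{|w_i\cdot x+b_i|\le C\sqrt{\log d}\}$, which has probability $1-d^{-\omega(1)}$ because $w_i\cdot x\sim\cN(0,1)$ and $b_i=O(1)$; the complement contributes at most $\exp(-\Omega(d))$ to $\E[(\widehat{\psi_i}(T))^2]$ by Cauchy--Schwarz and sub-Gaussian tails. Equivalently, one may expand $\psi_i$ in Hermite polynomials of $w_i\cdot x+b_i$ and observe that only Hermite modes of order $\ge|T|$ contribute to $\widehat{\psi_i}(T)$, each carrying a coefficient whose expected square is $O(d^{-|T|})=\exp(-\Omega(d\log d))$.

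\textbf{Assembling the bound and the main obstacle.} Summing the per-coordinate estimates across the $O(nd)$ parameters and invoking $n=\exp(o(d))$ yields $\GAL_{f_a}(\theta^0)\le\exp(o(d))\cdot\exp(-\Omega(d))=\exp(-\Omega(d))$; substituting into Corollary~\ref{cor:gaussian_init_homog} closes the argument. The principal technical hurdle is the Fourier decay estimate itself: one must justify the characteristic-function computation despite $\ReLU$ and the Heaviside function being only tempered distributions, either via the high-probability truncation sketched above or a Hermite-expansion argument, and then verify carefully that the cosine factors $\prod_{j\notin T}\cos(\xi w_{ij})$ over the $d-|T|\le d/2+1$ complementary coordinates do not cancel the $(\xi\xi'/d)^{|T|}$ suppression. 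This is the step where the hypothesis $|T|\ge d/2$ (equivalently $a\le d/2$) enters decisively.
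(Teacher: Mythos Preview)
Your high-level strategy---reduce to a $\GAL_{f_a}(\theta^0)\le\exp(-\Omega(d))$ bound and invoke the general negative result---is correct, and the pre-factor arithmetic is fine. There is a minor wrinkle in appealing to Corollary~\ref{cor:gaussian_init_homog} directly: that corollary assumes the initialization variance is constant within each layer, but here the hidden-layer weights have variance $1/d$ while the biases have variance $\sigma^2=O(1)$. The paper works around this by invoking Theorem~\ref{thm:negative_general} instead, rescaling $w$ and $b$ by the \emph{common} factor $(1+d\lambda^2)^{-1/2}$, and then appealing to Proposition~\ref{prop:gaussian-gal}, which is stated \emph{uniformly} over all bounded bias variances.

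The substantive gap is in your $\GAL$ bound. Truncating to $\{|w_i\cdot x+b_i|\le C\sqrt{\log d}\}$ leaves a tail of probability $d^{-\Theta(C^2)}$ over $(w_i,b_i,x)$, and a Cauchy--Schwarz bound on the tail's contribution to $\E[(\widehat{\psi_i}(T))^2]$ is therefore only polynomially small, not $\exp(-\Omega(d))$ as you claim. Your Hermite alternative is also incomplete: the Boolean Fourier coefficient of $H_k(w_i\cdot x+b_i)$ at $T$ with $|T|=t$ carries a combinatorial prefactor of order $t!$, so the ``$O(d^{-|T|})$'' estimate is off by a factorial that must be cancelled by the Hermite-coefficient decay of the activation, and the exchange of the $L^2(\gamma)$ Hermite series with the Boolean expectation is not justified (the argument $w_i\cdot x+b_i$ is not Gaussian for fixed $w_i$). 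The paper sidesteps all of this by first passing to the two-sample form $\E_{w,b}[(\widehat{\psi_i}(T))^2]=\E_{x,x'}[\chi_T(x)\chi_T(x')\,g(\rho)]$, where $\rho$ is the correlation of the joint Gaussians $(w_i\cdot x+b_i,\,w_i\cdot x'+b_i)$ and $g$ has an explicit power series in $\rho$ (the orthant formula $g(\rho)=\tfrac12-\tfrac{1}{2\pi}\arccos\rho$ for the step function, and a Hermite identity for $\ReLU$). Writing the outer expectation as an alternating binomial sum in the Hamming distance $k$, Claim~\ref{cl:cancellation} gives $\sum_k(-1)^k\binom{d'}{k}P(k)=0$ for every polynomial $P$ of degree below $d'$, so all low-degree terms of the series cancel exactly; the remaining tail is bounded on the high-probability event $|\rho|\le 1/2$. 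That exact cancellation---not a product-of-sines suppression---is what produces the exponential rate.
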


\Cref{thm:gaussian-no-learning} follows from \Cref{thm:negative_general}
and the following bound on the gradient alignment:

\begin{proposition}
    \label{prop:gaussian-gal}
    Let a neural network be as
    in \Cref{thm:gaussian-no-learning}.
    Then, for every
    $\sigma_0^2>0$, there exists $C,C'>0$ such that 
    for any network with
    $\sigma^2\le\sigma_0^2$
    we have a gradient alignment bound
    \begin{align}
    \GAL_{f_a}(\theta)\le PC'\exp(-Cd)\;,
    \label{eq:gaussian-gal-bound}
    \end{align}
    where $P:=nd+2n$ is the total number of parameters.
\end{proposition}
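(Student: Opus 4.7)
The plan is to apply \Cref{thm:negative_general} and reduce the claim to a bound on a single Walsh--Fourier coefficient, which I would then evaluate via a one-dimensional Fourier integral and a saddle-point estimate. For the correlation loss and the zero-mean, symmetric Gaussian initialization, $\Gamma_{f_a}(\theta)=-\E_x[f_a(x)\nabla_\theta\NN(x;\theta)]$ and $\Gamma_r(\theta)=0$, so $\GAL_{f_a}(\theta)=\E_\theta\|\Gamma_{f_a}\|_2^2$. With $S=[d-a]$ (so $|S|=s\ge d/2$), $h_{w,b}(x):=\ReLU(w\cdot x+b)$, $h'_{w,b}(x):=\mathbf 1[w\cdot x+b>0]$, and $\hat g(T):=\E_x[\chi_T(x)g(x)]$, the parameter-wise decomposition gives three types of terms: $\E_{w,b}[\hat h_{w,b}(S)^2]$ per output weight, $\tfrac1n\E[\hat h'_{w,b}(S)^2]$ per bias, and $\tfrac1n\E[\hat h'_{w,b}(S\triangle\{j\})^2]$ per hidden weight. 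All index sets have size $\ge s-1\ge d/2-1$, and by the permutation symmetry of the Gaussian weight distribution only the size matters, so it suffices to bound $\E_{w,b}[\hat h_{w,b}(T)^2]$ and $\E_{w,b}[\hat h'_{w,b}(T)^2]$ by $C'e^{-Cd}$ for every $T$ with $|T|\ge d/2-1$; then $\GAL_{f_a}(\theta)\le(n+1+d)C'e^{-Cd}\le PC'e^{-Cd}$.

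To bound a single such coefficient, I would use the Fourier integral representation $|t|=-\tfrac{2}{\pi}\int_0^\infty(\cos\xi t-1)\xi^{-2}\,d\xi$ (and $\sign(t)=\tfrac{2}{\pi}\int_0^\infty\sin(\xi t)\xi^{-1}\,d\xi$ for $h'$) together with the Rademacher identity $\E_x[\chi_T(x)e^{i\xi(w\cdot x+b)}]=i^{|T|}e^{i\xi b}\prod_{i\in T}\sin(\xi w_i)\prod_{i\notin T}\cos(\xi w_i)$, which expresses $\hat h_{w,b}(T)$ as a one-dimensional integral in $\xi$ for $|T|\ge 2$. Squaring and taking $\E_{w,b}$ under $w_i\sim\cN(0,1/d)$, $b\sim\cN(0,\sigma^2)$, the identities $\E[\sin(\xi w)\sin(\eta w)]=\tfrac12(e^{-(\xi-\eta)^2/(2d)}-e^{-(\xi+\eta)^2/(2d)})$ and its $\cos\cos$ analogue collapse the $d$-fold product to $e^{-(\xi^2+\eta^2)/2}\sinh^{|T|}(\xi\eta/d)\cosh^{d-|T|}(\xi\eta/d)$. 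Rescaling $\xi=\sqrt d\tilde\xi$, $\eta=\sqrt d\tilde\eta$ writes $\E_{w,b}[\hat h_{w,b}(T)^2]=\tfrac{1}{\pi^2 d}\iint e^{d\Psi(\tilde\xi,\tilde\eta)}(\tilde\xi\tilde\eta)^{-2}\,d\tilde\xi\,d\tilde\eta$ (up to a bounded factor from $\E_b[g(\xi b)g(\eta b)]$), with $\Psi(\tilde\xi,\tilde\eta)=-(\tilde\xi^2+\tilde\eta^2)/2+(s/d)\log\sinh(\tilde\xi\tilde\eta)+(1-s/d)\log\cosh(\tilde\xi\tilde\eta)$; the analogous expression for $\hat h'_{w,b}$ has $(\tilde\xi\tilde\eta)^{-1}$ in place of $(\tilde\xi\tilde\eta)^{-2}$.

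A saddle-point analysis on $\Psi$ then yields the required exponential bound. Using $\sinh(u)\le u\cosh(u)$ and $\cosh(u)\le e^{u^2/2}$ for $u\le 1$ controls the origin, where the $\sinh^{|T|}(\tilde\xi\tilde\eta)\sim(\tilde\xi\tilde\eta)^{|T|}$ vanishing cancels the $(\tilde\xi\tilde\eta)^{-2}$ singularity because $|T|\ge 2$; using $\sinh(u),\cosh(u)\le e^u$ combined with AM-GM $(\tilde\xi^2+\tilde\eta^2)/2\ge\tilde\xi\tilde\eta$ controls infinity, producing Gaussian decay in $(\tilde\xi-\tilde\eta)^2$ together with an extra $2^{-s}$ factor. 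Together these show $\sup\Psi\le-c$ for some $c=c(\sigma_0^2)>0$ uniform in $s/d\in[1/2,1]$; as a sanity check, along the diagonal $\tilde\xi=\tilde\eta=t$ the integrand is at most $e^{-dt^2}t^{2s-4}$, which is maximized at $t^2=(s-2)/d$ with value at most $e^{-(s-2)}\le e^{-d/2+O(1)}$. The main obstacle will be the double-integral estimate: the origin singularity is cancelled only marginally (the $\sinh$-vanishing exactly meets the $(\tilde\xi\tilde\eta)^{-2}$ pole when $|T|=2$); the exponential growth of $\sinh^s\cosh^{d-s}$ at infinity must be beaten by the Gaussian weight factor; and the constants $C,C'$ must be produced uniformly in $s/d\in[1/2,1]$ so that the same bound covers both the almost-full regime ($a=O(1)$, $s/d\to 1$) and the threshold one ($a=d/2$). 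The bias factor $\E_b[g(\xi b)g(\eta b)]$, which is a sign-ambiguous difference of Gaussians in the odd-$|T|$ case, also requires case-by-case handling but is always bounded by a positive sum that only tightens the bound.
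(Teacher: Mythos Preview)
Your reduction to per-coordinate Fourier coefficients $\E_{w,b}[\hat h_{w,b}(T)^2]$ and $\E_{w,b}[\hat h'_{w,b}(T)^2]$ matches the paper's first step, but from there the two arguments diverge. The paper rewrites the squared coefficient as a two-sample expectation $\E_{x,x'}[\chi_T(x)\chi_T(x')\,\E_{w,b}(\cdot)]$, conditions on the Hamming distance $k$ between $x$ and $x'$ on $T$, and observes that conditioned on $k$ the pair $(w\cdot x+b,\,w\cdot x'+b)$ is jointly Gaussian with correlation $\rho$ linear in $k$. It then uses the closed forms $\E[\mathds 1_{G_1\ge 0}\mathds 1_{G_2\ge 0}]=\tfrac12-\tfrac{1}{2\pi}\arccos\rho$ and the Hermite expansion of $\ReLU$, Taylor-expands in $\rho$, and annihilates every term of degree $<d$ via the identity $\sum_k(-1)^k\binom{d}{k}P(k)=0$; the surviving high-degree tail is exponentially small because $|\rho|\le 1-\Omega(1)$ except on an exponentially rare event in $k$. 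Your route instead passes through the Fourier representation of $|t|$ and $\sign(t)$, computes $\E_w$ in closed form to land on $e^{-(\xi^2+\eta^2)/2}\sinh^{|T|}(\xi\eta/d)\cosh^{d-|T|}(\xi\eta/d)$, and reduces to a Laplace-type estimate on the rescaled exponent $\Psi$. Both are valid; the paper's argument is combinatorial and recycles the same Taylor/cancellation machinery verbatim for the perturbed-Rademacher case (\Cref{prop:perturbed-gal}), while yours is cleaner analytically and would adapt more readily to activations without a nice Hermite expansion. Two cautions on your sketch: the pointwise bound $\sup\Psi\le -c$ alone does \emph{not} control the integral, since $\iint(\tilde\xi\tilde\eta)^{-2}\,d\tilde\xi\,d\tilde\eta$ diverges; you really do need the region split you allude to, with $e^{-u}\sinh(u)\le u$ and $e^{-u}\cosh(u)\le 1$ giving integrand $\lesssim u^{|T|-2}e^{-d(\tilde\xi-\tilde\eta)^2/2}$ near the origin and $e^{-u}\sinh(u)\le\tfrac12$ giving $2^{-|T|}e^{-d(\tilde\xi-\tilde\eta)^2/2}u^{-2}$ away from it. And your diagonal sanity check should read $e^{-dt^2/2}t^{2s-4}$ rather than $e^{-dt^2}t^{2s-4}$ (the $\cosh^d$ term costs you a factor $e^{dt^4/2}$, which for $t^2\le 1$ eats half the Gaussian), though the conclusion survives.
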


\subsubsection{Small Alignment for Perturbed Initialization}
\label{sec:perturbed-gal}

Consider the perturbed Rademacher
initialization
$\frac{1}{\sqrt{d}}(r+g)$ for $g\sim\mathcal{N}(0,\sigma^2)$ for some constant
$\sigma>0$. In order to prove a rigorous lower
bound like in \Cref{thm:gaussian-no-learning}
for this initialization,
we need to establish the alignment bound
for $\GAL_{f}(r+g)$. Once this bound is proved,
the remaining steps of the proof are similar
as for \Cref{thm:gaussian-no-learning}.

\begin{theorem}
    \label{thm:perturbed-no-learning}
    There exists $\sigma_0>0$ such that for all $\sigma=\sigma(d)$ such that $\sigma_0\le \sigma\le \exp(o(d))$ the following
    holds:
    
    Let $\theta=(w,v)$ for $w\in\mathbb{R}^{n \times d},
    v\in\mathbb{R}^n$ and
        $\NN(x;\theta)=\sum_{i=1}^n v_i\ReLU(w_i\cdot x)$.
    Let $f(x)=\prod_{i=1}^{d}x_i$.
    Consider the i.i.d.~initialization 
    $w=\frac{1}{\sqrt{d}}(r+g)$ where
    $r\sim\Rad(1/2)$,
    $g\sim\mathcal{N}(0,\sigma^2)$ with all coordinates independent, 
    $v\sim\mathcal{N}\left(0,\frac{1}
    {n}\Id_n\right)$.
    
    Then, for any number of hidden neurons $n=\exp(o(d))$,
    any number of time steps $T=\exp(o(d))$,
    any learning rate $0\le\gamma\le\exp(o(d))$,
    any noise level $\exp(-o(d))\le\tau\le\exp(o(d))$,
    after $T$ steps of the noisy GD algorithm with correlation loss,
    \begin{align}
    \Pr\left[\sign(\NN^T(x;\theta))=f(x)\right]
    \le \frac{1}{2}+\exp(-\Omega(d))\;.
    \end{align}
\end{theorem}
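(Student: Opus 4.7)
The plan is to mirror the proof of Theorem~\ref{thm:gaussian-no-learning} by invoking Theorem~\ref{thm:negative_general}. It suffices to establish a uniform bound $\GAL_f(\theta^0 + \lambda H) \le \exp(-\Omega(d))$ for every $\lambda^2 \le T\gamma^2\tau^2 \le \exp(o(d))$. Since Gaussian shifts combine additively with $g$, the perturbed weights $\theta^0 + \lambda H$ again take the form $(r + g')/\sqrt d$ with $g'\sim\mathcal{N}(0,(\sigma^2+d\lambda^2)\Id)$, whose effective noise level stays in $[\sigma_0^2,\exp(o(d))]$. Thus it is enough to establish the alignment bound uniformly over perturbed-Rademacher initializations with $\sigma$ in this window; Theorem~\ref{thm:negative_general} then yields $\Pr[\sign(\NN^T)=f]\le 1/2+T\sqrt{\varepsilon}/(2\tau) \le 1/2+\exp(-\Omega(d))$ using $\tau\ge\exp(-o(d))$ and $T\le\exp(o(d))$.

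The GAL decomposes, using $\E[v_i^2]=1/n$ and $\Gamma_r\equiv 0$ under the correlation loss, into $n\,\E_w[\hat R(w)^2] + \sum_{j=1}^d \E_w[\hat Q_j(w)^2]$, with $\hat R(w):=\E_x[\chi_d(x)\ReLU(w\cdot x)]$ and $\hat Q_j(w):=\E_x[\chi_d(x)\mathbf{1}\{w\cdot x>0\}x_j]$. For odd $d$ both terms vanish by the reflection $x\mapsto -x$, so attention restricts to even $d$. Using $\ReLU(z)=(z+|z|)/2$ and $\mathbf{1}\{z>0\}=(1+\sign(z))/2$ together with the integral representations $|z|=\tfrac{2}{\pi}\int_0^\infty(1-\cos(sz))/s^2\,ds$ and $\sign(z)=\tfrac{2}{\pi}\int_0^\infty\sin(sz)/s\,ds$, followed by Fubini and coordinate independence, yields
\begin{align*}
\E_w[\hat R(w)^2]&=\frac{1}{\pi^2}\int_0^\infty\!\!\int_0^\infty\frac{\Psi(s_1,s_2)^d}{s_1^2 s_2^2}\,ds_1\,ds_2, \\
\Psi(s_1,s_2)&:=\E_{w_1}[\sin(s_1 w_1)\sin(s_2 w_1)]\\
&=\tfrac{1}{2}\bigl[\cos(\tfrac{s_1-s_2}{\sqrt d})e^{-(s_1-s_2)^2\sigma^2/(2d)}-\cos(\tfrac{s_1+s_2}{\sqrt d})e^{-(s_1+s_2)^2\sigma^2/(2d)}\bigr],
\end{align*}
and an analogous formula with $\Psi^{d-1}\Phi/(s_1s_2)$, where $\Phi:=\E_{w_1}[\cos(s_1 w_1)\cos(s_2 w_1)]$, governs each $\E_w[\hat Q_j(w)^2]$.

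The main obstacle, and the step I would devote the most care to, is the uniform contraction $|\Psi(s_1,s_2)|\le\rho(\sigma)<1$. Setting $u=(s_1-s_2)/\sqrt d$, $v=(s_1+s_2)/\sqrt d$, and $f(t)=\cos(t)e^{-t^2\sigma^2/2}$, we have $\Psi=(f(u)-f(v))/2$, with $\sup_t f(t)=1$ attained at $t=0$ and $\inf_t f(t)\ge -M(\sigma)$ where $M(\sigma)\to 0$ as $\sigma\to\infty$ because the Gaussian factor damps the negative lobe of the cosine. Choosing $\sigma_0$ so that $M(\sigma_0)<1$ yields $|\Psi|\le(1+M(\sigma))/2=:\rho(\sigma)<1$ uniformly for $\sigma\ge\sigma_0$. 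Combined with the near-origin Taylor estimate $|\Psi|\lesssim s_1 s_2(1+\sigma^2)/d$ that tames the $1/(s_1 s_2)^2$ singularity, and the Gaussian decay of $\Psi$ at infinity, the double integral is $\exp(-\Omega(d))$; the $w_{ij}$-integrals are treated identically. Multiplying by the $n+d=\exp(o(d))$ factor preserves $\GAL_f\le\exp(-\Omega(d))$, closing the argument via Theorem~\ref{thm:negative_general}.
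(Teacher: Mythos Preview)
Your high-level reduction---absorbing the Gaussian shift $\lambda H$ into a larger effective $\sigma$ for the hidden weights, then invoking Theorem~\ref{thm:negative_general}---matches the paper (which also rescales $v$ by $\sqrt{1+n\lambda^2}$, a detail you omit but which only costs an $\exp(o(d))$ factor). Your route to the GAL bound itself is genuinely different: you pass through the integral representations of $|z|$ and $\sign(z)$ and reduce to a double integral of $\Psi^d/(s_1 s_2)^2$ (resp.\ $\Phi\Psi^{d-1}/(s_1 s_2)$), whereas the paper expands $\Pr[g+a,g'+b\ge 0]$ and $\E[\ReLU(g+a)\ReLU(g'+b)]$ as tetrachoric/Hermite series (Corollaries~\ref{cor:correlated-step}--\ref{cor:correlated-relu}) and kills the low-degree part via the alternating-sum identity of Claim~\ref{cl:cancellation}. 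Your characteristic-function approach is attractively direct, and for the output-layer term $\E_w[\hat R(w)^2]$ it works: the contraction $|\Psi|\le\rho<1$ plus $|\Psi|\lesssim s_1 s_2(1+\sigma^2)/d$ near the axes suffices because $1/(s_1 s_2)^2$ is already integrable on $[1,\infty)^2$.

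The gap is in the hidden-layer term $\E_w[\hat Q_j(w)^2]$, which you say is ``treated identically''. It is not. First, the representation $\sign(z)=\tfrac{2}{\pi}\int_0^\infty\tfrac{\sin(sz)}{s}\,ds$ is only conditionally convergent, so the Fubini step producing the $\Phi\Psi^{d-1}/(s_1 s_2)$ formula needs justification (e.g.\ insert $e^{-\epsilon s}$ and pass to the limit). More importantly, your ingredient ``Gaussian decay of $\Psi$ at infinity'' is false: on the diagonal $s_1=s_2\to\infty$ one has $u=0$, $f(u)=1$, $f(v)\to 0$, so $\Psi\to 1/2$ and $\Phi\to 1/2$. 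Since $1/(s_1 s_2)$ is \emph{not} integrable at infinity, the pointwise bound $|\Phi\Psi^{d-1}|\le\rho^{d-1}$ alone gives a divergent majorant, and your three listed ingredients do not close the argument. A fix is to peel off a few factors for decay: write $|\Psi|^{d-1}|\Phi|\le\rho^{d-3}\bigl[(|f(u)|+|f(v)|)/2\bigr]^3$; the cube furnishes Gaussian-type decay in $s_1\pm s_2$ (with width $O(\sqrt{d}/\sigma)$), and one checks $\int_1^\infty\!\int_1^\infty\frac{[(|f(u)|+|f(v)|)/2]^3}{s_1 s_2}\,ds_1 ds_2=O\bigl((\log d)^2+d/\sigma^2\bigr)$, so the full integral is $\rho^{d-3}\cdot\exp(o(d))=\exp(-\Omega(d))$ as required.
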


\begin{proposition}
    \label{prop:perturbed-gal}
    There exists $\sigma_0,C,C'>0$
    such that the following holds:
    Let the setting be as
    in \Cref{thm:perturbed-no-learning}. For any network
    with perturbed initialization with $\sigma\ge\sigma_0$
    we have a gradient alignment bound
    \begin{align}
    \GAL_{f}(\theta)\le \sigma^2 PC'\exp(-Cd)\;,
    \end{align}
    where $P:=nd+n$ is the total number of parameters.
\end{proposition}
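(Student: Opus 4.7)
The plan is to reduce $\GAL_f$ to second moments of certain high‑degree Fourier coefficients of $\ReLU(w\cdot x)$ and $\mathds{1}\{w\cdot x>0\}$, and then to bound those coefficients through an explicit integral representation that exposes the Gaussian smoothing induced by the perturbation. Under correlation loss, $\Gamma_r(\theta)=0$ because $\E y=0$ and $y\perp x$, so $\GAL_f(\theta)=\E_\theta\|\E_x[f(x)\nabla_\theta \NN(x;\theta)]\|_2^2$. Splitting by parameter type, using independence of $(w_i,v_i)$ across $i$ with $\E v_i=0$ and $\E v_i^2=1/n$, and exploiting the trivial identity $x_j\chi_{[d]\setminus\{j\}}(x)=\chi_{[d]}(x)$, this simplifies to
\[ \GAL_f(\theta) \;=\; n\,\E_w\!\bigl[\hat\varphi_w([d])^2\bigr] \;+\; d\,\E_w\!\bigl[\hat H_w([d]\!\setminus\!\{1\})^2\bigr], \]
where $\varphi_w(x)=\ReLU(w\cdot x)$, $H_w(x)=\mathds{1}\{w\cdot x>0\}$, and the $j$-sum collapses by permutation symmetry of the initialization. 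A sign‑flip argument $x\mapsto -x$ (using $\ReLU(-z)=\ReLU(z)-z$ and $\mathds{1}\{-z>0\}=1-\mathds{1}\{z>0\}$) shows that both of these Fourier coefficients vanish identically when $d$ is odd, so only $d$ even requires real work.

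To pass to a Gaussian computation I apply the substitution $y_i:=r_ix_i$, which preserves uniformity on $\{\pm 1\}^d$ and gives $w\cdot x=\tfrac{1}{\sqrt d}\sum_i(1+r_ig_i)y_i$ and $f(x)=(\prod_i r_i)\chi_{[d]}(y)$. By Gaussian symmetry $r_ig_i\stackrel{d}{=}g_i$ and $(\prod_i r_i)^2=1$, so the task reduces to bounding $\E_\eta\bigl(\E_y[\chi_{[d]}(y)\,\phi(\eta\cdot y/\sqrt d)]\bigr)^2$ for $\phi\in\{\ReLU,\mathds{1}_{>0}\}$ with $\eta_i\sim\cN(1,\sigma^2)$ i.i.d. Combining the representations $|z|=\tfrac{2}{\pi}\!\int_0^\infty(1-\cos tz)/t^2\,dt$, $\sign z=\tfrac{2}{\pi}\!\int_0^\infty\sin(tz)/t\,dt$, with the identity $\E_y[\chi_{[d]}(y)\,e^{it\eta\cdot y/\sqrt d}]=i^d\prod_i\sin(t\eta_i/\sqrt d)$, the surviving coefficient is of the form $\pm\pi^{-1}\!\int_0^\infty J(t)\,t^{-k}\,dt$, either with $J(t)=\prod_i\sin(t\eta_i/\sqrt d)$ and $k=2$ (for the $\ReLU$ case) or with $J(t)=\cos(t\eta_1/\sqrt d)\prod_{i>1}\sin(t\eta_i/\sqrt d)$ and $k=1$ (for the Heaviside case).

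Expanding $|\cdot|^2$, applying Fubini, and rescaling $t=\sqrt d\,u/\sigma$, $s=\sqrt d\,v/\sigma$ turns every such second moment into an integral of the form $\int_0^\infty\!\int_0^\infty K_\sigma(u,v)^{d-1}L_\sigma(u,v)\,(uv)^{-k}\,du\,dv$ with kernel
\[ K_\sigma(u,v) \;=\; \tfrac12\Bigl[\cos\tfrac{u-v}{\sigma}\,e^{-(u-v)^2/2} \,-\, \cos\tfrac{u+v}{\sigma}\,e^{-(u+v)^2/2}\Bigr], \]
which admits the positive‑semidefinite representation $K_\sigma(u,v)=\E_\zeta[\sin(u\zeta)\sin(v\zeta)]$ for $\zeta\sim\cN(1/\sigma,1)$, while $L_\sigma$ is either $K_\sigma$ itself or its cosine analogue $\E_\zeta[\cos(u\zeta)\cos(v\zeta)]$, both bounded by $1$. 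The rescaling Jacobian contributes exactly $\sigma^{2k-2}/d^{k-1}$, which together with the $n$ and $d$ prefactors and the inequalities $n,d\le P$ produces the advertised $\sigma^2 P$ factor. Exponential smallness then follows from Cauchy--Schwarz on the PSD kernel, $|K_\sigma(u,v)|\le\sqrt{K_\sigma(u,u)K_\sigma(v,v)}$, combined with $K_\sigma(u,u)=\E[\sin^2(u\zeta)]\le\min(2u^2,\tfrac12)$ valid for $\sigma\ge1$, which gives the factorised pointwise bound $|K_\sigma(u,v)|\le f(u)f(v)$ with $f(u):=\min(u\sqrt 2,\,1/\sqrt 2)$; for $k=2$ this already yields $\int\!\int|K_\sigma|^d/(uv)^2\,du\,dv\le C\cdot 2^{-d}$.

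The principal obstacle will be the $k=1$ integral arising from the Heaviside $w$-gradient: the factorised Cauchy--Schwarz bound alone is not integrable against $du\,dv/(uv)$ because $\int du/u$ diverges at infinity, so a more delicate splitting is needed. Near the origin the sharper estimate $|K_\sigma(u,v)|\le uv(1+1/\sigma^2)$ makes $|K_\sigma|^{d-1}/(uv)$ behave like $(uv)^{d-2}$, which is integrable on $[0,1]^2$; in the bulk the uniform bound $|K_\sigma|\le 1/2$ combined with the transversal Gaussian decay $|K_\sigma(u,v)|\le\tfrac12(e^{-(u-v)^2/2}+e^{-(u+v)^2/2})$ confines the non‑negligible mass to a strip $|u-v|=O(1)$ around the diagonal, along which $\int du/u^2<\infty$ restores convergence at infinity. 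Piecing these three estimates together, and taking $\sigma_0$ large enough so that the $O(1/\sigma^2)$ corrections between $K_\sigma$ and its $\sigma\to\infty$ Gaussian limit $\tfrac12(e^{-(u-v)^2/2}-e^{-(u+v)^2/2})$ do not spoil the bounds, one concludes $\GAL_f(\theta)\le\sigma^2 P\,C'\exp(-Cd)$.
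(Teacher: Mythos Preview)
Your approach via the integral representations $|z|=\tfrac{2}{\pi}\int_0^\infty(1-\cos tz)\,t^{-2}dt$ and $\sign z=\tfrac{2}{\pi}\int_0^\infty\sin(tz)\,t^{-1}dt$ is genuinely different from the paper's. After the same Hadamard substitution $y_i=r_ix_i$, the paper instead expands the square over two independent copies $y,y'$, conditions on the Hamming distance $k=d_H(y,y')$ so that $(g\cdot y,g\cdot y')$ become $\rho(k)$-correlated Gaussians with random shifts, invokes the tetrachoric/Hermite series for $\Pr[G_1,G_2\ge 0]$ and $\E[\ReLU(G_1)\ReLU(G_2)]$, and kills the low-degree part of those series through the alternating-binomial identity $\sum_k(-1)^k\binom{d'}{k}P(k)=0$ for $\deg P<d'$, controlling the tail by Hermite-polynomial growth bounds. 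Your kernel route bypasses all the series manipulations and is especially clean for the output-layer ($k=2$) term; the cost is a more delicate $k=1$ analysis, whereas the paper's argument is essentially symmetric between the two cases. Your observation that both Fourier coefficients vanish identically for odd $d$ is a nice shortcut the paper does not make explicit.

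There are, however, genuine gaps in the $k=1$ sketch. First, the claim $K_\sigma(u,u)\le 1/2$ is false: $K_\sigma(u,u)=\tfrac12\bigl(1-\cos(2u/\sigma)e^{-2u^2}\bigr)$ exceeds $\tfrac12$ whenever $\cos(2u/\sigma)<0$. The correct bound on $\{u\ge c_0\}$ is $K_\sigma(u,u)\le\tfrac12(1+e^{-2c_0^2})<1$, which still yields exponential decay but not your clean $2^{-d}$. Second, your near-origin estimate only shows \emph{integrability} of $(uv)^{d-2}$ on $[0,1]^2$, not exponential smallness: the prefactor $(1+1/\sigma^2)^{d-1}$ grows, so you must restrict to $[0,c_0]^2$ with $c_0^2(1+1/\sigma^2)<1$, and the mixed strip $[0,c_0]\times[c_0,\infty)$ then has to be handled separately (sacrifice one power of $K_\sigma$ for the transversal $e^{-(u-v)^2/2}$ and bound $\int_{c_0}^\infty e^{-(v-u)^2/2}/v\,dv\le\sqrt{2\pi}/c_0$). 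Third, in the bulk $[c_0,\infty)^2$ the combination you actually need is $|K_\sigma|^{d-1}\le c_1^{d-2}\,e^{-(u-v)^2/2}$ together with the symmetrisation $1/(uv)\le 1/\min(u,v)^2$, which is what makes $\int\!\int_{[c_0,\infty)^2}e^{-(u-v)^2/2}/(uv)\,du\,dv$ finite; your ``uniform bound $1/2$ plus strip'' description does not quite deliver this. Finally, since $\int_0^\infty\sin(tz)/t\,dt$ converges only conditionally, writing $\hat H^2$ as a double integral and pushing $\E_\eta$ inside requires a truncation-and-limit argument that you have not supplied. All of this is repairable within your framework, but it is more than bookkeeping.
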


The proofs for this section can be found in Appendices~\ref{app:proof_gaussian_gal}
and~\ref{app:proof_gal_perturbed}.

\begin{figure}[t]
    \centering
    \includegraphics[width=0.49\linewidth]{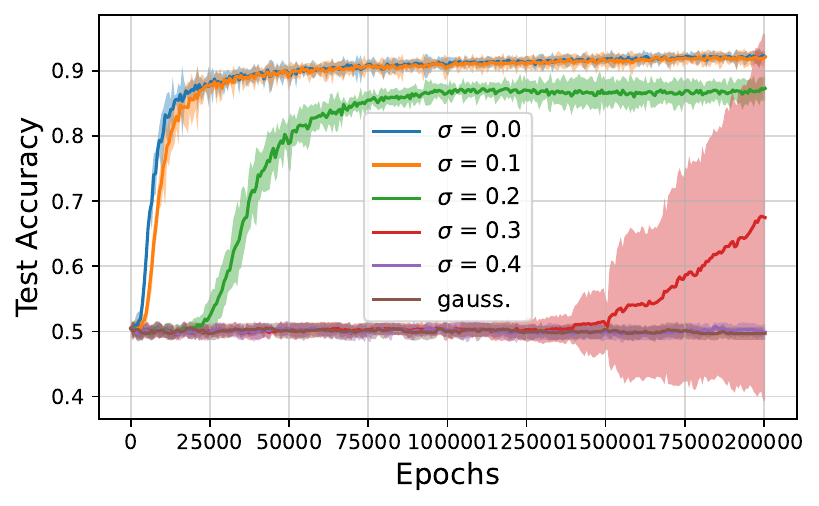}
    \includegraphics[width=0.49\linewidth]{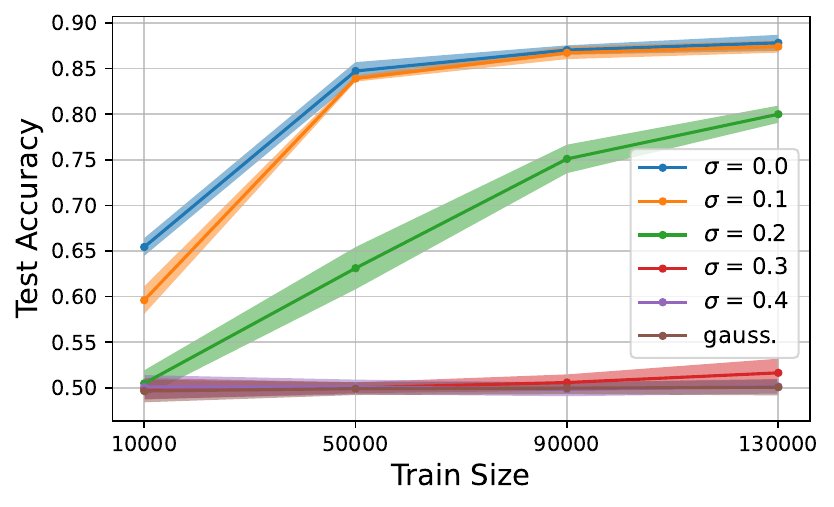}
    \caption{Learning the full parity with $\sigma$-perturbed initialization by SGD with the hinge loss on a $4$-layer MLP, with $d=50$, with online fresh samples (left) and with an offline fixed dataset (right).
    }
    \label{fig:full-parity-perturbed}
\end{figure}

\section{Experiments}
\label{sec:experiments}
In this section, we show empirical results on the impact of the initialization in learning the full parity. As our model, we use a multi-layer perceptron (MLP) with 3 hidden layers of  neurons sizes $512$, $512$ and $64$ with $\ReLU$ activation, and we train it with SGD with batch size $64$ on the hinge loss, training all layers simultaneously. Each experiment is repeated for $7$ random seeds and we report the $95 \%$ confidence intervals. In Appendix~\ref{app:additional_experiments}, we report further experiment details, as well as additional experiments \footnote{Code: \url{https://github.com/ecornacchia/high-degree-parities}}.

\paragraph{ $\boldsymbol{\sigma}$-Perturbed Initialization.} We first consider learning the full parity function with $\sigma$-perturbed initializations and investigate the effect of varying $\sigma$~(Figure\ref{fig:full-parity-perturbed}). To make different initializations comparable, we normalize them such that the variance entering each neuron is $1$ (see Appendix~\ref{app:additional_experiments} for details). We observe that the test accuracy after training decreases as $\sigma$ increases. This pattern is seen in both the online setting (left plot), where fresh batches are sampled at each iteration, and the offline setting (right plot), where the network is trained on a fixed dataset until the training loss decreases to $10^{-2}$, and evaluated on a separate test set. For input dimension $d=50$, as in~Figure~\ref{fig:full-parity-perturbed}, we find that some learning occurs for $\sigma \in \{0.1, 0.2\}$. However, in the Appendix, we report experiments with larger input dimensions, where learning does not occur for these values of $\sigma$~(Figure\ref{fig:larger_input_dim}).

\paragraph{Gradient Alignment.} In Figure~\ref{fig:log_INAL_hingeloss} we compute the gradient alignment for a one-neuron $\ReLU$ network under different initializations and losses, which are not covered by our theoretical results. The left plot shows the $\GAL_f$ at initialization for the hinge loss with Rademacher, $\sigma$-perturbed Rademacher and Gaussian initializations, across input dimensions up to $d=40$. We observe that $\GAL_f$ seems to decrease at an inverse-polynomial rate for Rademacher, but super-polynomially fast for Gaussian and $\sigma$-perturbed initializations for large $\sigma$ (e.g. $\sigma\in\{0.8, 0.99\}$). 
The case of smaller $\sigma\in\{0.1,0.3\}$ is less
conclusive.
We also estimate, with Montecarlo, the $\GAL_f$ after training the neuron for a few steps ($t=2$ and $t=5$) on random labels (dots). We observe that training on random labels does not increase the $\GAL_f$. A theoretical understanding of this observation would allow to extend our negative result to the hinge loss.

In the right plot, we estimate numerically the initial $\GAL_f$ for the correlation loss for a single threshold neuron. We consider $\sigma$-perturbed initializations with small $\sigma$, contrasting \Cref{thm:perturbed-no-learning} and \Cref{prop:perturbed-gal}, which apply only for large $\sigma$. For small $\sigma$, $\GAL_f$ deviates from the Rademacher case, suggesting that it could be super-polynomially small for all constants $\sigma > 0$. Further investigation for small $\sigma$ is left for future work, and Appendix~\ref{app:additional_experiments} shows that $\GAL_f$ remains super-polynomially small for larger values of $\sigma$, confirming our theory.

\begin{figure}[t]
        \centering
        \includegraphics[width=0.49\linewidth]{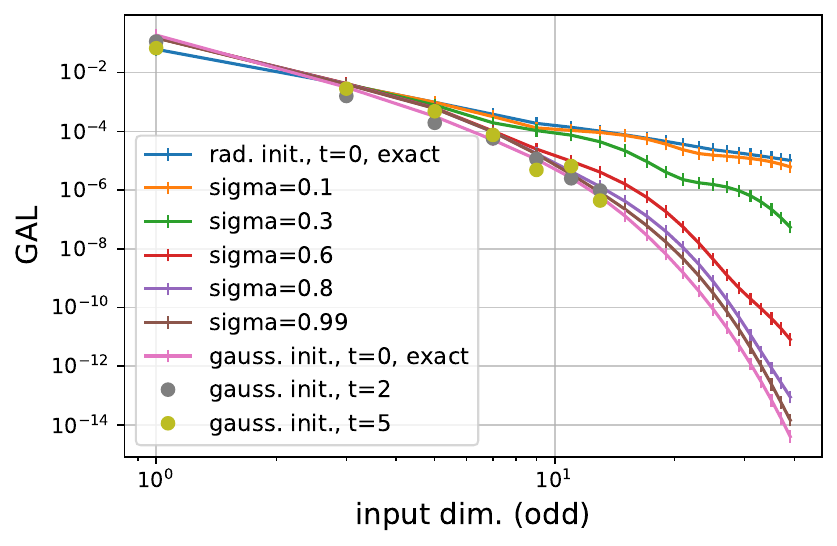}
        \includegraphics[width=0.49\linewidth]{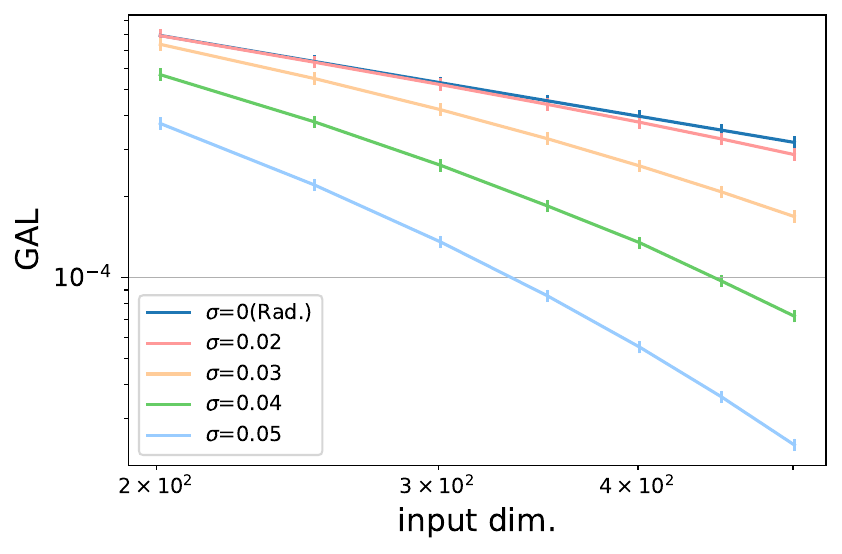}
        \caption{Computing numerically the alignment $\GAL_f$ with the hinge loss (left) and the correlation loss (right), for a one-neuron network. }
        \label{fig:log_INAL_hingeloss}
\end{figure}

\paragraph{Other Perturbed Initializations.} We next explore perturbations beyond mixtures of Gaussians. In Figure~\ref{fig:other_init}, we consider three types: 1) a mixture of two continuous uniform distributions with means $+1$ and $-1$, and standard deviations $\sigma \in \{0.1, 1.0\}$; 2) a sparsified Rademacher initialization, where a fraction $s \in \{1/2, 1/3, 1/5\}$ of the weights are set to $0$, and the rest follow a $\Rad(1/2)$ distribution; and 3) a symmetric discrete initialization, where the weights are randomly chosen from $\{-2, -1, 1, 2\}$. We find that the mixture of continuous uniforms behaves similarly to the mixture of Gaussians: for $\sigma = 0.1$ and input dimension $d=50$, the network successfully learns, but learning is prevented at larger $\sigma$. Additionally, we observe that all other discrete initializations fail to enable learning, suggesting that the Rademacher initialization is a special case.

\begin{figure}[t]
    \centering
    \includegraphics[width=0.49\linewidth]{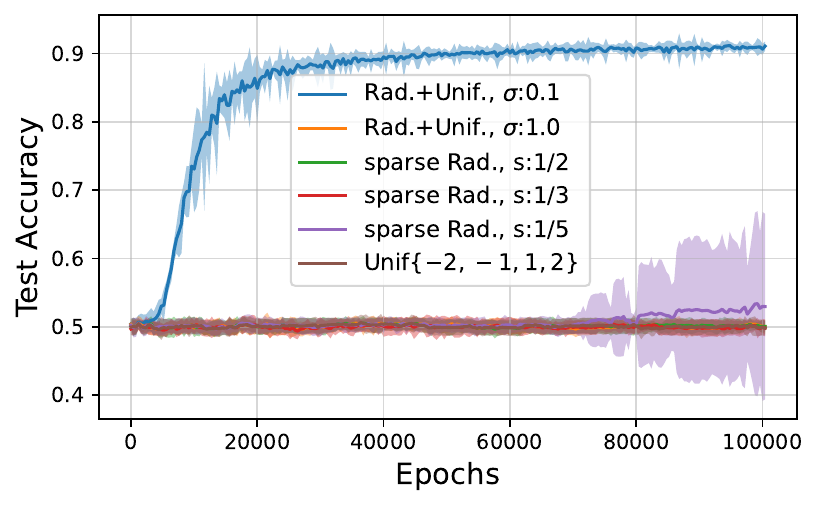}
    \includegraphics[width=0.49\linewidth]{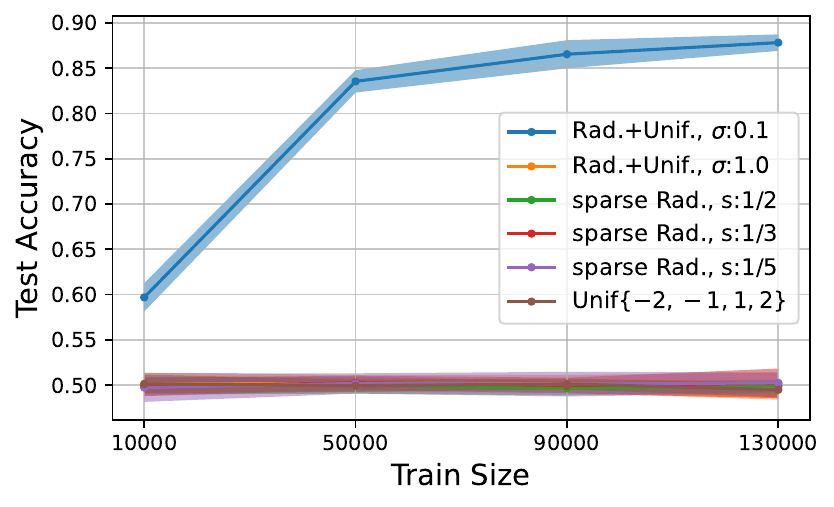}
    \caption{Learning the full parity with perturbations of the Rad. initialization by SGD with the hinge loss on a $4$-layer MLP, with $d=50$, with online fresh samples (left) and with an offline dataset (right).}
    \label{fig:other_init}
\end{figure}

\section{Conclusion}
In this paper, we advance the understanding of whether high degree parities can be learned using noisy-GD on standard neural networks with i.i.d. initializations. Specifically, we show that while the full parity is easily learnable with Rademacher initialization, it becomes challenging when Gaussian perturbations with large variance are introduced. 
This constitutes a separation between SQ algorithms and gradient descent on neural networks: the full parity is an example of a function that is trivially learnable in the statistical query (SQ) framework but difficult for noisy-GD on neural networks with most typical initializations, with the Rademacher being a special case. It raises interesting questions about a threshold where learning behavior changes based on the perturbation level $\sigma$;
result, e.g., to hinge loss and/or deeper architectures.

Additionally, we propose a novel, loss-dependent measure for assessing alignment between the initialization and the target distribution, and prove a negative result for the correlation loss that applies to general input distributions, beyond the specific case of full parity and Boolean inputs.
We leave to future work investigating if this technique
can be applied more generally, especially in other
scenarios where the dynamics remain stuck near initialization for a significant time (hardness of weak-learning).
For example, such behavior is plausible for targets presenting symmetries and requiring some level of `logical reasoning’ (e.g. arithmetic, graphs, syllogisms), for which parities are a simple model.
Similarly, we leave open strengthening of the
negative result, e.g., to hinge loss and/or deeper architectures.

\section{Acknowledgments}
The authors thank Nati Srebro for a useful discussion
and Ryan O'Donnell for a helpful communication regarding binomial coefficients. Part of this work was done while EC was visiting the African Institute for Mathematical Sciences (AIMS), Rwanda. JH and DKY were supported
by the Alexander von Humboldt
Foundation German research chair funding and the associated
DAAD project No. 57610033. EC was partly supported by the NSF DMS-2031883 and the Bush Faculty Fellowship ONR-N00014-20-1-2826.

\bibliography{Arxiv_v2_main}
\bibliographystyle{alpha}

\appendix
\appendix

\section{Proofs and details for Section~\ref{sec:positive}}
\label{app:proofs_positive}

A useful identity to be remembered
for later is, for every $x,w\in\{\pm 1\}^d$:
\begin{align}
    \prod_{j=1}^d x_j w_{j}=(-1)^{(d-w\cdot x)/2}\;.
    \label{eq:38}
\end{align}

\subsection{Proof of Theorem~\ref{thm: Positive-result-gd}}

First, consider the case without GD noise.
One step of GD with
learning rate $\gamma$ results in the following
update:
\begin{align}
v_i^{t+1}&=
v_i^t+\gamma\E_{x\sim\{\pm1\}^{d}}\left(\prod_{j=1}^{d-a} x_j\right)\sigma(w_i\cdot x+b_i)\\
&=v_i^t+\gamma\E_{x\sim\{\pm1\}^{d}}
\left(\prod_{j=1}^{d-a}w_{ij}\right)(-1)^{(d-a-\sum_{j=1}^{d-a}w_{ij}x_j)/2}\sigma(\sum_{j=1}^{d}w_{ij}x_j+b_i)
\label{eq:36}
\\
&=v_i^t+\gamma
\left(\prod_{j=1}^{d-a}w_{ij}\right)
\Delta^{(a)}_{d,b_i,\sigma}\;,
\label{eq:37}
\end{align}
keeping in mind from \eqref{eq:delta_general} that 
\begin{align}
    \Delta^{(a)}_{d,b,\sigma}=\E_{x\sim\{\pm 1\}^d}\left[(-1)^{(d-a-\sum_{j=1}^{d-a} x_j)/2}
\sigma\left(\sum_{j=1}^d x_j+b\right)\right]\;.
\end{align}

Since 
$w_i\cdot x$ is distributed as a sum
of i.i.d.~Rademachers regardless
of $w_i$, the value in~\eqref{eq:36}
indeed can be replaced with
the factor $\Delta^{(a)}_{d,b_i,\sigma}$ which
does not depend on $w_i$.

Accordingly, after one step of GD 
for starting zero weights $v^0=0$,
the output of the network is given by
\begin{align}
N^1(x)=\sum_{i=1}^n\gamma\Delta^{(a)}_{d,b_i,\sigma}
\prod_{j=1}^{d-a} w_{ij}\sigma(w_i\cdot x+b_i)\;.
\label{eq:02}
\end{align}

For fixed $x\in\{\pm 1\}^d$ and
in expectation over $w$ and $b$, this is,
using~\eqref{eq:38},
\begin{align}
\E_{w,b} N^1(x)
&=
\gamma\sum_{i=1}^n\E_{b_i}\left[
\Delta^{(a)}_{d,b_i,\sigma}
\E_{w\sim\{\pm 1\}^d}\left[\prod_{j=1}^{d-a} w_{ij}\sigma(w_i\cdot x+b_i)\right]
\right]\\
&=\gamma\left(\prod_{j=1}^{d-a}x_j\right)
\sum_{i=1}^n\E_{b_i}\left[ \Delta^{(a)}_{d,b_i,\sigma}
\E_{w\sim\{\pm 1\}^d}\left[(-1)^{(d-a-\sum_{j=1}^{d-a}w_{ij}x_j)/2}\sigma(w_i\cdot x+b_i)\right]\right]
\label{eq:39}\\
&=\gamma\left(\prod_{j=1}^{d-a} x_j\right)
\sum_{i=1}^n\E_{b_i}\left[[\Delta^{(a)}_{d,b_i,\sigma}]^2\right]
=\gamma \left(\prod_{j=1}^{d-a} x_j\right)
n\Delta^2\;.
\label{eq:40}
\end{align}

Let us come back to the expression $f_a(x)N^1(x)$ for a fixed 
$x\in\{\pm 1\}^d$. Its value is a random variable depending
on the hidden layer initialization $W$. By~\eqref{eq:02}, it can be written as a sum of $n$
i.i.d.~random variables, and 
each of them has absolute
value at most $\gamma RC\Delta$. Furthermore, it follows
from~\eqref{eq:40}
that
$\E_w f_a(x)N^1(x)= \gamma n\Delta^2$.
Therefore, we
can upper bound the prediction error probability
by Hoeffding's inequality:
\begin{align}
\Pr_{w,b}[f_a(x)N^1(x)\le 0]
&\le\Pr_{w,b}\left[f_a(x)N^1(x)\le\frac{\gamma n\Delta^2}{2}\right]
\le\exp\left(-\frac{n\Delta^2}{8R^2C^2}\right)
\le\exp(-2d)\;,\label{eq:gd-hoeffding}
\end{align}
where the last inequality holds for $n\ge\Omega(d\frac{R^2}{\Delta^2})$.
Therefore, by union bound, the network will make
correct predictions $f_a(x)N^1(x)>0$ for all $x\in\{\pm 1\}^d$
except with probability $\exp(-d)$. 

In the presence of gradient noise, the weights are given
as $\tilde{v}^1=v^1+\gamma\xi$, where
$\xi\sim\mathcal{N}(0,\tau^2\Id)$. Then,
\begin{align}
f_a(x)\tilde{N}^1(x)=f_a(x) N^1(x)+\gamma f_a(x)\sum_{i=1}^n\xi_i \sigma(w_i\cdot x+b_i)
\ge f_a(x)N^1(x)-\gamma R\sum_{i=1}^n|\xi_i|\;.
\end{align}
Using~\eqref{eq:gd-hoeffding}, except with probability
$\exp(-d)$, we will have $f_a(x)\tilde{N}^1(x)>0$ for every
$x$ as long as $\gamma R\sum_i|\xi_i|\le \frac{\gamma n\Delta^2}{2}$, or 
equivalently $\sum_i|\xi_i|\le \frac{n\Delta^2}{2R}$. 
Note that
$\E|\xi_i|=\tau\sqrt{2/\pi}$, so by assumption
$\tau\le O(\frac{\Delta^2}{R})$ we have
$\E\sum_i|\xi_i|\le \frac{n\Delta^2}{4R}$.

Furthermore, as $\xi_i$ has Gaussian
distribution, its absolute value $|\xi_i|$ is sub-Gaussian
(see, e.g., Proposition~2.5.2 in~\cite{hdp}). Therefore,
by sub-Gaussian concentration, we can estimate
\begin{align}
\Pr\left[\sum_{i=1}^n|\xi_i|\ge \frac{n\Delta^2}{2R}\right]\le
\Pr\left[\sum_{i=1}^n|\xi_i|-\E|\xi_i|
\ge\frac{n\Delta^2}{4R}\right]\le
\exp\left(-\Omega\left(\frac{n\Delta^4}{R^2\tau^2}\right)\right)
\le\exp(-d)\;,
\end{align}
where the last inequality holds as
$n\ge\Omega(d\frac{R^2}{\Delta^2})$ and $\tau^2=O(\frac{\Delta^4}{R^2})$.
All in all, the noisy network
classifies all inputs correctly except with probability
at most $2\exp(-d)$.\qed

\subsection{Proof of \Cref{thm:positive-sgd}}

In the general case of noisy SGD, let $v=v^1\in\mathbb{R}^n$
be the update given by GD, that is
$v_i=\E_xf_a(x)\sigma(w_i\cdot x+b_i)$. The SGD update can be written
as
\begin{align}
\hat{v}_i^{t+1}=\hat{v}_i^t+\gamma e^t_i+\gamma\xi_i^t\;,
\end{align}
where: (a) $e^t_i$ for $1\le i\le n$ is a random variable with
expectation $\E e^t_i=v_i$ and bounded by
$|e^t_i|\le R$; (b) $\xi^t\sim\mathcal{N}(0,\tau^2\Id)$;
and where those random variables are independent across time. 

From~\eqref{eq:gd-hoeffding},
if $n>\Omega(d\frac{R^2}{\Delta^2})$, except with
probability $\exp(-d)$ over the
choice of hidden layer weights $w$ and biases $b$, for every $x\in\{\pm 1\}^d$
it holds
\begin{align}
f_a(x)\sum_{i=1}^nv_i\sigma(w_i\cdot x+b_i)>\frac{\gamma n\Delta^2}{2}\;.
\end{align}

Let $x\in\{\pm 1\}^d$. We estimate
\begin{align}
f_a(x)\hat{N}^T(x)
&=f_a(x)\gamma\sum_{i=1}^n \big(Tv_i+\sum_{t=1}^Te_i^t-v_i+\xi_i^t\big)
\sigma(w_i\cdot x+b_i)\\
&>\frac{\gamma Tn\Delta^2}{2}-\gamma R\left(
\sum_{i=1}^n\left|\sum_{t=1}^T e_i^t-v_i\right| 
+ \sum_{i=1}^n\left|\sum_{t=1}^T\xi_i^t\right|
\right)\;.
\end{align}
Accordingly, if $\sum_i\left|\sum_{t}\xi_i^t\right|\le\frac{Tn\Delta^2}{8R}$
and $\left|\sum_t e_i^t-v_i\right|\le\frac{Tn\Delta^2}{8R}$ for every $1\le i\le n$,
then $f_a(x)\hat{N}^T(x)>0$. We now show that each of those two
events fails to occur with only exponentially small probability.

First, recall that we have almost surely $|e_i^t|\le R$.
By Hoeffding's inequality,
\begin{align*}
\Pr\left[\sum_{t=1}^T|e_i^t-v_i|\ge\frac{Tn\Delta^2}{8R}\right]
\le2\exp\left(-\frac{T\Delta^4}{2^7\cdot R^4}\right)
\le\exp(-d)/n\;,
\end{align*}
as soon as $T\ge\Omega(\frac{R^4}{\Delta^4}(d+\log n))$. By union bound,
$|\sum_t e_i^t-v_i|\le \frac{Tn\Delta^2}{8R}$ holds for every $1\le i\le n$, except with probability $\exp(-d)$.

As for the additional Gaussian noise, 
observe that for $\tau=O(\frac{\sqrt{T}\Delta^2}{R})$ we have
\begin{align}
\E\left[\sum_{i=1}^n\left|\sum_{t=1}^T\xi_i^t\right|\right]
=n\tau\sqrt{T}\sqrt{2/\pi}
\le\frac{Tn\Delta^2}{16R}\;.
\end{align}
Similarly
as in the GD case, $\left|\sum_{t} \frac{\xi_i^t}{\sqrt{T}\tau}\right|$ is
a sub-Gaussian random variable. Therefore, we have
concentration
\begin{align}
\Pr\left[\sum_{i=1}^n\left|\sum_{t=1}^T\xi_i^t\right|
\ge\frac{Tn\Delta^2}{8R}\right]
&\le \Pr\left[\sum_{i=1}^n\left|\sum_{t=1}^T\xi_i^t\right|
-n\tau\sqrt{T}\sqrt{2/\pi}
\ge\frac{Tn\Delta^2}{16R}\right]\\
&\le\exp\left(-\Omega\left(\frac{Tn\Delta^4}{\tau^2R^2}\right)\right)\\
&\le\exp(-d)\;,
\end{align}
since $\tau=O(\sqrt{T}\Delta^2/R)$ and $n=\Omega(d\frac{R^2}{\Delta^2})$.\qed

\subsection{Proofs of 
\Cref{cor:positive-full} and \Cref{cor:positive-almost}}
Let $\Delta_{d,b,\sigma}:=\Delta^{(0)}_{d,b,\sigma}$.
We need to find asymptotic bounds 
on $|\Delta_{d,b,\sigma}|$ for  $\sigma=\ReLU$ and $\sigma=$ clipped-$\ReLU$ (let's denote it as $\CReLU$) and $|\Delta^{(a)}_{d,b,\ReLU}|$ for $a>0$. We therefore turn to developing formulas for $\Delta_{d,b,\CReLU}$ and 
$\Delta^{(a)}_{d,b, \ReLU}$. Let's first consider  the following combinatorial claim:
\begin{claim}
\label{cl:binomial_formula}
For any integer $d,c>1$ and $c'$ such that $c\le c'$:
\begin{enumerate}
    \item $\sum_{k=c}^d(-1)^k\binom{d}{k}=(-1)^{c}\binom{d-1}{c-1}$
    \item $\sum_{k=c}^d(-1)^kk\binom{d}{k}=(-1)^{c}d\binom{d-2}{c-2}$
    \item $\sum_{k=c}^{c'}(-1)^k\binom{d}{k}=(-1)^{c}\binom{d-1}{c-1}+(-1)^{c'}\binom{d-1}{c'}$
    \item $\sum_{k=c}^{c'}(-1)^k k\binom{d}{k}=(-1)^{c}d\binom{d-2}{c-2}+(-1)^{c'}d\binom{d-2}{c'-1}\;.$
\end{enumerate} 
\end{claim}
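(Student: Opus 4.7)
All four identities reduce to the standard alternating partial-sum formula (part 1); parts (2)--(4) then follow by short algebraic manipulations. The plan is to prove (1) by a telescoping argument based on Pascal's rule, then bootstrap the remaining three identities from it.

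\emph{Step 1 (part 1).} The natural approach is to apply Pascal's rule $\binom{d}{k}=\binom{d-1}{k-1}+\binom{d-1}{k}$ inside the sum, so that
\[
\sum_{k=c}^{d}(-1)^{k}\binom{d}{k}
=\sum_{k=c}^{d}(-1)^{k}\binom{d-1}{k-1}+\sum_{k=c}^{d}(-1)^{k}\binom{d-1}{k}.
\]
In the first sum I would shift the index $k\mapsto k+1$; in the second sum I would note that the $k=d$ term vanishes since $\binom{d-1}{d}=0$. The two sums then differ by a single leftover term and a sign, and after cancellation what remains is exactly $(-1)^{c}\binom{d-1}{c-1}$. (An equally clean alternative is downward induction on $c$: the base case $c=d$ is immediate, and the inductive step is a one-line application of Pascal's rule.)

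\emph{Step 2 (part 2).} Here I would use the absorption identity $k\binom{d}{k}=d\binom{d-1}{k-1}$ to pull the factor $d$ out of the sum:
\[
\sum_{k=c}^{d}(-1)^{k}k\binom{d}{k}
=d\sum_{k=c}^{d}(-1)^{k}\binom{d-1}{k-1}
=-d\sum_{j=c-1}^{d-1}(-1)^{j}\binom{d-1}{j},
\]
and then apply part (1) with parameters $(d-1,c-1)$ in place of $(d,c)$.

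\emph{Step 3 (parts 3 and 4).} These are routine once (1) and (2) are available: split the partial sum as a difference of two tails,
\[
\sum_{k=c}^{c'}(-1)^{k}\binom{d}{k}
=\sum_{k=c}^{d}(-1)^{k}\binom{d}{k}-\sum_{k=c'+1}^{d}(-1)^{k}\binom{d}{k},
\]
apply part (1) to each tail (the second with $c$ replaced by $c'+1$), and simplify $-(-1)^{c'+1}=(-1)^{c'}$. The same subtraction trick applied to the weighted tail sums gives (4) from (2).

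\emph{Expected obstacle.} There is no genuine obstacle here; the only place requiring mild care is the index-shifting and tracking of signs in Step~1, and making sure the boundary terms ($\binom{d-1}{d}=0$ and the lower endpoint) are handled so that the telescoping leaves exactly one surviving term. The conditions $d,c>1$ and $c\le c'$ are used only to ensure the binomials on the right-hand side are well defined and the index shifts land in the valid range.
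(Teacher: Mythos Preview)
Your proposal is correct and matches the paper's proof essentially line for line: part (1) via Pascal's rule and telescoping, part (2) via the absorption identity $k\binom{d}{k}=d\binom{d-1}{k-1}$ followed by part (1), and parts (3)--(4) by writing the partial sum as a difference of two tails and applying the earlier parts. There is nothing to add.
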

\begin{proof}
Here and below, we follow the convention $\binom{d}{k}=0$ for $k<0$ or $k>d$.
\begin{enumerate}
    \item This follows by observing that it is a telescopic sum, with the term $(-1)^{d}\binom{d-1}{d}=0$ by convention:
\begin{align}
\sum_{k=c}^d(-1)^k\binom{d}{k}
&=\sum_{k=c}^d(-1)^k\left(\binom{d-1}{k}+\binom{d-1}{k-1}\right)
=(-1)^{c}\binom{d-1}{c-1}\;.
\label{eq:04}\end{align}
\item Here we use the above and the fact that $k\binom{d}{k}=d\binom{d-1}{k-1}$,

\begin{align}
    \sum_{k=c}^d(-1)^kk\binom{d}{k}
&=d\sum_{k=c}^{d}(-1)^k\binom{d-1}{k-1}
=(-1)^{c}d\binom{d-2}{c-2}\;.
\label{eq:05}
\end{align}
\item This follows from \eqref{eq:04}, indeed 
\begin{align}
    \sum_{k=c}^{c'}(-1)^k\binom{d}{k}&=\sum_{k=c}^d(-1)^k\binom{d}{k}-\sum_{k=c'+1}^{d}(-1)^k\binom{d}{k}
    \\&=(-1)^{c}\binom{d-1}{c-1}+(-1)^{c'}\binom{d-1}{c'}
\end{align}
\item Similarly this follows from \eqref{eq:05}, 
\begin{align}
    \sum_{k=c}^{c'}(-1)^k k\binom{d}{k}&=\sum_{k=c}^d(-1)^k k\binom{d}{k}-\sum_{k=c'+1}^{d}(-1)^k k\binom{d}{k}
    \\&=(-1)^{c}d\binom{d-2}{c-2}+(-1)^{c'}d\binom{d-2}{c'-1}\;.\qedhere
\end{align}
\end{enumerate}
\end{proof}
\begin{lemma}
\label{lem:delta-relu}
Let $d> 1$, $b\in\mathbb{R}$, $c=c(d,b):=\lceil (d-b)/2\rceil$ and $c'=\lfloor(d-b+5)/2\rfloor$.
Then,
\begin{align}
    \Delta_{d,b,\ReLU}&=\frac{(-1)^{d+c}}{2^{d}}\left[(d+b)\binom{d-2}{c-2}-(d-b)\binom{d-2}{c-1}
\right]\;,\label{eq:delta_relu}\\
\Delta_{d,b,\CReLU}&=\frac{(-1)^{d+c}}{2^{d}}\left[(d+b)\binom{d-2}{c-2}-(d-b)\binom{d-2}{c-1}
\right]\\& \qquad+\frac{(-1)^{d+c'}}{2^{d}}\left[(d+b-5)\binom{d-2}{c'-1}-(d-b+5)\binom{d-2}{c'}
\right]\;.
\end{align}
\end{lemma}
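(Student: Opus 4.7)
The plan is to reduce both formulas to a binomial sum and then apply Claim~\ref{cl:binomial_formula}. First, I parametrize $x \in \{\pm 1\}^d$ by the number $K \in \{0,1,\dots,d\}$ of $-1$ entries: then $\sum_{j=1}^d x_j = d - 2K$ and $(d - \sum_j x_j)/2 = K$, so that
\begin{align}
\Delta_{d,b,\sigma} = \frac{1}{2^d}\sum_{K=0}^d (-1)^K \binom{d}{K}\,\sigma(d-2K+b).
\end{align}
The key maneuver is the substitution $K' = d - K$, which by $\binom{d}{K} = \binom{d}{K'}$ and $(-1)^K = (-1)^d(-1)^{K'}$ produces
\begin{align}
\Delta_{d,b,\sigma} = \frac{(-1)^d}{2^d}\sum_{K'=0}^d (-1)^{K'}\binom{d}{K'}\,\sigma\!\bigl(2K'-(d-b)\bigr).
\end{align}
The argument of $\sigma$ now crosses $0$ at $K' = (d-b)/2$ and (for clipped $\ReLU$) crosses $5$ at $K' = (d-b+5)/2$, which matches the author's definitions $c = \lceil(d-b)/2\rceil$ (smallest $K'$ where $\ReLU$ activates) and $c' = \lfloor(d-b+5)/2\rfloor$ (largest $K'$ where $\CReLU$ has not yet saturated).

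For the $\ReLU$ formula, the summand vanishes on $\{K' < c\}$, so the sum restricts to $K' \ge c$. Splitting $2K' - (d-b)$ into its two parts and invoking parts~1 and~2 of Claim~\ref{cl:binomial_formula} (tail sums from $c$ to $d$) yields
\begin{align}
\Delta_{d,b,\ReLU} = \frac{(-1)^{d+c}}{2^d}\Bigl[2d\binom{d-2}{c-2} - (d-b)\binom{d-1}{c-1}\Bigr].
\end{align}
Pascal's identity $\binom{d-1}{c-1} = \binom{d-2}{c-2} + \binom{d-2}{c-1}$ then regroups the bracket as $(d+b)\binom{d-2}{c-2} - (d-b)\binom{d-2}{c-1}$, matching~\eqref{eq:delta_relu}.

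For the $\CReLU$ formula, I partition the $K'$-range into three pieces: $K' < c$ (where $\CReLU = 0$), $c \le K' \le c'$ (where $\CReLU = 2K'-(d-b)$), and $K' > c'$ (where $\CReLU = 5$). Parts~3 and~4 of Claim~\ref{cl:binomial_formula} applied to the middle piece yield two terms indexed by $c$ and two by $c'$, while part~1 applied to the tail contributes $5\cdot(-1)^{c'+1}\binom{d-1}{c'}$. The $c$-indexed terms combine exactly as in the $\ReLU$ case (via Pascal) to give the first bracket. The $c'$-indexed terms combine the linear contribution $-(d-b)\binom{d-1}{c'}$ with the saturation contribution $-5\binom{d-1}{c'}$ into $-(d-b+5)\binom{d-1}{c'}$; a second application of Pascal on $\binom{d-1}{c'} = \binom{d-2}{c'-1} + \binom{d-2}{c'}$ then produces $(d+b-5)\binom{d-2}{c'-1} - (d-b+5)\binom{d-2}{c'}$, as claimed.

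The main obstacle is purely bookkeeping: tracking the signs $(-1)^{d+c}$ and $(-1)^{d+c'}$ and handling the boundary values at $K' = c$ and $K' = c'$ where, depending on the parity of $d-b$ and $d-b+5$, the argument of $\sigma$ may equal $0$ or $5$ exactly. In either case the boundary contribution is consistent whether one assigns it to the adjacent zero/linear/saturated range, so the combinatorial identities of Claim~\ref{cl:binomial_formula} apply uniformly without any need for case analysis.
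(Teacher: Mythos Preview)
Your proof is correct and follows essentially the same approach as the paper. The only cosmetic difference is that the paper introduces Bernoulli variables $z_j$ with $x_j=-1+2z_j$ directly (so that their summation variable $k=\sum_j z_j$ counts the $+1$ entries), whereas you first count the $-1$ entries by $K$ and then substitute $K'=d-K$; your $K'$ is exactly the paper's $k$, and from that point the two computations coincide line by line, including the use of Claim~\ref{cl:binomial_formula} and the Pascal regrouping.
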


\begin{proof}
Recall that $x$ in the definition of $\Delta_{d,b,\sigma}$ is distributed as
i.i.d.~uniform Rademachers. Therefore,
we can write $x_j=-1+2z_j$, where
$z$ are i.i.d~uniform Bernoullis.
Using Claim~\ref{cl:binomial_formula} and the definition
of $\Delta_{d,b,\sigma}$:
\begin{align}
\Delta_{d,b,\ReLU}
&=(-1)^d\E_z\left[(-1)^{\sum_j z_j}
\ReLU\left(b-d+2\sum_{j=1}^dz_j\right)\right]\\
&=(-1)^d2^{-d}\sum_{k=c}^d
(-1)^k\binom{d}{k}(b-d+2k)\;,\\
&=
(-1)^{d+c}2^{-d}\left(
(b-d)\binom{d-1}{c-1}
+2d\binom{d-2}{c-2}
\right)\\
&=(-1)^{d+c}2^{-d}
\left(d\binom{d-2}{c-2}-d\binom{d-2}{c-1}
+b\binom{d-1}{c-1}\right)
\\&=\frac{(-1)^{d+c}}{2^{d}}\left[(d+b)\binom{d-2}{c-2}-(d-b)\binom{d-2}{c-1}
\right]\;.
\end{align}

Similarly we have,
\begin{align}
    \Delta_{d,b,\CReLU}&=(-1)^d\E_z\left[(-1)^{\sum_j z_j}
\CReLU\left(b-d+2\sum_{j=1}^dz_j\right)\right]\\
&=(-1)^d2^{-d}\sum_{k=c}^d
(-1)^k\binom{d}{k}\min(5,b-d+2k)\;,\\
&=(-1)^d2^{-d}\sum_{k=c}^{c'}
(-1)^k\binom{d}{k}(b-d+2k)+5(-1)^d2^{-d}\sum_{k=c'+1}^d
(-1)^k\binom{d}{k}\;,\\
&=(-1)^{d+c}2^{-d}\left[(b-d)\binom{d-1}{c-1}+2d\binom{d-2}{c-2}\right]\\&\qquad+(-1)^{d+c'}2^{-d}\left[(b-d-5)\binom{d-1}{c'}+2d\binom{d-2}{c'-1}\right]\;,
\\
&=(-1)^{d+c}2^{-d}\left[d\binom{d-2}{c-2}-d\binom{d-2}{c-1}
+b\binom{d-1}{c-1}\right]\\&\qquad+(-1)^{d+c'}2^{-d}\left[d\binom{d-2}{c'-1}-d\binom{d-2}{c'}
+(b-5)\binom{d-1}{c'}\right]\;,
\\&=\frac{(-1)^{d+c}}{2^{d}}\left[(d+b)\binom{d-2}{c-2}-(d-b)\binom{d-2}{c-1}
\right]\\& \qquad+\frac{(-1)^{d+c'}}{2^{d}}\left[(d+b-5)\binom{d-2}{c'-1}-(d-b+5)\binom{d-2}{c'}
\right]\;.\qedhere
\end{align}
\end{proof}

\subsubsection{Proof of \Cref{cor:positive-full}}
Recall the value $c=\lceil (d-b)/2\rceil$ from \Cref{lem:delta-relu}.
For $\ReLU$ activation, in the case of even $d$
(recall that the bias is $b_i=0)$,
we have $c=d/2$ and:
\begin{align}
|\Delta_{d,0,\ReLU}|
&=\frac{d}{2^d}\left|
\binom{d-2}{d/2-2}-\binom{d-2}{d/2-1}
\right|
=\frac{4}{2^d}\binom{d-3}{d/2-1}\\
&= \Theta\left(
\frac{1}{\sqrt{d}}\right),
\end{align}
where in the last line we applied an estimate
$\frac{2^d}{8}\frac{2}{3\sqrt{d}}\le\binom{d-3}{d/2-1}\le\frac{2^d}{8}\frac{2}{\sqrt{d}}$. In the case
of odd $d$ (with bias $b_i=-1$)
it holds $c=(d+1)/2$, and we proceed similarly
\begin{align}
|\Delta_{d,-1,\ReLU}|
&=\frac{1}{2^d}\binom{d-1}{(d-1)/2}=\Theta\left(
\frac{1}{\sqrt{d}}\right).
\end{align}
For the $\CReLU$ activation, in the even case, $c=d/2$ and $c'=d/2+2$
\begin{align}
    |\Delta_{d,0,\CReLU}|&=\frac{1}{2^d}\left|d\binom{d-2}{\frac{d}{2}-2}-d\binom{d-2}{\frac{d}{2}-1}
+(d-5)\binom{d-2}{d/2+1}-(d+5)\binom{d-2}{d/2+2}\right|\;,
\\&=\frac{1}{2^d}\left|-4\binom{d-3}{d/2-1}+\frac{5}{d-1}\binom{d-1}{d/2+2}\right|\;,
\\&=\Theta\left(
\frac{1}{\sqrt{d}}\right)\;.
\end{align}
The last equality holds because as $d$ grows  $\binom{d-3}{d/2-1}$ dominates over $\frac{1}{d-1}\binom{d-1}{d/2+2}$. In the case $d$ odd we have $c=(d+1)/2$, $c'=(d+1)/2+2$ and we proceed similarly to get 
\begin{align}
    |\Delta_{d,-1,\CReLU}|=\frac{1}{2^d}\left|-\binom{d-1}{(d-1)/2}+\frac{6}{d-1}\binom{d-1}{(d-5)/2}\right|=\Theta\left(
\frac{1}{\sqrt{d}}\right)\;.
\end{align}
The rest of the corollary is an application
of \Cref{thm: Positive-result-gd}.\qed

\subsubsection{Proof of \Cref{cor:positive-almost}}
Let $b\in\mathbb{R}$, by a straightforward calculation
we have
\begin{align*}
    \Delta_{d,b,\sigma}^{(a)}&=\E_{x\sim\{\pm 1\}^d}\left[(-1)^{(d-a-\sum_{j=1}^{d-a} x_j)/2}
\sigma\left(\sum_{j=1}^d x_j+b\right)\right]\;,
\\&=\E_{x\sim\{\pm 1\}^d}\left[(-1)^{(d-a-\sum_{j=1}^{d-a} x_j)/2}
\sigma\left(\sum_{j=1}^{d-a} x_j+\sum_{j=d-a+1}^d x_j+b\right)\right]\;,
\\&=\E_{z\sim\{0,1\}^{a}}\E_{x\sim\{\pm 1\}^{d-a}}\left[(-1)^{(d-a-\sum_{j=1}^{d-a} x_j)/2}
\sigma\left(\sum_{j=1}^{d-a} x_j-a+2\sum_{j=d-a+1}^d z_j+b\right)\right]\;,
\\&=\sum_{\ell=0}^{a}\frac{\binom{a}{\ell}}{2^a}\E_{x\sim\{\pm 1\}^{d-a}}\left[(-1)^{(d-a-\sum_{j=1}^{d-a} x_j)/2}
\sigma\left(\sum_{j=1}^{d-a} x_j-a+2\ell+b\right)\right]\;,
\\&=\sum_{\ell=0}^{a}\frac{\binom{a}{\ell}}{2^a}\Delta^{(0)}_{d-a,b-a+2\ell,\sigma}\;.
\end{align*}

Recall that $\sigma=\ReLU$. Applying \Cref{lem:delta-relu}, 
where $c=c(d,b)=\lceil\frac{d-b}{2}\rceil$ and
consequently $c(d-a,b-a+2\ell)=c-\ell$, we have 
\begin{align}
    \Delta_{d,b,\sigma}^{(a)}
&=\frac{1}{2^a}\sum_{\ell=0}^{a}\binom{a}{\ell}\Delta_{d-a,b-a+2\ell,\sigma}\;\label{eq:Sum_delta},
\\&=\frac{1}{2^a}\sum_{\ell=0}^{a}\binom{a}{\ell}\frac{(-1)^{d-a+c-\ell}}{2^{d-a}}\Bigg[(d+b+2(\ell-a))\binom{d-a-2}{c-\ell-2}\\
&\qquad\qquad-(d-b-2\ell)\binom{d-a-2}{c-\ell-1}\Bigg]\;,
\\&=:dT(d,c,a)+ C(d,c,a)+ bB(d,c,a)\;, \label{eq:Delta_decomposition}
\end{align}
where 
\begin{align}
    B(d,c,a)=\frac{(-1)^{d-a+c}}{2^a}\sum_{\ell=0}^{a}\binom{a}{\ell}\frac{(-1)^{\ell}}{2^{d-a}}\left[\binom{d-a-2}{c-\ell-2}+\binom{d-a-2}{c-\ell-1}\right]\;.
    \label{eq:coefficient_of_b}
\end{align} 
The following claim shows that a suitable lower bound for $|B(d,c,a)|$ is sufficient to obtain a lower bound for $|\Delta^{(a)}_{d,b,\sigma}|$. 
\begin{claim}
\label{cl:lower_bound_delta}
    Let us assume that $b\in\mathbb{Z}$. If $|B(d,c,a)|>Cd^{-\alpha}$ (for some $C,\alpha>0$), then either $|\Delta^{(a)}_{d,b,\sigma}|>\frac{C}{100}d^{-\alpha}$ or $|\Delta^{(a)}_{d,b+0.1,\sigma}|>\frac{C}{100}d^{-\alpha}$.
\end{claim}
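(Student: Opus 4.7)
The plan is to exploit the affine-in-$b$ structure of the decomposition~\eqref{eq:Delta_decomposition} via a perturbation argument. Recall that $\Delta^{(a)}_{d,b,\sigma}=dT(d,c,a)+C(d,c,a)+bB(d,c,a)$, where $c=c(d,b)=\lceil(d-b)/2\rceil$. The crucial observation is that $c$ is invariant under the replacement $b\mapsto b+0.1$ \emph{provided $b$ is an integer}: when $(d-b)/2\in\mathbb{Z}$ the value $(d-b-0.1)/2$ lies in the open interval $((d-b)/2-1,(d-b)/2)$, so its ceiling equals $(d-b)/2$; when $(d-b)/2$ is a half-integer, $(d-b-0.1)/2$ still lies in the same ``slot'' between consecutive integers, and both ceilings agree. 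Hence $c(d,b)=c(d,b+0.1)$.

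Consequently, the $dT(d,c,a)$ and $C(d,c,a)$ terms in~\eqref{eq:Delta_decomposition} are identical for $b$ and for $b+0.1$, and the dependence on $b$ is purely through the linear term $bB(d,c,a)$. Subtracting the two expressions yields the clean identity
\begin{align*}
\Delta^{(a)}_{d,b+0.1,\sigma}-\Delta^{(a)}_{d,b,\sigma}=0.1\cdot B(d,c,a).
\end{align*}
Taking absolute values and applying the triangle inequality gives
\begin{align*}
0.1\cdot |B(d,c,a)|\le |\Delta^{(a)}_{d,b+0.1,\sigma}|+|\Delta^{(a)}_{d,b,\sigma}|.
\end{align*}

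Now I would conclude by contrapositive. Suppose, for contradiction, that both $|\Delta^{(a)}_{d,b,\sigma}|\le \tfrac{C}{100}d^{-\alpha}$ and $|\Delta^{(a)}_{d,b+0.1,\sigma}|\le \tfrac{C}{100}d^{-\alpha}$. Then the displayed inequality above forces $|B(d,c,a)|\le \tfrac{C}{5}d^{-\alpha}$, which contradicts the assumption $|B(d,c,a)|>Cd^{-\alpha}$. Therefore at least one of the two values $|\Delta^{(a)}_{d,b,\sigma}|$ or $|\Delta^{(a)}_{d,b+0.1,\sigma}|$ must exceed $\tfrac{C}{100}d^{-\alpha}$, as claimed.

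There isn't really a hard step in this argument; the only subtlety to be careful about is the invariance $c(d,b)=c(d,b+0.1)$, which relies on $b\in\mathbb{Z}$ (and is precisely why the claim's hypothesis includes that assumption). This is also exactly why $b+0.1$ is used in~\Cref{cor:positive-almost} to construct the two-atom bias distribution: it guarantees that one of the two bias choices avoids the degenerate cancellation in~\eqref{eq:coefficient_of_b} and yields the lower bound on $\Delta^{(a)}$ that the corollary needs.
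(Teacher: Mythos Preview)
Your proof is correct and follows essentially the same approach as the paper: both arguments hinge on the observation that $c(d,b)=c(d,b+0.1)$ for integer $b$, which yields $\Delta^{(a)}_{d,b+0.1,\sigma}=\Delta^{(a)}_{d,b,\sigma}+0.1\,B(d,c,a)$, after which the conclusion follows by the triangle inequality. The paper phrases the final step as a direct case split (if the first bound fails, the second must hold via the reverse triangle inequality), whereas you phrase it as a contrapositive; these are logically equivalent and equally clean.
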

\begin{proof}
   Let us suppose there exist $C$ and $\alpha>0$ such that $|B(d,c,a)|>Cd^{-\alpha}$. If $|\Delta^{(a)}_{d,b,\sigma}|>\frac{C}{100}d^{-\alpha}$, then we are done with the proof. On the other hand, if $|\Delta^{(a)}_{d,b,\sigma}|\le\frac{C}{100}d^{-\alpha}$, 
   then we have 
    \begin{align}
        |\Delta^{(a)}_{d,b+0.1,\sigma}|&=|dT(d,c,a)+ C(d,c,a)+ (b+0.1)B(d,c,a)|\;,\label{eq:same_value_of_c}
        \\&=|\Delta^{(a)}_{d,b,\sigma}+0.1B(d,c,a)|\;,
        \\&\ge 0.1|B(d,c,a)|- |\Delta^{(a)}_{d,b,\sigma}|\;,
        \\&\ge 0.1Cd^{-\alpha}-0.01Cd^{-\alpha}\;,
        \\&>\frac{C}{100}d^{-\alpha}\;.
    \end{align}
Equation~\ref{eq:same_value_of_c} holds because for every $d$, $c(d,b)=\lceil\frac{d-b}{2}\rceil=\lceil\frac{d-(b+0.1)}{2}\rceil=c(d,b+0.1)$, so the values of $T(d,c,a)$,
$C(d,c,a)$ and $B(d,c,a)$ (see \eqref{eq:Delta_decomposition}) are the same for $b$ and $b+0.1$.
\end{proof}

It remains to find the order of magnitude for $|B(d, c, a)|$, in order to do so, let us consider a certain recursive sequence of differences
of binomial coefficients.

Let $d\in\mathbb{N}$ and let $n=n(d):=\lfloor d/2\rfloor$.
For $n-d\le k \le n-a$ let
\begin{align}
A_0(d, k)&:=2^{-d}\binom{d}{n-k}\;,\\
A_a(d, k)&:=A_{a-1}(d,k)-A_{a-1}(d,k+1)\qquad\text{ for $a>0$.}
\end{align}

The main lemma that we will need is the following combinatorial
bound:
\begin{lemma}
\label{lem:a-bound}
Let $a\in\mathbb{N}$ and $k\in\mathbb{Z}$ such that
either $a$ is even or $k\ge 0$. Then,
\begin{align}
\left|A_{a}(d, k)\right|&=\Theta\left(d^{-1/2-\lceil a/2\rceil}\right)\;.
\end{align}
Furthermore, for large enough $d$, it holds
$\sign(A_a(d,k))=(-1)^{
\lfloor a/2
\rfloor}$.
\end{lemma}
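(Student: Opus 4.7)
The plan is to derive an explicit sum formula for $A_a(d,k)$, apply Stirling's formula to each binomial to obtain a local central limit theorem approximation, and then isolate the leading term using a finite-difference identity. Since $A_a = (-\Delta)^a A_0$ where $(\Delta f)(k) := f(k+1) - f(k)$, expanding the iterated difference yields
\begin{align}
A_a(d,k) = 2^{-d}\sum_{j=0}^{a}(-1)^j\binom{a}{j}\binom{d}{n - k - j}.
\end{align}
Stirling's formula gives the asymptotic
$2^{-d}\binom{d}{n-k'} = \sqrt{\tfrac{2}{\pi d}}\,e^{-2(k' + \eta_d)^2/d}\bigl(1 + \rho(k'+\eta_d)/d + O(1/d^2)\bigr)$,
where $\eta_d \in \{0, 1/2\}$ encodes the parity of $d$ and $\rho$ is an explicit even polynomial.

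Substituting into the formula above and Taylor-expanding $e^{-2(u+j)^2/d} = \sum_{l \ge 0} (-2)^l(u+j)^{2l}/(l!\,d^l)$ with $u := k + \eta_d$, the contribution of the leading Gaussian reduces to analysing
\begin{align}
S_l(u) := \sum_{j=0}^{a} (-1)^j \binom{a}{j}(u+j)^{2l}.
\end{align}
The key structural observation is that $S_l(u)$ is (up to a sign) the $a$-th forward difference of $x \mapsto x^{2l}$ at $x = u$, so it vanishes identically whenever $2l < a$, while for $2l \ge a$ it is a polynomial in $u$ of degree $2l - a$ with leading coefficient $(-1)^a (2l)!/(2l - a)!$. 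The first non-vanishing index is therefore $l_0 = \lceil a/2 \rceil$, and higher $l$ produce strictly smaller-order contributions.

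For even $a = 2m$ one has $S_m \equiv (2m)!$, yielding leading term $(-1)^m \sqrt{2/(\pi d)} \cdot 2^m (2m)!/(m!\,d^m) = (-1)^m \Theta(d^{-1/2 - m})$, with sign $(-1)^m = (-1)^{\lfloor a/2 \rfloor}$, valid for all $k$. For odd $a = 2m+1$ the polynomial $S_{m+1}(u) = -\bigl[(2m+2)!\, u + c_0\bigr]$ has constant $c_0 = (2m+1)!\,(m+1)(2m+1) > 0$, computed from the Stirling-number identity $\sum_{j}(-1)^{a-j}\binom{a}{j}j^{a+1} = a!\,S(a+1,a) = a!\binom{a+1}{2}$. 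For $k \ge 0$ we have $u \ge 0$, so the bracket is at least $c_0 > 0$; this gives magnitude $\Theta(d^{-1/2 - (m+1)}) = \Theta(d^{-1/2 - \lceil a/2 \rceil})$ and sign $(-1)^m = (-1)^{\lfloor a/2 \rfloor}$. The $k \ge 0$ hypothesis is exactly what prevents the bracket from changing sign (e.g.\ for $a=1$, $k=-1$ the bracket becomes $-1$, reversing the sign, matching the direct computation $A_1(d,-1) < 0$).

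The main obstacle is error control for small $a$: the naive $O(1/d)$ Stirling correction, times $\sqrt{2/(\pi d)}$, produces an error of order $d^{-3/2}$, which matches the leading order for $a = 1$. The resolution is that the leading $1/d$ Stirling correction is essentially a constant in $k'$ (numerically $-1/(4d)$), and since $\sum_{j=0}^a (-1)^j \binom{a}{j} = 0$ for $a \ge 1$, this constant cancels under the finite-difference sum. The remaining $k'$-dependent parts of the Stirling correction are even polynomials in $u$, to which the same finite-difference vanishing argument applies, yielding only subleading contributions of order $d^{-1/2 - \lceil a/2 \rceil - 1}$. This completes the verification that the Gaussian main term indeed determines both the magnitude and the sign, and the restriction to $k \ge 0$ in the odd case is precisely what the sign analysis requires.
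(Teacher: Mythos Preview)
Your approach is genuinely different from the paper's and the main-term analysis is correct, but the error control is not complete as written.

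\textbf{Comparison with the paper.} The paper never touches Stirling. Instead it proves an \emph{exact} factorization
\[
A_a(d,k)=2^{-d}\binom{d}{n-k}\Bigl(\prod_{i=1}^{a}\frac{1}{n+k+i}\Bigr)P_a(n,k),
\]
where $P_a(n,k)$ is built recursively from $P_0\equiv 1$ via $P_a=(n+k+a)P_{a-1}(n,k)-(n-k)P_{a-1}(n,k+1)$. A short induction shows that $P_a(n,k)=\sum_{i=0}^{\lfloor a/2\rfloor}(-1)^i Q_{a,i}(k)\,n^{i}$ with each $Q_{a,i}$ a degree $a-2i$ polynomial in $k$ having all positive coefficients. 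The top coefficient in $n$ is then a positive constant (if $a$ is even) or a linear function in $k$ with positive coefficients (if $a$ is odd), which gives both the sign $(-1)^{\lfloor a/2\rfloor}$ and the order $d^{-1/2-a+\lfloor a/2\rfloor}=d^{-1/2-\lceil a/2\rceil}$ immediately. No asymptotics beyond $2^{-d}\binom{d}{n-k}=\Theta(d^{-1/2})$ are needed. Your analytic route (local CLT plus iterated differences) recovers the same main term and the same reason the odd case needs $k\ge 0$, but the paper's algebraic route makes the error control trivial.

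\textbf{Where your argument is incomplete.} You write the Stirling remainder as ``$\rho(u)/d+O(1/d^2)$'' and then argue that (i) the $1/d$ piece is the constant $-1/(4d)$, which is killed by $\sum_j(-1)^j\binom{a}{j}=0$, and (ii) the remaining $k'$-dependent parts are even polynomials and are pushed to order $d^{-1/2-\lceil a/2\rceil-1}$. Claim (i) is fine, but (ii) is asserted, not proved, and in fact the hidden $O(1/d^2)$ already competes with the main term once $a\ge 3$: the naive bound gives $d^{-5/2}$, which equals the target order for $a=3$ and dominates it for $a\ge 4$. To salvage the argument you must expand the de Moivre--Laplace correction to order $R=\lceil a/2\rceil+1$, verify that the $r$-th correction polynomial has degree at most $2r$ in $u$ (so that the $a$-fold difference still annihilates all $r<\lceil a/2\rceil$), and check that the surviving $r=\lceil a/2\rceil$ contribution does not cancel the Gaussian main term. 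This is doable but is real bookkeeping that your sketch omits; the paper's exact factorization sidesteps it entirely.
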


We will prove the lemma only for
the case of even $d$, as the calculations
for $d$ odd are 
analogous.
To that end, first let's give another formula for $A_a(d,k)$.
Let
\begin{align}
P_0(n, k)&:=1\;,\\
P_a(n,k)&:=(n+k+a)P_{a-1}(n,k)-(n-k)P_{a-1}(n, k+1)
\label{eq:recursion-p}
\quad\text{for $a>0$}.
\end{align}
\begin{claim}
\label{cl:a-in-terms-of-p}
For every $a\in\mathbb{N}$, even $d$, and $-n\le k\le n-a$:
\begin{align}
    A_{a}(d, k)&=2^{-d}\binom{d}{n-k}
\left(\prod_{i=1}^a\frac{1}{n+k+i}\right)
P_a(n, k)\;.
\end{align}
\end{claim}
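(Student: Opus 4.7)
The natural approach is induction on $a$, exploiting the fact that the defining recursion for $A_a$ matches the recursion for $P_a$ up to an algebraic factor that simplifies nicely when $d=2n$.

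\textbf{Base case ($a=0$).} Both sides of the claimed identity equal $2^{-d}\binom{d}{n-k}$: the empty product is $1$ and $P_0(n,k)=1$ by definition. This is immediate.

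\textbf{Inductive step.} Assume the formula holds for $a-1$ and all admissible $k$. Starting from the defining recursion $A_a(d,k)=A_{a-1}(d,k)-A_{a-1}(d,k+1)$, substitute the inductive hypothesis into both terms. For the second term I would use the key binomial identity, valid precisely because $d=2n$:
\begin{align}
\binom{d}{n-k-1}=\binom{d}{n-k}\cdot\frac{n-k}{n+k+1}.
\end{align}
Combined with the index shift $\prod_{i=1}^{a-1}(n+k+1+i)^{-1}=\prod_{j=2}^{a}(n+k+j)^{-1}$, this rewrites $A_{a-1}(d,k+1)$ with the same prefactor $2^{-d}\binom{d}{n-k}\prod_{i=1}^{a}(n+k+i)^{-1}$ as $A_{a-1}(d,k)$. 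After pulling this common prefactor out, the first term contributes $(n+k+a)\,P_{a-1}(n,k)$ (since the product $\prod_{i=1}^{a-1}(n+k+i)^{-1}$ becomes $\prod_{i=1}^a(n+k+i)^{-1}$ multiplied by $(n+k+a)$ in the numerator), and the second term contributes $-(n-k)\,P_{a-1}(n,k+1)$. By the defining recursion \eqref{eq:recursion-p}, their sum is exactly $P_a(n,k)$, completing the induction.

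\textbf{Main obstacle.} There is no real obstacle — the identity is essentially designed so that the $P_a$ recursion absorbs the mismatch between the two $A_{a-1}$ terms. The only thing one has to verify with care is the bookkeeping: the shift of the index in the product, the binomial ratio identity (which is where evenness of $d$ is used), and the admissibility range $-n\le k\le n-a$, which ensures that the binomial coefficients $\binom{d}{n-k}$ and $\binom{d}{n-k-1}$ are both well-defined and that $P_{a-1}(n,k+1)$ is covered by the inductive hypothesis. For odd $d$ an entirely analogous calculation would go through with $n=\lfloor d/2\rfloor$ and the analogous ratio $\binom{d}{n-k-1}/\binom{d}{n-k}=(n-k)/(n+k+2)$, which is why the authors only write the even case explicitly.
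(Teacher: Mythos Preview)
Your proposal is correct and matches the paper's proof essentially line for line: induction on $a$, the binomial ratio $\binom{d}{n-k-1}=\binom{d}{n-k}\cdot\frac{n-k}{n+k+1}$ (using $d=2n$), and the index shift in the product to extract a common prefactor, after which the recursion for $P_a$ closes the argument. The paper presents exactly this computation without further commentary.
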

\begin{proof}
By induction on $a$. The base case $a=0$ is clear.
For $a>0$ we use induction
and the definitions of $A_a$ and $P_a$:
\begin{align}
A_a(d, k)&=A_{a-1}(d,k)-A_{a-1}(d,k+1)\\
&=2^{-d}\binom{d}{n-k}\left(\prod_{i=1}^{a-1}
\frac{1}{n+k+i}\right)P_{a-1}(n,k)\\
&\qquad\qquad-2^{-d}\binom{d}{n-k-1}
\left(\prod_{i=1}^{a-1}\frac{1}{n+k+1+i}\right)
P_{a-1}(n,k+1)\\
&=2^{-d}\binom{d}{n-k}
\left(\prod_{i=1}^a\frac{1}{n+k+i}\right)\Big(
(n+k+a)P_{a-1}(n,k)-(n-k)P_{a-1}(n,k+1)\Big)\\
&=2^{-d}\binom{d}{n-k}\left(
\prod_{i=1}^a\frac{1}{n+k+1}
\right)P_a(n, k)\;.\qedhere
\end{align}
\end{proof}

We will say that a degree $t$ polynomial of one variable $Q(k)$ has
positive coefficients if all its coefficients until degree $t$
are positive. We state without proof a self-evident claim:
\begin{claim}
\label{cl:poly-positive-difference}
Let $Q(k)$ be a polynomial with positive coefficients of
degree $t>0$. Then, $Q(k+1)-Q(k)$ is a polynomial
with positive coefficients of degree $t-1$.
\end{claim}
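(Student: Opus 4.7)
The plan is to give a direct verification via the binomial expansion applied to each monomial separately. Writing $Q(k)=\sum_{i=0}^t a_i k^i$ with all $a_i>0$, I would compute
\[
Q(k+1)-Q(k)=\sum_{i=0}^t a_i\bigl((k+1)^i-k^i\bigr),
\]
and then expand $(k+1)^i-k^i=\sum_{j=0}^{i-1}\binom{i}{j}k^j$, which is a polynomial of degree $i-1$ whose coefficients $\binom{i}{j}$ are strictly positive. Substituting back and swapping the order of summation yields
\[
Q(k+1)-Q(k)=\sum_{j=0}^{t-1}\Bigl(\sum_{i=j+1}^{t}a_i\binom{i}{j}\Bigr)k^j.
\]

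The coefficient of $k^j$ is a sum of strictly positive numbers $a_i\binom{i}{j}$ (since $a_i>0$ by assumption and $\binom{i}{j}>0$ whenever $0\le j<i$), so every coefficient is positive. The leading coefficient at $k^{t-1}$ equals $a_t\binom{t}{t-1}=t\,a_t>0$, which pins down the degree to be exactly $t-1$, completing the claim.

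There is no real obstacle here: the proof is a one-line binomial identity plus the observation that sums of positive numbers are positive. The only subtlety is to make sure the degree of the result is exactly $t-1$ and not lower, which is why I separate out the leading-coefficient computation at the end. This is consistent with the authors' own assessment that the statement is "self-evident."
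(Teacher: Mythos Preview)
Your proof is correct and is exactly the natural verification one would expect; the paper in fact states this claim without proof, calling it ``self-evident,'' so there is no paper argument to compare against beyond the implicit one you have supplied.
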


\begin{claim}
\label{cl:pa-structure}
Let $a\ge 0$. Then, there exist some polynomials
$Q_{a,i}(k)$ such that
\begin{align}
P_a(n,k)=\sum_{i=0}^{\lfloor a/2\rfloor}
(-1)^i Q_{a,i}(k)\cdot n^i
\label{eq:51}
\end{align}
and $Q_{a,i}(k)$ is a degree $a-2i$ polynomial with positive
coefficients.
\end{claim}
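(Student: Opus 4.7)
I would prove the claim by induction on $a$, driven directly by the recursion~\eqref{eq:recursion-p}. The base case $a=0$ is handled by $Q_{0,0}(k) := 1$, a degree-$0$ positive-coefficient polynomial.

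For the inductive step, the plan is to substitute the inductive hypothesis $P_{a-1}(n,k) = \sum_{i=0}^{\lfloor (a-1)/2 \rfloor} (-1)^i Q_{a-1,i}(k) n^i$ into $P_a(n,k) = (n+k+a) P_{a-1}(n,k) - (n-k) P_{a-1}(n,k+1)$ and collect coefficients of $n^j$. The factor of $n$ present in both $n+k+a$ and $n-k$ contributes $(-1)^j[Q_{a-1,j-1}(k+1) - Q_{a-1,j-1}(k)]$ to the coefficient of $n^j$; by Claim~\ref{cl:poly-positive-difference} this is $(-1)^j$ times a positive-coefficient polynomial of degree $a - 2j$. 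The constant-in-$n$ terms contribute $(-1)^j[(k+a) Q_{a-1,j}(k) + k Q_{a-1,j}(k+1)]$, visibly $(-1)^j$ times a positive-coefficient polynomial of degree $a - 2j$. Setting
\begin{align*}
Q_{a,j}(k) := (k+a) Q_{a-1,j}(k) + k Q_{a-1,j}(k+1) + \bigl[Q_{a-1,j-1}(k+1) - Q_{a-1,j-1}(k)\bigr]
\end{align*}
(with out-of-range $Q_{a-1,\cdot}$ read as $0$) yields the required decomposition. Positivity of coefficients is closed under sums, so $Q_{a,j}$ has positive coefficients; both summands have degree exactly $a - 2j$ with strictly positive leading coefficients that cannot cancel at the top, hence $\deg Q_{a,j} = a - 2j$ exactly.

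The one place that requires care, and which I expect to be the main (though routine) obstacle, is the boundary bookkeeping needed to verify that the admissible index range matches $0 \le j \le \lfloor a/2 \rfloor$ and that no spurious terms appear outside it. I would dispatch this by a short parity case split: for $a$ even, at $j = a/2$ only the difference term survives (since $Q_{a-1,a/2}$ is out of range), and $Q_{a-1,a/2-1}(k+1) - Q_{a-1,a/2-1}(k)$ is a positive constant because $Q_{a-1,a/2-1}$ has degree $1$ with positive leading coefficient; for $a$ odd, both summands contribute nontrivially at every $0 \le j \le (a-1)/2$, and the potential next index $j = (a+1)/2$ vanishes because $Q_{a-1,(a-1)/2}$ is a constant whose discrete difference is $0$. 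This closes the induction.
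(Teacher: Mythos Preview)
Your proposal is correct and follows essentially the same approach as the paper: induction on $a$, splitting the recursion into the $n$-part (yielding the discrete difference $Q_{a-1,j-1}(k+1)-Q_{a-1,j-1}(k)$, handled via Claim~\ref{cl:poly-positive-difference}) and the constant-in-$n$ part (yielding $(k+a)Q_{a-1,j}(k)+kQ_{a-1,j}(k+1)$), with the same parity case split at the top index. The paper's proof is organized identically, including the boundary analysis for $j=t+1$ distinguishing even and odd $a$.
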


\begin{proof}
We proceed by induction on $a$. For $a=0$, the
statement is clear with $Q_{0,0}(k)=1$.
Let $a>0$. By~\eqref{eq:recursion-p},
\begin{align}
P_{a}(n,k)&=(n+k+1)P_{a-1}(n,k)-(n-k)P_{a-1}(n,k+1)\\
&=n\big(P_{a-1}(n,k)-P_{a-1}(n,k+1)\big)
+(k+a)P_{a-1}(n,k)+kP_{a-1}(n,k+1)\;.
\label{eq:49}
\end{align}
By induction, the degree in $n$ of $P_{a-1}$ is
$t:=\lfloor\frac{a-1}{2}\rfloor$, and therefore
the degree of $P_a$ is at most $t+1$.
From~\eqref{eq:49}, and assuming for convenience
$Q_{a-1,-1}(k)=Q_{a-1,t+1}(k)=0$, we have
for every $0\le i\le t+1$:
\begin{align}
(-1)^i Q_{a,i}(k)
&=(-1)^{i-1}Q_{a-1,i-1}(k)-(-1)^{i-1}Q_{a-1,i-1}(k+1)\\
&\qquad\qquad +(-1)^i(k+a)Q_{a-1,i}(k)+(-1)^i k
Q_{a-1,i}(k+1)
\end{align}
and hence
\begin{align}
Q_{a,i}(x)=Q_{a-1,i-1}(k+1)-Q_{a-1,i-1}(k)+
(k+a)Q_{a-1,i}(k)+kQ_{a-1,i}(k)\;.
\end{align}
For $0\le i\le t$, by induction it holds that
$(k+a)Q_{a-1,i}(k)+kQ_{a-1,i}(k)$ is a polynomial 
with positive coefficients of degree $a-2i$.
On the other hand $Q_{a-1,i-1}(k+1)-Q_{a-1,i-1}(k)$ is either
zero (for $i=0$ or when $Q_{a-1,i-1}$ has degree 0)
or, by Claim~\ref{cl:poly-positive-difference}, a polynomial
with positive coefficients of degree $a-2i$. Either
way, $Q_{a,i}(k)$ is a polynomial with positive 
coefficients of degree $a-2i$.

It remains to consider
\begin{align}
Q_{a,t+1}(x)=Q_{a-1,t}(k+1)-Q_{a-1,t}(k)
\label{eq:50}
\end{align}
If $a$ is even, then $\lfloor a/2\rfloor=t+1$.
Then, $Q_{a-1,t}(k)$ is a polynomial of degree 1 with positive
coefficients and the right-hand side of~\eqref{eq:50}
is a positive constant. If $a$ is odd (hence
$\lfloor a/2\rfloor=t$), then $Q_{a-1,t}(k)$ is a constant
and therefore $Q_{a,t+1}(k)=0$. In either case, we get
that $P_a(n, k)$ has
the decomposition according to~\eqref{eq:51}.
\end{proof}

\begin{proof}[Proof of~\Cref{lem:a-bound}]
Let $d$ be even, $a\in\mathbb{N}$ and $k\in\mathbb{Z}$. Recall that
by Claim~\ref{cl:a-in-terms-of-p},
\begin{align}
A_{a}(d, k)&=2^{-d}\binom{d}{n-k}
\left(\prod_{i=1}^a\frac{1}{n+k+i}\right)
P_a(n, k)\;.
\end{align}
By Claim~\ref{cl:pa-structure}, the degree of $n$
in $P_a(n, k)$ is $t:=\lfloor a/2\rfloor$.

Furthermore, if $a$ is even, then the coefficient
of $P_a$ at $n^t$ is equal to $(-1)^{a/2}$ 
multiplied by a positive constant.
If $a$ is odd, the leading coefficient is
$(-1)^{\lfloor a/2\rfloor}$ multiplied by a linear
function in $k$ with positive coefficients.
It is easy to see that for $a$ even or $k\ge 0$, the
leading coefficient of $P_a$ evaluated at $k$ is equal to $(-1)^{\lfloor a/2\rfloor}$ multiplied by a positive constant.
From this indeed it follows
$\sign(A_a(d,k))=\sign(P_a(n,k))=(-1)^{\lfloor a/2\rfloor}$ for $d$ large enough.

Furthermore, using known bounds on binomial coefficients
\begin{align}
\big|A_a(d,k)\big|&=
\Theta\left(d^{-1/2-a+t}\right)=\Theta\left(d^{-1/2-\lceil a/2\rceil}\right)\;.
\end{align}
As mentioned, the case of odd $d$ is proved by an
analogous calculation.
\end{proof}

Expanding the recursive definition, we can also write $A_{a}(d,k)$ as follows: 
\begin{claim}
    \label{cl:a-expanded}
    Let $a,n\in\mathbb{N}$, and $k\in\mathbb{Z}$,
    \begin{align}
        A_{a}(d,k)&=\sum_{\ell=0}^{a}(-1)^{\ell}\binom{a}{\ell}A_{0}(d,k+\ell)=\frac{1}{2^{d}}\sum_{\ell=0}^{a}(-1)^{\ell}\binom{a}{\ell}\binom{d}{n-k-\ell}\;\label{eq:A_n_k}\;.
    \end{align}
\end{claim}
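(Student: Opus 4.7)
The statement is a standard finite difference identity: the backward-difference operator $f(k)\mapsto f(k)-f(k+1)$ iterated $a$ times produces the alternating binomial sum on the right-hand side applied to $f=A_0(d,\cdot)$. The plan is to prove the first equality by induction on $a$, and then obtain the second equality by direct substitution of the definition of $A_0$.

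For the base case $a=0$, both sides reduce to $A_0(d,k)$, so there is nothing to verify. For the inductive step, I apply the recursive definition
\begin{align*}
A_a(d,k)=A_{a-1}(d,k)-A_{a-1}(d,k+1)
\end{align*}
and expand each of the two terms using the induction hypothesis. In the second sum I reindex by substituting $\ell\mapsto \ell-1$, so that both sums become expressions in $A_0(d,k+\ell)$ over compatible ranges of $\ell$. Combining them yields, for each $1\le \ell\le a-1$, a coefficient $(-1)^\ell\bigl[\binom{a-1}{\ell}+\binom{a-1}{\ell-1}\bigr]$, which by Pascal's identity equals $(-1)^\ell\binom{a}{\ell}$. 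The endpoint terms $\ell=0$ and $\ell=a$ fit naturally since $\binom{a-1}{0}=\binom{a}{0}=1$ and $\binom{a-1}{a-1}=\binom{a}{a}=1$. This gives the first identity in the claim.

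The second equality, expressing $A_a(d,k)$ as $\frac{1}{2^d}\sum_{\ell=0}^a (-1)^\ell\binom{a}{\ell}\binom{d}{n-k-\ell}$, follows immediately by substituting the definition $A_0(d,k')=2^{-d}\binom{d}{n-k'}$ with $k'=k+\ell$ into the first identity. There is no genuine obstacle in this argument; the only care needed is in the reindexing and in handling the two endpoint terms of the combined sum.
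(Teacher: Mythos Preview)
Your proof is correct and follows essentially the same approach as the paper: both argue by induction on $a$, use the recursion $A_a(d,k)=A_{a-1}(d,k)-A_{a-1}(d,k+1)$ together with Pascal's identity, and then substitute the definition of $A_0$ for the second equality. The only cosmetic difference is that the paper starts from the sum and shows it equals $A_a(d,k)$, whereas you start from $A_a(d,k)$ and expand to the sum.
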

\begin{proof}
    The proof proceeds by induction on $a$.
    
For $a=0$, there is nothing to prove. Let $a>0$,
    \begin{align}
        \sum_{\ell=0}^{a}(-1)^{\ell}\binom{a}{\ell}A_{0}(d,k+\ell)&=\sum_{\ell=0}^{a}(-1)^{\ell}\binom{a-1}{\ell}A_{0}(d,k+\ell)+\sum_{\ell=0}^{a}(-1)^{\ell}\binom{a-1}{\ell-1}A_{0}(d,k+\ell)\;,
        \\&=\sum_{\ell=0}^{a-1}(-1)^{\ell}\binom{a-1}{\ell}A_{0}(d,k+\ell)+\sum_{\ell=1}^{a}(-1)^{\ell}\binom{a-1}{\ell-1}A_{0}(d,k+\ell)\;,\label{eq:51a}
        \\&=\sum_{\ell=0}^{a-1}(-1)^{\ell}\binom{a-1}{\ell}A_{0}(d,k+\ell)\\
        &\qquad+\sum_{\ell=0}^{a-1}(-1)^{\ell+1}\binom{a-1}{\ell}A_{0}(d,k+\ell+1)\;,
        \\&=A_{a-1}(d,k)-A_{a-1}(d,k+1)\;,
        \\&=A_{a}(d,k)\;.
\end{align}
Equation~\ref{eq:51a} holds by the convention $\binom{a-1}{a}=\binom{a-1}{-1}=0.$
\end{proof}

Recall that \eqref{eq:coefficient_of_b} can be rewritten
as 
\begin{align}
    B(d,c,a)&=\frac{(-1)^{d-a+c}}{2^d}\sum_{\ell=0}^{a}(-1)^{\ell}\binom{a}{\ell}\left[\binom{d-a-2}{c-\ell-2}+\binom{d-a-2}{c-\ell-1}\right].
\end{align}
Comparing this with Claim~\ref{cl:a-expanded}, we have
\begin{align}
B(d,c,a)=\frac{(-1)^{d-a+c}}{2^{a+2}}\big(
A_a(d-a-2,n-c+2)+A_a(d-a-2,n-c+1)
\big)\;,
\end{align}
where $n=n(d-a-2)=\lfloor\frac{d-a-2}{2}\rfloor$
and $c=\lceil\frac{d-b}{2}\rceil$.

From this point we can conclude the proof
of \Cref{cor:positive-almost}.
Recall that we choose the biases $b_i$ uniformly
from the set $\{a+2,a+2+0.1\}$. In particular,
taking $b=a+2$,
\begin{align}
    n-c+1=\lfloor\frac{d-a-2}{2}\rfloor-\lceil\frac{d-a-2}{2}\rceil+1\ge0\;.
\end{align}
Furthermore, it is easy to see that
$\{n-c+1,n-c+2\}\subseteq\{0,1,2\}$.
Therefore, applying \Cref{lem:a-bound} with $k=0,1,2$ 
we get, for large enough $d$,
\begin{align}
    |B(d,c,a)|&=\frac{1}{2^{a+2}}\left(\big|A_a(d-a-2,n-c+2)\big|+\big|A_a(d-a-2,n-c+1)\big|\right)\;,
    \\&=\Theta\left(d^{-\frac{1}{2}-\lceil\frac{a}{2}\rceil}\right)\;.\label{eq:54}
\end{align}
Equation~\ref{eq:54} holds because $a$
is a fixed natural number and from \Cref{lem:a-bound}, for $d$ large enough, $\sign(A_a(d-a-2,n-c+2))=\sign(A_a(d-a-2,n-c+1))$.

From \eqref{eq:54} and Claim~\ref{cl:lower_bound_delta}, we get that $\Delta^2=\E_{b\sim\mathcal{B}}\left[(\Delta^{(a)}_{d,b,\sigma})^2\right]\ge Cd^{-1-2\lceil\frac{a}{2}\rceil}$, for some $C>0$. The rest of the proof of \Cref{cor:positive-almost} follows
from \Cref{thm: Positive-result-gd}.

\subsection{Almost Full Parities $d-1$ and $d-2$}
Here, we provide a simpler calculation for the specific cases of almost full-parities, namely $k=d-1$ and $k=d-2$.
\begin{corollary}
\label{cor:almost_1_2}
In the cases of almost full
parities $a=1$ and $a=2$, let
$b=-2$ if $d$ is even and $b=-1$ for $d$ odd.
For $\sigma=\ReLU$ it holds
$\Delta^2=\Theta(d^{-3})$.
Accordingly,
$n\ge\Omega(d^6)$ hidden neurons are sufficient for strong learning in one GD step.
\end{corollary}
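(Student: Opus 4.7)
The plan is to compute $\Delta^{(a)}_{d,b,\ReLU}$ explicitly for $a\in\{1,2\}$ and the prescribed biases, by reducing it to a small linear combination of full-parity quantities $\Delta^{(0)}_{d-a,b',\ReLU}$ whose closed form is given in \Cref{lem:delta-relu}. The key observation is that the $a+1$ individual summands are each of magnitude $\Theta(d^{-1/2})$, but their leading contributions cancel, leaving a net residue of size $\Theta(d^{-3/2})$, exactly matching the general lower bound in \Cref{cor:positive-almost}.

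First I would apply the identity~\eqref{eq:Sum_delta},
\begin{align}
\Delta^{(a)}_{d,b,\ReLU}=\frac{1}{2^a}\sum_{\ell=0}^{a}\binom{a}{\ell}\,\Delta^{(0)}_{d-a,\,b-a+2\ell,\,\ReLU}\;,
\end{align}
which for $a=1$ is a two-term sum and for $a=2$ a three-term sum. Next I would substitute the closed form from \Cref{lem:delta-relu}: with the bias choices $b=-2$ for even $d$ and $b=-1$ for odd $d$, the ceiling $c(d-a,b-a+2\ell)$ shifts by exactly $1$ as $\ell$ increments, so the signs $(-1)^{c}$ alternate in $\ell$ and the sum becomes a signed combination of binomial coefficients concentrated near the middle of row $d-a-2$. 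Then I would use the symmetry $\binom{n}{k}=\binom{n}{n-k}$ and the ratio identity $\binom{n}{k+1}/\binom{n}{k}=(n-k)/(k+1)$ to express every binomial in the combination as a rational function of $d$ times a single reference coefficient $\binom{d-a-2}{\lfloor(d-a-2)/2\rfloor}$, which satisfies $2^{-(d-a-2)}\binom{d-a-2}{\lfloor(d-a-2)/2\rfloor}=\Theta(d^{-1/2})$. Finally I would collect the rational-function prefactors multiplying the reference coefficient and verify that the $O(1)$-size part cancels across the $a+1$ summands, leaving a $\Theta(1/d)$ residue; together with the $\Theta(d^{-1/2})$ asymptotic this gives $|\Delta^{(a)}_{d,b,\ReLU}|=\Theta(d^{-3/2})$ and hence $\Delta^2=\Theta(d^{-3})$.

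With $\Delta^2=\Theta(d^{-3})$ in hand, the conclusion follows immediately from \Cref{thm: Positive-result-gd}: since the bias distribution is a point mass the constant-$C$ hypothesis is trivial, and the crude upper bound $|w\cdot x+b|\le d+O(1)=O(d)$ for $w,x\in\{\pm 1\}^d$ yields $R=O(d)$, so $dR^2/\Delta^2=\Theta(d^{6})$, which is the stated sufficient width. The main obstacle in the program is the algebraic verification of the final cancellation: after collecting prefactors, the putative $O(1)$ constant must vanish, which requires careful tracking of both the signs $(-1)^c$ and the $(d\pm b')$ prefactors from \Cref{lem:delta-relu} across all summands, and separately for each parity of $d$. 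For $a\in\{1,2\}$ this is a finite, bounded computation, but it is the step where the specific numerical values of the biases actually enter.
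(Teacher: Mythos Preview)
Your proposal is correct and follows essentially the same approach as the paper: reduce $\Delta^{(a)}_{d,b,\ReLU}$ via \eqref{eq:Sum_delta} to a short signed sum of $\Delta^{(0)}$ terms, plug in the closed form from \Cref{lem:delta-relu}, and track the cancellation of the leading binomial contributions to extract the $\Theta(d^{-3/2})$ residue. The paper simply executes the explicit binomial algebra you describe (separately for each parity of $d$) and then invokes \Cref{thm: Positive-result-gd} exactly as you do.
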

        Let $c:=c(d,b)=\lceil\frac{d-b}{2}\rceil$, thus $c(d-1,b-1)=c$ and $c(d-1,b+1)=c-1$. Using \eqref{eq:delta_relu} and \eqref{eq:Sum_delta}, we have:
\begin{align}
      \Delta^{(1)}_{d,b,\ReLU}&=\frac{1}{2}\Delta_{d-1,b-1,\ReLU}+\frac{1}{2}\Delta_{d-1,b+1,\ReLU}\;,
      \\&=\frac{1}{2}\left[\frac{(-1)^{d-1+c}}{2^{d-1}}\left((d+b-2)\binom{d-3}{c-2}-(d-b)\binom{d-3}{c-1}
\right)\;\right]\\&\qquad+\frac{1}{2}\left[-\frac{(-1)^{d-1+c}}{2^{d-1}}\left((d+b)\binom{d-3}{c-3}-(d-b-2)\binom{d-3}{c-2}
\right)\;\right]\;,
\\&=\frac{(-1)^{d+c}}{2^{d}}\left[(d-b)\binom{d-3}{c-1}-2(d-2)\binom{d-3}{c-2}+(d+b)\binom{d-3}{c-3}\right]\;.
     \end{align}
    Then for $d$ even i.e. $b=-2$ and $c=\frac{d+2}{2}$, we obtain
\begin{align*}
    |\Delta^{(1)}_{d,-2,\ReLU}|&=\frac{1}{2^{d}}\left|(d+2)\binom{d-3}{\frac{d+2}{2}-1}-2(d-2)\binom{d-3}{\frac{d+2}{2}-2}+(d-2)\binom{d-3}{\frac{d+2}{2}-3}\right|\;,
    \\&=\frac{1}{2^{d}}\frac{(d-3)\,!}{(\frac{d+2}{2}-1)\,!(d-\frac{d+2}{2})\,!}\Big|(d+2)(d-\frac{d+2}{2}-1)(d-\frac{d+2}{2})\\&\qquad-2(d-2)(\frac{d+2}{2}-1)(d-\frac{d+2}{2})
    +(d-2)(\frac{d+2}{2}-2)(\frac{d+2}{2}-1)\Big|\;,
    \\&=\frac{1}{2^{d}(d-1)(d-2)}\binom{d-1}{\frac{d+2}{2}-1}\left|\frac{(d-2)\left[(d+2)(d-4)-d(d-2)\right]}{4}\right|\;,
    \\&=\frac{2}{2^{d}(d-1)}\binom{d-1}{\frac{d+2}{2}-1}\;,
    \\&=\Theta\left(\frac{1}{d\sqrt{d}}\right)\;.
\end{align*}

Similarly, for $d$ odd i.e. $b=-1$ and $c=\frac{d+1}{2}$, we have 

\begin{align}
    |\Delta^{(1)}_{d,-1,\ReLU}|=\frac{1}{2^{d}(d-2)}\binom{d-1}{\frac{d+1}{2}-1}=\Theta(\frac{1}{d\sqrt{d}})\;.
\end{align}

Let's provide a similar analysis for $\Delta^{(2)}_{d,b,\ReLU}$. We have $c(d-2,b-2)=c, c(d-2,b)=c-1$ and $c(d-2,b+2)=c-2$, so
\begin{align*}
    \Delta^{(2)}_{d,b,\ReLU}&=\frac{1}{4}\Delta_{d-2,b-2,\ReLU}+\frac{1}{2}\Delta_{d-2,b,\ReLU}+\frac{1}{4}\Delta_{d-2,b+2,\ReLU}\;,
    \\&=\frac{(-1)^{d+c}}{2^{d}}\Big[(d+b-4)\binom{d-4}{c-2}-(d-b)\binom{d-4}{c-1}-2(d+b-2)\binom{d-4}{c-3}
    \\&\qquad+2(d-b-2)\binom{d-4}{c-2}+(d+b)\binom{d-4}{c-4}-(d-b-4)\binom{d-4}{c-3}\Big]\;,
    \\&=\frac{(-1)^{d+c}}{2^{d}}\Big[-(d-b)\binom{d-4}{c-1}+(3d-b-8)\binom{d-4}{c-2}\\&\qquad-(3d+b-8)\binom{d-4}{c-3}+(d+b)\binom{d-4}{c-4}\Big]\;.
\end{align*}
Then for $d$ even i.e. $b=-2$ and $c=\frac{d+2}{2}$, and observing that $\binom{d-4}{\frac{d+2}{2}-2}=\binom{d-4}{\frac{d+2}{2}-4}$, we get
\begin{align*}
    |\Delta^{(2)}_{d,-2,\ReLU}|&=\frac{1}{2^d}\left|-(d+2)\binom{d-4}{\frac{d+2}{2}-1}+(4d-8)\binom{d-4}{\frac{d+2}{2}-2}-(3d-10)\binom{d-4}{\frac{d+2}{2}-3}\right|\;,
    \\&=\frac{2(d-6)}{2^d (d-3)(d-2)}\binom{d-2}{\frac{d+2}{2}-1}\;,
    \\&=\Theta\left(\frac{1}{d\sqrt{d}}\right)\;.
\end{align*}
With the same procedure we can show that for $d$ odd i.e. $b=-1$ and $c=\frac{d+1}{2}$ we have 
\begin{align*}
    \left|\Delta^{(2)}_{d,-1,\ReLU}\right|=\frac{2}{2^d(d-2)}\binom{d-2}{\frac{d+1}{2}-1}=\Theta\left(\frac{1}{d\sqrt{d}}\right)\;.
\end{align*}
The rest of \Cref{cor:almost_1_2} follows easily
from \Cref{thm: Positive-result-gd}.

\subsection{Positive result: SGD for hinge loss}
\label{app:positive-hinge}

As in \Cref{sec:positive-correlation}, we consider the two layer architecture,
this time with possibly perturbed Rademacher hidden layer initialization
$N(x)=\sum_{i=1}^n v_i\ReLU((w_i+g_i)\cdot x+b_i)$,
that is $w_{ij}\sim\Rad(1/2)$ and $g_{ij}\sim\mathcal{N}(0,\sigma^2)$.
Other weights are initialized as before, i.e., hidden layer biases are $b_i=0$ for $d$ even and $b_i=-1$ for $d$ odd,
and output layer weights are $v_i=0$. As in the case of
the correlation loss, the exact bias values
are not crucial.

The training is with hinge loss $L_\beta(y,\hat{y})=\max(0,\beta-y\hat{y})$ for some
$\beta\ge 0$ under i.i.d.~samples from \emph{any fixed probability distribution} on $\{\pm 1\}^d$. For simplicity we consider batch size 1
SGD, though larger batches could also be used.
This time we allow a more realistic setting where both
layers are trained.

\begin{theorem}
\label{thm:positive-hinge}
For the network described above, for $\sigma\le C/d$ for sufficiently small $C>0$,
except with probability $3\exp(-d)$
over the choice of initialization the following holds:

Let $\mathcal{D}$ be a distribution on $\{\pm 1\}^d$, $\eps>0$ and
$0<\delta\le 1/2$. If $n\ge\Omega(d^4)$ and $n\le \poly(d)$, then after training
with batch size one SGD for some choices of $T=\poly(d)\frac{1}{\eps}\ln\frac{1}{\delta}$ and learning rate $\gamma=1/\poly(d)$,
using hinge loss $L_\beta$
for $0\le\beta\le O(d^2n\gamma)$, except with probability
$\delta$ over the choice of i.i.d.~training samples from $\mathcal{D}$, 
it holds
$\Pr_{x\sim\mathcal{D}}\left[\sign(N^T(x))\ne f(x)\right]\le\eps\;.$
\end{theorem}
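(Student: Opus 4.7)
The approach combines a margin-based analysis of the Rademacher-induced random feature embedding with a standard online regret bound for hinge loss in the realizable regime, plus a coupling argument to handle first-layer drift.

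First, I would extend the margin analysis of \Cref{thm: Positive-result-gd} to the perturbed initialization: setting $v^*_i := \Delta_{d,b_i,\ReLU}\prod_{j=1}^{d}w_{ij}$ and applying the Hoeffding concentration used to derive~\eqref{eq:gd-hoeffding}, one obtains with probability $1-e^{-d}$ over $(w,b)$ that
\begin{align}
f(x)\sum_{i=1}^n v^*_i\,\ReLU(w_i\cdot x + b_i) \;\ge\; \mu := \Omega(n\Delta^2)
\end{align}
uniformly in $x\in\{\pm 1\}^d$, with $\|v^*\|_2 = O(\sqrt{n}\,\Delta)$ and $\Delta=\Theta(1/\sqrt{d})$. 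To include the Gaussian perturbation, $\sigma \le C/d$ forces $|g_i\cdot x| = O(1/\sqrt{d})$ w.h.p., so the induced ReLU shifts are $O(1/\sqrt{d})$ neuron-wise. Splitting the perturbation $\sum_i v^*_i [\ReLU((w_i+g_i)\cdot x + b_i) - \ReLU(w_i\cdot x + b_i)]$ into the cases $w_i\cdot x + b_i > 0$ (contributing $v^*_i\,g_i\cdot x$, a centered Gaussian of magnitude $O(\|v^*\|_2\,\sigma\sqrt{d})=O(C\sqrt{n}/d)$), $< 0$ (contributing zero as $|g_i\cdot x| < 1$), and $=0$ (occurring only for even $d$ with $b_i=0$ and whose mean/variance is subdominant) shows the total degradation is $o(\mu)$ for $n\ge\Omega(d^4)$ and $C$ small enough. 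Thus the perturbed embedding is separated by the same $v^*$ with margin $\mu/2$.

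Second, with the hidden layer frozen at initialization, training the output weights $v$ with hinge loss $L_\beta$ is a realizable online convex optimization problem: $v^*$ achieves zero hinge loss whenever $\beta\le\mu/4$. A Perceptron-style mistake bound (or equivalently realizable-case OGD regret) then gives total cumulative hinge loss $O((\|v^*\|_2 R_\phi/\mu)^2) = \poly(d)$, where $R_\phi^2 := \sup_x\|\phi(x)\|_2^2 = O(nd^2)$. Dividing by $T$, the average hinge loss falls below $\eps\beta$ after $T = \poly(d)/\eps$ steps. Since every misclassified sample incurs hinge loss at least $\beta$, this directly bounds the online classification error by $\eps$ in expectation over the i.i.d.\ stream from $\mathcal{D}$, and a standard martingale tail bound converts this into a high-probability guarantee at the cost of the $\ln(1/\delta)$ factor.

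Finally, I would couple the frozen-first-layer analysis to the genuine two-layer SGD by showing that the first layer barely moves. The hinge-loss gradient with respect to $w_i$ has Euclidean norm at most $|v_i^t|\sqrt{d}$ times the indicator that the hinge is active, and $\|v^t\|_\infty$ stays bounded by $\poly(d)$ throughout (via the convex hinge-loss potential, following~\cite{nachum2020symmetry}). Summed over $T$ steps with $\gamma = 1/\poly(d)$ of sufficiently large degree, the cumulative first-layer displacement is $o(1/\sqrt{d})$, which lies well within the $O(1/\sqrt{d})$ robustness window established in step~1. Hence the fixed comparator $v^*$ continues to achieve margin $\mu/2$ against the drifting embedding, and the realizable-case regret bound of step~2 goes through up to constant factors. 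The main obstacle I anticipate is the circular bookkeeping in this step: one must simultaneously bound $\|v^t\|$ (which depends on the loss trajectory along the drifted first layer) and the first-layer drift (which depends on $\|v^t\|$) in a self-consistent way, typically via an induction on $t$ combined with a potential-function argument exploiting the convexity of $L_\beta$ in $v$.
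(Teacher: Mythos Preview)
Your route is plausible but diverges from the paper's in the key step of handling two-layer training. The paper does \emph{not} do a coupling argument or control first-layer drift explicitly. Instead it invokes a ready-made mistake-bound lemma from \cite{nachum2020symmetry} (stated as \Cref{lem:hinge_loss} in the appendix): if the \emph{initial} hidden-layer embedding separates the data with margin $c$ and has embedding-norm bound $R_z$, then hinge-loss SGD on the full two-layer network, with both layers trained, performs at most $O(R_z^2/c^2)=O(d^3)$ nonzero updates for suitable $\gamma,\beta$. This absorbs all first-layer-drift bookkeeping into a black box, so the circularity you flag in step~3 never arises; you are essentially proposing to re-derive that lemma by hand via your induction-plus-potential scheme. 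The PAC conversion is also different: rather than online-to-batch via a regret average, the paper waits for $K=\Theta\big(\frac{1}{\eps}\ln\frac{C_0}{\delta}\big)$ consecutive zero updates and union-bounds over the at most $C_0=O(d^3)$ nonzero-update times (if the network at such a time still had error $>\eps$ on $\mathcal{D}$, the chance of $K$ subsequent zeros would be at most $(1-\eps)^K\le\delta/C_0$). Finally, the paper's perturbation analysis is simpler than your sign-case split: after normalizing so that $|v_i^*|=1/\sqrt{n}$, it uses the $1$-Lipschitzness of $\ReLU$ to bound the margin degradation by $\frac{1}{\sqrt{n}}\sum_i |g_i\cdot x|$, and shows by sub-Gaussian concentration that this is at most $\frac{\sqrt{n}}{36\sqrt{d}}$ uniformly in $x$ with probability $1-\exp(-d)$.
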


\Cref{thm:positive-hinge} follows from the bound
on the number of nonzero SGD updates:
\begin{theorem}
\label{lem:hinge-loss-max-updates}
    If $\sigma\le C/d$ for sufficiently small $C>0$, $n\ge \Omega(d^4)$ and $n\le\poly(d)$,
    then, except with probability $3\exp(-d)$ over the choice
    of initialization, the above network trained with batch size one SGD algorithm on the full
    parity function on any sequence of samples from $\{\pm 1\}^d$
    with learning rate $0<\gamma\le O(d^{-3.5})$  and the hinge loss $L_\beta$ for 
    $0\le\beta\le 16d^2n\gamma$,
    will perform at most $O(d^3)$ nonzero updates.
\end{theorem}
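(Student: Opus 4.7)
The plan is a Novikoff/perceptron-style mistake bound for online hinge-loss training, built around a ``frozen-feature'' comparator inherited from the Rademacher layer. Writing $\phi(x;\theta)=(\ReLU((w_i+g_i)\cdot x+b_i))_i$, the argument has three parts: (i)~exhibit a reference output-layer vector $v^{*}$ with uniform margin $\mu=\Omega(n/d)$ on $\phi(\cdot;\theta^0)$ and $\|v^{*}\|^2=O(n/d)$; (ii)~argue that $\mu$ survives both the Gaussian perturbation $g$ at initialization and the drift of the trainable weights during SGD; (iii)~apply the perceptron bound, whose ratio $\|v^{*}\|^2(\|\phi\|_\infty^2+2\beta/\gamma)/\mu^2=O(d^3)$ (using $\|\phi\|_\infty^2=O(nd^2)$ and $\beta\le 16d^2n\gamma$) delivers the claim.

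\textbf{Key steps.} For (i), take $v^{*}_i=\bigl(\prod_{j=1}^d w_{ij}\bigr)\Delta_{d,b_i,\ReLU}$, the direction of the one-step correlation-loss gradient analysed in the proof of \Cref{thm: Positive-result-gd}. That proof's Hoeffding plus union-bound argument already yields, with probability $\ge 1-\exp(-d)$ over the Rademacher signs, $f(x)\langle v^{*},\phi(x;w,0,b)\rangle\ge n\Delta^2/2=\Omega(n/d)$ for every $x\in\{\pm 1\}^d$, with $\|v^{*}\|^2=O(n\Delta^2)=O(n/d)$ and $\|\phi\|_\infty^2\le n(d+1)^2$. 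For the perturbation part of (ii), the margin shift $\sum_i v^{*}_i[\ReLU((w_i+g_i)\cdot x+b_i)-\ReLU(w_i\cdot x+b_i)]$ is a sum of $n$ independent terms dominated by $|v^{*}_i||g_i\cdot x|$ with $g_i\cdot x\sim\mathcal{N}(0,d\sigma^2)$; sub-Gaussian concentration with variance proxy $O(\sigma^2 n)$, plus a union bound over $\{\pm 1\}^d$, bounds this shift by $n/(4d)$ with failure probability $\exp(-d)$ provided $\sigma\le C/d$ for $C$ small enough. For the training-drift part of (ii), use $v^0=0$ and the perceptron norm estimate $\|v^t\|_2^2=O(T\gamma^2 nd^2)$ to bound the hidden-layer displacement and the induced change in $\sum_i v^{*}_i\ReLU(\cdot)$; for $\gamma\le c\,d^{-3.5}$, $T\le C'd^3$ and $n\ge\Omega(d^4)$ the numerology closes, leaving an effective margin $\ge n/(5d)$ throughout. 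Step (iii) is then routine: each nonzero update raises $\langle v^t,v^{*}\rangle$ by $\gamma(\mu-\beta)=\Omega(\gamma n/d)$ (since $\beta\le\mu/2$) while $\|v^t\|^2$ grows by at most $\gamma^2\|\phi\|_\infty^2+2\gamma\beta=O(\gamma^2 nd^2)$, and Cauchy--Schwarz against $\|v^{*}\|=O(\sqrt{n/d})$ gives $T=O(d^3)$.

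\textbf{Main obstacle.} The technically delicate point is the drift estimate inside (ii): the pointwise perceptron norm bound alone controls $\|w^t-w^0\|_F$ only in an $L^2$ sense, whereas the margin shift is a worst-case sum over $x\in\{\pm 1\}^d$, and naive Cauchy--Schwarz $|\sum_i v^{*}_i\delta_i(x)|\le\|v^{*}\|_2\|\delta(x)\|_2$ is too loose. The proof must either exploit cancellation from the sign structure of $v^{*}$ or carry an amortized bound on how many neurons can have flipped their ReLU sign at a given step. A further subtlety is the circularity---the drift bound presumes the very $O(d^3)$ conclusion---which I would resolve by a bootstrap/early-stopping trick: let $\tau$ be the first step at which either the effective margin drops below $n/(5d)$ or $\|v^t\|_2$ exceeds twice its perceptron envelope; on $\{\tau<\infty\}$ the perceptron inequality still applies for $t\le\tau$ and forces $\tau\ge\Omega(d^3)$, while the drift estimate rules out either threshold being crossed in fewer than $\Omega(d^3)$ steps. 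The three $\exp(-d)$ events (Rademacher Hoeffding for the base margin, sub-Gaussian tail for the perturbation, and a uniform tail bound on the $|g_i\cdot x|$'s entering the drift calculation) combine to the $3\exp(-d)$ failure probability of the statement.
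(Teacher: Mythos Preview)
Your steps (i) and the perturbation half of (ii) are essentially what the paper does: the comparator $v^*$ is exactly the direction of the one-step correlation-loss gradient (the paper normalizes it to unit length, cf.~\Cref{lem:linear-separation}), and the Gaussian-perturbation margin shift is handled by the same sub-Gaussian tail plus union bound over $\{\pm 1\}^d$ that you describe (the paper's events $\cE_1,\cE_2$). The numerology $R_z^2/c^2=O(d^3)$ also matches.

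Where you diverge is the training-drift part of (ii). The paper does \emph{not} track the hidden-layer displacement or run a bootstrap argument. Instead it invokes, as a black box, Lemma~4 of \cite{nachum2020symmetry} (restated as \Cref{lem:hinge_loss}): for a two-layer ReLU network with \emph{both} layers trained by hinge-loss SGD, if the features at initialization are linearly separable with margin $c$ and norm bound $R_z$, then at most $20R_z^2/c^2$ nonzero updates occur, provided $\gamma\le \frac{1}{500R}\cdot\frac{c^2}{R_z^2}$ and $\beta\le 4R_z^2\gamma$. The lemma already absorbs the hidden-layer drift internally, so the paper's proof is simply: verify the margin $c=\Theta(\sqrt{n/d})$ survives the perturbation (your step and theirs), plug in $R=\sqrt{d}$, $R_z=3d\sqrt{n}$, and read off the learning-rate constraint $\gamma=O(d^{-3.5})$ and the update bound $O(d^3)$.

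So the ``main obstacle'' you identify---and the circular bootstrap you propose to resolve it---is a difficulty you have created by trying to re-prove the cited lemma in-line. Your sketch of that re-proof is incomplete: you note yourself that Cauchy--Schwarz on $\sum_i v_i^*\delta_i(x)$ is too loose, and the amortized/sign-flip alternative is not actually carried out. If you want a self-contained argument, you would need to reproduce the content of \cite{nachum2020symmetry}, Lemma~4; otherwise, just cite it as the paper does, and your steps (i) and (ii)-perturbation already give its hypotheses.
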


A crucial consequence of \Cref{thm: Positive-result-gd}
is that the full parity is linearly separable at initialization:

\begin{lemma}
\label{lem:linear-separation}
If $v^1$ are the output weights
after one step of noiseless GD correlation loss algorithm, and we take
$v^*=v^1/\|v^1\|$, then, $|v_i^*|=\frac{1}{\sqrt{n}}$ for
every $1\le i\le n$ and,
except with probability $\exp(-d)$,  
for all $x\in\{\pm1\}^d$, 
\begin{equation}
    f_a(x)\sum_{i=1}^n v_i^*\sigma(w_i\cdot x+b_i)\ge\frac{\sqrt{n}\Delta}{2}\;.
    \label{eq:41}
\end{equation}
\end{lemma}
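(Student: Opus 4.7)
The claim is essentially a restatement of the computation already carried out in the proof of Theorem~\ref{thm: Positive-result-gd}, combined with an explicit normalization of the output-layer vector. The plan is as follows.

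First, I would invoke equations~\eqref{eq:36}--\eqref{eq:37}, which yield the closed-form expression $v_i^1 = \gamma\,\Delta^{(a)}_{d,b_i,\sigma}\prod_{j=1}^{d-a}w_{ij}$ after a single full-batch GD step from $v^0 = 0$. In the full parity setting under which \Cref{lem:linear-separation} is used (Section~\ref{app:positive-hinge} restricts to the full parity with a single deterministic bias), $|\Delta^{(a)}_{d,b_i,\sigma}|$ and $|\prod_{j} w_{ij}|$ are both constant in $i$, equal to $|\Delta|$ and $1$ respectively. Thus $|v_i^1| = \gamma|\Delta|$ uniformly in $i$, whence $\|v^1\|_2 = \sqrt{n}\,\gamma|\Delta|$ and $|v_i^*| = |v_i^1|/\|v^1\|_2 = 1/\sqrt{n}$, which settles the first claim.

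For the margin bound~\eqref{eq:41}, I would re-use the Hoeffding step from~\eqref{eq:gd-hoeffding} verbatim: for any fixed $x \in \{\pm 1\}^d$, the quantity $f_a(x) N^1(x)$ is a sum of $n$ independent bounded random variables with absolute value at most $\gamma R C \Delta$ and, by~\eqref{eq:40}, expectation $\gamma n \Delta^2$. Hoeffding's inequality then gives $f_a(x) N^1(x) \ge \gamma n \Delta^2 / 2$ except with probability $\exp(-2d)$, and a union bound over the $2^d$ inputs makes this hold simultaneously for all $x$ with probability at least $1 - \exp(-d)$, provided $n \ge \Omega(d R^2/\Delta^2)$ with a suitable absolute constant. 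On this event, dividing through by $\|v^1\|_2 = \sqrt{n}\,\gamma|\Delta|$ gives
\begin{equation*}
f_a(x)\sum_{i=1}^n v_i^* \sigma(w_i \cdot x + b_i) \;=\; \frac{f_a(x) N^1(x)}{\|v^1\|_2} \;\ge\; \frac{\gamma n \Delta^2 / 2}{\sqrt{n}\,\gamma|\Delta|} \;=\; \frac{\sqrt{n}\,\Delta}{2},
\end{equation*}
using the convention $\Delta = \sqrt{\Delta^2} \ge 0$.

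There is no serious obstacle: the proof is bookkeeping that combines the one-step GD formula with concentration already established in Theorem~\ref{thm: Positive-result-gd}. The only point requiring attention is ensuring that $|v_i^1|$ is actually constant in $i$; this relies on all biases being equal and is guaranteed by the full parity initialization ($b_i = 0$ for even $d$, $b_i = -1$ for odd $d$), which is precisely the regime in which the hinge-loss analysis of Section~\ref{app:positive-hinge} operates.
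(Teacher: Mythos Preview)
Your proposal is correct and follows essentially the same route as the paper: invoke the one-step formula $v_i^1=\gamma\Delta^{(a)}_{d,b_i,\sigma}\prod_j w_{ij}$ to get $|v_i^*|=1/\sqrt{n}$ (using that the bias is deterministic in the full-parity setting), then divide the Hoeffding bound~\eqref{eq:gd-hoeffding} by $\|v^1\|=\sqrt{n}\gamma\Delta$. The paper's proof is slightly terser (it writes $\|v^1\|\le\gamma\Delta\sqrt{n}$ rather than your equality, but in this deterministic-bias regime they coincide), and you correctly flag the one subtlety, namely that $|v_i^1|$ being constant in $i$ relies on all $b_i$ being equal.
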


\begin{proof}
By~\eqref{eq:gd-hoeffding},
except with probability $\exp(-d)$
for every $x\in\{\pm 1\}^d$
we have
\begin{align*}
f_a(x)N^1(x)=f_a(x)\sum_{i=1}^nv_i^1
\sigma(w_i\cdot x+b_i)
\ge\frac{\gamma n\Delta^2}{2}\;.
\end{align*}
Recall that 
$v_i^1=\gamma\Delta^{(a)}_{d,b_i,\sigma}\prod_{j=1}^{d-a} w_{ij}$
and let $v^*=v^1/\|v^1\|$.
In particular, it follows $|v^*_i|=1/\sqrt{n}$ for every $i$. We also have $\|v^1\|\le\gamma\Delta\sqrt{n}$

\begin{align*}
f_a(x)\sum_{i=1}^nv^*_i\sigma(w_i\cdot x+b_i)
&\ge\frac{\gamma n\Delta^2}{2\|v^1\|}
\ge\frac{\sqrt{n}\Delta}{2}\;.\qedhere
\end{align*}
\end{proof}

\subsection{Proof of \Cref{thm:positive-hinge}}

In this proof we will apply the following result about hinge loss SGD:

\begin{lemma}[Lemma~4 in~\cite{nachum2020symmetry}]
\label{lem:hinge_loss}
Let $f:\mathcal{X}\to\{-1,1\}$ be a function from some
finite domain $\mathcal{X}\subseteq\mathbb{R}^d$
such that $\|x\|\le R$ for every $x\in\mathcal{X}$ and some $R\ge 1$.
Consider a one layer $\ReLU$ neural network at initialization.
For $x\in\mathcal{X}$, let $z_x\in\mathbb{R}^n$ be
the embedding vector $z_{x,i}=\ReLU(w_i\cdot x+b_i)$
and assume that $\|z_x\|\le R_z$ for every $x\in\mathcal{X}$.

Furthermore, assume that there exists $c>0$ and
a choice of output layer weights $v^*\in\mathbb{R}^n$ with
$\|v^*\|=1$ such that
$f(x)\sum_{i=1}^nv_i^*\ReLU(w_i\cdot x+b_i)\ge c$
for every $x\in\mathcal{X}$.

Then, using learning rate $0<\gamma\le \frac{1}{500R}\cdot\frac{c^2}{R_z^2}$ and $0\le\beta\le 4R_z^2 \gamma$, the batch size one SGD algorithm
using hinge loss $L(x,y)=\max(0,\beta-N(x)y)$
run on any sequence of samples from $\mathcal{X}$
will perform at most $20 R^2_z/c^2$ 
nonzero updates.
\end{lemma}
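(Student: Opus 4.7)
The plan is a Novikoff--style perceptron mistake bound for the output layer $v$, combined with a perturbation argument that controls the drift of the hidden weights $w$ so the embedding $z_x$ stays close to its initialization throughout training. The learning-rate hypothesis involves $R$, so I read the lemma as allowing training of both layers; if only the output layer were updated, steps one and two below already suffice and $R$ would drop out.

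First, I would pass to the normalized comparator $u := v^*/c$, which by hypothesis satisfies $y\langle u, z_x^0\rangle \ge 1$ for every labelled example $(x,y)=(x,f(x))$, with $\|u\|=1/c$. Writing $z_x^t$ for the embedding under the current hidden weights, an SGD step on $(x^t,y^t)$ is nonzero exactly when $y^t\langle v^t, z_{x^t}^t\rangle < \beta$, and the updates take the form $v^{t+1} = v^t + \gamma y^t z_{x^t}^t$ together with $w_i^{t+1} = w_i^t + \gamma y^t v_i^t\,\mathbb{1}[w_i^t\cdot x^t + b_i > 0]\,x^t$.

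Second, under two bootstrap invariants to be justified below --- an effective margin $y\langle u, z_x^t\rangle \ge 1/2$ and an embedding-norm bound $\|z_x^t\| \le 2R_z$ for all $t\le T$ --- I would run the two classical perceptron inequalities on each nonzero update:
\begin{align*}
\langle v^{t+1} - v^t, u\rangle &= \gamma y^t\langle u, z_{x^t}^t\rangle \ge \gamma/2,\\
\|v^{t+1}\|^2 - \|v^t\|^2 &= 2\gamma y^t\langle v^t, z_{x^t}^t\rangle + \gamma^2 \|z_{x^t}^t\|^2 \le 2\gamma\beta + 4\gamma^2 R_z^2.
\end{align*}
Starting from $v^0 = 0$, after $M$ nonzero updates Cauchy--Schwarz yields $(M\gamma/2)^2 \le \|v^T\|^2\|u\|^2 \le M(2\gamma\beta + 4\gamma^2R_z^2)/c^2$; combined with $\beta \le 4R_z^2\gamma$ this delivers $M = O(R_z^2/c^2)$, landing at $20\,R_z^2/c^2$ after a routine tightening of constants.

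Third, and this is where the main work lies, I would close the bootstrap by induction on the number of nonzero updates. The update rule gives $\|w_i^t - w_i^0\| \le \gamma R\sum_{s<t:\text{update}}|v_i^s|$, and Cauchy--Schwarz over the $s$ indices combined with the provisional norm bound $\|v^s\|^2 \le s\cdot O(\gamma^2R_z^2)$ controls $\|w^t - w^0\|_F$ as a polynomial in $M$, $\gamma$, $R$ and $R_z$. Using the $1$-Lipschitzness of $\ReLU$ and $\|x\|\le R$, this translates to $\|z_x^t - z_x^0\| \le R\|w^t - w^0\|_F$. The learning-rate hypothesis $\gamma \le c^2/(500\,R\,R_z^2)$ is calibrated so that for $M \le 20\,R_z^2/c^2$ this drift is bounded by $c/2$ (closing the margin invariant via $\|u\|=1/c$ and Cauchy--Schwarz) and also by $R_z$ (closing the embedding-norm invariant), completing the induction. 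The chief obstacle is this intertwined bootstrap: the perceptron step needs the embedding to barely move, while the embedding-drift bound needs the running control on $\|v^s\|$ produced by the perceptron step. The slack factor $500$ in the learning-rate hypothesis is exactly the budget that lets the simultaneous induction close without chasing tight constants.
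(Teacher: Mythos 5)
The paper does not contain a proof of this lemma---it is imported verbatim as Lemma~4 of Nachum et al.\ (2020)---so there is no in-paper argument to compare against. Your sketch is the standard and, to my reading, correct approach: a perceptron mistake bound for $v$ against the scaled comparator $u = v^*/c$, coupled with a drift bound on the hidden weights that keeps the margin and embedding-norm invariants alive; and you correctly infer from the factor $R$ in the learning-rate hypothesis that the hidden layer is being updated as well. Two bookkeeping points would make it close at the stated constant. First, with your margin-$1/2$ and norm-$2R_z$ invariants, the perceptron computation gives $M \le (8\beta/\gamma + 16R_z^2)/c^2 \le 48 R_z^2/c^2$, not $20R_z^2/c^2$; however, since $\gamma \le c^2/(500 R R_z^2)$ together with your Frobenius/Cauchy--Schwarz chain forces the drift $\|z_x^t - z_x^0\| \le \sqrt{6}\,\gamma^2 R^2 R_z M^{3/2}$ down to order $10^{-3}c \le 10^{-3}R_z$ (using $c \le \|v^*\|\,\|z_x\| \le R_z$), you have room to use much tighter invariants, e.g.\ margin $\ge 0.9$ and $\|z_x^t\| \le 1.1R_z$, which yield $M \le (8 + 1.21)R_z^2/(0.81 c^2) < 12R_z^2/c^2 < 20R_z^2/c^2$. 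Second, to avoid circularity between the update count and the drift, the cleanest framing is by contradiction: let $t^*$ be the first time the nonzero-update count would exceed $20R_z^2/c^2$, observe that for $t \le t^*$ the drift bound (with $M$ replaced by $\lceil 20R_z^2/c^2\rceil$) ensures the invariants hold, and then apply the perceptron bound at $t^*$ to derive a contradiction. With these adjustments the proposal is correct and almost certainly follows the same lines as the cited proof.
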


Let $\mathcal{X}:=\{\pm 1\}^d$, for all $x\in\mathcal{X}$ we have $\|x\|=\sqrt{d}$. First, let us consider the case of non-perturbed Rademacher
initialization.
    
    For $x\in\mathcal{X}$, let $z_x\in\mathbb{R}^n$ be
its embedding vector i.e., $z_{x,i}=\ReLU(w_i\cdot x+b_i)$, we have $\|z_x\|\le(d+1)\sqrt{n}\le 2d\sqrt{n}$. By \Cref{lem:linear-separation} applied for $a=0$, (see \eqref{eq:41}), except with probability $\exp(-d)$ over the choice
of $w$,
there exists $v^*\in\mathbb{R}^n$  with $|v_i^*|=1/\sqrt{n}$ such that,  for all $x\in\mathcal{X}$ we have  
\begin{align}
    f(x)\sum_{i=1}^n v_i^*\ReLU(w_i\cdot x+b_i)\ge\frac{\sqrt{n}}{18\sqrt{d}}\;.
\end{align}
Now consider the perturbed initialization
$\ReLU((w_i+g_i)\cdot x+b_i)$, where $g\sim\mathcal{N}\left(0,\frac{C^2}{d^2}\cdot \mathbb{I}\right)$
for some $C\le\frac{1}{72}$.
Let $\cE_1$ be the event that there exists
$1\le i\le n$ such that $\|g_i\|\ge \sqrt{d}$ and $\cE_2$ that
there exists $x$ such that $\sum_{i=1}^n |g_i\cdot x|\ge\frac{n}{36\sqrt{d}}$.
First, let us establish that each of these events
occurs with probability at most $\exp(-d)$.

Let us start with $\cE_1$. Since $\E\|g_i\|^2=\frac{C^2}{d}$, by subgaussian concentration we have
\begin{align}
\Pr\left[\|g_i\|^2\ge \frac{C^2}{d}+t\right]\le\exp\left(-\Omega\left(\frac{d^2t^2}{C^4}\right)\right)\;.
\end{align}
Substituting $t=d/2$, we have in particular
$\Pr[\|g_i\|^2\ge d]\le\exp(-\Omega(d^4))$.
Taking union bound over $n=\poly(d)$, we have
$\Pr[\exists i: \|g_i\|^2\ge d]\le\exp(-d)$.

As for $\cE_2$, note that
$\E|g_i\cdot x|=\frac{\sqrt{2}C}{\sqrt{\pi d}}\le\frac{1}{72\sqrt{d}}$. Therefore,
again by subgaussian concentration, for any fixed $x$,
\begin{align}
\Pr\left[\sum_{i=1}^n |g_i\cdot x|\ge\frac{n}{36\sqrt{d}}\right]
&\le\Pr\left[\sum_{i=1}^n|g_i\cdot x|\ge \E\left[\sum_{i=1}^n |g_i\cdot x|\right]
+\frac{n}{72\sqrt{d}}\right]
\le\exp(-\Omega(n))\;,
\end{align}
which is smaller than $\exp(-d)$ for $n\ge\Omega(d^4)$.

If neither $\cE_1$ or $\cE_2$ happens, then for every $x$ we have
\begin{align}
f(x)\sum_{i=1}^nv_i^*\ReLU((w_i+g_i)\cdot x+b_i)
&\ge f(x)\sum_{i=1}^nv_i^*\ReLU(w_i\cdot x+b_i)
-\frac{1}{\sqrt{n}}\sum_{i=1}^n |g_i\cdot x|\\
&\ge\frac{\sqrt{n}}{36\sqrt{d}}\;.
\end{align}
Furthermore, for every $x$ and $i$ it holds
$|(w_i+g_i)\cdot x+b_i| \le (\|w_i\|+\|g_i\|)\sqrt{d}+1\le 3 d$
and consequently
$\|z_x\|\le 3d\sqrt{n}$.

Therefore by applying Lemma \ref{lem:hinge_loss} with $R:=\sqrt{d}, R_z:=3d\sqrt{n}$ and $c:=\frac{\sqrt{n}}{36\sqrt{d}}$,  we conclude that using learning rate $0<\gamma\le\frac{1}{500\sqrt{d}}\cdot
\frac{1}{9\cdot 36^2d^3}=O\left(\frac{1}{d^{3.5}}\right)$, 
the SGD algorithm using the hinge loss $L(y,\hat{y})=\max\{0,\beta-y\hat{y}\}$, with $0\le\beta\le 36d^2n\gamma$, will perform at most $O(d^3)$ nonzero updates after which all samples will be classified correctly.\qed

\subsection{Proof of Theorem~\ref{thm:positive-hinge}}
First, note that we can choose values of $\gamma=1/\poly(d)$ and 
$0\le\beta\le O(d^2n\gamma)$ such that
\Cref{lem:hinge-loss-max-updates} applies.
In line with \Cref{lem:hinge-loss-max-updates}, fix an initialization
such that the SGD algorithm
running on i.i.d.~samples from $\mathcal{D}$ performs at most
$C_0:=Cd^3$ nonzero updates, where $C$ is a universal constant.

Let us run the training until there are 
$K:=\frac{1}{\epsilon}\left(\ln1/\delta+\ln C_0\right)$ 
zero updates in a row. As the number of nonzero updates is at most $C_0$,
the algorithm runs for at most
$C_0(1+K)=\poly(d)\frac{1}{\eps}\ln\frac{1}{\delta}$ steps.

Finally, let us argue that that the classification error does not
exceed $\eps$ except with probability $\delta$. To that end,
define a ``bad event'' $\mathcal{E}$ as follows: There exists $t$ such
that:
\begin{enumerate}
    \item A nonzero update occurs at time $t$.
    \item There are $K$ zero updates in a row immediately following $t$.
    \item $\Pr_{x\sim\mathcal{D}}[\sign(N^{t+1}(x))\ne f(x)]>\eps$.
\end{enumerate}
It should be clear that if $\cE$ does not occur, then at the final 
time $T$ it holds $\Pr_{x\sim\mathcal{D}}[\sign(N^T(x))\ne f(x)]\le\eps$.

Fix some time $t$ such that the first and third condition above are satisfied.
Clearly, if the error probability exceeds $\eps$, then so does the probability
of a nonzero update.
By independence (and the fact that only a nonzero update can
change the network), the probability that there will be $K$ zero updates in
a row is at most $(1-\eps)^{K}$. By union bound over at most $C_0$
nonzero updates,
\begin{align*}
\Pr[\cE]&\le C_0(1-\eps)^K\le \delta\;.
\end{align*}

\section{Proofs for Section~\ref{sec:small_INAL_no_learning}}
\label{app:proofs_small_INAL_no_learning}
\subsection{Proof of Theorem~\ref{thm:negative_general}}
For brevity, we denote the population gradient at $\theta$ for a target function $f$ by 
\begin{align}
        \Gamma_f(\theta) := \E_x \left[ \nabla_{\theta} L(f(x),\theta,x)\right] .
\end{align}
To prove our results we couple the dynamics of the network's weights $\theta^t$ with the dynamics of the `Junk-Flow'. The junk-flow is the dynamics of the parameters of a network trained on random labels. For
that purpose let
\begin{align}
\Gamma_r(\theta):=\E_{x}\left[\frac{1}{2}\left(
\nabla_\theta L(1,\theta,x)+\nabla_\theta L(-1,\theta,x)\right)
\right]\;.
\end{align}
In other words, $\Gamma_r(\theta)$ is 
the expected population gradient
of random classification problem where
$r(x)\sim\Rad(1/2)$ independently for every input
$x$.
\begin{definition}[Junk-Flow] \label{def:junk-flow}
Let us define the junk-flow as the sequence $\psi^t\in \bR^P$ that satisfies the following iterations: 
\begin{align}
    &\psi^{0} = \theta^{0},\\
    & \psi^{t+1} = \psi^{t} - \gamma \left( \Gamma_r(\psi^t) + \xi^{t} \right),
\end{align}
where $\xi^{t} \overset{iid}{\sim} \cN(0, \mathbb{I}\tau^2) $  
We call $\gamma$ the learning rate and $\tau$ the noise-level of the noisy-GD algorithm used to train the network $\NN(x;\theta)$.
\end{definition}

We show that $\theta^T$ and $\psi^T$ are close in terms of the total variation distance. Let us look at the total variation distance between the law of $\theta^{T}$ and $\psi^{T}$, which, by abuse of notation, we denote by $\TV(\theta^{T};\psi^{T}) $.
\begin{lemma}\label{lem:TV_junkflow}
Let $\TV(\theta^{T};\psi^{T}) $ be the total variation distance between the law of $\theta^{T}$ and $\psi^{T}$. Then,
\begin{align}
\TV(\theta^{T};\psi^{T}) \leq  \frac{1}{2 \tau}  \sum_{t=0}^{T-1} \sqrt{\GAL_f(\psi^t) }. \label{eq:TV_subadd}
    \end{align}
\end{lemma}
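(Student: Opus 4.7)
The plan is to view $(\theta^t)$ and $(\psi^t)$ as two Markov chains on $\bR^P$ started from the same initial distribution $\theta^0 = \psi^0$, evolving via Gaussian transition kernels with a common covariance $\gamma^2 \tau^2 \Id_P$ but different drifts. Specifically, writing $K_f(x, \cdot) := \cN(x - \gamma \Gamma_f(x), \gamma^2 \tau^2 \Id_P)$ and $K_r(x, \cdot) := \cN(x - \gamma \Gamma_r(x), \gamma^2 \tau^2 \Id_P)$, the law of $\theta^T$ is obtained by iterating $K_f$ and that of $\psi^T$ by iterating $K_r$. The approach is to control the single-step TV between $K_f(x, \cdot)$ and $K_r(x, \cdot)$ at a common point $x$, and to chain these one-step bounds together, with the chaining direction chosen so that the outer expectation ends up on the $\psi$-marginal.

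The chaining step I will use is the standard subadditivity lemma for TV distance: for any two Markov chains with one-step kernels $K, K'$ sharing a common initial distribution, the marginal laws $\mu_T, \nu_T$ of the $T$-th state satisfy
\begin{align}
\TV(\mu_T, \nu_T) \le \sum_{t=0}^{T-1} \E_{x \sim \nu_t}\bigl[\TV(K(x, \cdot), K'(x, \cdot))\bigr].
\end{align}
This follows inductively from the triangle inequality $\TV(\mu_{t-1} K, \nu_{t-1} K') \le \TV(\mu_{t-1} K, \nu_{t-1} K) + \TV(\nu_{t-1} K, \nu_{t-1} K')$, combined with the data-processing bound $\TV(\mu_{t-1} K, \nu_{t-1} K) \le \TV(\mu_{t-1}, \nu_{t-1})$ and the identity $\TV(\nu_{t-1} K, \nu_{t-1} K') = \E_{x \sim \nu_{t-1}}[\TV(K(x, \cdot), K'(x, \cdot))]$. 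I would apply this with $\mu_t$ the law of $\theta^t$ and $\nu_t$ the law of $\psi^t$, which places the outer expectation on the junk-flow marginal; this is essential for expressing the bound in terms of $\GAL_f(\psi^t)$, a quantity computable from the junk-flow dynamics alone without reasoning about the unknown trajectory of $\theta^t$.

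For the one-step TV, since $K_f(\psi^t, \cdot)$ and $K_r(\psi^t, \cdot)$ are Gaussians with the same covariance $\gamma^2 \tau^2 \Id_P$, Pinsker's inequality combined with the closed-form Gaussian KL gives
\begin{align}
\TV(K_f(\psi^t, \cdot), K_r(\psi^t, \cdot)) \le \frac{\|\gamma \Gamma_f(\psi^t) - \gamma \Gamma_r(\psi^t)\|_2}{2 \gamma \tau} = \frac{\|\Gamma_f(\psi^t) - \Gamma_r(\psi^t)\|_2}{2 \tau}.
\end{align}
Taking expectation with respect to the law of $\psi^t$ and applying Jensen's inequality ($\E \|V\|_2 \le \sqrt{\E \|V\|_2^2}$) bounds each term by $\sqrt{\GAL_f(\psi^t)}/(2\tau)$, with $\GAL_f(\psi^t)$ interpreted as the alignment functional evaluated on the random state $\psi^t$ whose law is induced by the initialization together with the $t$ accumulated GD noise increments. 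Summing over $t$ then yields exactly \eqref{eq:TV_subadd}.

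I expect the main subtlety to be bookkeeping rather than substance: verifying the TV subadditivity in the precise form above (with the outer expectation pinned to the $\psi$-marginal, not the $\theta$-marginal), and cleanly tracking how the learning rate $\gamma$ cancels between the drift discrepancy and the transition standard deviation so that the final bound depends on $\tau$ alone. All three ingredients—the triangle and data-processing inequalities for TV, the closed-form KL between equal-covariance Gaussians, and Jensen's inequality—are classical, so no deeper machinery should be needed.
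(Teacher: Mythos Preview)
Your proposal is correct and essentially identical to the paper's proof: both argue by an inductive/telescoping bound using the triangle inequality, the data-processing inequality for TV, Pinsker's inequality together with the closed-form KL between equal-covariance Gaussians, and then Jensen (which the paper calls Cauchy--Schwarz) to pass from $\E\|\cdot\|_2$ to $\sqrt{\GAL_f(\psi^t)}$. One tiny nitpick: the step $\TV(\nu_{t-1}K,\nu_{t-1}K')\le \E_{x\sim\nu_{t-1}}[\TV(K(x,\cdot),K'(x,\cdot))]$ is an inequality (convexity of TV), not an identity, but this does not affect your argument.
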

\noindent
The proof of Lemma~\ref{lem:TV_junkflow} can be found in Section~\ref{sec:lemma_junk_flow}.
\noindent

Recalling that $f :\bR^P \to \{ \pm 1 \}$, we have 
\begin{align}
    \pr\Big[ \sign(\NN(x;\theta^{T})) = f(x) \Big] 
    &\leq \pr\Big[\sign(\NN(x ;\psi^{T})) = f(x))\Big]+ \TV(\theta^{T};\psi^{T})  \\
    & \le\frac{1}{2} +  \TV(\theta^{T};\psi^{T}) ,
    \label{eq:pr_error_TV}\\
    & \leq \frac{1}{2} + \frac{1}{2 \tau } \sum_{t=0}^{T-1} \sqrt{\GAL_f(\psi^t) } \label{eq:appl_lemmaTV}\;.
\end{align}
In \eqref{eq:pr_error_TV} we used the
fact that the initialization is symmetric around 0.
Since for the correlation loss $\Gamma_r(\theta)=0$,
the junk flow just adds independent Gaussian noise and
the distribution of the output layer weights $\psi^{T}$ is also
symmetric around 0 (and independent of other weights). Therefore,
the distribution of $\sign(\NN(x;\psi^T))$ is also symmetric
around 0 for every fixed $x$.
Finally, in~\eqref{eq:appl_lemmaTV} we used Lemma~\ref{lem:TV_junkflow}.

We are now left with showing that the right-hand-side of~\eqref{eq:TV_subadd} is small, i.e. that the junk-flow dynamics does not pick correlation with $f$ along its trajectory. 
Again, for the correlation loss, $\Gamma_r (\psi^t) =0$ for all $t $, thus for all $t$, $\psi^t = A + H_\sigma + \sqrt{t} \gamma \tau H $, where $H\sim \cN(0, \mathbb{I}_P)$. Thus, the result follows by the assumption in~\eqref{eq:01}.

\subsection{Proof of Lemma~\ref{lem:TV_junkflow}} \label{sec:lemma_junk_flow}
This proof follows a similar argument that is used in~(\cite{AS20,abbe2022non-universality}).
In the following let us write $\theta:=\theta^{T-1}$ and $\psi:=\psi^{T-1}$
for readability.
The total variation distance $\TV(\theta^{T};\psi^{T})$ can be bounded in
terms of $\theta$ and $\psi$ as follows:
\begin{align}
    \TV(\theta^{T};\psi^{T}) & = \TV \left(\theta - \gamma ( \Gamma_f(\theta) + Z^{t} ) ; \psi - \gamma  ( \Gamma_r(\psi) + \xi^{t} )  \right) \\
    & \overset{a)}{\leq} \TV \left(\theta - \gamma ( \Gamma_f(\theta) + Z^{t} ) ; \psi - \gamma ( \Gamma_f(\psi)  + Z^{t} )  \right)\\
    & \qquad + \TV \left( \psi - \gamma ( \Gamma_f(\psi)  + Z^{t} ); \psi - \gamma ( \Gamma_r(\psi) + \xi^{t} )  \right)\\
    & \overset{b)}{\leq} \TV \left(\theta  ; \psi \right)\\    
    &\qquad+\E_\psi\TV\left(\psi-\gamma ( \Gamma_f(\psi)  + Z^{t} );\psi- \gamma  ( \Gamma_r(\psi)+ \xi^{t} ) \;\vert\;\psi\right) \\
    &\overset{c)}{\le} \TV(\theta;\psi)\\
    &\qquad+
    \E_\psi\sqrt{\frac{1}{2}\KL\left(\psi-\gamma(\Gamma_f(\psi)+Z^t)||\psi-\gamma(\Gamma_r(\psi)+\xi^t)\;\vert\;\psi\right)}\\
    & \overset{d)}{\leq} \TV \left(\theta^{T-1}  ; \psi^{T-1} \right) + \frac{1}{2 \tau \gamma } \E_{\psi}\| \gamma \Gamma_f(\psi)-\gamma \Gamma_r(\psi)\|_2 \\
    & = \TV \left(\theta^{T-1}  ; \psi^{T-1} \right) + \frac{1}{2 \tau } \E_{\psi} \| \Gamma_f(\psi)-\Gamma_r(\psi)\|_2
\end{align}
where in $a)$ we used the triangle inequality, in $b)$ the data processing inequality (DPI) and triangle inequality again, 
in $c)$ Pinsker's inequality. Finally,
$d)$ follows since, conditional on $\psi$, both distributions in the
KL divergence are Gaussian, and due to the known formula
$\KL(\cN(\mu,\sigma\Id),\cN(\mu',\sigma\Id))=\frac{\|\mu-\mu'\|^2}{2\sigma^2}$.
Thus, 
\begin{align} 
    \TV(\theta^{T};\psi^{T}) & \leq \frac{1}{2 \tau} \sum_{t=0}^{T-1} \E_{\psi^{t}} \| \Gamma_f(\psi^{t})-\Gamma_r(\psi^{t}) \|_2
    \label{eq:06}
    \\
    & \overset{(a)}{\leq} \frac{1}{2 \tau}  \sum_{t=0}^{T-1} \sqrt{\GAL_f(\psi^t) },
\end{align}
where in $(a)$ we used Cauchy-Schwartz.

\subsection{Proof of Corollary~\ref{cor:poly-bounded-gradient}}
\label{app:small-noise-corollary-proof}
Let us state a claim about Gaussians
with slightly different variances:

\begin{claim}
\label{cl:gaussian-distance}
Let $F:\mathbb{R}^P\to\mathbb{R}$ be a function such that $0\le F(x)\le R$ for all $x \in \bR^P$.
Let $\theta\sim\mathcal{N}\left(\mu,D\right)$, for some $\mu \in \bR^P$ and $D$ a diagonal matrix with diagonal entries
$(\sigma_1^2,\ldots,\sigma_P^2)$, and let $\psi\sim\mathcal{N}(\mu,D')$ for some other diagonal $D'$ with entries
$((\sigma'_1)^2,\ldots,(\sigma'_P)^2)$ such that
$(\sigma'_i)^2\le \sigma_i^2(1+1/P)$ for every 
$1\le i\le P$.

If $\E F(\theta)\le\eps$, for some $\epsilon$, then
$\E F(\psi)\le (4R+1)\eps^{1/9}$.
\end{claim}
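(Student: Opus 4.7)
The approach is a change-of-measure argument using the likelihood ratio $L(x):=p_\psi(x)/p_\theta(x)$ between the two product Gaussians. Because both share the mean $\mu$ and have diagonal covariance, $L$ factors coordinate-wise as $L(x)=\prod_{i=1}^{P}L_i(x_i)$, where each $L_i$ is a one-dimensional ratio of Gaussian densities with common mean but differing variances. The identity $\E F(\psi)=\E_\theta F(\theta)\,L(\theta)$ then reduces the problem to an estimate under $\theta$'s law, where the hypothesis $\E_\theta F\le\varepsilon$ can be exploited.

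\textbf{Step 1 (threshold split).} For any $M>0$, using $0\le F\le R$ together with the crude inequality $L\,\mathbf{1}\{L>M\}\le L^2/M$,
\[\E_\theta F(\theta)L(\theta)\le M\,\E_\theta F(\theta)+R\,\E_\theta L(\theta)\mathbf{1}\{L>M\}\le M\varepsilon+\frac{R\,\E_\theta L^2(\theta)}{M}.\]

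\textbf{Step 2 (second moment of $L$).} By independence, $\E_\theta L^2=\prod_{i=1}^P\E_{\theta_i}L_i^2$. A direct one-dimensional Gaussian integration gives
\[\E_{\theta_i}L_i^2=\frac{r_i}{\sqrt{2r_i-1}},\qquad r_i:=\sigma_i^2/(\sigma'_i)^2,\]
provided $r_i>1/2$. The hypothesis $(\sigma'_i)^2\le\sigma_i^2(1+1/P)$ forces $r_i\ge P/(P+1)$, and combined with the complementary relation $r_i\le 1$ (which holds automatically in the application of Corollary~\ref{cor:poly-bounded-gradient}, where the Gaussian perturbation only enlarges each variance), each factor lies in $[1,P/\sqrt{P^2-1}]$, so $\E_\theta L^2\le (1+1/(P^2-1))^{P/2}\le C_0$ for an absolute constant $C_0$.

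\textbf{Step 3 (optimize and conclude).} Choosing $M=\sqrt{RC_0/\varepsilon}$ yields $\E F(\psi)\le 2\sqrt{RC_0\,\varepsilon}$. For $\varepsilon\le 1$ we have $\varepsilon^{1/2}\le \varepsilon^{1/9}$ and $2\sqrt{RC_0}\le 4R+1$; for larger $\varepsilon$ the trivial bound $\E F(\psi)\le R\le(4R+1)\varepsilon^{1/9}$ already suffices. Combining the two regimes gives the claim.

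\textbf{Expected main obstacle.} The delicate step is the second-moment bound, which rests on two-sided control of the ratios $r_i$: the stated hypothesis bounds $r_i$ only from below, but keeping $\E_\theta L^2$ finite also needs a lower bound on $(\sigma'_i)^2$, otherwise a very peaked $L$ makes the Gaussian moment diverge. This two-sided comparability is automatic in the setting of Corollary~\ref{cor:poly-bounded-gradient}; a stand-alone derivation allowing arbitrarily small $\sigma'_i$ would instead require a truncation argument applied to both $F$ and $L$, which is the likely source of the comparatively weak published exponent $1/9$ (versus the $1/2$ that the Cauchy--Schwarz route above produces). The quantitative glue in every variant of the argument is the elementary bound $(1+1/P)^P\le e$, which prevents the product of $P$ one-dimensional factors from blowing up.
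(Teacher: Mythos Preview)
Your approach is correct and genuinely different from the paper's. The paper does not use the likelihood ratio $L=p_\psi/p_\theta$ or its second moment; instead it splits on the event $\cE_M:=\{\sum_i(\psi_i-\mu_i)^2/\sigma_i^2>M^2\}$, bounds $\Pr[\cE_M]$ by Gaussian concentration, and on the complement bounds the density ratio \emph{pointwise} by $\varphi_\psi(x)\le\exp(M^2/(2P))\varphi_\theta(x)$. Balancing the two terms $\exp(M^2/(2P))\eps$ and $4R\exp(-M^2/(16P))$ produces the exponent $1/9$. Your $\chi^2$-type argument is sharper: it yields $O(\sqrt{R\eps})$, hence exponent $1/2$, because the second-moment bound $\E_\theta L^2\le C_0$ with $C_0=O(1)$ is a tighter summary than the pointwise density comparison.

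One correction to your ``expected main obstacle'' paragraph: the paper's weaker $1/9$ is \emph{not} due to handling arbitrarily small $\sigma'_i$. The paper's density bound drops the prefactor $\prod_i\sigma_i/\sigma'_i$, which is valid only when $\sigma'_i\ge\sigma_i$---exactly the same two-sided assumption you invoke for $r_i\le 1$. So both proofs implicitly use $\sigma_i^2\le(\sigma'_i)^2\le\sigma_i^2(1+1/P)$, which (as you note) holds in the application to Corollary~\ref{cor:poly-bounded-gradient} since there $(\sigma'_i)^2=\sigma_i^2+\lambda^2$. The $1/9$ simply reflects the $8{:}1$ ratio between the two exponential rates in the paper's splitting, not any extra generality.
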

\begin{proof}
 Let $M>0$ and define the event $\cE_M$ as 
 $\sqrt{\sum_{i=1}^P\left(\frac{\psi_i-\mu_i}{\sigma_i}\right)^2}>M$.
By Gaussian concentration
(formula (3.5) in~\cite{ledoux2013probability},
see also~\cite{MO20}):
\begin{align}
\Pr\left[\cE_M\right]\le
4\exp\left(-\frac{M^2}{8\E\sum_{i=1}^P\left(\frac{\psi_i-\mu_i}{\sigma_i}\right)^2}\right)
\le 4\exp\left(-\frac{M^2}{16P}\right)\;.
\end{align}
At the same time, if 
$\sum_{i=1}^P\left(\frac{x_i-\mu_i}{\sigma_i}\right)^2\le M^2$, then the density functions
$\varphi_\theta$ and $\varphi_\psi$ satisfy
\begin{align}
\varphi_\psi(x)&=\prod_{i=1}^P
\frac{1}{\sqrt{2\pi}\sigma'_i}\exp\left(-\frac{(x_i-\mu_i)^2}{2(\sigma'_i)^2}\right)\\
&\le\exp\left(\sum_{i=1}^P\frac{(x_i-\mu_i)^2}{2\sigma_i^2}\cdot\frac{(\sigma'_i)^2-\sigma_i^2}{(\sigma_i')^2}\right)\prod_{i=1}^P\frac{1}{\sqrt{2\pi}\sigma_i}
\exp\left(-\frac{(x_i-\mu_i)^2}{2\sigma_i^2}\right)
\\ &\le \exp\left(\frac{M^2}{2 P}\right)\varphi_\theta(x)\;.
\end{align}
So,
\begin{align}
\E F(\psi)&
=\int_{x\in\cE_M} F(x)\varphi_\psi(x)
+\int_{x\notin\cE_M} F(x)\varphi_\psi(x)\\
&\le \exp\left(\frac{M^2}{2P}\right)\eps 
+ 4R\exp\left(-\frac{M^2}{16 P}\right)\;.
\end{align}
Substituting $M:=\sqrt{\frac{16P}{9}\ln 1/\eps}$,
we get the bound.
\end{proof}
\noindent 

Let $ F(\theta) := \| \Gamma_f(\theta) - \Gamma_r(\theta)\|_2^2$.
Conditional on the value of $A$, the distribution
of $\theta^0$ is Gaussian
$\theta^0\sim\mathcal{N}(A,\sigma^2 D_A)$ where
$D_A$ is diagonal with entries $(D_A)_{pp}=\Var A_p$.
Let $0\le \lambda\le\gamma^2\tau^2 T$.
Then, the distribution of $\theta^0+\lambda H$ for
$H$ standard gaussian is
$\theta^0+\lambda H\sim\mathcal{N}(A, \sigma^2D_A+\lambda^2\mathbb{I}_P)$.
Therefore, by assumption for every $1\le p\le P$ it holds
\begin{align}
\sigma^2\Var A_p+\lambda^2\le
\sigma^2\Var A_p+\gamma^2\tau^2T
\le\sigma^2\Var A_p\left(1+\frac{1}{P}\right)\;.
\end{align}
By Claim~\ref{cl:gaussian-distance} (and averaging over $A$), it follows
\begin{align}
\GAL_f(\theta^0+\lambda H)=\E F(\sigma^0+\lambda H)
\le (4R+1)\E F(\theta^0)^{1/9}=(4R+1)\GAL_f(\theta^0)^{1/9}\;.
\end{align}
Equation \ref{eq:corollary-small-tau} now follows directly by applying
\Cref{thm:negative_general}.

\subsection{Proof of Corollary~\ref{cor:gaussian_init_homog}} 
\label{app:proof_cor_gaussian_init_homog}
For Corollary~\ref{cor:gaussian_init_homog} we focus on fully-connected networks of bounded depth. For simplicity, we consider fully connected networks with one bias vector in the first layer, but we believe that, with a more involved argument, one could extend the proof and include bias vectors in all layers. In particular, we use the following notation:
\begin{align}
    &x^{(1)}(\theta) = W^{(1)} x + b^{(1)}  \label{eq:fc_network_bias1} \\
    &x^{(l)}(\theta) =  W^{(l)} \sigma(x^{(l-1)}(\theta)),  \qquad l =2,...,L,  \label{eq:fc_network_bias2}
\end{align}
and we denote the network function as $\NN(x; \theta) = x^{(L)}(\theta)$. We assume that the activation $\sigma$ satisfies the $H$-weak homogeneity assumption of Def.~\ref{def:weakly-homogeneous}. We assume that each parameter of the network is independently initialized as 
$\theta^{0}_p \sim \cN(0, v^2_{l_p}),$
where $l_p$ denotes the layer of parameter $\theta_p$, for $p \in [P]$.

Corollary~\ref{cor:gaussian_init_homog} follows from the following Proposition. 

\begin{proposition} \label{prop:hinge_gaus_ReLU}
Let $\NN(x;\theta)$ be a network that satisfies the assumptions of Corollary~\ref{cor:gaussian_init_homog}.
Then, if $ \GAL_f(\theta^{0})<\epsilon$,
\begin{align}
        \GAL_f(\theta^0+ \gamma\lambda H)  \leq \prod_{l=1}^L \left( 1+\frac{\gamma^2 \lambda^2}{v_{l}^2}\right)^H  \epsilon,
\end{align}
where $H \sim \cN(0,\mathbb{I}_P)$.
\end{proposition}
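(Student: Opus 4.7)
The plan is to exploit the $H$-weak homogeneity of $\sigma$ to reduce the perturbed-initialization $\GAL$ to a layerwise rescaling of the unperturbed $\GAL$. The starting observation is that the perturbation can be absorbed into a rescaling of the variances: each coordinate of $\theta^0+\gamma\lambda H$ is a centered Gaussian of variance $v_{l_p}^2+\gamma^2\lambda^2$ and hence equals $c_{l_p}\theta^0_p$ in distribution, where $c_l := \sqrt{1+\gamma^2\lambda^2/v_l^2}\ge 1$. Writing $c\odot\theta$ for the layerwise rescaling that multiplies every parameter in layer $l$ by $c_l$, we get $\GAL_f(\theta^0+\gamma\lambda H)=\E_{\theta^0}\|\Gamma_f(c\odot\theta^0)\|_2^2$, and everything reduces to a pointwise comparison between $\Gamma_f(c\odot\theta)$ and $\Gamma_f(\theta)$.

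Next I would use the homogeneity relation $\sigma(cz)=c^H\sigma(z)$ to propagate the rescaling through the forward pass. Noting that the first-layer bias $b^{(1)}$ is rescaled jointly with $W^{(1)}$ by $c_1$, a direct induction on the network recursion $x^{(l)}(\theta)=W^{(l)}\sigma(x^{(l-1)}(\theta))$ yields a multiplicative identity $\NN(x;c\odot\theta)=F(c)\,\NN(x;\theta)$ for an explicit monomial $F(c)$ in $c_1,\ldots,c_L$. Differentiating this identity in $\theta_p$ and unfolding the chain rule on the left gives the per-coordinate gradient identity
\begin{equation*}
\partial_{\theta_p}\NN(x;c\odot\theta) = \frac{F(c)}{c_{l_p}}\,\partial_{\theta_p}\NN(x;\theta),
\end{equation*}
and since for the correlation loss $\Gamma_f(\theta)=-\E_x[f(x)\nabla_\theta\NN(x;\theta)]$ is linear in $\nabla_\theta\NN$, the same relation holds componentwise for $\Gamma_f$.

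Squaring, summing over parameters grouped by layer, and taking expectation in $\theta^0$ then gives
\begin{equation*}
\GAL_f(\theta^0+\gamma\lambda H) \;=\; \sum_{l=1}^{L}\frac{F(c)^2}{c_l^2}\,S_l, \qquad S_l:=\E_{\theta^0}\!\!\sum_{p:\,l_p=l}[\Gamma_f(\theta^0)]_p^2,
\end{equation*}
with $\sum_l S_l=\GAL_f(\theta^0)<\epsilon$, and one must bound the coefficients $F(c)^2/c_l^2$ by $\prod_l c_l^{2H} = \prod_l(1+\gamma^2\lambda^2/v_l^2)^H$ to conclude.

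The main obstacle is the bookkeeping of the homogeneity exponents through the $L$-fold composition. A naive induction produces depth-dependent exponents of the form $H^{L-l}$ in $F(c)$, which for the ReLU case $H=1$ (the setting that drives Corollary~\ref{cor:gaussian_init_homog}) immediately collapses to $F(c)=\prod_l c_l$ and hence to the stated clean product $\prod_l(1+\gamma^2\lambda^2/v_l^2)^H\cdot\epsilon$. For general $H$ the tightest route seems to be to carry out the rescaling one layer at a time, so that homogeneity is invoked only locally at a single activation, and to verify that the resulting per-layer inflation of $\GAL_f$ is exactly $c_l^{2H}$; telescoping across $l=1,\dots,L$ then yields the product form. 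Controlling this local per-layer factor, and making sure no cross-terms leak in through the chain rule, is where I expect the main effort to lie.
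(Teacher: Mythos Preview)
Your approach is the paper's: absorb the perturbation into a layerwise rescaling $c\odot\theta^0$ with $c_l=\sqrt{1+\gamma^2\lambda^2/v_l^2}$, use $H$-homogeneity to pull scalars through the forward pass, and read off the gradient relation. The paper packages this as a separate homogeneity lemma proved by direct computation and induction on depth; your route via differentiating the global identity $\NN(x;c\odot\theta)=F(c)\,\NN(x;\theta)$ is equivalent and arguably cleaner. For $H=1$ (ReLU, the only case used downstream) your argument is complete: $F(c)=\prod_l c_l$, hence $(F(c)/c_{l_p})^2\le\prod_l c_l^2=\prod_l(1+\gamma^2\lambda^2/v_l^2)^H$.

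Your worry about general $H>1$ is justified, but the proposed layer-by-layer telescoping does not resolve it. Rescaling only layer $l$ by $c_l$ still propagates through every downstream activation: the pre-activation at layer $l$ scales by $c_l$, so the output of layer $l{+}1$ scales by $c_l^H$, that of layer $l{+}2$ by $c_l^{H^2}$, and the network output by $c_l^{H^{L-l}}$. The per-layer inflation of $\GAL_f$ is therefore $c_l^{2H^{L-l}}$, not $c_l^{2H}$, and telescoping recovers exactly the $F(c)=\prod_l c_l^{H^{L-l}}$ you already found. For $H>1$ and $L>2$ the resulting coefficient $F(c)^2/c_{l_p}^2$ can strictly exceed $\prod_l c_l^{2H}$ (e.g.\ $L=3$, $H=2$, $l_p=1$ gives $c_1^6c_2^4c_3^2$ versus $c_1^4c_2^4c_3^4$, which fails whenever $c_1>c_3$), so the stated exponent $H$ is not established by this method. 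The paper's own argument has the same gap---its forward-pass identity $\NN(x;\overline C\theta)=\prod_l (C^{(l)})^H\NN(x;\theta)$ already fails for $H>1$ at depth two---so you should read the Proposition as the $H=1$ statement it is actually used for.
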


\subsection{Proof of Proposition~\ref{prop:hinge_gaus_ReLU}} 
\label{app:proof_prop_gauss}
Recall, that $\theta^0 \sim \cN(0,V)$, where $V$ is a $P \times P$ diagonal matrix such that $V_{pp} = v_{l_p}^2$, where $l_p$ is the layer of parameter $p$, and $\psi^t_p \sim \cN(0, U)$, where $U$ is a $P \times P$ diagonal matrix such that $U_{pp}=  v_{l_p}^2 + t \gamma^2 \tau^2$. Thus, $ U = \overline{C} V \overline{C}^T $, where $\overline{C}$ is a $P \times P$ diagonal matrix such that
\begin{align}
    \overline{C}_{pp} = \sqrt{1+ \frac{t\gamma^2 \tau^2}{v_{l_p}^2}}.
\end{align}

\begin{definition}[$\overline C $-Rescaling]
Let $\NN(x;\theta)$ be an $L$-layers network, with parameters $\theta \in \bR^P$. Let $C^{(1)},...,C^{(L)}$ be $L$ positive constants, and let $\overline C$ be a $P \times P$ diagonal matrix such that $\overline{C}_{pp} = C^{(l_p)} $ where $l_p$ is the layer of parameter $\theta_p $. We say that the vector 
   $ \overline{C} \cdot \theta$
is a $\overline{C}$-rescaling of $\theta$.
\end{definition}
\begin{definition}[Weak Positive Homogeneity (SPH)]
We say that an architecture is \emph{$H$-weakly homogeneous} ($H$-SPH) if for all $\overline C$-rescaling such that $\min_{p \in [P]} \overline C_{pp}  >1$, it holds:
\begin{align}
    \NN(x;\overline C\cdot \theta) & = \prod_{l=1}^L (C^{(l)})^H \label{eq:SPH_NN}
    \cdot \NN(x;\theta),\\
    \partial_{(\overline{C}\theta)_p} \NN(x;\overline C\cdot \theta )& = D_{p,H} \cdot \partial_{\theta_p} \NN(x;\theta), \label{eq:SPH_DNN}
\end{align}
where $D_{p,H}$ is such that $  D_{p,H}  \leq \prod_{l=1}^{l_p} \left( C^{(l)}\right)^{H} $. 
\end{definition}

\begin{lemma} \label{lem:ReLU_rescaling}
    Let $\NN(x;\theta)$ be a fully connected network as in~\eqref{eq:fc_network_bias1}-\eqref{eq:fc_network_bias2}. Assume that the activation $\sigma$ is $H$-weakly homogeneous (as defined in Def.~\ref{def:weakly-homogeneous}), with $H\geq 1$. Then, $\NN(x;\theta)$ is $H$-SPH.
\end{lemma}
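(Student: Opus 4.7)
The plan is to proceed by induction on the layer index $l$, first tracking how the pre-activations $x^{(l)}(\overline{C}\cdot\theta)$ scale relative to $x^{(l)}(\theta)$, and then obtaining the gradient scaling by the chain rule.

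For the forward claim \eqref{eq:SPH_NN}: the base case $l=1$ is immediate from linearity, since
\begin{align*}
x^{(1)}(\overline{C}\cdot\theta) = C^{(1)}W^{(1)}x + C^{(1)}b^{(1)} = C^{(1)}\, x^{(1)}(\theta).
\end{align*}
For the inductive step, suppose $x^{(l-1)}(\overline{C}\cdot\theta) = \alpha_{l-1}\, x^{(l-1)}(\theta)$ for some scalar $\alpha_{l-1}>0$ (which holds for $l=2$ with $\alpha_1=C^{(1)}\ge 1$). Since $\sigma$ is $H$-weakly homogeneous and $\alpha_{l-1}\ge 0$, applying $\sigma$ entrywise gives $\sigma\bigl(\alpha_{l-1}\, x^{(l-1)}(\theta)\bigr) = \alpha_{l-1}^{H}\,\sigma\bigl(x^{(l-1)}(\theta)\bigr)$, and therefore
\begin{align*}
x^{(l)}(\overline{C}\cdot\theta) = C^{(l)} W^{(l)}\sigma\bigl(x^{(l-1)}(\overline{C}\cdot\theta)\bigr) = C^{(l)}\alpha_{l-1}^{H}\, x^{(l)}(\theta).
\end{align*}
Iterating this recursion up to $l=L$ yields \eqref{eq:SPH_NN} after collecting the scalar factors.

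For the derivative claim \eqref{eq:SPH_DNN}: the key observation is that differentiating the identity $\sigma(Cy) = C^{H}\sigma(y)$ in $y$ gives $\sigma'(Cy) = C^{H-1}\sigma'(y)$, so $\sigma'$ inherits a weak-homogeneity property (well-defined as $H\ge 1$). By the chain rule,
\begin{align*}
\partial_{\theta_p}\NN(x;\theta) = \Bigl(\prod_{l>l_p} W^{(l)}\,\mathrm{diag}\bigl(\sigma'(x^{(l-1)}(\theta))\bigr)\Bigr)\cdot \partial_{\theta_p} x^{(l_p)}(\theta),
\end{align*}
where the last factor involves only the input $x$ (or a bias) and weights at layer $l_p$. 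Evaluating the analogous expression at $\overline{C}\cdot\theta$, three kinds of contributions appear: a factor $C^{(l)}$ from each rescaled weight $W^{(l)}$ with $l>l_p$, a power of the already-computed forward scaling factor from each $\sigma'$ Jacobian (using the weak-homogeneity of $\sigma'$ together with the forward scaling), and a local factor at layer $l_p$ coming from the rescaling of the input to that layer. Multiplying these out produces an explicit formula for $D_{p,H}$, which after upper-bounding each power by $(C^{(l)})^{H}$ (using $C^{(l)}\ge 1$ and $H\ge 1$) yields the stated bound $D_{p,H}\le \prod_{l=1}^{l_p}(C^{(l)})^{H}$.

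The main obstacle is the careful bookkeeping of the accumulated powers that arise from applying the weak-homogeneity identities at each layer above $l_p$. For the ReLU case ($H=1$) these exponents collapse and $\sigma'$ is positively homogeneous of degree $0$, so the diagonal Jacobians are invariant under the rescaling and the bound reduces cleanly to $\prod_{l=1}^{l_p}C^{(l)}$; for general $H\ge 1$ the monotonicity $C^{(l)}\ge 1$ is what allows the exact scalars to be absorbed into the simple product bound.
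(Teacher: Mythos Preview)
Your approach---induction on the layer index for the forward scaling, then the chain rule for the gradient---is the same as the paper's, which phrases it as induction on the depth $L$ with an explicit two-layer base case.

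There is, however, a genuine arithmetic gap (which the paper's own proof shares). Your recursion $\alpha_l=C^{(l)}\alpha_{l-1}^H$ with $\alpha_1=C^{(1)}$ unrolls to $\alpha_L=\prod_{l=1}^L(C^{(l)})^{H^{L-l}}$, not to $\prod_{l=1}^L(C^{(l)})^{H}$; the two agree only when $H=1$, so the sentence ``Iterating this recursion \ldots yields~\eqref{eq:SPH_NN}'' is unjustified for general $H\ge 1$. Separately, your chain-rule expansion itself shows that $D_{p,H}$ picks up a factor $C^{(l)}$ from every layer $l>l_p$ through the rescaled weight matrices $W^{(l)}$, so it cannot be bounded by a product over layers $\le l_p$ only: already in the two-layer case with $\theta_p=W^{(1)}_{ij}$ one gets $D_{p,H}=C^{(2)}(C^{(1)})^{H-1}$, which need not satisfy $\le (C^{(1)})^H$. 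The upper index $l_p$ in the paper's definition of $H$-SPH is evidently a typo for $L$; the bound $D_{p,H}\le\prod_{l=1}^{L}(C^{(l)})^{H}$ does hold (using $C^{(l)}\ge 1$ and $H\ge 1$) and is what is actually invoked downstream in the proof of Proposition~\ref{prop:hinge_gaus_ReLU}. For $H=1$ both issues vanish and your sketch is complete.
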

The proof of Lemma~\ref{lem:ReLU_rescaling} is in Appendix~\ref{app:proof_Lemma_rescaling}.

If we optimize over the Correlation Loss, i.e. $L_{\rm corr}(y, \hat y) : = - y \hat y$, then the gradients of interest are given by: 
\begin{align}
        \Gamma_f(\theta) &= -\E_x \left[ f(x) \cdot  \nabla_\theta \NN(x;\theta) \right];\\
         \Gamma_r(\theta) & = 0.
\end{align}
Thus,
\begin{align*}
        \E_{\psi^{t} } \|\Gamma_f(\psi^{t})-\Gamma_r(\psi^{t})\|_2^2 = \sum_{p=1}^P  \E_{\psi^{t} } \E_x \left[ \partial_{\psi^t_p} \NN(x;\psi^t) \cdot f(x)  \right]^2
\end{align*}
Let $\overline{C}$ be a $P \times P$ matrix such that $\overline{C}_{pp} = \sqrt{1+ \frac{t\gamma^2 \tau^2}{v_{l_p}^2}}$, where $l_p$ is the layer of $\theta_p^0$. One can verify that the $\overline{C}$-rescaling of $\theta^0$ has the same distribution as $\psi^t$.
We can thus rewrite each term in the sum above as:
\begin{align*}
     \E_{\psi^{t} } \E_x \left[ \partial_{\psi^t_p} \NN(x;\psi^t) \cdot f(x) \right]^2 
     & = \E_{\overline{C} \theta^0} \E_x \left[ \partial_{(\overline{C} \theta^0)_p} \NN(x; \overline{C} \theta^0) \cdot f(x)  \right]^2\\
     & \overset{(a)}{=} D_{p,H}^2 \cdot \E_{\theta^0} \E_x \left[ \partial_{\theta^0_p} \NN(x; \theta^0) \cdot f(x)\right]^2 \\
\end{align*}
where in $(a)$ we used 
Lemma~\ref{lem:ReLU_rescaling}. Thus, 
\begin{align*}
        \E_{\psi^{t} } \|\Gamma_f(\psi^{t})-\Gamma_r(\psi^{t})\|_2^2 &= \E_{\theta^0} \sum_{p=1}^P D_{p,H}^2  \E_x \left[ \partial_{\theta^0_p} \NN(x; \theta^0) \cdot f(x)\right]^2 \\
        & \overset{(a)}{\leq} K \cdot \E_{\theta^0} \| G_f(\theta^0) \|_2^2,
\end{align*}
where $K = \prod_{l=1}^L \left( 1 + \frac{t\gamma^2 \tau^2}{v_{l}^2} \right)^{H}$, and where in $(a)$ we used that $| D_{p,H} |\leq C_{p,H}$.

\subsection{Proof of Lemma~\ref{lem:ReLU_rescaling}} \label{app:proof_Lemma_rescaling}
\noindent 
We proceed by induction on the network depth. As a base case, we consider a $2$-layer network. Let us write explicitly the gradients of the network.
\begin{align}
   & \nabla_{W_i^{(2)}} \NN(x;\theta) = \sigma(x_i^{(1)}(\theta)), \\
   & \nabla_{W_{ij}^{(1)}} \NN(x;\theta) = W_{i}^{(2)} \sigma'(x_i^{(1)}(\theta))  x_j,\\
   & \nabla_{b_i^{(1)}} \NN(x;\theta) = W_{i}^{(2)} \sigma'(x_i^{(1)}(\theta)).
\end{align}
Notice that the weak homogeneity assumption on the activation $\sigma$ (Def.~\ref{def:weakly-homogeneous}), we have for $l \in \{1,2\}$:
\begin{align}
        x_i^{(l)}( \overline C \cdot \theta) = \prod_{h =1}^l (C^{(h)})^H \cdot x_i^{(l)} (\theta),
\end{align}
thus~\eqref{eq:SPH_NN} holds.
Moreover,
\begin{align}
        & \partial_{W_i^{(2)}} \NN(x;\overline C \cdot \theta) = (C^{(1)})^H \partial_{W_i^{(2)}} \NN(x; \theta), \\
        & \partial_{W_{ij}^{(1)}} \NN(x;\overline C \cdot \theta) = (C^{(2)})^H\partial_{W_{ij}^{(1)}} \NN(x; \theta),\\
        & \partial_{b_i^{(1)}} \NN(x;\overline C \cdot \theta) = (C^{(2)})^H \partial_{b_{i}^{(1)}} \NN(x; \theta).
\end{align}
Therefore, for any parameter $\theta_p$, $p \in [P]$,
\begin{align}
     \partial_{\theta_{p}} \NN(x ;\overline C \cdot \theta)  &= D_{p,H} \partial_{\theta_{p}} \NN(x ;\theta),
\end{align}
with $1 < D_{p,H} \leq \max\{(C^{(1)})^H,(C^{(2)})^H\} \leq \prod_{l=1}^2( C^{(l)})^H $. 
 
For the induction step, assume that for a network of depth $L-1$, for all parameters $\theta_p$, 
\begin{align}
   \partial_{\theta_{p}} \NN(x ;\overline C (\theta)) &= D_{p,H} \cdot   \partial_{\theta_{p}} \NN(x ;\theta) ,
\end{align}
with $1 < D_{p,H}  \leq \prod_{l=1}^{L-1} (C^{(l)})^H $. Let us consider a neural network of depth $L$, and let us write the gradients,
\begin{align}
   & \partial_{W_i^{(L)}} \NN(x;\theta) = \sigma(x_i^{(L-1)}(\theta)),\\
   &  \partial_{W_{ij}^{(l)}} \NN(x;\theta) = \sum_{k=1}^{N_{L-1}} W_{k}^{(L)} \sigma'(x_k^{(L-1)}(\theta))  \cdot \partial_{W_{ij}^{(l)}} x_k^{(L-1)}(\theta), \qquad l=1,...,L-1,\\
   & \partial_{b_{i}^{(1)}} \NN(x;\theta) = \sum_{k=1}^{N_{L-1}} W_{k}^{(L)} \sigma'(x_k^{(L-1)}(\theta))  \cdot \partial_{b_{i}^{(1)}} x_k^{(L-1)}(\theta),
\end{align}
where $N_{L-1}$ denotes the width of the $(L-1)$-th hidden layer. One can observe that $x_k^{(L-1)}(\theta)$ corresponds to the output of a fully connected network of depth $L-1$, and thus we can use the induction hypothesis for bounding $\partial_{\theta_p} x_k^{(L-1)}(\theta)$, for all parameters $\theta_p$ in the first $L-1$ layers. Thus,
\begin{align}
     & \partial_{W_i^{(L)}} \NN(x;\overline C(\theta)) = (C^{(L-1)})^H \cdot D_{W_i^{(L)},H} \cdot \partial_{W_i^{(L)}} \NN(x;\theta),\\
   &  \partial_{W_{ij}^{(l)}} \NN(x;\overline C(\theta)) = \sum_{k=1}^{N_{L-1}} 
    C^{(L)}W_{k}^{(L)} \sigma'(x_k^{(L-1)}(\theta))  \cdot D_{W_{ij}^{(l)},H} \partial_{W_{ij}^{(l)}} x_k^{(L-1)}(\theta),\\
    &\qquad\qquad\qquad\qquad\qquad l=1,...,L-1,\\
   & \partial_{b_{i}^{(1)}} \NN(x;\overline C(\theta)) = \sum_{k=1}^{N_{L-1}} C^{(L)} W_{k}^{(L)} \sigma'(x_k^{(L-1)}(\theta))  \cdot  D_{b_{i}^{(1)},H} \partial_{b_{i}^{(l)}} x_k^{(L-1)}(\theta).
\end{align}
Thus, the result follows.

\section{Small Alignment for Gaussian Initialization: Proofs of 
\Cref{thm:gaussian-no-learning} and Proposition~\ref{prop:gaussian-gal}}
\label{app:proof_gaussian_gal}

In order to establish \Cref{prop:gaussian-gal}
and subsequently \Cref{thm:gaussian-no-learning}, 
we will need two
calculations arising from the gradient formulas.
\begin{definition}
    Let $d\in\mathbb{N}$ and $\alpha\ge 0$ and $\beta$ be such that
    $\alpha+|\beta|\le 1$.
    We say that random variables $(k,G_1,G_2)$ are 
    \emph{$(d,\alpha,\beta)$-alternating 
    Gaussians} if:
    \begin{itemize}
    \item $k\sim\Bin(d,1/2)$.
    \item Conditioned on $k$, the pair $(G_1,G_2)$ are joint centered unit variance
    Gaussians with covariance $(1-2k/d)\alpha+\beta$.
    \end{itemize}
\end{definition}

\begin{lemma}
\label{lem:calculation-gaussian-step}
For each $\alpha_0> 0$ there exist $C',C>0$ such that if $(k,G_1,G_2)$ are
$(d,\alpha,\beta)$-alternating Gaussians for $\alpha\ge\alpha_0$, then
\begin{align}
\E\Big[(-1)^k \mathds{1}(G_1\ge 0)\mathds{1}(G_2\ge 0)\Big]\le C'\exp(-Cd)\;.
\end{align}
\end{lemma}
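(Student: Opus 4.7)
The first step is to reduce to an explicit expression. Conditionally on $k$, the vector $(G_1,G_2)$ is jointly Gaussian with correlation $\rho_k := (1-2k/d)\alpha + \beta$, so by the classical orthant formula
\begin{align}
\Pr(G_1 \geq 0, G_2 \geq 0 \mid k) = \tfrac{1}{4} + \tfrac{1}{2\pi}\arcsin(\rho_k).
\end{align}
Since $k \sim \Bin(d,1/2)$, we have $\E[(-1)^k] = ((-1)\cdot \tfrac12 + \tfrac12)^d = 0$ for $d\ge 1$, so the $\tfrac14$ piece contributes nothing and the task reduces to bounding $|\E_k[(-1)^k \arcsin(\alpha u_k + \beta)]|$ where $u_k := 1 - 2k/d$.

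Next, I would reformulate this Fourier-analytically. Writing $k$ as the number of $-1$'s among i.i.d.\ Rademachers $X_1,\ldots,X_d$, one has $(-1)^k = \prod_i X_i = \chi_{[d]}(X)$ and $u_k = \bar X := \tfrac{1}{d}\sum_i X_i$, so the quantity equals $\E_X[\chi_{[d]}(X)\, F(\bar X)]$ with $F(u):=\arcsin(\alpha u+\beta)$. The crucial orthogonality fact is that $\chi_{[d]}$ (the top Walsh character) is orthogonal to every polynomial in $\bar X$ of degree strictly less than $d$. Consequently, if $p$ is any polynomial of degree $<d$, then
\begin{align}
\E_X[\chi_{[d]}(X) F(\bar X)] = \E_X[\chi_{[d]}(X)(F-p)(\bar X)].
\end{align}

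The main step is to construct a polynomial $p$ of degree $d-1$ for which this expectation is exponentially small. Under the hypothesis $\alpha+|\beta|\le 1$, for any complex $u$ with $|u|<1$ we have $|\alpha u+\beta|\le \alpha|u|+|\beta|<1$, so $\alpha u+\beta$ stays away from the branch cuts of $\arcsin$; moreover, the identity $|\arcsin(z)|\le\arcsin(|z|)\le\pi/2$ (immediate from the nonnegative power series of $\arcsin$) shows $F$ is analytic and bounded by $\pi/2$ on the open unit disk. Cauchy's estimates then give $|c_n|\le (\pi/2)\,r^{-n}$ for any $r<1$, where $F(u)=\sum_n c_n u^n$. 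I would take $p$ to be the degree-$(d-1)$ Taylor polynomial of $F$ at $0$ and split on the event $\{|\bar X|\le 1-\eta\}$ versus $\{|\bar X|>1-\eta\}$ for a fixed $\eta\in(0,1/2)$. On the first event, choosing $r=1-\eta/2$ yields
\begin{align}
|F(u)-p(u)| \le \tfrac{\pi}{2}\sum_{n\ge d}\left(\tfrac{1-\eta}{1-\eta/2}\right)^n = O\!\left(\rho^{d}\right),\quad \rho:=\tfrac{1-\eta}{1-\eta/2}<1.
\end{align}
On the second event, Hoeffding gives $\Pr(|\bar X|>1-\eta)\le 2 e^{-d(1-\eta)^2/2}$, and $\|F\|_\infty+\|p\|_\infty\le \pi/2+\sum_{n<d}|c_n|=O(d)$ is only polynomial, so the tail contributes $O(d\,e^{-cd})$. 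Combining, $|\E[\chi_{[d]}F(\bar X)]|\le C'e^{-Cd}$, which yields the lemma.

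The expected main obstacle is the boundary case $\alpha+|\beta|=1$: here the natural radius of analyticity $(1-|\beta|)/\alpha$ equals $1$ exactly, so Cauchy's bound gives $|c_n|=O(1)$ without geometric decay and a pure Taylor expansion around $0$ does not produce exponential cancellation. The remedy is precisely the splitting above --- using the orthogonality of $\chi_{[d]}$ to $p$ to replace $F$ by $F-p$, controlling $F-p$ in the bulk of the distribution of $\bar X$ via the geometric Cauchy bound, and paying only a polynomial price in $d$ for the rare deviations of $\bar X$ to the endpoints, where Hoeffding supplies the exponential gain. The parameter $\alpha_0$ does not even play an essential role (one can take $\eta$ absolute and the constants $C,C'$ universal), but the hypothesis $\alpha+|\beta|\le 1$ is used to guarantee analyticity on the open unit disk.
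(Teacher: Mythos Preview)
Your proof is correct and uses the same core ingredients as the paper's: the orthant formula for correlated Gaussians, the Taylor expansion of $\arcsin$, the cancellation identity $\sum_k(-1)^k\binom{d}{k}P(k)=0$ for $\deg P<d$ (the paper's Claim~\ref{cl:cancellation}), and a bulk/tail split handled by Hoeffding. The packaging differs in one useful respect. The paper expands $\arcsin(\rho)$ directly in powers of $\rho=(1-2k/d)\alpha+\beta$ with the explicit coefficients $a_n=\tfrac{1}{2\pi}\binom{2n}{n}4^{-n}(2n+1)^{-1}\le 1$, restricts to $|k-d/2|<d/4$, and then uses $\alpha\ge\alpha_0$ to force $|\rho|\le 1-\alpha_0/2$, so the high-degree remainder decays like $(1-\alpha_0/2)^d$. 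You instead expand the composite $F(u)=\arcsin(\alpha u+\beta)$ in powers of $u=\bar X$ and invoke Cauchy's estimate, using only that $F$ is analytic and bounded by $\pi/2$ on the open unit disk (from $\alpha+|\beta|\le 1$ and the nonnegative power series of $\arcsin$); the geometric decay on $|u|\le 1-\eta$ then comes from the radius of analyticity being at least $1$, independently of $\alpha$. As you note, this removes the dependence of $C,C'$ on $\alpha_0$ and so proves a slightly stronger statement than the lemma as written. The paper's argument is more elementary in that it avoids complex analysis, at the cost of the $\alpha_0$ dependence.
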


\begin{lemma}
\label{lem:calculation-gaussian-relu}
For each $\alpha_0> 0$ there exist $C',C>0$ such that if $(k,G_1,G_2)$ are
$(d,\alpha,\beta)$-alternating Gaussians for $\alpha\ge\alpha_0$, then
\begin{align}
\E\Big[(-1)^k \ReLU(G_1)\ReLU(G_2)\Big]\le C'\exp(-Cd)\;.
\end{align}
\end{lemma}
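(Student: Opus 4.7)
The plan is to condition on $k$ and work with the arc-cosine kernel
$$g(\rho):=\E[\ReLU(G_1)\ReLU(G_2)\mid\mathrm{corr}(G_1,G_2)=\rho]=\frac{\sqrt{1-\rho^2}+\rho\arcsin\rho}{2\pi}+\frac{\rho}{4},$$
so that the target equals $\E[(-1)^k\tilde g(u)]$ with $\tilde g(u):=g(\alpha u+\beta)$, $u:=\tfrac{1}{d}\sum_i s_i$, $s_i:=1-2b_i\sim\Rad(1/2)$, and $(-1)^k=\prod_i s_i$; this is the top-level Fourier coefficient of the symmetric function $s\mapsto\tilde g(u(s))$ on $\{\pm 1\}^d$.

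The first ingredient is the generating function
$$\sum_{n\ge 0}\frac{t^n}{n!}\,\E[(-1)^k\rho_k^n]=e^{t\beta}\prod_i\E[s_ie^{ts_i\alpha/d}]=e^{t\beta}\sinh(t\alpha/d)^d,$$
which vanishes to order $d$ at $t=0$. Hence $\E[(-1)^kP(u)]=0$ for every polynomial $P$ in $u$ of degree $<d$, and one may subtract any such $P$ from $\tilde g$ without changing the expectation.

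The second ingredient is the analytic structure of $\tilde g$: its singularities lie at $u=(\pm 1-\beta)/\alpha$, both of modulus $\ge 1$ under $\alpha+|\beta|\le 1$, so $\tilde g$ is analytic in $\{|u|<1\}$. The key technical choice is to apply Cauchy's estimate on the circle $|u|=1-1/d$ rather than at a fixed smaller radius: since $|\alpha u+\beta|\le 1-\alpha_0/d$ there, we have $\sup_{|u|\le 1-1/d}|\tilde g(u)|\le M$ for some $M=M(\alpha_0)$, and hence the Taylor coefficients obey the uniform bound $|\tilde g^{(j)}(0)/j!|\le M(1-1/d)^{-j}\le eM$ for $j\le d$, so the degree-$(d-1)$ Taylor polynomial $P$ of $\tilde g$ at $0$ satisfies $|P(u)|\le eMd$ on $[-1,1]$ (polynomial, not exponential growth). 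Combined with sub-Gaussian concentration $\Pr[|u|>c]\le 2e^{-dc^2/2}$ (Hoeffding) and a Taylor-remainder estimate obtained from Cauchy on a disk of radius $1-c-1/d$ around any $|s|\le c$, one arrives at
$$|\E[(-1)^k\tilde g(u)]|\le M\bigl(c/(1-c-1/d)\bigr)^d+(1/2+eMd)\cdot 2e^{-dc^2/2},$$
which is $O(\exp(-C(\alpha_0)\,d))$ for any fixed $c\in(0,1/2)$.

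The main subtlety I anticipate is the square-root-type behaviour of $g$ at $\rho=\pm 1$, which may be approached by the range of $\rho_k$ when $\alpha+|\beta|$ is close to $1$. The trick of shrinking the Cauchy contour to $|u|=1-1/d$ rather than to a fixed $|u|<1$ is exactly what prevents the Taylor coefficients of $P$ from growing exponentially in $d$; the small ``loss'' of this choice is the universal constant $e$ in the coefficient bound. An entirely parallel argument should recover \Cref{lem:calculation-gaussian-step}, replacing $g$ by the step-function kernel $g_\mathrm{step}(\rho)=\tfrac14+\arcsin(\rho)/(2\pi)$, which has the same analytic structure as $g$.
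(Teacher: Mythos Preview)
Your proof is correct and takes a genuinely different route from the paper. Both arguments rest on the same core identity---that $\E[(-1)^k P(u)]=0$ for every polynomial $P$ of degree $<d$ (the paper's Claim~\ref{cl:cancellation}; your generating-function computation)---but they exploit it differently. The paper writes $\E[\ReLU(G_1)\ReLU(G_2)]$ as a Hermite series $\tfrac{1}{2\pi}+\tfrac{\rho}{4}+\sum_{m\ge 1}a_m^2(2m)!\,\rho^{2m}$, restricts to $|k-d/2|\le d/4$ so that $|\rho|\le 1-\alpha_0/2$, kills the terms with $2m<d$ via Claim~\ref{cl:cancellation}, and bounds the tail by the geometric series $\sum_{2m\ge d}(1-\alpha_0/2)^{2m}$; this is elementary but leans on the explicit Hermite coefficients of $\ReLU$ and on the bound $a_m^2(2m)!\le 1$. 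Your approach instead uses the closed-form arc-cosine kernel, its analyticity on the open unit disk, and Cauchy estimates on $|u|=1-1/d$ to control simultaneously the Taylor polynomial (coefficients $O(1)$, hence $|P|\le O(d)$ on $[-1,1]$) and the remainder (size $\le M(c/(1-c))^d$ on $|u|\le c$). This is more conceptual and activation-agnostic: the same argument immediately gives \Cref{lem:calculation-gaussian-step} with the kernel $\tfrac14+\tfrac{1}{2\pi}\arcsin\rho$, and would handle any activation whose dual kernel is analytic on $\{|\rho|<1\}$ and bounded there. One cosmetic point: $(1-1/d)^{-d}$ decreases to $e$ from above (it is $4$ at $d=2$), so your ``$\le eM$'' should read ``$\le O(1)\cdot M$''; and your $M=M(\alpha_0)$ can in fact be taken as a universal constant since $g$ extends continuously to the closed unit disk in $\mathbb{C}$.
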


A crucial element of both calculations is the following claim:
\begin{claim}\label{cl:cancellation}
Let $d\in\mathbb{N}$.
    For all $n<d$, for any polynomial $P$ of degree $n$,
\begin{align}
    \sum_{k=0}^{d} (-1)^k {d \choose k} P(k)=0.
    \label{eq:25}
\end{align}
\end{claim}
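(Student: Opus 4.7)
The plan is to reduce the identity to a single classical calculation by choosing a convenient basis for the space of polynomials of degree at most $n$. The binomial coefficients $\binom{k}{0}, \binom{k}{1}, \ldots, \binom{k}{n}$, viewed as polynomials in $k$, form such a basis, so every polynomial $P$ of degree $n$ can be written uniquely as $P(k) = \sum_{j=0}^{n} c_j \binom{k}{j}$. By linearity in $P$, the identity \eqref{eq:25} reduces to showing
\begin{align}
\sum_{k=0}^{d} (-1)^k \binom{d}{k} \binom{k}{j} = 0 \qquad \text{for every } 0 \le j \le n < d.
\end{align}

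The key computation is then the ``subset of a subset'' identity $\binom{d}{k}\binom{k}{j} = \binom{d}{j}\binom{d-j}{k-j}$, which lets one rewrite the sum as
\begin{align}
\binom{d}{j} \sum_{k=j}^{d} (-1)^k \binom{d-j}{k-j} = (-1)^j \binom{d}{j} \sum_{m=0}^{d-j} (-1)^m \binom{d-j}{m} = (-1)^j \binom{d}{j} (1-1)^{d-j},
\end{align}
after shifting $k = j+m$ and applying the binomial theorem. Since $n < d$ forces $d - j \ge 1$ for every $j$ in the range, the last expression is $0$.

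A perhaps more conceptual alternative is to note that the sum \eqref{eq:25} equals (up to a sign) the $d$-fold iterated forward difference $\Delta^d P(0)$, where $\Delta f(x) = f(x+1) - f(x)$; since $\Delta$ strictly lowers the degree of a polynomial, $\Delta^d$ annihilates any polynomial of degree less than $d$. I expect no obstacle here: this is a standard identity with several textbook proofs, and the only care required is to verify the basis-change step, which is immediate because $\binom{k}{j}$ is a polynomial in $k$ of degree exactly $j$ with nonzero leading coefficient $1/j!$, so the family is triangular with respect to the monomial basis and therefore linearly independent.
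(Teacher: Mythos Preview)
Your proof is correct. You take a different route from the paper: the paper proceeds by induction on $n$ using the monomial basis $1, k, k^2, \ldots$, reducing the $k^{n+1}$ case via the identity $k\binom{d}{k} = d\binom{d-1}{k-1}$ to a degree-$n$ sum with $d$ replaced by $d-1$, and then invoking the inductive hypothesis. Your approach avoids induction entirely by working in the basis $\binom{k}{0}, \ldots, \binom{k}{n}$, where the ``subset of a subset'' identity $\binom{d}{k}\binom{k}{j} = \binom{d}{j}\binom{d-j}{k-j}$ collapses each basis term directly to $(-1)^j\binom{d}{j}(1-1)^{d-j} = 0$. Your argument is arguably cleaner---one line per basis element, no recursion on $d$---and your finite-difference remark ($\Delta$ lowers degree, so $\Delta^d$ annihilates degree $<d$) names the conceptual reason behind the identity. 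The paper's inductive proof has the mild advantage of being entirely self-contained with no basis-change step, though as you note that step is immediate by triangularity.
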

\begin{proof}
We prove the statement by induction on $n$.
If $n=0$, then
\begin{align}
    \sum_{k=0}^{d} (-1)^k {d \choose k} = (1-1)^{d} =0\;,
\end{align}
and therefore the sum~\eqref{eq:25} indeed vanishes for every
constant polynomial. Assume that the claim holds for some $n\ge 0$.
By linearity, it is enough that we only prove
\begin{align}
        \sum_{k=0}^{d} (-1)^k {d \choose k} k^{n+1}=0\;.
\end{align}
To that end, calculate
    \begin{align}
        \sum_{k=0}^{d} (-1)^k {d \choose k} k^{n+1} &=  \sum_{k=1}^{d} (-1)^k {d \choose k}k\cdot k^{n} 
        \\& \overset{(a)}{=}  d\sum_{k=1}^{d}(-1)^{k} {d-1 \choose k-1} k^{n}\\
        & \overset{(b)}{=}  -d \sum_{k=0}^{d-1} (-1)^{k} {d-1 \choose k} (k+1)^{n} = 0,
    \end{align}
    where (a) applied ${d \choose k} \cdot k = {d-1 \choose k-1} \cdot d$
    and (b) is a change of variables and applying the induction.
\end{proof}

\subsection{
\Cref{prop:gaussian-gal} implies
\Cref{thm:gaussian-no-learning}
}

Let $0\le \lambda^2\le \gamma^2\tau^2T$.
In order to apply \Cref{thm:negative_general}
for $A=0$, we need to check the gradient alignment
for initializations $\theta+\lambda H$, where
$H\sim\mathcal{N}(0,\Id_P)$. More precisely, that
means we have initialization with independent coordinates
where
\begin{align}
w_{ij}\sim\mathcal{N}\left(0,\frac{1}{d}+\lambda^2\right),
b_i\sim\mathcal{N}\left(0,\sigma^2+\lambda^2\right),
v_i\sim\mathcal{N}\left(0,\frac{1}{n}+\lambda^2\right)\;.
\end{align}
Let us normalize by dividing $w$ and $b$ by
$\sqrt{1+d\lambda^2}$ and $v$ by $\sqrt{1+n\lambda^2}$.
That gives new initialization
$\widetilde{\theta}_\lambda=(\widetilde{w},\widetilde{b}_\lambda,\widetilde{v})$
such that
\begin{align}
\widetilde{w}_{ij}\sim\mathcal{N}\left(0,\frac{1}{d}\right),
\widetilde{b}_{\lambda,i}\sim\mathcal{N}\left(0,\frac{\sigma^2+\lambda^2}{1+d\lambda^2}\right),
\widetilde{v}_i\sim\mathcal{N}\left(0,\frac{1}{n}\right)\;.
\end{align}
In particular, the variance of $\widetilde{b}_{\lambda,i}$ is
$\frac{\sigma^2+\lambda^2}{1+d\lambda^2}\le \sigma^2+
\frac{\lambda^2}{1+\lambda^2}\le \sigma^2+O(1)$.
By \Cref{prop:gaussian-gal}, we have a uniform bound
\begin{align}
\GAL_{f_a}(\widetilde{\theta}_\lambda)\le 2C'nd\exp(-C d)\;.
\end{align}
By homogenity $\ReLU(cx)=c\ReLU(x)$ for $c\ge 0$,
it is easy to check that
\begin{align}
\GAL_{f_a}(\theta+\lambda H)\le
(1+d\lambda^2)(1+n\lambda^2)\GAL_{f_a}(\widetilde{\theta}_\lambda)
\le \exp(-\Omega(d))\;.
\end{align}
The result now follows directly from \Cref{thm:negative_general}.\qed

\subsection{\Cref{lem:calculation-gaussian-step} and \Cref{lem:calculation-gaussian-relu}
imply \Cref{prop:gaussian-gal}}

Recall that $\GAL_{f_a}=\E_{\theta}\big\|\left(\E_x f_a(x)\nabla_\theta\NN(x;\theta)\right)^2\big\|^2$. We will estimate the expectation
of each squared coordinate
of this vector by $O(\exp(-Cd))$. Then, \eqref{eq:gaussian-gal-bound} follows by summing up.
Let us first write the neural network gradients for all types of weights
$\theta=(w,b,v)$:
\begin{align}
\nabla_{w_{ij}} \NN&=
v_i \mathds{1}(w_i\cdot x+b_i\ge 0)x_j\;,
\label{eq:07}\\
\nabla_{b_{i}} \NN&=
v_i \mathds{1}(w_i\cdot x+b_i\ge 0)\;,
\label{eq:08}\\
\nabla_{v_i}\NN&=\ReLU(w_i\cdot x+b_i)\;.
\label{eq:09}
\end{align}
The square of the expected gradient
$(\E_xf_a(x)\nabla_{\theta_i}NN(x;\theta)^2$ can be also written
as the expectation over two independent input samples $x,x'$.
In particular, in the case of $w_{ij}$ from~\eqref{eq:07}, we have
\begin{align}
\E_\theta\left(\E_xf_a(x)\nabla_{w_{ij}}\NN\right)^2
&=\E_{x,x'}\left(\prod_{\ell=1}^{d-a}x_\ell x'_\ell\right)\left(\E_{v_i} v_i^2\right)\\
&\qquad\qquad\cdot
\left(\E_{w_i,b_i}\mathds{1}(w_i\cdot x+b_i\ge 0)\mathds{1}(w_i\cdot x'+b_i\ge 0)\right)x_jx'_j\;.
\label{eq:42}
\end{align}
Consider the set $S:=\{1,\ldots,d-a\}\triangle\{j\}$, where $\triangle$ denotes
the symmetric difference. Abusing notation, let us write
$x=(y,z)$ and $x'=(y',z')$ where $y,y'$ containt the coordinates in $S$
and $z,z'$ the coordinates from $[d]\setminus S$.
Let $k$ be the Hamming distance $k:=d_H(y,y')$. Note that the distribution
of $k$ is binomial $k\sim\Bin(|S|, 1/2)$.
Then, continuing
from \eqref{eq:42},
\begin{align}
\E_\theta\left(\E_xf_a(x)\nabla_{w_{ij}}\NN\right)^2
&=\frac{1}{n}\E_{z,z',k}\left[(-1)^k\E_{w_i}\left[
\mathds{1}(w_i\cdot x+b_i\ge 0)\mathds{1}(w_i\cdot x'+b_i\ge 0)\right]\right]\;.
\label{eq:43}
\end{align}
Fix some values of $z,z'$ and $k$. Let $G_1:=w_i\cdot x+b_i$
and $G_2:=w_i\cdot x'+b_i$. Notice that, conditionally on $k,z,z'$,
random variables $G_1$ and $G_2$ are joint centered Gaussian
with $\Var G_1=\Var G_2=1+\sigma^2$ and
\begin{align}
\Cov[G_1,G_2]=\frac{1}{d}\left(d-2k-2d_H(z,z')\right)+
\sigma^2\;.
\end{align}
Let $\widetilde{G}_i:=G_i/\sqrt{1+\sigma^2}$ for $i=1,2$. Then,
$\widetilde{G}_1$ and $\widetilde{G}_2$ are two joint centered unit
variancce Gaussians with correlation
\begin{align}
\Cov[\widetilde{G}_1,\widetilde{G}_2]&=
\frac{1}{d(1+\sigma^2)}\left(d-2k-2d_H(z,z')\right)+\frac{\sigma^2}{1+\sigma^2}\\
&=
\left(1-\frac{2k}{|S|}\right)\frac{|S|}{d(1+\sigma^2)}+
\frac{d-|S|-2d_H(z,z')+d\sigma^2}{d(1+\sigma^2)}\;.
\end{align}
Therefore, conditioned on $z$ and $z'$, random variables
$(k,G_1,G_2)$ are $(d,\alpha,\beta)$-alternating Gaussians
for $\alpha=\frac{|S|}{d(1+\sigma^2)}\ge \frac{1}{3(1+\sigma_0^2)}>0$.
It is also easy to check that
$\alpha+|\beta|\le \frac{|S|+d-|S|+d\sigma^2}{d(1+\sigma^2)}=1$.
By \Cref{lem:calculation-gaussian-step}, for some uniform constant $C>0$
it holds
\begin{align}
\E_{k,G_1,G_2}\left[(-1)^k\mathds{1}(G_1\ge 0)\mathds{1}(G_2\ge 0)\right]
&=\E_{k,\widetilde{G}_1,\widetilde{G}_2}\left[(-1)^k\mathds{1}(\widetilde{G}_1\ge 0)\mathds{1}(\widetilde{G}_2\ge 0)\right]\\ &\le
C'\exp(-Cd)\;.
\end{align}
Plugging this into \eqref{eq:42} and \eqref{eq:43}, we get the desired bound.
The case of the hidden layer bias $b_i$ proceeds by the same argument
with $S:=\{1,\ldots,d-a\}$.

Finally, in case of $v_i$ we set $S:=\{1,\ldots,d-a\}$ and proceed
with a similar calculation
\begin{align}
\E_\theta\left(\E_x f_a(x)\nabla_{v_i}\NN\right)^2
&=(1+\sigma^2)\E_{z,z',k}\left[(-1)^k\E_{\widetilde{G}_1,\widetilde{G}_2}
[\ReLU(\widetilde{G}_1)\ReLU(\widetilde{G}_2)]\right]
\\&\le (1+\sigma_0^2)C'\exp(-Cd)\le C''\exp(-C d)\;,
\end{align}
where in the last line we applied \Cref{lem:calculation-gaussian-relu}.
\qed

\subsection{Proof of \Cref{lem:calculation-gaussian-step}}

It is well-known
(see, e.g., Chapter 11 in~\cite{o'donnell_2014}), that for two
$\rho$-correlated unit variance centered joint Gaussians
it holds $\E[\mathds{1}(G_1\ge 0)\mathds{1}(G_2\ge 0)]=f(\rho)$
where
$f(x) = \frac 12- \frac{1}{2\pi} \arccos\left( x\right)$. 
By definition of $(k,G_1,G_2)$, conditioned on $k$,
random variables $G_1$ and $G_2$ have correlation
$\rho=\rho(k)=\left(1-\frac{2k}{d}\right)\alpha+\beta$.

Hence,
\begin{align}
&\left|\E (-1)^k\mathds{1}(G_1\ge 0)\mathds{1}(G_2\ge 0)\right|
=\left|\E_k(-1)^k f(\rho(k))\right|\\
&\qquad\leq \pr(|k -d/2|\geq d/4)\cdot \sup_{x \in [-1,1]}|f(x)| + \left| \frac{1}{2^{d}} \sum_{k=\lceil d/4 \rceil}^{\lfloor 3d/4 \rfloor}  (-1)^k { d \choose k} f\left(\rho\right) \right|\\
& \qquad\overset{(a)}{\leq} 2\exp(-d/10)
+\left|\frac{1}{2^{d}} \sum_{k=\lceil d/4 \rceil}^{\lfloor 3d/4 \rfloor}  (-1)^k { d \choose k}f\left(\rho\right) \right|\;,
    \label{eq:taylor_upper_b}
\end{align}
where $(a)$ follows by Hoeffding's inequality.

It remains to bound the last term in~\eqref{eq:taylor_upper_b}.
Consider the Taylor expansion of $f$:
\begin{align}
    f(x) &=\frac 12- \frac{1}{2\pi}\left[\frac \pi2-\sum_{n=0}^{\infty} \frac{(2n)!}{4^n(n!)^2(2n+1)}x^{2n+1}\right]
    \\&=\frac 14+ \frac{1}{2\pi}\sum_{n=0}^{\infty} \frac{(2n)!}{4^n(n!)^2(2n+1)}x^{2n+1}
    \\&=\frac 14+ \frac{1}{2\pi}\sum_{n=0}^{\infty} \frac{{2n\choose n}}{4^n(2n+1)}x^{2n+1}
    \\&=\frac{1}{4}+\sum_{2n+1<d} a_{n} x^{2n+1} + \sum_{2n+1\ge d} a_{n} x^{2n+1}\;,
\end{align}
where $a_n:=\frac{\binom{2n}{n}}{2\pi4^n(2n+1)}$. For future reference let us note that $0\le a_n\le 1$ for every $n$.
So the second part of the RHS of \eqref{eq:taylor_upper_b} is upper bounded by:
\begin{align}
   &\Big| \underbrace{\frac{1}{2^{d}} \sum_{k=\lceil d/4 \rceil}^{\lfloor 3d/4 \rfloor}  (-1)^k { d \choose k} \left(\frac{1}{4}+\sum_{2n+1<d} a_{n} \rho^{2n+1} \right)}_{:=T_1} \Big| \\
   &\qquad\qquad+ \Big| \underbrace{\frac{1}{2^{d}} \sum_{k=\lceil d/4 \rceil}^{\lfloor 3d/4 \rfloor}  (-1)^k { d \choose k} \sum_{2n+1\ge d}a_{n} \rho^{2n+1}}_{:=T_2} \Big|\label{eq:26}
\end{align}
We are going to show that $|T_1|\le 2\exp\left(-d/10\right)$ and
$|T_2|\le\frac{2}{\alpha_0}(1-\alpha_0/2)^d$.
These two bounds together with \eqref{eq:taylor_upper_b}
imply the theorem
statement.

Let us start with $T_2$. In the sum in \eqref{eq:26} we have
$d/4\le k\le 3d/4$, and we can check that
\begin{align}
|\rho|=\left|\left(1-\frac{2k}{d}\right)\alpha+\beta\right|
\le \frac{1}{2}\alpha+|\beta|
\le 1-\frac{\alpha_0}{2}.\label{eq:45}
\end{align}
Therefore,
\begin{align}
    |T_2|& = \Big| \frac{1}{2^{d}} \sum_{k=\lceil d/4 \rceil}^{\lfloor 3d/4 \rfloor}  (-1)^k { d \choose k} \sum_{2n+1\ge d}a_n\rho^{2n+1} \Big|
    \\
    & \leq \frac{1}{2^{d}} \cdot \sum_{k=\lceil d/4 \rceil}^{\lfloor 3d/4 \rfloor}  { d \choose k} \sum_{2n+1\ge d}a_{n}\left(1-\frac{\alpha_0}{2}\right)^{2n+1} \\
    & \le \sum_{2n+1\ge d}\left(1-\frac{\alpha_0}{2}\right)^{2n+1} 
    \le \frac{2}{\alpha_0}
    \left(1-\frac{\alpha_0}{2}\right)^d\;.
\end{align}
For $T_1$, we follow two steps. 
First,
\begin{align}
    |T_1| &\leq 
    \sum_{2n+1<d} \Big| \frac{1}{2^{d}} \sum_{k=\lceil d/4 \rceil}^{\lfloor 3d/4 \rfloor}  (-1)^k { d \choose k} \rho^{2n+1} \Big|.\label{eq:24}
\end{align}
Applying Claim~\ref{cl:cancellation} (for this note that $\rho$ is a linear function of
$k$, and therefore $\rho^{2n+1}$ is a polynomial in $k$ of degree $2n+1$):
\begin{align}
    &\left| \frac{1}{2^{d}} \sum_{k=\lceil d/4 \rceil}^{\lfloor 3d/4 \rfloor}  (-1)^k { d \choose k} \rho^{2n+1} \right| \\
    &\qquad\qquad\leq
    \left| \frac{1}{2^{d}} \sum_{k=0}^{d}  (-1)^k { d \choose k} \rho^{2n+1} \right| 
    +
    \left| \frac{1}{2^{d}} \sum_{k:|k-d/2|\ge d/4}  (-1)^k { d \choose k} \rho^{2n+1} \right|
     \\
     &\qquad\qquad\le\sum_{k:|k-d/2|\ge d/4}\binom{d}{k}2^{-d}
    \\&\qquad\qquad=\pr(|k-d/2|\geq d/4) 
    \\&\qquad\qquad\leq 2\exp\left(-d/10\right)\;.
\end{align}
Finally, we substitute into~\eqref{eq:24}
and conclude $|T_1| \leq 2\exp\left(-d/10\right)$.
\qed

\subsection{Proof of \Cref{lem:calculation-gaussian-relu}}

In this proof we will use the probabilist's Hermite polynomials
$H_k(x)=\frac{(-1)^k}{\varphi(x)}\frac{\mathrm{d}^k}{\mathrm{d}x^k}\varphi(x)$,
where $\varphi(x)=\frac{1}{\sqrt{2\pi}}\exp(-x^2/2)$ is the standard Gaussian
density, see, e.g.,~\cite{silverman1972special} for more details. One property
that we will need is that for two centered $\rho$-correlated unit variance joint Gaussians $G_1,G_2$ it holds
\begin{align}
\E H_m(G_1)H_n(G_2)
&=\begin{cases}
m!&\text{if $m=n$,}\\
0&\text{otherwise.}
\label{eq:44}
\end{cases}
\end{align}
We will also make use of the ReLU Hermite expansion, see, e.g., Proposition~6
in~\cite{AbbeINAL}. 
That is,
$\ReLU(x)=\frac{1}{\sqrt{2\pi}}+\frac{1}{2}x+\sum_{m=1}^\infty 
a_mH_{2m}(x)$
for $a_m:=\frac{(-1)^{m+1}}{\sqrt{2\pi}2^m(2m-1)m!}$ and consequently, applying
\eqref{eq:44},
\begin{align}
\E \ReLU(G_1)\ReLU(G_2)
=\frac{1}{2\pi}+\frac{1}{4}\rho
+\sum_{m=1}^\infty
a_m^2(2m)!\rho^{2m}\;.
\end{align}
Furthermore, in any case we always have
\begin{align}
\E\ReLU(G_1)\ReLU(G_2)\le \E\ReLU^2(G_1)=\frac{1}{2}.
\end{align}

As in \Cref{lem:calculation-gaussian-step},
conditioned on $k$, random variables $G_1,G_2$
are centered unit variance Gaussians with correlation 
$\rho=\rho(k)=\left(1-\frac{2k}{d}\right)\alpha+\beta$. In particular,
by \eqref{eq:45},
as long as $d/4\le k\le 3d/4$, then
$|\rho|\le 1-\frac{\alpha_0}{2}$. Now we
estimate, for $d\ge 2$,
applying Claim~\ref{cl:cancellation}
in \eqref{eq:46} and again in \eqref{eq:47}:
\begin{align}
&\left|\E(-1)^k\ReLU(G_1)\ReLU(G_2)\right|\\
&\qquad =\left|
\E(-1)^k\left(\ReLU(G_1)\ReLU(G_2)-\frac{1}{2\pi}-\frac{1}{4}\rho\right)
\right|\label{eq:46}\\
&\qquad \le \Pr[|k-d/2|>d/4]+
\left|
\sum_{k=\lceil d/4\rceil}^{\lfloor 3d/4\rfloor}
(-1)^k\binom{d}{k}\sum_{m=1}^{\infty}
a_m^2(2m)!\rho^{2m}
\right|\\
&\qquad \le 2\exp(-d/10)+\sum_{2m<d}a_m^2(2m!)
\left|\sum_{k=\lceil d/4\rceil}^{\lfloor 3d/4\rfloor}
(-1)^k\binom{d}{k}\rho^{2m}\right|\\
&\qquad\qquad +\sum_{2m\ge d}a_m^2(2m!)
\left|\sum_{k=\lceil d/4\rceil}^{\lfloor 3d/4\rfloor}
(-1)^k\binom{d}{k}\rho^{2m}\right|\\
&\qquad \le 2\exp(-d/10)+\sum_{2m<d}
\left|\sum_{k=\lceil d/4\rceil}^{\lfloor 3d/4\rfloor}
(-1)^k\binom{d}{k}\rho^{2m}\right|
+\sum_{2m\ge d}\left(1-\frac{\alpha_0}{2}\right)^{2m}\\
&\qquad \le C'\exp(-Cd)+
\sum_{2m<d}
\left(\left|\sum_{k=0}^{d}
(-1)^k\binom{d}{k}\rho^{2m}\right|+\Pr[|k-d/2|> d/4]\right)
\label{eq:47}\\
&\qquad \le C'\exp(-Cd)\;.
\end{align}
\qed

\section{Small Alignment for Perturbed Initialization: Proof of Theorem~\ref{thm:perturbed-no-learning}}
\label{app:proof_gal_perturbed}

\subsection{Proposition~\ref{prop:perturbed-gal} implies Theorem~\ref{thm:perturbed-no-learning}} 

Take $\sigma_0, C$ and $C'$ from Proposition~\ref{prop:perturbed-gal}. Let the setting be as in Theorem~\ref{thm:perturbed-no-learning} i.e. $\theta=(w,v)$, with i.i.d. initialization $w=\frac{1}{\sqrt{d}}(r+g)$ where $r\sim\Rad(1/2), g\sim\mathcal{N}\left(0,\sigma^2\right)$ and $v\sim\mathcal{N}\left(0,\frac{1}{n}\Id_n\right)$. Let's consider any $\sigma=\sigma(d)\ge \sigma_0$.

As before, we would like to apply Theorem~\ref{thm:negative_general}.
Let $0\le \lambda^2\le \gamma^2\tau^2T$. Let us check the gradient alignment for  $\theta+\lambda H$, where $H\sim\mathcal{N}\left(0,\Id_P\right)$. So we consider the weights with independent coordinates where 
\begin{align}
    w_{\lambda,ij}=\frac{1}{\sqrt{d}}(r_{ij}+g_{ij}) +\lambda h_{ij}, v_{\lambda,i}\sim\mathcal{N}\left(0,\frac{1}{n}+\lambda^2\right)\;,
\end{align}
where $g_{ij}\sim\mathcal{N}\left(0,\sigma^2\right), r_{ij}\sim\Rad(1/2)$ and $h_{ij}\sim\mathcal{N}\left(0,1\right)$. Let us rewrite $w_{\lambda,ij}$ as 
\begin{align}
    w_{\lambda,ij}=\frac{1}{\sqrt{d}}(r_{ij}+\tilde{g}_{\lambda,ij})\ \text{with}\ \tilde{g}_{\lambda,ij}\sim\mathcal{N}\left(0,\sigma^2+\lambda^2d\right)\;.
\end{align}
Also let's normalize by dividing $v_\lambda$ by $\sqrt{1+n\lambda^2}$. That gives a new initialization  $\widetilde{\theta}_\lambda=(w_\lambda,\widetilde{v})$ such that 
$\widetilde{v}_{i}\sim\mathcal{N}\left(0,\frac{1}{n}\right)$\;.
Since we have $\sqrt{\sigma^2+ \lambda^2 d}\ge \sigma\ge \sigma_0$, then by Proposition~\ref{prop:perturbed-gal}
\begin{align}
\GAL_{f}(\widetilde{\theta}_\lambda)\le PC'\exp(-C d)\;.
\end{align}
Finally, by gradient formulas and homogenity of ReLU, we have 
\begin{align}
\GAL_{f}(\theta+\lambda H)\le (1+n\lambda^2)\GAL_{f}(\widetilde{\theta}_\lambda)
\le \exp(-\Omega(d))\;.
\end{align}
Therefore the result follows by Theorem~\ref{thm:negative_general}.\qed

Let $g$ and $r$ be two i.i.d.~vectors with $n$ coordinates 
such that on each coordinate
$g_i\sim\mathcal{N}\left(0,\frac{1}{d} \right)$ and
$r_i\sim\Rad(1/2)$. Let's define two values expressing the gradient alignments
for weights in the hidden and output layers, respectively. For $\mu\ge 0$ and
$d\in\mathbb{N}$:
\begin{align}
\GAL_{\mathrm{hid}}(\mu,d)&:=
\E_{g,r}\left[\left(\E_x\left[\prod_{i=1}^{d-1} x_i\mathds{1}[(g+\mu r)\cdot x\ge 0]
\right]\right)^2\right]\;.\\
\GAL_{\mathrm{out}}(\mu,d)&:=
\E_{g,r}\left[\left(\E_x\left[\prod_{i=1}^d x_i\ReLU((g+\mu r)\cdot x)
\right]\right)^2\right]\;.
\end{align}

\begin{lemma}
    \label{lem:gal-perturbed-hidden}
    There exists some $\alpha_0,C>0$ and $D_0$ such that, for
    $d\ge D_0$, and $\mu\le\frac{\alpha_0}{\sqrt{d}}$, it holds
    $\GAL_{\mathrm{hid}}(\mu,d)\le \exp(-Cd)$.
\end{lemma}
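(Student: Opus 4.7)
The plan is to eliminate the Rademacher part of the weights by a measure-preserving change of variable, reduce the resulting indicator expectation to a bivariate Gaussian orthant probability that is Mehler-expanded, and finally bound the one-sided Fourier coefficients on the hypercube using the smallness of $\mu\sqrt{d}\le \alpha_0$.

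\textbf{Step 1: reduction.} Writing the square as an expectation over independent copies $x, x'$ and applying the substitution $x_i = r_iy_i$, $x'_i = r_iy'_i$, the parity product $\chi_{[d-1]}(x)\chi_{[d-1]}(x')$ collapses to $\chi_{[d-1]}(y)\chi_{[d-1]}(y')$ (since $r_i^2 = 1$), and $(g+\mu r)\cdot x$ rewrites as $(g\ast r)\cdot y + \mu S(y)$, with $S(y) := \sum_i y_i$. Because the componentwise product $g\ast r$ has the same law as $g$, the variable $r$ drops out and
\[
\GAL_{\mathrm{hid}}(\mu, d) = \E_{g,y,y'}\Big[\chi_{[d-1]}(y)\chi_{[d-1]}(y')\,\mathds{1}[g\cdot y + \mu S(y)\ge 0]\,\mathds{1}[g\cdot y' + \mu S(y')\ge 0]\Big].
\]
Conditionally on $(y,y')$, the pair $(g\cdot y, g\cdot y')$ is a centered bivariate Gaussian with unit variances and correlation $\rho = y\cdot y'/d$, so the inner $g$-expectation equals the orthant probability $I(\mu S(y), \mu S(y'); \rho)$.

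\textbf{Step 2: Mehler expansion and factorization.} Using $\int_{-c}^\infty H_m(a)\phi(a)\,da = H_{m-1}(-c)\phi(c)$ combined with Mehler's formula yields the convergent expansion $I(s,t;\rho) = \Phi(s)\Phi(t) + \sum_{n\ge 0}\phi^{(n)}(s)\phi^{(n)}(t)\rho^{n+1}/(n+1)!$. Expanding $\rho^{n+1} = d^{-(n+1)}\sum_T N(T,n+1)\chi_T(y)\chi_T(y')$ into characters, with $N(T,n+1)$ counting length-$(n+1)$ sequences in $[d]$ whose parity-support equals $T$, and using that $y\perp y'$, the GAL factorizes as
\[
\GAL_{\mathrm{hid}}(\mu,d) = \hat\Phi_\mu^2 + \sum_{n\ge 0}\frac{1}{(n+1)!\,d^{n+1}}\sum_T N(T,n+1)\,\hat F_{n,\mu}\bigl([d-1]\triangle T\bigr)^2,
\]
where $\hat\Phi_\mu := \E[\chi_{[d-1]}(y)\Phi(\mu S(y))]$ and $\hat F_{n,\mu}(U) := \E[\chi_U(y)\phi^{(n)}(\mu S(y))]$. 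Since $|T|\le n+1$, the target parity has degree $|[d-1]\triangle T|\ge d-n-2$.

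\textbf{Step 3: Fourier decay and the main obstacle.} The crux is to show that for any analytic $G\in\{\Phi,\phi^{(n)}\}$ and $\mu\le\alpha_0/\sqrt{d}$ with $\alpha_0$ a small enough constant, $|\E[\chi_U(y)\,G(\mu S(y))]|\le (C_G\alpha_0)^{|U|}\poly(d,n)$ whenever $|U|=\Omega(d)$. This is obtained by Taylor expanding $G$ around zero and using
\[
\E[\chi_U(y)\,S(y)^k] \;=\; k!\,[t^k](\sinh t)^{|U|}(\cosh t)^{d-|U|},
\]
which vanishes unless $k\ge|U|$ and $k\equiv|U|\pmod 2$, and is bounded via Cauchy's estimate at the optimal radius $r=k/d$; together with Stirling and $|H_k(0)|\lesssim\sqrt{k!}$, this yields a convergent series with the claimed geometric decay in $|U|$. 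Applied to the first summand, $\hat\Phi_\mu^2\le\exp(-\Omega(d))$; for $n\lesssim d$ every inner Fourier coefficient is $\exp(-\Omega(d))$ and $\sum_T N(T,n+1)\le d^{n+1}$, so the $n$-th contribution is $\exp(-\Omega(d))/(n+1)!$. The principal obstacle is the tail $n\gtrsim d$, where the parity degree $|[d-1]\triangle T|$ can be as small as $0$ and the per-coefficient bound degrades; there one closes the argument using the factorial prefactor $1/(n+1)!$ together with the $L^2$ identity $\sum_U \hat F_{n,\mu}(U)^2 = \E[\phi^{(n)}(\mu S(y))^2]$, exploiting that $\mu S(y)$ is supported in an interval of length $O(\alpha_0\sqrt{d})$. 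Combining the two regimes gives $\GAL_{\mathrm{hid}}(\mu,d)\le\exp(-Cd)$ for $\alpha_0$ small enough and $d\ge D_0$, as claimed.
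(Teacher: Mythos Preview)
Your reduction in Step~1 and the tetrachoric/Mehler expansion in Step~2 are correct, and the character expansion of $\rho^{n+1}$ leading to the non-negative factorized sum
\[
\GAL_{\mathrm{hid}}=\hat\Phi_\mu^2+\sum_{n\ge 0}\frac{1}{(n+1)!\,d^{n+1}}\sum_T N(T,n+1)\,\hat F_{n,\mu}([d-1]\triangle T)^2
\]
is an elegant identity the paper does not use. However, Step~3 does not close as sketched, and the gap is quantitative rather than cosmetic.

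For the range you call ``$n\lesssim d$'', the claimed bound $|\hat F_{n,\mu}(U)|\le (C_G\alpha_0)^{|U|}\poly(d,n)$ via Taylor expansion of $\phi^{(n)}$ breaks once $n$ is a constant fraction of $d$. The $k$-th Taylor coefficient of $\phi^{(n)}$ at $0$ satisfies $|c_k^{(n)}|\asymp |H_{n+k}(0)|/k!\asymp\sqrt{(n+k)!}/k!$, and combined with $\E[\chi_U S^k]=k!$ at $k=|U|$, the leading contribution at $k=|U|\approx(1-c)d$, $n\approx cd$ behaves like $\sqrt{((1+ \text{const})d)!}\,\alpha_0^{|U|}d^{-|U|/2}$, which after Stirling contains a factor $d^{\Theta(d)}$. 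The individual Taylor terms are therefore super-exponentially large in $d$ and the ``geometric in $|U|$'' bound cannot hold with a $\poly(d,n)$ prefactor; the true smallness of $\hat F_{n,\mu}(U)$ comes from cancellation you are not tracking. (Your argument does go through for $n=o(d/\log d)$, but that is not the regime you need.)

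For the tail $n\gtrsim d$, the combination of the $L^2$ identity and the factorial cannot work as stated: using $|\phi^{(n)}(\mu S(y))|\le \sqrt{n!}$ (which is sharp up to constants on the whole support $[-\alpha_0\sqrt d,\alpha_0\sqrt d]$) together with $\sum_T N(T,n+1)=d^{n+1}$ yields a contribution $\le C e^{\alpha_0^2 d/2}/(n+1)$ per level, and $\sum_{n\ge cd}1/(n+1)$ diverges. Parseval does not help because the weights $N(T,n+1)$ are far from uniform (indeed $N(\emptyset,2m)\ge 2^{1-d}d^{2m}$), so you cannot replace the weighted sum by $\sum_U\hat F_{n,\mu}(U)^2$ without losing the needed $d^{n+1}$.

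The paper circumvents both issues in one stroke: before expanding, it conditions on the high-probability event $\{|\rho|\le 1/2,\ |w|\le\beta\sqrt d\}$. On that event the tetrachoric tail decays like $2^{-n}$, killing the large-$n$ contribution, while the truncated head is a polynomial in $k$ of degree $<d'$ to which Claim~\ref{cl:cancellation} applies directly. Your character expansion destroys the variable $\rho$ and with it the possibility of conditioning on $|\rho|\le 1/2$; to rescue your route you would need either a sharp pointwise bound on $\hat F_{n,\mu}(U)$ valid uniformly for $n$ up to $\Theta(d)$ (not obtainable from the Taylor series), or to revert to the pre-factorized form for the tail and condition on $|\rho|$ there---at which point you are essentially reproducing the paper's argument.
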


\begin{lemma}
\label{lem:gal-perturbed-output}
There exists some $\alpha_0,C>0$ and $D_0$ such that, for
$d\ge D_0$, and $\mu\le\frac{\alpha_0}{\sqrt{d}}$, it holds
$\GAL_{\mathrm{out}}(\mu,d)\le \exp(-Cd)$.
\end{lemma}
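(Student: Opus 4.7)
The plan is to derive a closed-form Fourier integral representation for $\E_x[\prod_i x_i \ReLU(w\cdot x)]$ on $\{\pm 1\}^d$, square it, exploit coordinate-wise independence of $w_j = g_j + \mu r_j$, and reduce the bound to a one-dimensional analytic estimate. For odd $d \ge 3$, applying the symmetry $x \mapsto -x$ together with $\prod_i(-x_i) = -\prod_i x_i$ and $\ReLU(y) - \ReLU(-y) = y$ immediately forces $\E_x[\prod_i x_i \ReLU(w\cdot x)] = \tfrac{1}{2}\E_x[\prod_i x_i\,(w\cdot x)]$, which vanishes for $d \ge 2$, so $\GAL_{\mathrm{out}}(\mu,d) = 0$; the bulk of the proof will therefore treat even $d$.

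For even $d$, the idea is to write $\ReLU(y) = (y+|y|)/2$; the linear part again vanishes, and $|y| = \tfrac{2}{\pi}\int_0^\infty (1 - \cos(ty))/t^2\,dt$ combined with $\cos(tw\cdot x) = \Re \prod_j (\cos(tw_j) + i x_j \sin(tw_j))$ isolates the top-degree Fourier contribution (only $S = [d]$ survives $\E_x$), giving
\begin{align*}
\E_x\Bigl[\prod_i x_i \ReLU(w\cdot x)\Bigr] = \frac{(-1)^{d/2+1}}{\pi}\int_0^\infty \frac{\prod_j \sin(tw_j)}{t^2}\,dt.
\end{align*}
Squaring, using independence across $j$ together with Fubini (justified by $\prod_j\sin(tw_j)/t^2 \sim t^{d-2}$ near $0$), and computing the single-coordinate expectation $\E_{g,r}[\sin(tw)\sin(sw)] = \tfrac{1}{2}[e^{-(t-s)^2/(2d)}\cos((t-s)\mu) - e^{-(t+s)^2/(2d)}\cos((t+s)\mu)]$, the rescaling $\tilde t = t/\sqrt d$, $\tilde s = s/\sqrt d$, $\tilde\mu = \mu\sqrt d \le \alpha_0$ absorbs all $d$-dependence of the integrand:
\begin{align*}
\GAL_{\mathrm{out}}(\mu,d) = \frac{1}{\pi^2 d}\int_0^\infty\!\!\int_0^\infty \frac{\tilde B(\tilde t,\tilde s;\tilde\mu)^d}{\tilde t^2 \tilde s^2}\,d\tilde t\,d\tilde s, \qquad \tilde B = \tfrac{1}{2}[f(\tilde t - \tilde s) - f(\tilde t + \tilde s)],
\end{align*}
with $f(x) = e^{-x^2/2}\cos(x\tilde\mu)$.

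Two estimates on $\tilde B$ then close the argument. A \emph{uniform pointwise bound}: since $|f(x)| \le e^{-x^2/2}$ and $\cos(x\tilde\mu) \ge 0$ for $|x| \le \pi/(2\tilde\mu)$, every point where $f(x) < 0$ satisfies $|x| \ge \pi/(2\alpha_0)$ and hence $e^{-x^2/2} \le e^{-\pi^2/(8\alpha_0^2)}$, so $\inf_x f \ge -e^{-\pi^2/(8\alpha_0^2)}$ and $|\tilde B| \le c(\alpha_0) := \tfrac{1}{2}(1 + e^{-\pi^2/(8\alpha_0^2)}) < 1$ uniformly in $\tilde\mu \le \alpha_0$. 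A \emph{boundary-vanishing bound}: the Taylor expansion $f(x) = 1 - (1+\tilde\mu^2)x^2/2 + O(x^4)$ gives $\tilde B = (1+\tilde\mu^2)\tilde t\tilde s + O(\cdot)$ near the origin, while near $\tilde s = 0$ one has $\tilde B \approx -\tilde s f'(\tilde t)$ with $\|f'\|_\infty$ bounded on $\mathbb{R}$ (and symmetrically near $\tilde t = 0$). Splitting $[0,\infty)^2$ into $[0,1]^2$, the two strips $[0,1]\times[1,\infty)$ and $[1,\infty)\times[0,1]$, and $[1,\infty)^2$, and applying the appropriate local bound in each region yields $\int\!\!\int \tilde B^2/(\tilde t^2 \tilde s^2)\,d\tilde t\,d\tilde s \le K(\alpha_0) < \infty$. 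Factoring $\tilde B^d \le c(\alpha_0)^{d-2}\tilde B^2$ then gives $\GAL_{\mathrm{out}}(\mu,d) \le Kc^{d-2}/(\pi^2 d) \le \exp(-Cd)$ with $C = \log(1/c(\alpha_0)) > 0$. The hardest part will be the four-region integral estimate itself, carried out uniformly in $\tilde\mu \in [0,\alpha_0]$ so that $K(\alpha_0)$ stays finite and $d$-independent; the remaining Fourier/Gaussian manipulations are routine.
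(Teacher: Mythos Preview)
Your approach is correct and takes a genuinely different route from the paper. The paper mirrors its proof of \Cref{lem:gal-perturbed-hidden}: it expands the squared inner expectation over two independent inputs $x,x'$, uses the symmetry $r\mapsto r\odot s$ to fix $r=1^d$, conditions on the Hamming structure of $(x,x')$, applies the Hermite-type series for $\E[\ReLU(a+Z)\ReLU(b+Z')]$ from \Cref{cor:correlated-relu}, and then invokes the alternating-binomial cancellation of Claim~\ref{cl:cancellation} to kill the low-order terms while bounding the tails. Your argument instead exploits the identity $|y|=\tfrac{2}{\pi}\int_0^\infty(1-\cos(ty))/t^2\,dt$ together with coordinate independence of $w_j=g_j+\mu r_j$ to factor the whole expectation into the $d$-th power of a single scalar $\tilde B(\tilde t,\tilde s;\tilde\mu)$ with $|\tilde B|\le c(\alpha_0)<1$; exponential decay then drops out from $|\tilde B|^d\le c^{d-2}\tilde B^2$ once the $\tilde B^2/(\tilde t^2\tilde s^2)$ integral is shown finite. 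Your route is more elementary (no Hermite machinery) and produces an explicit decay constant $C=\log(1/c(\alpha_0))$; the observation that $\GAL_{\mathrm{out}}=0$ for odd $d\ge 3$ is a nice simplification the paper does not make. Conversely, the paper's technique transfers with only cosmetic changes between $\GAL_{\mathrm{out}}$ and $\GAL_{\mathrm{hid}}$ (step function, $(d{-}1)$-product), whereas your Fourier factorization is tailored to the full product $\prod_{i=1}^d x_i$ with $\ReLU$.

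One point to tighten: on $[0,1]^2$ the local Taylor statement $\tilde B=(1+\tilde\mu^2)\tilde t\tilde s+O(\cdot)$ near the origin does not by itself control $\tilde B^2/(\tilde t^2\tilde s^2)$ over the whole square (e.g.\ at $(\tilde t,\tilde s)=(0.5,10^{-3})$). The clean fix is to use that $f$ is even, so $\tilde B$ vanishes identically on both axes, whence
\[
\tilde B(\tilde t,\tilde s)=\int_0^{\tilde t}\!\!\int_0^{\tilde s}\partial_u\partial_v\tilde B(u,v)\,du\,dv,
\qquad \partial_u\partial_v\tilde B=-\tfrac{1}{2}\bigl(f''(u-v)+f''(u+v)\bigr),
\]
giving the global bound $|\tilde B|\le\|f''\|_\infty\,\tilde t\tilde s$ with $\|f''\|_\infty$ bounded uniformly for $\tilde\mu\in[0,\alpha_0]$. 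This makes the $[0,1]^2$ contribution at most $\|f''\|_\infty^2$, and the same device (or the simpler $|\tilde B|\le\|f'\|_\infty\min(\tilde t,\tilde s)$ from the mean value theorem) handles the two strips, as you indicate.
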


\subsection{\Cref{lem:gal-perturbed-hidden} and \Cref{lem:gal-perturbed-output} imply \Cref{prop:perturbed-gal}}
Let the setting be as in Theorem~\ref{thm:perturbed-no-learning} i.e. $w_i=\frac{1}{\sqrt{d}}(r_i+g_i), g_i\sim\mathcal{N}\left(0,\sigma^2\right)$ and
$r_i\sim\Rad(1/2)$.
The gradient formulas for full parity:
\begin{align}
\nabla_{w_{ij}} \NN&=
v_i \mathds{1}(w_i\cdot x\ge 0)x_j\;,
\\
\nabla_{v_i}\NN&=\ReLU(w_i\cdot x)\;.
\end{align}
As the gradient has $P=nd+n$ coordinates, it is enough to show
the $C'\exp(-Cd)$ bound on every coordinate of the gradient. Let us start
with hidden weight coordinates.
By symmetry, we can suppose w.l.o.g.~that $j=d$. 
The alignment of hidden layer is given by:
\begin{align}
\E_{\theta}\left(\E_{x}f(x)\nabla_{w_{ij}}\NN \right)^2&=\E_{\theta}\left(\E_{x}f(x)v_i \mathds{1}\left(w_i\cdot x\ge 0\right)x_j \right)^2
\\&=\E_{v_i}[v_i^2]\E_{g_i,r_i}\left[\left(\E_{x}\prod_{\ell=1}^{d-1}x_\ell \mathds{1}\left(\frac{1}{\sqrt{d}}(r_i+g_i)\cdot x\ge 0\right) \right)^2\right]
\\&=\frac{1}{n}\E_{g_i,r_i}\left[\left(\E_{x}\prod_{\ell=1}^{d-1}x_\ell \mathds{1}\left(\frac{1}{\sqrt{d}}(r_i+g_i)\cdot x\ge 0\right) \right)^2\right]
\\&=\frac{1}{n}\E_{\tilde{g}_i,r_i}\left[\left(\E_{x}\prod_{\ell=1}^{d-1}x_\ell \mathds{1}\left((\frac{1}{\sigma\sqrt{d}}r_i+\tilde{g}_i)\cdot x\ge 0\right) \right)^2\right]\;,
\end{align}
where $\tilde{g}_i\sim\mathcal{N}\left(0,\frac{1}{d}\right)$. Therefore,
by \Cref{lem:gal-perturbed-hidden},
\begin{align}
 \E_{\theta}\left(\E_{x}f(x)\nabla_{w_{ij}}\NN \right)^2&=\frac{1}{n} \GAL_{\mathrm{hid}}\left(\frac{1}{\sigma\sqrt{d}},d\right)
 \le C'\exp(-Cd)
\end{align}
where the constant $C'$ compensates for the fact that \Cref{lem:gal-perturbed-hidden}
holds for $d$ large enough.

Similarly, for the alignments of output layer weights:
\begin{align}
\E_{\theta}\left(\E_{x}f(x)\nabla_{v_i}\NN \right)^2
&=\E_{g_i,r_i}\left[\left(\E_{x}\prod_{\ell=1}^{d}x_\ell\ReLU\left(\frac{1}{\sqrt{d}}(r_i+g_i)\cdot x\right)\right)^2\right]
\\&=\sigma^2\E_{\tilde{g}_i,r_i}\left[\left(\E_{x}\prod_{\ell=1}^{d}x_\ell\ReLU\left((\frac{1}{\sigma\sqrt{d}}r_i+\tilde{g}_i)\cdot x\right)\right)^2\right]
\\&=\sigma^2  \GAL_{\mathrm{out}}\left(\frac{1}{\sigma\sqrt{d}},d\right)
\end{align}

\subsection{Correlated Gaussian expectations}

We give a general formula for expectation of functions of correlated
Gaussians. We will then apply this formula to
the cases of step function and ReLU:

\begin{lemma}
\label{lem:correlated-gaussian-expectation}
Let $(c_k)_{k}$ and $(d_k)_k$ be two sequences of power series coefficients with
infinite radius of convergence. Let
$f(x):=\sum_{k=0}^\infty c_kH_k(x)$ and 
$F(x):=\sum_{k=0}^\infty c_kx^k$. Similarly, let
$g(x):=\sum_{k=0}^\infty d_kH_k(x)$
and $G(x):=\sum_{k=0}^\infty d_kx^k$. 
Then, for every $a,b\in\mathbb{R}$ and $\rho$-correlated
joint standard Gaussians $Z,Z'$:
\begin{align}
\E f(a+Z)g(b+Z')
=\sum_{k=0}^\infty\frac{F^{(k)}(a)G^{(k)}(b)}{k!}\rho^{k}\;,
\end{align}
where $F^{(k)}$ denotes the $k$-th derivative of $F$.
\end{lemma}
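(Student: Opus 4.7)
The plan is to expand $f$ and $g$ in the Hermite basis and exploit the orthogonality of Hermite polynomials under correlated Gaussians. The two essential ingredients are the Hermite shift identity
\begin{align*}
H_n(x+y) = \sum_{k=0}^{n} \binom{n}{k}\, y^{\,n-k} H_k(x),
\end{align*}
which drops out of the generating function $e^{tx - t^2/2}$ by factoring $e^{ty}$, and the orthogonality relation $\E H_j(Z) H_i(Z') = j!\, \delta_{ji}\, \rho^j$ for $\rho$-correlated unit variance centered Gaussians, recorded in~\eqref{eq:44}.

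Applying the shift identity termwise to $f(a+Z) = \sum_{m} c_m H_m(a+Z)$ and to $g(b+Z') = \sum_n d_n H_n(b+Z')$, multiplying, taking expectation, and invoking orthogonality collapses the resulting expansion onto the diagonal $i=j=k$:
\begin{align*}
\E f(a+Z)\, g(b+Z') = \sum_{k=0}^{\infty} k!\, \rho^k \Bigg(\sum_{m \ge k} c_m \binom{m}{k} a^{\,m-k}\Bigg) \Bigg(\sum_{n \ge k} d_n \binom{n}{k} b^{\,n-k}\Bigg).
\end{align*}
Since $F^{(k)}(a) = \sum_{m \ge k} c_m \tfrac{m!}{(m-k)!} a^{\,m-k} = k! \sum_{m \ge k} c_m \binom{m}{k} a^{\,m-k}$, and similarly for $G^{(k)}(b)$, each bracket equals $F^{(k)}(a)/k!$ and $G^{(k)}(b)/k!$, yielding the claim.

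The main obstacle is justifying the various interchanges of infinite summation with the expectation. Because $F$ and $G$ have infinite radius of convergence, for every $r > 0$ one has $|c_m|, |d_m| = O(r^m/m!)$. Together with the crude estimate $\sqrt{\E H_m(a+Z)^2} \le \sqrt{m!}\,(1+|a|)^m$, obtained by applying the triangle inequality in $L^2$ to the shift identity and using $\E H_k(Z)^2 = k!$, Cauchy--Schwarz gives $\E|H_m(a+Z)\, H_n(b+Z')| \le \sqrt{m!\,n!}\,(1+|a|)^m(1+|b|)^n$. Consequently $\sum_{m,n} |c_m d_n|\, \E|H_m(a+Z) H_n(b+Z')|$ is majorized by the product of the two convergent series $\sum_m (r(1+|a|))^m/\sqrt{m!}$ and $\sum_n (r(1+|b|))^n/\sqrt{n!}$, so Fubini applies and every rearrangement above is legitimate.
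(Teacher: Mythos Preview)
Your core argument is correct and identical to the paper's: expand via the shift identity $H_m(a+b)=\sum_{k}\binom{m}{k}a^{m-k}H_k(b)$, then collapse using $\E H_k(Z)H_{k'}(Z')=\mathds{1}_{k=k'}k!\rho^k$, and recognize the inner sums as $F^{(k)}(a)/k!$ and $G^{(k)}(b)/k!$. The paper's proof is purely formal and does not address the interchange of sums and expectation either.

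Your added Fubini justification, however, contains an error: the assertion that infinite radius of convergence for $\sum c_m x^m$ forces $|c_m|=O(r^m/m!)$ for every $r>0$ is false. Infinite radius only gives $|c_m|^{1/m}\to 0$; for instance $c_m=1/\sqrt{m!}$ defines an entire $F$, yet $|c_m|\,m!/r^m=\sqrt{m!}/r^m\to\infty$ for every fixed $r$. With such coefficients your majorant $\sum_m |c_m|\sqrt{m!}(1+|a|)^m=\sum_m(1+|a|)^m$ diverges, so the Fubini step as written does not close. Since the paper is equally silent on this point, your argument is acceptable at the paper's level of rigor; a fully rigorous statement would require either a stronger decay hypothesis on $(c_m)$ or an argument specific to the functions actually used downstream (the indicator and $\ReLU$, whose Hermite coefficients do decay fast enough).
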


\begin{proof}
Applying the Hermite polynomial identity
$H_m(a+b)=\sum_{k=0}^m\binom{m}{k}a^{m-k}H_k(b)$:
\begin{align}
f(a+z)&=\sum_{m=0}^\infty c_mH_m(a+z)
=\sum_{m=0}^\infty c_m\sum_{k=0}^m\binom{m}{k}
a^{m-k}H_k(z)\\
&=\sum_{k=0}^\infty\frac{1}{k!}H_k(z)
\sum_{m=k}^\infty c_m\left(\prod_{i=0}^{k-1}m-i\right) a^{m-k}
=\sum_{k=0}^\infty \frac{F^{(k)}(a)}{k!}H_k(z)
\end{align}
Taking expectation and using 
$\E H_k(Z)H_{k'}(Z')=\mathds{1}_{k=k'}k!\rho^k$:
\begin{align}
\E f(a+Z)g(b+Z')&=\sum_{k=0}^\infty\frac{F^{(k)}(a)G^{(k)}(b)}{k!}
\rho^k\;.\qedhere
\end{align}
\end{proof}

Applying \Cref{lem:correlated-gaussian-expectation} to the case
of the step function, we get the two dimensional case of the ``tetrachoric series`` \cite{harris1980use}.

\begin{claim}
\label{cl:hermite-expansion}
Using the notation above,
$F(x)=(f\ast \phi)(x)=\E[f(x+Z)]$.
\end{claim}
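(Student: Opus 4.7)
The plan is to reduce the claim to the single identity
\[
\E\bigl[H_k(x+Z)\bigr] \;=\; x^k \qquad \text{for every } k\ge 0 \text{ and } Z\sim\cN(0,1),
\]
which is the standard translation property of the probabilist's Hermite polynomials. Once this is in hand, one writes
\[
\E\bigl[f(x+Z)\bigr]
\;=\;\E\Bigl[\sum_{k=0}^\infty c_k H_k(x+Z)\Bigr]
\;=\;\sum_{k=0}^\infty c_k\,\E\bigl[H_k(x+Z)\bigr]
\;=\;\sum_{k=0}^\infty c_k\,x^k
\;=\;F(x),
\]
and the equality $(f\ast\phi)(x)=\E[f(x+Z)]$ is immediate from the symmetry of $\phi$ around zero (both expressions equal $\int f(x+z)\phi(z)\,dz$).

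First, I would verify the translation identity for Hermite polynomials. The cleanest route is to use the addition formula $H_k(a+b)=\sum_{j=0}^{k}\binom{k}{j}a^{k-j}H_j(b)$ (which is already invoked in the proof of \Cref{lem:correlated-gaussian-expectation}), and then take $a=x$, $b=Z$, together with the fact that $\E[H_j(Z)]=0$ for $j\ge 1$ and $\E[H_0(Z)]=1$; only the $j=0$ term survives, leaving $x^k$. Alternatively, one can appeal to the Hermite generating function $\sum_{k}\frac{t^k}{k!}H_k(x)=\exp(tx-t^2/2)$ and compute $\E\exp(t(x+Z)-t^2/2)=\exp(tx)$, then match coefficients of $t^k$.

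Next, I would justify the interchange of sum and expectation. Since $F$ has infinite radius of convergence, so does the formal series $\sum_k |c_k|\,|x+Z|^k$; moreover, by the Hermite generating function bound one has $|H_k(y)|\le C_k(1+|y|)^k$ with $k!\,|c_k|$ controlled by $F$'s Taylor coefficients, so the dominating function $\sum_k|c_k|\cdot|H_k(x+Z)|$ is integrable against the Gaussian density for every fixed $x$. Dominated convergence (or, more concretely, Fubini applied to the absolutely convergent double integral) justifies the swap.

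I do not expect any genuine obstacle: the claim is essentially the observation that convolution with $\phi$ is the ``Weierstrass transform,'' whose eigen-basis with eigenvalue $1$ on the $k$-th Hermite polynomial is precisely the monomial $x^k$. The only delicate point is the sum/expectation interchange, but the infinite radius of convergence hypothesis makes it routine. I would state the result and its brief two-line proof, citing the Hermite addition formula already used in \Cref{lem:correlated-gaussian-expectation}.
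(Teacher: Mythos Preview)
Your proof is correct but takes a different route from the paper. The paper computes the Taylor coefficients of $f\ast\phi$ at the origin: using $(f\ast\phi)^{(k)}(0)=(f\ast\phi^{(k)})(0)$ together with the defining identity $\phi^{(k)}=(-1)^k H_k\phi$, it obtains
\[
(f\ast\phi)^{(k)}(0)=\int f(y)H_k(y)\phi(y)\,dy=c_k\,k!
\]
by Hermite orthogonality, which matches $F^{(k)}(0)$, and then concludes since the power series coefficients agree. You instead push the expectation through the Hermite expansion of $f$ and invoke the translation identity $\E[H_k(x+Z)]=x^k$, obtained from the same addition formula already used in \Cref{lem:correlated-gaussian-expectation}. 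Your argument is arguably more self-contained (it reuses a tool the paper already has and works pointwise in $x$), whereas the paper's version highlights the orthogonality/Weierstrass-transform structure and avoids the sum–expectation interchange by matching derivatives at a single point.
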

\begin{proof}
From the convolution property $(f\ast g)'=f'\ast g$
and identity $\phi^{(k)}=(-1)^kH_k\phi$:
\begin{align}
(f\ast \phi)^{(k)}(0)&=(f\ast \phi^{(k)})(0)
=(-1)^k(f\ast (H_k\phi))(0)
=\int f(x)H_k(x)\phi(x)\mathrm{d}x\\
&=c_kk!=F^{(k)}(0)\;.
\end{align}
Since the power series coefficients are equal for every $k$, the claim follows.
\end{proof}

\begin{corollary}
\label{cor:correlated-step}
$\E\big[\mathds{1}_{a+Z\ge 0}\mathds{1}_{b+Z'\ge 0}\big]=
\Phi(a)\Phi(b)+\phi(a)\phi(b)\sum_{k=0}^\infty
H_k(a)H_k(b)\frac{1}{(k+1)!}\rho^{k+1}$.
\end{corollary}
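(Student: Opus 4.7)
The plan is to specialize \Cref{lem:correlated-gaussian-expectation} to the step function $f = g = \mathds{1}_{\cdot \ge 0}$ and then simplify using the standard identity $\phi^{(k)}(x) = (-1)^k H_k(x) \phi(x)$.

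The first step is to check that the step function's Hermite coefficients $(c_k)$ fall within the hypothesis of the lemma, i.e.\ that $\sum c_k x^k$ is an entire series. By \Cref{cl:hermite-expansion}, this power series is precisely $F(x) = (f \ast \phi)(x) = \E[\mathds{1}_{x + Z \ge 0}] = \Pr[Z \ge -x] = \Phi(x)$. Since $\Phi$ is the antiderivative of the entire function $\phi$, it is itself entire, so its Taylor coefficients (which coincide with the $c_k$) decay super-geometrically and the hypothesis is satisfied. The same argument applies to $g$, giving $G = \Phi$.

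Next I would compute the derivatives of $F = G = \Phi$. We have $\Phi'(x) = \phi(x)$, and for every $k \ge 0$,
$$\Phi^{(k+1)}(x) = \phi^{(k)}(x) = (-1)^k H_k(x) \phi(x),$$
by the same identity already invoked in the proof of \Cref{lem:calculation-gaussian-relu}. Plugging these into \Cref{lem:correlated-gaussian-expectation} and peeling off the $k = 0$ term yields
$$\E\big[\mathds{1}_{a+Z \ge 0}\,\mathds{1}_{b+Z' \ge 0}\big] = \Phi(a)\Phi(b) + \sum_{k=1}^{\infty} \frac{(-1)^{k-1} H_{k-1}(a) \phi(a) \cdot (-1)^{k-1} H_{k-1}(b) \phi(b)}{k!}\,\rho^k.$$
The two sign factors combine to $(-1)^{2(k-1)} = 1$, and re-indexing $j := k - 1$ produces the stated formula.

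I do not anticipate any genuine obstacle: the analytic heavy lifting is packaged inside \Cref{lem:correlated-gaussian-expectation} and \Cref{cl:hermite-expansion}, and the remaining work is just the identification $F = G = \Phi$ together with careful bookkeeping of the signs coming from $\phi^{(k)} = (-1)^k H_k \phi$. The only minor point that deserves explicit mention is the verification that $\Phi$ is entire, which legitimizes the use of the lemma for the (non-smooth) step function.
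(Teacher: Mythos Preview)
Your proposal is correct and follows essentially the same approach as the paper: identify $F=G=\Phi$ via \Cref{cl:hermite-expansion}, then apply \Cref{lem:correlated-gaussian-expectation} together with $\Phi^{(k)}=\phi^{(k-1)}=(-1)^{k-1}H_{k-1}\phi$. You simply spell out a few steps (entireness of $\Phi$, sign bookkeeping, the reindexing) that the paper leaves implicit.
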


\begin{proof}
Using Claim~\ref{cl:hermite-expansion} for $f(x)=\mathds{1}_{x\ge 0}$,
we get that
\begin{align}
F(x)=\E[\mathds{1}_{x+Z\ge 0}]=\Phi(x)\;.
\end{align}
The result follows by
applying \Cref{lem:correlated-gaussian-expectation}
and 
$\Phi^{(k)}=\phi^{(k-1)}=(-1)^{k-1}H_{k-1}\phi$.
\end{proof}

\begin{corollary}
\label{cor:correlated-relu}
Let $R(x)=x\Phi(x)+\phi(x)$. Then,
\begin{align}
\E\big[\ReLU(a+Z)\ReLU(b+Z')\big]=R(a)R(b)+
\Phi(a)\Phi(b)\rho
+\phi(a)\phi(b)\sum_{k=0}^\infty \frac{H_k(a)H_k(b)}{(k+2)!}\rho^{k+2}\;.
\end{align}
\end{corollary}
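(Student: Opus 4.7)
The plan is to mimic the proof of \Cref{cor:correlated-step} line for line, substituting $\ReLU$ for the step indicator. First I would apply Claim~\ref{cl:hermite-expansion} with $f(x)=\ReLU(x)$ to identify the generating function
\begin{align}
F(x) \;=\; \E[\ReLU(x+Z)] \;=\; (f\ast\phi)(x)\;.
\end{align}
A direct integration splitting $\ReLU(x+z) = (x+z)\mathds{1}_{z\ge-x}$ gives
\begin{align}
F(x) \;=\; x\Phi(x) + \int_{-x}^{\infty}z\,\phi(z)\,dz \;=\; x\Phi(x) + \phi(x) \;=\; R(x)\;,
\end{align}
using the antiderivative $-\phi'(z)=z\phi(z)$ and $\phi(-x)=\phi(x)$.

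Next, invoke \Cref{lem:correlated-gaussian-expectation} with $f=g=\ReLU$, so that $F=G=R$, which is smooth and entire-like on $\mathbb{R}$. This yields
\begin{align}
\E[\ReLU(a+Z)\ReLU(b+Z')] \;=\; \sum_{k=0}^{\infty}\frac{R^{(k)}(a)R^{(k)}(b)}{k!}\,\rho^{k}\;.
\end{align}
It then remains to compute the derivatives of $R$. Differentiating, the terms $x\phi(x)$ and $\phi'(x)=-x\phi(x)$ cancel, giving $R'(x)=\Phi(x)$ and $R''(x)=\phi(x)$. For $k\ge 2$ we have $R^{(k)}(x)=\phi^{(k-2)}(x)=(-1)^{k-2}H_{k-2}(x)\phi(x)$. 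Plugging these back in and splitting off the $k=0$ and $k=1$ terms produces $R(a)R(b)$ and $\Phi(a)\Phi(b)\rho$, respectively; for $k\ge 2$ the two factors of $(-1)^{k-2}$ multiply to $+1$, and a reindexing $k \mapsto k+2$ gives the stated tail $\phi(a)\phi(b)\sum_{k\ge 0}\frac{H_k(a)H_k(b)}{(k+2)!}\rho^{k+2}$.

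The main subtlety will be the applicability of \Cref{lem:correlated-gaussian-expectation}, which was stated for $f,g$ expressible as entire power series, while $\ReLU$ is only piecewise linear and its Hermite expansion converges only in $L^2(\gamma)$. The cleanest remedy is to observe that the lemma's proof hinges on the Hermite identity $H_m(a+z)=\sum_k\binom{m}{k}a^{m-k}H_k(z)$ and on the orthogonality $\E H_k(Z)H_{k'}(Z')=\mathds{1}_{k=k'}k!\rho^k$; applied termwise to the $L^2(\gamma)$-convergent Hermite expansion of $\ReLU$ (with coefficients $a_k = R^{(k)}(0)/k!$, as guaranteed by Claim~\ref{cl:hermite-expansion}), these yield the claimed identity provided one can exchange the sum and the expectation. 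Dominated convergence using $|\ReLU(a+Z)|\le|a|+|Z|$ and Cauchy--Schwarz on the tail controls the exchange, since the resulting series in $\rho^k$ converges absolutely for $|\rho|<1$ (and is finite at $|\rho|=1$ by $\E\ReLU^2(a+Z)<\infty$). Modulo this routine justification, the identity is a straightforward computation.
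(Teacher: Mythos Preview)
Your proposal is correct and follows essentially the same route as the paper: apply Claim~\ref{cl:hermite-expansion} to compute $F(x)=\E\ReLU(x+Z)=x\Phi(x)+\phi(x)=R(x)$, then invoke \Cref{lem:correlated-gaussian-expectation} and use $R'=\Phi$, $R^{(k)}=\phi^{(k-2)}=(-1)^{k-2}H_{k-2}\phi$ for $k\ge 2$ to split off the first two terms and reindex the tail. Your added paragraph on the $L^2(\gamma)$-convergence subtlety is a welcome clarification that the paper leaves implicit.
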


\begin{proof}
Applying Claim~\ref{cl:hermite-expansion} for $f=\ReLU$ we get
\begin{align}
F(x)&=\E\ReLU(x+Z)
=\int (x+y)\mathds{1}_{x+y\ge 0}\phi(y)\mathrm{d}y
=\int_{-x}^\infty (x+y)\phi(y)\mathrm{d}y\\
&=x\Phi(x)+\int_{-x}^\infty -\phi'(y)\mathrm{d}y
=x\Phi(x)+\phi(x)=R(x)\;.
\end{align}
Again the result follows by \Cref{lem:correlated-gaussian-expectation}
and observing that $R'(x)=\Phi(x)+x\phi(x)+\phi'(x)=\Phi(x)$.
\end{proof}

\subsection{Proof of \Cref{lem:gal-perturbed-hidden}}
\label{app:proof_gal-perturbed-hidden}
Let us write $\mu=\alpha/\sqrt{d}$, so that by
assumption $\alpha\le\alpha_0$. We have,
\begin{align}
\GAL_{\mathrm{hid}}&=\E_{g,x,x',r}\left[
\prod_i x_ix'_i \mathds{1}[g\cdot x+\mu r\cdot x,g\cdot x'+\mu r\cdot x'\ge 0]
\right]\\
&=\E_{x,x',r}\left[
\prod_i x_ix'_i \Pr_g[g\cdot x+\mu r\cdot x, g\cdot x'+\mu r\cdot x'\ge 0]
\right]=:\E_{x,x',r} F(x,x',r)\;.
\end{align}
As for every $x,x',r,s\in\{-1,1\}^d$ we have
$F(x,x',r)=F(x\odot s,x'\odot s,r\odot s)$
(where $\odot$ denotes the Hadamard product), it
follows
\begin{align}
\E_{x,x',r}F(x,x',r)
=\E_{x,x'}F(x,x',1^d)\;,
\end{align}
so we can rewrite
\begin{align*}
    \GAL_{\mathrm{hid}}&=\E_{x,x'}\left[
        \prod_i x_ix'_i\Pr_g[g\cdot x+\mu\cdot x,g\cdot x'+\mu\cdot x'\ge 0]
    \right] = \E_{x,x'} F(x,x',1^d)\;.
\end{align*}

Fix $x$ and assume w.l.o.g.~that
$x=(1^{d-d'},-1^{d'})$ for some $0\le d'\le d$.
Furthermore, divide $x'=(y,z)$ such that
$y\in\{-1,1\}^{d-d'}$ and $z\in\{-1,1\}^{d'}$.
Assume that $d'\ge d/2$ and fix $y$.
(If $d'<d/2$ we exchange the roles of $y$ and $z$ and proceed with
an entirely symmetric argument.)
Let $G(x,y,z)=F(x,(y,z),1^d)$.
We want to analyze
$\E_z G(x,y,z)$ so that the bound on
$\E_{x,x'}F(x,x',1^d)=\E_{x,y,z}G(x,y,z)$ will follow
by averaging. Let $\rho=\frac{1}{d}x\cdot x'$
and $k$ be the number of $-1$ entries
in $z$. Note that we have
$\rho=\frac{1\cdot y+2k-d'}{d}$.
Continuing:
\begin{align}
\left|\E_z G(x,y,z)\right|
&=\left|(-1)^{d'}\prod_{i=1}^{d-d'}y_i
\E_z\Big[(-1)^k
\Pr_g [g\cdot x+\mu\cdot x,g\cdot x'+\mu\cdot x'\ge 0] \Big]\right|\\
&=\left|
\E_k\left[(-1)^k
\Lambda_\rho(\mu(d-2d'), \mu(1\cdot y+d'-2k))\right]\right|\\
&=\left|
\E_k\left[(-1)^k
\Lambda_\rho(\mu(d-2d'), -\mu(d\rho-2\cdot y))\right]\right|\;,\label{eq:13}
\end{align}
where $\Lambda_{\rho}(a,b)=\Pr_{g,g'}[g+a,g'+b\ge 0]
=\Pr_{g,g'}[g\le a,g'\le b]$,
where $g,g'$ are two standard $\rho$-correlated
joint Gaussians.
Note that the distribution of $k$ is binomial, that is
$\Pr[k=k^*]=2^{-d'}\binom{d'}{k^*}$ for $0\le k^*\le d'$.

In particular, conditioned on $x,y$, 
the expectation in~\eqref{eq:13} can be written as
$|\E_kG(x,y,z)|=|\sum_{k=0}^{d'}(-1)^k\binom{d'}{k}
W(\rho)|$ for some function $W$ that depends only on $\rho$. 
Since $\rho$ is a linear function of $k$,
as in the Gaussian case, we will now expand
$W$ as a power series and apply
Claim~\ref{cl:cancellation}. 

Let 
\begin{align}
A:=\mu(d-2d')\,, B:=2\mu\cdot y\,, C:=-\mu d\;\,,
\text{ and } w:=B+C\rho\;.
\end{align}
Take some $\beta > 0$, where later on we will choose it to be a
small enough universal constant
(in fact $\beta=0.005$ will be enough). 
Let us define two ``bad'' events:
$\cE_1$ is $|\rho|\ge 1/2$ and $\cE_2$ is $|w|\ge\beta\sqrt{d}$
and let $\cF$ be the complement of $\cE_1\cup \cE_2$.

First, let us argue that $\Pr[\cE_1\cup \cE_2]\le \exp(-c\beta^2 d)$ for
some universal $c>0$ and $d$ large enough:
\begin{align}
    \Pr\left[\cE_1\cup \cE_2\right]\label{eq:034}&\le\Pr\left[\cE_1]+\Pr[\cE_2\right]
    \\&=\Pr\left[|\rho|\ge 1/2\right]+\Pr\left[|w|\ge\beta\sqrt{d}\right]
    \\&\le\Pr\left[\left|\sum_{i=1}^{d}x_ix'_{i}\right|\ge \frac{d}{2}\right] + \Pr\left[|B|\ge \beta\frac{\sqrt{d}}{2}\right]+\Pr\left[|C\rho|\ge \beta\frac{\sqrt{d}}{2}\right]
    \\&\le \Pr\left[\left|\sum_{i=1}^{d}x_ix'_{i}\right|\ge \frac{d}{2}\right] + \Pr\left[\left|\sum_{i=1}^{d-d'}y_i\right|\ge \beta\frac{d}{4\alpha }\right]+\Pr\left[\left|\sum_{i=1}^{d}x_ix'_{i}\right|\ge \frac{\beta d}{2\alpha}\right]
    \\&\le 2\exp(-\frac{d}{8}) + 2\exp\left(-\frac{\beta^2d^2}{32\alpha^2(d-d')}\right)+ 2\exp\left(-\frac{\beta^2d}{8\alpha^2}\right)\label{eq:29}
    \\&\le\exp(-c\beta^2 d)\;,
    \label{eq:34}
 \end{align}

where \eqref{eq:29} is by Hoeffding’s inequality. 
Using~\eqref{eq:13}, our bound becomes
\begin{align}
\GAL_{\mathrm{hid}}&\le\E_{x,y}\left|\E_k
(-1)^{k}
\Lambda_\rho(A,w)
\right|\\
&\le \Pr[\cE_1\cup \cE_2]
+\E_{x,y}\left|\E_k
(-1)^{k}\Lambda_\rho(A,w)\mathds{1}_\cF
\right|\\
&\le \exp(-c\beta^2d)
+\E_{x,y}\left|\E_k
(-1)^{k}\Lambda_\rho(A,w)\mathds{1}_\cF
\right|
\;.\label{eq:14}
\end{align}

To study the expression $\Lambda_\rho(A,w)$, let us
recall some facts about the Gaussians. We have
the following expansions:
\begin{align}
\Phi(z)&=\frac{1}{2}+\frac{1}{\sqrt{2\pi}}
\sum_{k=0}^\infty\frac{(-1)^k}{2^kk!(2k+1)}z^{2k+1}
\label{eq:17}\\
\phi(z)&=\frac{1}{\sqrt{2\pi}}\sum_{k=0}^{\infty}
\frac{(-1)^k}{2^kk!}z^{2k}\;,\label{eq:15}
\end{align}
as well as the tetrachoric series for
$\Lambda$ (convergent for every $a,b\in\mathbb{R}$ and $|\rho|<1$) \cite{harris1980use}, \cite{vasicek1998series}:
\begin{align}\label{eq:12}
\Lambda_\rho(a,b)=\Phi(a)\Phi(b)+\phi(a)\phi(b)
\sum_{k=0}^\infty H_k(a)H_k(b)\frac{1}{(k+1)!}\rho^{k+1}\;.
\end{align}

Substituting into~\eqref{eq:14},
\begin{align}
\GAL_{\mathrm{hid}}&\le
\exp(-c\beta^2 d)\\
&\qquad +\E_{x,y}\left|
\E_k(-1)^k\mathds{1}_\cF\left(
\Phi(A)\Phi(w)+\phi(A)\phi(w)\sum_{\ell=0}^\infty
H_\ell(A)H_\ell(w)\frac{\rho^{\ell+1}}{(\ell+1)!}
\right)\right|\\
&\le\exp(-c\beta^2 d)+\E_{x,y}\left|
\E_k(-1)^k\mathds{1}_\cF\Phi(A)\Phi(w)
\right|\\
&\qquad+\sum_{\ell=0}^\infty\E_{x,y}\left|
\E_k(-1)^k\mathds{1}_\cF\phi(A)\phi(w)
H_\ell(A)H_\ell(w)\frac{\rho^{\ell+1}}{(\ell+1)!}
\right|\\
&\le\exp(-c\beta^2 d)+\underbrace{\E_{x,y}\left|
\E_k(-1)^k\mathds{1}_\cF\Phi(w)
\right|}_{=:T_1}\\
&\qquad+\underbrace{\sum_{\ell=0}^\infty\E_{x,y}\left|
\E_k(-1)^k\mathds{1}_\cF\phi(w)
H_\ell(w)\frac{\rho^{\ell+1}}{\sqrt{\ell!}}
\right|}_{=:T_2}\;,
\label{eq:inal-decomposition}
\end{align}
where in the last line we used the estimate
from~\cite[proof of Theorem~2]{harris1980use},
\begin{align}
|H_\ell(A)|\le2\exp(A^2/4)\sqrt{\ell!}\;,
\label{eq:hermite-estimate}
\end{align}
which implies
\begin{align}
|\phi(A)H_\ell(A)|\le\sqrt{\ell!}\;.
\label{eq:phi-hermite-estimate}
\end{align}
For tighter estimates on Hermite polynomials,
see also~\cite{bonan1990estimates}.

It remains to show that both $T_1$ and $T_2$ are exponentially small.

\bigskip

Let us start with
$T_1$. Recall~\eqref{eq:17} and let $a_\ell=\frac{(-1)^\ell}{\sqrt{2\pi}2^\ell\ell!(2\ell+1)}$.
Using~\eqref{eq:17} and triangle inequality,

\begin{align}
T_1&\le
\E_{x,y}\left|\E_k(-1)^k\mathds{1}_\cF\left(\frac{1}{2}+
\sum_{\ell<d/10} a_\ell w^{2\ell+1}\right)\right|
+\sum_{\ell\ge d/10}|a_\ell|(\beta\sqrt{d})^{2\ell+1}
\label{eq:16}\\
&\le
\E_{x,y}\left|\E_k(-1)^k\left(\frac{1}{2}+
\sum_{\ell<d/10} a_\ell w^{2\ell+1}\right)\right|
+\E_{x,y}\left|\E_k(-1)^k\mathds{1}_{\cE_1\cup\cE_2}\left(\frac{1}{2}+
\sum_{\ell<d/10} a_\ell w^{2\ell+1}\right)\right|\label{eq:18}\\
&\qquad\qquad+\sum_{\ell\ge d/10}|a_\ell|(\beta\sqrt{d})^{2\ell+1}\\
&\le\Pr[\cE_1\cup\cE_2]\left(
\frac{1}{2}+\sum_{\ell<d/10}|a_\ell|(\alpha\sqrt{d})^{2\ell+1}
\right)+\sum_{\ell\ge d/10}|a_\ell|(\beta\sqrt{d})^{2\ell+1}\\
&\le\exp(-c\beta^2 d)\left(
\frac{1}{2}+\sum_{\ell<d/10}|a_\ell|(\alpha\sqrt{d})^{2\ell+1}
\right)+\sum_{\ell\ge d/10}|a_\ell|(\beta\sqrt{d})^{2\ell+1}
\;.\label{eq:19}
\end{align}
In the right term in~\eqref{eq:16} we used that event $\cF$
implies $|w|\le\beta\sqrt{d}$.
In~\eqref{eq:18}, we
apply Claim~\ref{cl:cancellation} to the first term. 
This is valid since $w$ is a linear function of $k$,
and since $2\ell+1<2d/10+1\le d/2\le d'$,
which holds for $d\ge 4$.
In bounding the second term in~\eqref{eq:18}, we used a uniform bound
$|w|=|\mu x'|\le\alpha\sqrt{d}$.

We will now argue that both terms in~\eqref{eq:19} are exponentially small.
Let us start with the second term:
\begin{align}
\sum_{\ell\ge d/10}|a_\ell|(\beta\sqrt{d})^{2\ell+1}
&\le
\sum_{\ell\ge d/10}\frac{(\beta\sqrt{d})^{2\ell+1}}{\ell!}
\le
\beta\sqrt{d}\sum_{\ell\ge d/10}
\exp(\ell\ln d+2\ell\ln\beta-\ell\ln\ell+\ell)\\
&\le
\beta\sqrt{d}\sum_{\ell\ge d/10}
\exp(2\ell\ln\beta+\ell\ln 10+\ell)\\
&=
\beta\sqrt{d}\sum_{\ell\ge d/10}
(10e\beta^2)^\ell
\le\beta\sqrt{d}\sum_{\ell\ge d/10}2^{-\ell}
\le 2\beta\sqrt{d}2^{-d/10}\le\exp(-c'd)\;,
\label{eq:20}
\end{align}
where the first inequality in~\eqref{eq:20} follows
if $\beta$ satisfies $10e\beta^2\le 1/2$.

Now let us move to the left-hand side term in~\eqref{eq:19}. 
It is sufficient to prove
$1/2+\sum_{\ell<d/10}|a_\ell|(\alpha\sqrt{d})^{2\ell+1}\le
\exp(c\beta^2d/2)$ and this is what we are going to show. Indeed,
\begin{align}
\sum_{\ell<d/10}|a_\ell|(\alpha\sqrt{d})^{2\ell+1}
\le\sum_{\ell<d/10}\frac{(\alpha\sqrt{d})^{2\ell+1}}{\ell!}
\le \alpha\sqrt{d}\sum_{\ell<d/10}\frac{(e\alpha\sqrt{d})^{2\ell}}{\ell^\ell}\;.
\label{eq:27}
\end{align}
Consider the function $f(\ell)=\frac{(e\alpha\sqrt{d})^{2\ell}}{\ell^\ell}$.
We check that its derivative is
$f'(\ell)=f(\ell)\big(\ln\big((e\alpha)^2d\big)-1-\ln\ell\big)$.
Therefore, $f$ achieves its maximum at $\ell^*=\alpha^2e d$ and we have
\begin{align}
\frac{(e\alpha\sqrt{d})^{2\ell}}{\ell^\ell}= f(\ell)\le f(\ell^*)=\exp(e\alpha^2d)
\label{eq:alpha-moment}
\end{align}
for every $\ell\ge 0$.
For $\alpha$ small enough,
for example if $\alpha^2e\le c\beta^2/2$, we can substitute into~\eqref{eq:27}
to get $\sum_{\ell<d/10}|a_\ell|(\alpha\sqrt{d})^{2\ell+1}
\le \alpha d\sqrt{d}\exp(e\alpha^2 d)$ and consequently
\begin{align}
1/2+\sum_{\ell<d/10}|a_\ell|(\alpha\sqrt{d})^{2\ell+1}\le
\exp(c\beta^2d/2)\;.
\label{eq:28}
\end{align}

In summary, by combining~\eqref{eq:19}, \eqref{eq:20}, and \eqref{eq:28},
the inequality $T_1\le\exp(-\Omega(d))$ is established for large enough $d$.

\bigskip

We now turn to bounding $T_2$. The idea is essentially the same
with a more complicated calculation.
Recall~\eqref{eq:15}, let $b_m:=\frac{1}{\sqrt{2\pi}}\frac{(-1)^m}{2^m m!}$
and note for later that $|b_m|\le1/m!$.
We write down
\begin{align}
    T_2&=
    \sum_{\ell=0}^\infty\left|\E_k(-1)^k\mathds{1}_\cF\phi(w)H_\ell(w)
    \frac{\rho^{\ell+1}}{\sqrt{\ell!}}\right|\\
    &\le\underbrace{\sum_{\ell<d/10}\left|\E_k(-1)^k\mathds{1}_\cF
    \left(\sum_{m<d/10}b_mw^{2m}\right)H_\ell(w)\frac{\rho^{\ell+1}}{\sqrt{\ell!}}
    \right|}_{=:T_3}\\
    &\qquad+
    \underbrace{\sum_{\ell<d/10,m\ge d/10}\frac{1}{m!}\E_{x,y,k}
    \left|\mathds{1}_\cF w^{2m}\frac{H_\ell(w)}{\sqrt{\ell!}}\right|}_{=:T_4}\\
    &\qquad+
    \underbrace{\sum_{\ell\ge d/10}\E_{x,y,k}\left|
    \mathds{1}_\cF\phi(w)H_\ell(w)\frac{\rho^{\ell+1}}{\sqrt{\ell!}}
    \right|}_{=:T_5}\;.
    \label{eq:t2-bound}
\end{align}
Let us argue in turns that each of $T_3,T_4,T_5$ is exponentially small proceeding
in the reverse order. For $T_5$, we use~\eqref{eq:phi-hermite-estimate}
and the fact that event $\cF$ implies $|\rho|\le 1/2$:
\begin{align}
T_5&\le
\sum_{\ell\ge d/10}2^{-\ell+1}\le2^{-d/10}\;.
\label{eq:t5-bound}
\end{align}
For $T_4$, we invoke~\eqref{eq:hermite-estimate} and event $\cF$ implying
$|w|\le\beta\sqrt{d}$:
\begin{align}
T_4\le2d\exp(\beta^2 d/4)\sum_{m\ge d/10}
\frac{(\beta^2 d)^m}{m!}
\le
2d\exp(\beta^2 d/4)\sum_{m\ge d/10}
(10e\beta^2)^m\;.
\end{align}
If $\beta$ is chosen such that
$(10e\beta^2)^{1/10}\le 1/2$ and $\exp(\beta^2/4)\le 1.01$, then we can continue
and obtain the desired bound
\begin{align}
T_4\le 2d(1.01)^d2^{-d}\le\exp(-c'd)\;.
\label{eq:t4-bound}
\end{align}
Finally, we turn to $T_3$:
\begin{align}
T_3&\le
\sum_{\ell<d/10}\E_{x,y}\left|\E_k(-1)^k
\left(\sum_{m<d/10}b_mw^{2m}\right)H_\ell(w)\frac{\rho^{\ell+1}}{\sqrt{\ell!}}
\right|
\label{eq:30}\\
&\qquad+\sum_{\ell<d/10}\E_{x,y}\left|\E_k(-1)^k\mathds{1}_{\cE_1\cup\cE_2}
\left(\sum_{m<d/10}b_mw^{2m}\right)H_\ell(w)\frac{\rho^{\ell+1}}{\sqrt{\ell!}}
\right|
\label{eq:31}\\
&\le 2d\Pr[\cE_1\cup\cE_2]\exp(\alpha^2d/4)\sum_{m<d/10}\frac{(\alpha^2 d)^m}{m!}
\label{eq:32}\\
&\le 2d^2\exp(-c\beta^2 d)\exp(\alpha^2 d/4)\exp(e\alpha^2 d)
\le\exp(-c'd)\;.
\label{eq:33}
\end{align}
The sum in~\eqref{eq:30} is equal zero by Claim~\ref{cl:cancellation}:
Indeed both $w$ and $\rho$ are linear functions of $k$, so the expression
inside the absolute value is a polynomial of degree at most 
$2m+\ell+(\ell+1)<4d/10+1\le d/2\le d'$. To bound the sum
in~\eqref{eq:31}, we applied $|b_m|\le 1/m!$, $|w|\le 3\alpha\sqrt{d}$,
\eqref{eq:hermite-estimate} and $|\rho|\le 1$.
Finally, to bound~\eqref{eq:32} we applied~\eqref{eq:34} and~\eqref{eq:alpha-moment}
and the final inequality follows if we choose $\alpha_0$ small enough
so that, e.g., $\alpha^2/4+e\alpha^2\le c\beta^2/2$
(recall that $\beta$ is already chosen to be a small enough absolute constant).

Summing up, \eqref{eq:t5-bound}, \eqref{eq:t4-bound} and~\eqref{eq:33}
substituted into~\eqref{eq:t2-bound} give $T_2\le\exp(-\Omega(d))$.
Together with $T_1\le\exp(-\Omega(d))$, substituted into~\eqref{eq:inal-decomposition},
we established $\GAL_{\mathrm{hid}}(\mu,d)\le\exp(-\Omega(d))$, which is what we set out to prove.
\qed

\subsection{Proof of \Cref{lem:gal-perturbed-output}}
This proof follows a similar process as the proof of \Cref{lem:gal-perturbed-hidden}, so we will skip some details and refer to Appendix~\ref{app:proof_gal-perturbed-hidden}.
Let us write $\mu=\alpha/\sqrt{d}$, so that by
assumption $\alpha\le\alpha_0$. We have,
\begin{align}
\GAL_{\mathrm{out}}&=\E_{g,x,x',r}\left[
\prod_i^d x_ix'_i \ReLU(g\cdot x+\mu r\cdot x)\ReLU(g\cdot x'+\mu r\cdot x')
\right]\\
&=\E_{x,x',r}\left[
\prod_i x_ix'_i \E_g\left[\ReLU(g\cdot x+\mu r\cdot x)\ReLU(g\cdot x'+\mu r\cdot x')\right]
\right]\\
&:=\E_{x,x',r} F(x,x',r)\;.
\end{align}
We still have for every $x,x',r,s\in\{-1,1\}^d$,
$F(x,x',r)=F(x\odot s,x'\odot s,r\odot s)$, therefore 
\begin{align*}
    \GAL_{\mathrm{out}}&= \E_{x,x'} F(x,x',1^d)\\&=\E_{x,x'}\left[
        \prod_i x_ix'_i\E_g\left[\ReLU(g\cdot x+\mu\cdot x)\ReLU(g\cdot x'+\mu\cdot x')\right]
    \right]\;.
\end{align*}
Let's recall the notations from Appendix~\ref{app:proof_gal-perturbed-hidden}: let's fix $x$ and assume w.l.o.g.~that
$x=(1^{d-d'},-1^{d'})$ for some $0\le d'\le d$, $x'=(y,z)$ such that $y\in\{-1,1\}^{d-d'}$ and $z\in\{-1,1\}^{d'}$.
Assume that $d'\ge d/2$ and fix $y$. Let $G(x,y,z)=F(x,(y,z),1^d)$. We are going to analyze $\E_z G(x,y,z)$ so that the bound on $\E_{x,x'}F(x,x',1^d)=\E_{x,y,z}G(x,y,z)$ will follow by averaging. Let $\rho=\frac{1}{d}x\cdot x'=\frac{1\cdot y+2k-d'}{d}$, where $k$ be the number of $-1$ entries in $z$.
We have, 
\begin{align}
\left|\E_z G(x,y,z)\right|
&=\left|(-1)^{d'}\prod_{i=1}^{d-d'}y_i
\E_z\Big[(-1)^k
\E_g \left[\ReLU(g\cdot x+\mu\cdot x)\ReLU(g\cdot x'+\mu\cdot x')\right] \Big]\right|\\
&=\left|
\E_k\left[(-1)^k
\Lambda_\rho(\mu(d-2d'), -\mu(d\rho-2\cdot y))\right]\right|\;,
\end{align}
where in this case $\Lambda_{\rho}(a,b)=\E_{g,g'}\left[\ReLU(g+a)\ReLU(g'+b)\right]$,
with $g,g'$ are two standard $\rho$-correlated
joint Gaussians.
Let 
\begin{align}
A:=\mu(d-2d')\,, B:=2\mu\cdot y\,, C:=-\mu d\;\,,
\text{ and } w:=B+C\rho\;.
\end{align}
 Let us define two ``bad'' events: $\cE_1$ is $|\rho|\ge 1/2$ and $\cE_2$ is $|w|\ge\beta\sqrt{d}$ (for some $\beta$ that we will set later)
and let $\cF$ be the complement of $\cE_1\cup \cE_2$.

Using the same argument as in Appendix~\ref{app:proof_gal-perturbed-hidden} (see Equations\eqref{eq:034}-\eqref{eq:34}), we can show that $\Pr[\cE_1\cup \cE_2]\le \exp(-c\beta^2 d)$ for
some universal $c>0$ and $d$ large enough. Continuing, 
\begin{align}
\GAL_{\mathrm{out}}&\le\E_{x,y}\left|\E_k
(-1)^{k}
\Lambda_\rho(A,w)
\right|\\
&\le \exp(-c\beta^2d)
+\E_{x,y}\left|\E_k
(-1)^{k}\Lambda_\rho(A,w)\mathds{1}_\cF
\right|
\;.
\end{align}
From \Cref{cor:correlated-relu}, we have

\begin{align}
\Lambda_\rho(A,w)=R(A)R(w)+
\Phi(A)\Phi(w)\rho
+\phi(A)\phi(w)\sum_{\ell=0}^\infty \frac{H_\ell(A)H_l(w)}{(\ell+2)!}\rho^{\ell+2}\;,\label{eq:48}
\end{align}
with $R(x)=x\Phi(x)+\phi(x)$. Substituting the above into~\eqref{eq:48}
\begin{align}
\GAL_{\mathrm{hid}}&\le
\exp(-c\beta^2 d)+\E_{x,y}\Bigg|
\E_k(-1)^k\mathds{1}_\cF\cdot\\
&\qquad \left(
R(A)R(w)+\Phi(A)\Phi(w)\rho
+\phi(A)\phi(w)\sum_{\ell=0}^\infty \frac{H_\ell(A)H_l(w)}{(\ell+2)!}\rho^{\ell+2}
\right)\Bigg|\\
&\le\exp(-c\beta^2 d)+\E_{x,y}\left|
\E_k(-1)^k\mathds{1}_\cF R(A)R(w)
\right|+\E_{x,y}\left|
\E_k(-1)^k\mathds{1}_\cF\Phi(A)\Phi(w)\rho
\right|\\
&\qquad+\sum_{\ell=0}^\infty\E_{x,y}\left|
\E_k(-1)^k\mathds{1}_\cF\phi(A)\phi(w)
H_\ell(A)H_\ell(w)\frac{\rho^{\ell+2}}{(\ell+2)!}
\right|\;,\\
&\le \exp(-c\beta^2 d)+\E_{x,y}|R(A)|\left|
\E_k(-1)^k\mathds{1}_\cF w\Phi(w)
\right|+\E_{x,y}|R(A)|\left|
\E_k(-1)^k\mathds{1}_\cF \phi(w)
\right|\\
&\qquad +\E_{x,y}\left|
\E_k(-1)^k\mathds{1}_\cF\Phi(w)\rho
\right|
+\sum_{\ell=0}^\infty\E_{x,y}\left|
\E_k(-1)^k\mathds{1}_\cF\phi(w)
H_\ell(w)\frac{\rho^{\ell+2}}{\sqrt{\ell!}}
\right|\;,
\\&\le \exp(-c\beta^2 d)+\mu d\underbrace{\E_{x,y}\left|
\E_k(-1)^k\mathds{1}_\cF w\Phi(w)
\right|}_{=:T_{11}}+\mu d\underbrace{\E_{x,y}\left|
\E_k(-1)^k\mathds{1}_\cF \phi(w)
\right|}_{=:T_{12}}\label{eq:inal-decomposition2}\\
&\qquad +\underbrace{\E_{x,y}\left|
\E_k(-1)^k\mathds{1}_\cF\Phi(w)\rho
\right|}_{=:T_{13}}
+\underbrace{\sum_{\ell=0}^\infty\E_{x,y}\left|
\E_k(-1)^k\mathds{1}_\cF\phi(w)
H_\ell(w)\frac{\rho^{\ell+2}}{\sqrt{\ell!}}
\right|}_{=:T_{22}}\;,
\label{eq:inal-decomposition3}
\end{align}
The bound in~\eqref{eq:inal-decomposition2} and  ~\eqref{eq:inal-decomposition3} follow because of \eqref{eq:phi-hermite-estimate} and the fact that $|R(A)|\le2|A|=2\mu(d-d')\le \mu d$.
It remains to show that $T_{11}$, $T_{12}$ ,$T_{13}$  and $T_{22}$  are exponentially small. The term $T_{22}$ differs from $T_{2}$ in \eqref{eq:inal-decomposition} by
the exponent of $\ell+2$ in $\rho$ instead of $\ell+1$. Thus for $d$ large enough, a similar proof as for $T_2$ will show that $T_{22}$ is exponentially small. The process to handle $T_{11}$, $T_{12}$ and $T_{13}$ is the same as in $T_1$. Indeed, for example:
\begin{align}
    T_{11}&\le
\E_{x,y}\left|\E_k(-1)^k\mathds{1}_\cF\left(\frac{1}{2}w+
\sum_{\ell<d/10} a_\ell w^{2\ell+2}\right)\right|
+\sum_{\ell\ge d/10}|a_\ell|(\beta\sqrt{d})^{2\ell+2}
\\
&\le\exp(-c\beta^2 d)\left(
\frac{1}{2}\alpha\sqrt{d}+\sum_{\ell<d/10}|a_\ell|(\alpha\sqrt{d})^{2\ell+2}
\right)+\sum_{\ell\ge d/10}|a_\ell|(\beta\sqrt{d})^{2\ell+2}\;.
\end{align}
Both of the above terms can be handled similarly as in Appendix~\ref{app:proof_gal-perturbed-hidden}.

\section{Experiment Details and Additional Experiments}
\label{app:additional_experiments}
\subsection{Experiment Details} All experiments were performed using the PyTorch framework~(\cite{paszke2019pytorch}) and they were executed on NVIDIA Volta V100 GPUs.

\paragraph{Architectures.} For the results presented in the main, we used mainly a 4-layer  MLP architecture trained by SGD with the hinge loss. In this Section, we also present some experiments obtained with a 2-layer MLP trained by SGD with the squared loss. 
\begin{itemize}
    \item \textbf{4-layer MLP.} This is a fully-connected architecture of $3$ hidden layers of neurons of size $512, 512, 64$, and $\ReLU$ activation. 
    \item \textbf{2-layer MLP.} This is again a fully-connected architecture, with $1$ hidden layer of $512$ neurons, and $\ReLU$ activation,
\end{itemize}

\paragraph{Initializations.} We compare few initialization schemes. In the following, ${\rm dim}$ denotes the input dimension of the layer of the corresponding parameter. All layers weights and biases are independently initialized according to:
\begin{itemize}
    \item \textbf{$\boldsymbol{\sigma}$-perturbed Rademacher:} $ \left( \Rad(1/2) + \cN(0,\sigma^2) \right) \cdot \frac{1}{\sqrt{{\rm dim} \cdot (1+\sigma^2)}}$.
    \item \textbf{Gaussian:} $\cN(0,\frac{1}{\rm dim})$.
    \item \textbf{$\boldsymbol{s}$-sparsified Rademacher:} ${\rm Ber}(1-s) \cdot \Rad( 1/2) \cdot \frac{1}{\sqrt{ {\rm dim}\cdot(1-s)}}$.
    \item \textbf{Uniform $\boldsymbol{\sigma}$-perturbed Rademacher:} $ \left( \Rad(1/2) + \Unif[- \sqrt{3}\sigma,\sqrt{3}\sigma]\right) \cdot \frac{1}{\sqrt{{\rm dim} \cdot (1+\sigma^2)}}$.
    \item \textbf{Discrete perturbed Rademacher:} $ \Unif \{-2, -1, 1 ,2\} \cdot \sqrt{\frac{2}{5\cdot  {\rm dim}}}$.
\end{itemize}

\paragraph{Training procedure.} We consider mainly the hinge loss: $ L_{\rm hinge}(\hat y,y) := \max(0,1-\hat y y)$. In some experiments we consider the $\ell_2$ loss: $L_{\ell_2}(\hat y, y) : = (\hat y-y)^2$. We train the architectures using SGD with batch size $64$. In the online setting, we sample fresh batches of samples at each iterations. In the offline setting, we sample batches from a fixed dataset and we stop training when the training loss is less than $0.01$. 

\paragraph{Hyperparameter tuning.} The primary goal of our experiments is to conduct a fair comparison of different initialization methods. Thus, we did not engage in extensive hyperparameter tuning. We tried different batch sizes and learning rates, and we did not observe significant qualitative difference. We chose to report the experiments obtained for a standard batch size of $64$ and a learning rate of $0.01$. 

\paragraph{Additional details for Figure~\ref{fig:log_INAL_hingeloss}.}
In the left plot of Figure~\ref{fig:log_INAL_hingeloss}, we are computing the quantity
    $\mathbb{E}_{w} \left[\mathbb{E}_{x,r} \left[ \frac{\partial L(w,x,f(x)) }{\partial w_d}-\frac{\partial L(w,x,r) }{\partial w_d}\right]^{2}\right],$
where $w\sim\mathcal{N}(0,\frac{1}{d} \Id_d)$ for one case and $w\sim \Rad(1/2)$ for the other case, $f$ is the full parity, $r\sim\Rad(1/2)$ and  $L(w,x,y):=\max\left(0, 1-y\ReLU(w.x)\right)$ is the hinge loss. For the approximated part we update the weights according to $\psi^{t+1} = \psi^{t} - \gamma \left( \Gamma_r(\psi^t) \right)$, with $\psi^{0} \sim \mathcal{N}(0,\frac{1}{d}\Id_d)$ and $\gamma=1$, and we calculate 
     $\mathbb{E}_{\psi^t} \left[\mathbb{E}_{x,r} \left[ \frac{\partial L(\psi^t,x,f(x)) }{\partial \psi^t_d}-\frac{\partial L(\psi^t,x,r) }{\partial \psi^t_d}\right]^{2}\right].$

\subsection{Additional Experiments}
\begin{figure}[h]
    \centering
    \includegraphics[width=0.49\linewidth]{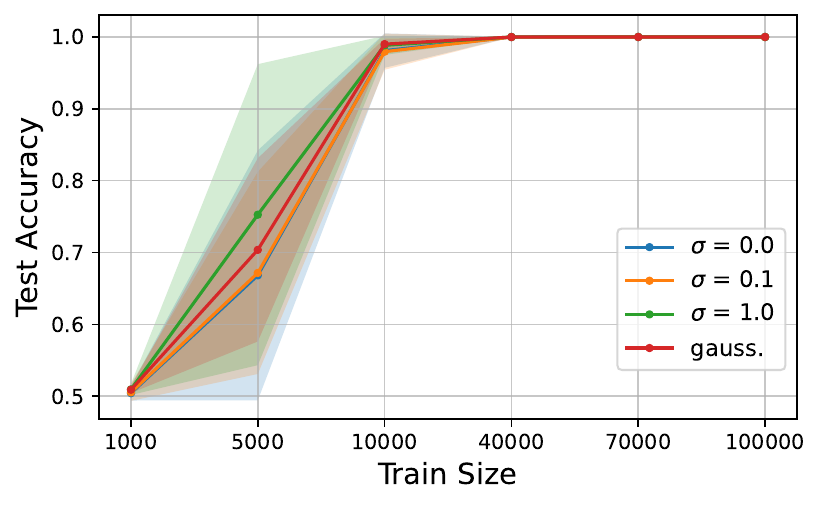}
    \includegraphics[width=0.49\linewidth]{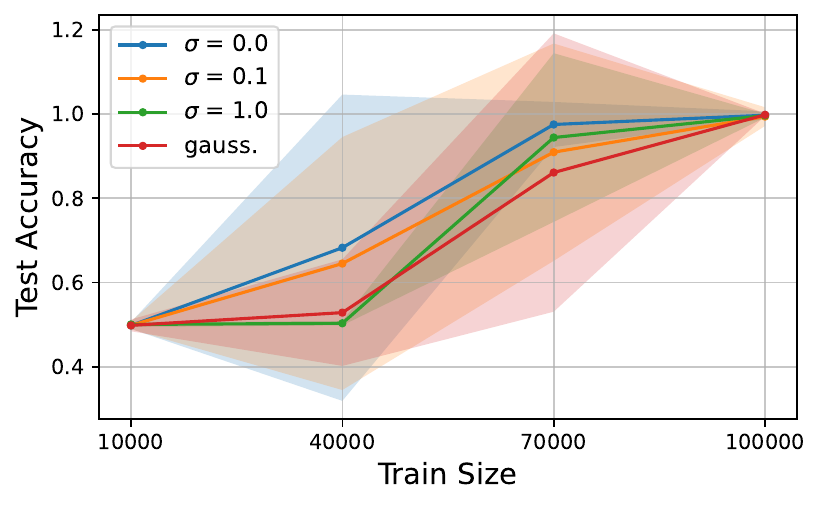}
    \caption{Learning 3-parity (left) and 5-parity (right) with Rademacher, $\sigma$-perturbed and Gaussian initializations, with SGD with the hinge loss on a 4-layer MLP, with $d=50$. We plot the test accuracy, for several training set sizes.}
    \label{fig:sparse_parities}
\end{figure}

\paragraph{Sparse Parities.} In Figure~\ref{fig:sparse_parities} we train a 4-layer MLP with Rademacher initialization and $\sigma$-perturbation ($\sigma \in {0.1, 1}$) on two sparse parities: degree 3 (left) and degree 5 (right). We observe no significant difference between these initializations, unlike the full parity case. This is because, for sparse parities, the learning bottleneck lies in recovering the support, which takes $d^{\Omega(k)}$ time for any i.i.d. initialization. Hence, the initial embedding does not play the same role as in the full parity scenario.

\begin{figure}[t]
    \centering
   \includegraphics[width=0.49\linewidth]{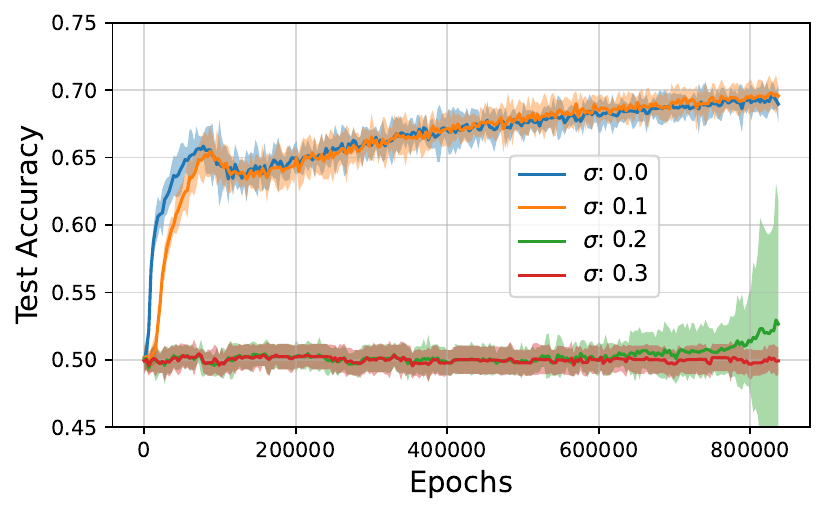}
   \includegraphics[width=0.49\linewidth]{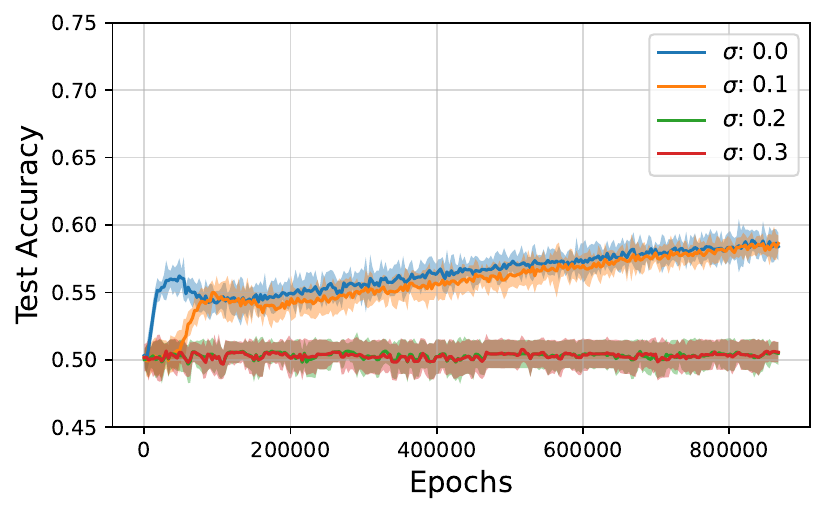}
   \includegraphics[width=0.49\linewidth]{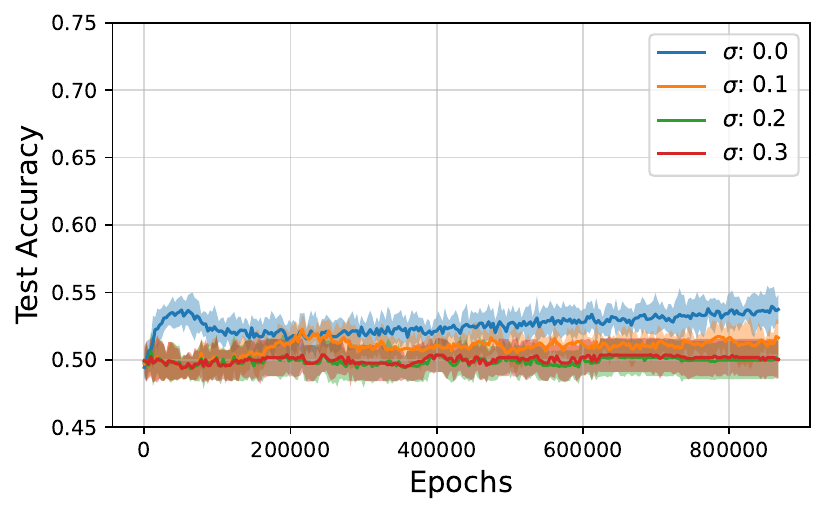}
    \caption{Learning the full parity with $\sigma$-perturbed initialization by SGD with the hinge loss on a $4$-layer MLP, with input dimension $d=100$ (top-left), $d=150$ (top-right) and $d=200$ (bottom), with online fresh samples.}
    \label{fig:larger_input_dim}
\end{figure}

\paragraph{Larger input dimension.} In Figure~\ref{fig:larger_input_dim}, we plot the test accuracy achieved by a 4-layer MLP trained with the hinge loss on the full parity task, with different $\sigma$-perturbed initializations. We report only the curves for small $\sigma$. We observe that for fixed $ \sigma$, learning becomes hard as $d$ increases.  

\begin{figure}[h]
    \centering
    \includegraphics[width=0.7\linewidth]{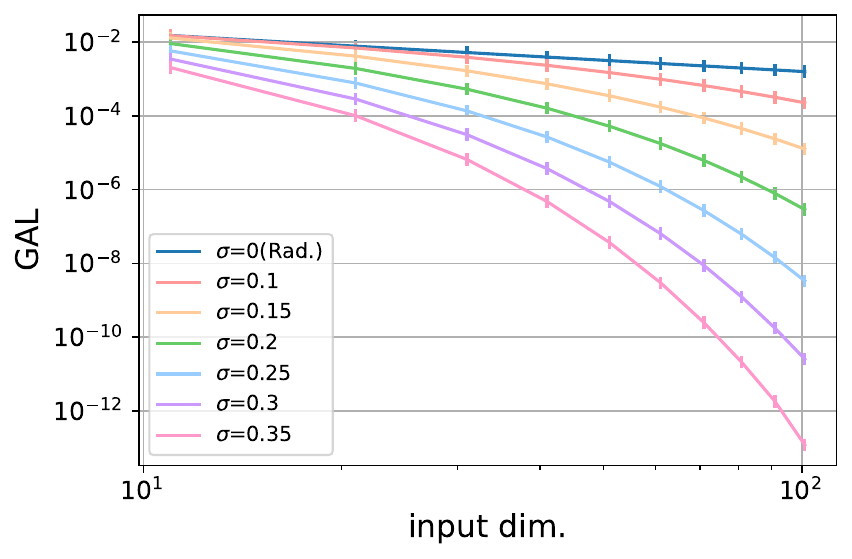}
    \caption{Computing numerically $\GAL_f$ for correlation loss for one-neuron with threshold activation. We report the estimated $\GAL_f$ for different values of the input dimension, in a log-log plot.}
    \label{fig:corr_loss_sigma}
\end{figure}

\paragraph{Alignment for correlation loss.} Figure~\ref{fig:corr_loss_sigma} completes Figure~\ref{fig:log_INAL_hingeloss} (right) in the main. Here we plot the numerically computed $\GAL_f$ for larger values of $\sigma$. We observe that the $\GAL_f$ becomes consistently smaller as $\sigma$ increases. Moreover, from the plot the decay seems super-polynomially small for all $\sigma >0$.

\paragraph{Two-layer MLP and squared loss.} In Figure~\ref{fig:2layer_full_parity_online} we train a 2-layer MLP with the squared loss and online fresh samples. In the left plot, we initialize the weights according to $\sigma$-perturbed Rademacher, for different values of $\sigma$. In the right plot, we initialize with other perturbations of the Rademacher initialization, namely a mixture of (continuous) uniform distributions of mean $+1$ and $-1$ and standard deviation $\sigma$ and $s$-sparsified Rademacher with $s=1/3$. We observe in both plots a similar behavior as for the 4-layer MLP with the hinge loss. 

\begin{figure}[h]
    \centering
    \includegraphics[width=0.49\linewidth]{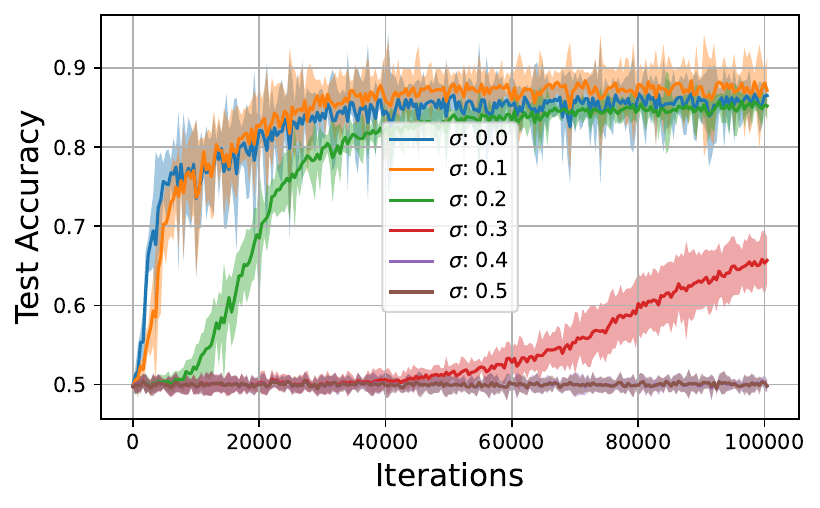}
    \includegraphics[width=0.49\linewidth]{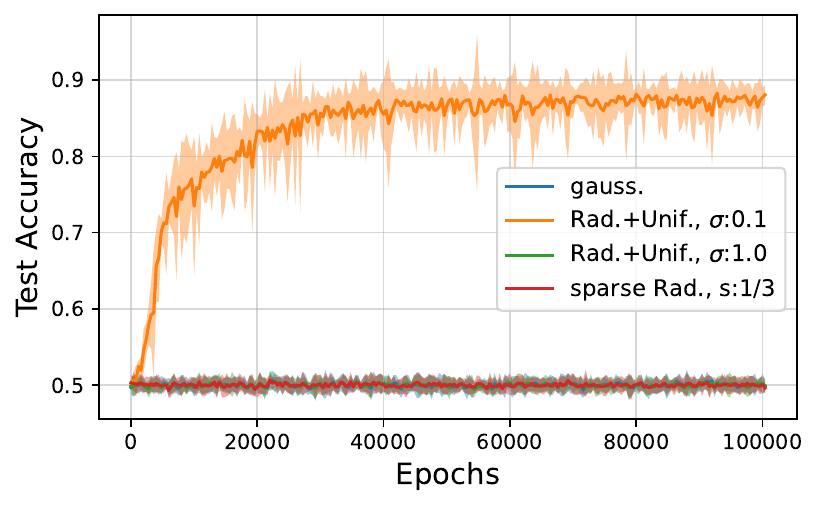}
    \caption{Learning the full parity with $\sigma$-perturbed Rademacher (left) and uniform and sparse perturbed Rademacher (right) with a 2-layer MLP, with input dimension $d=50$, trained with the squared loss, with online fresh samples.}
    \label{fig:2layer_full_parity_online}
\end{figure}

\paragraph{Effect of the Loss.} We consider the following Boolean function:
\begin{align} \label{eq:leap3fct}
    f(x): =  \frac 18 x_1x_2x_3 + \frac 3 8 x_1x_2x_4 + \frac 14  x_1x_3x_4 + \frac 14 x_2x_3x_4.
\end{align}
In (\cite{joshi2024complexity}), the authors show that this function is learned more efficiently by SGD with L1-loss than with L2-loss (see Section 7.1 therein). In Figure~\ref{fig:comparing_losses}, we observe that such difference is captured by our loss-dependent notion of Initial Gradient Alignment (GAL). This motivates future work in comparing our GAL with previously defined measures (e.g. LGA (\cite{mok2022demystifying})) in a broader setting.

\begin{figure}[h]
    \centering
    \includegraphics[width = 0.48\linewidth]{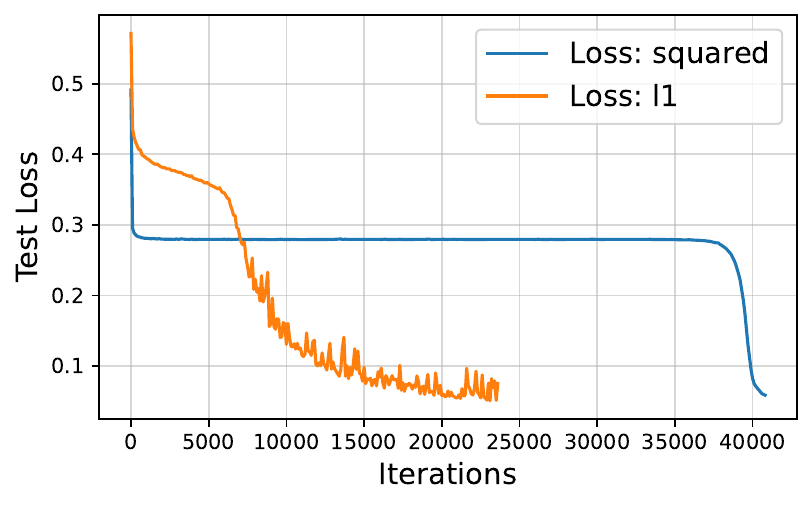}
    \includegraphics[width=0.48\linewidth]{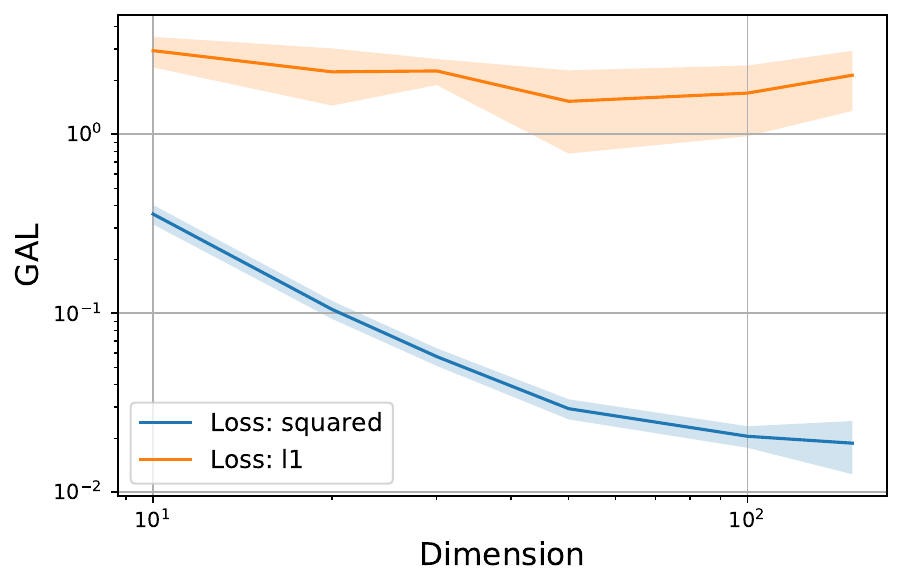}
    \caption{(left) Learning $f$ (Eq.~\eqref{eq:leap3fct}) with SGD with the L1 and L2 (squared) loss on a $4$-layer MLP, with input dimension $d=50$. (right) Initial GAL for $f$ on the same architecture, with the two losses.}
    \label{fig:comparing_losses}
\end{figure}

\paragraph{Output Layer Training with Correlation Loss.} The purpose of Figure~\ref{fig:output_layer_training} is to empirically verify our positive theoretical result from
\Cref{thm:positive-sgd}. To that purpose, we train a two-layer fully connected network with Rademacher initialization with ReLU and clipped-ReLU activation $\sigma(x)=\min(1,\max(0,x))$ on the full parity task. We train only the output layer, consistently with our positive result, with SGD with large batch size (1024) with the correlation loss and online fresh samples, until convergence of the test accuracy. We show the test accuracy achieved for different input dimensions ($d$) and different widths of the hidden layer ($w$). Consistently with our theory, with clipped-ReLU, $d^2$ hidden neurons are sufficient to achieve accuracy $1$ (left). For ReLU, we observe that $w=O(d^2)$ is not enough to achieve perfect accuracy and we believe that our theoretical bound (i.e $\Omega(d^4)$) for learning with accuracy $1$ may not be tight and that $d^3$ or $d^{3.5}$ may be sufficient (right).
\begin{figure}[h]
    \centering
    \includegraphics[width=0.49\linewidth]{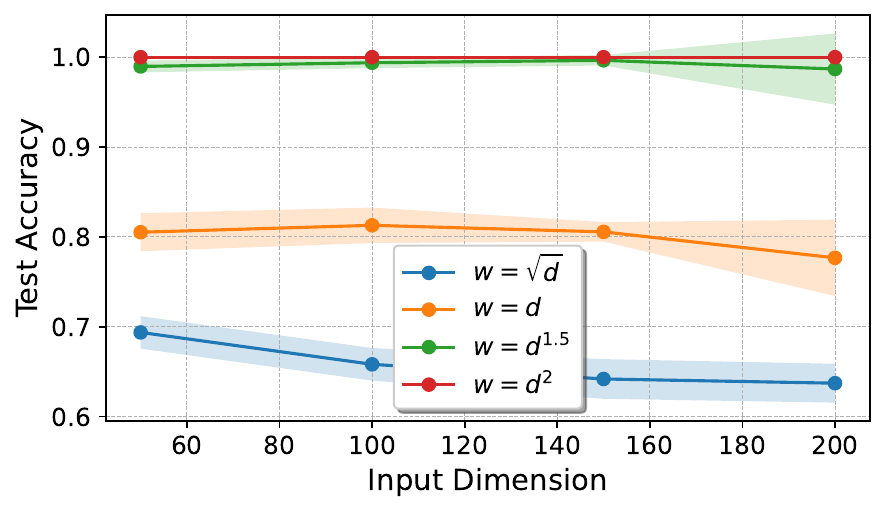}
    \includegraphics[width=0.49\linewidth]{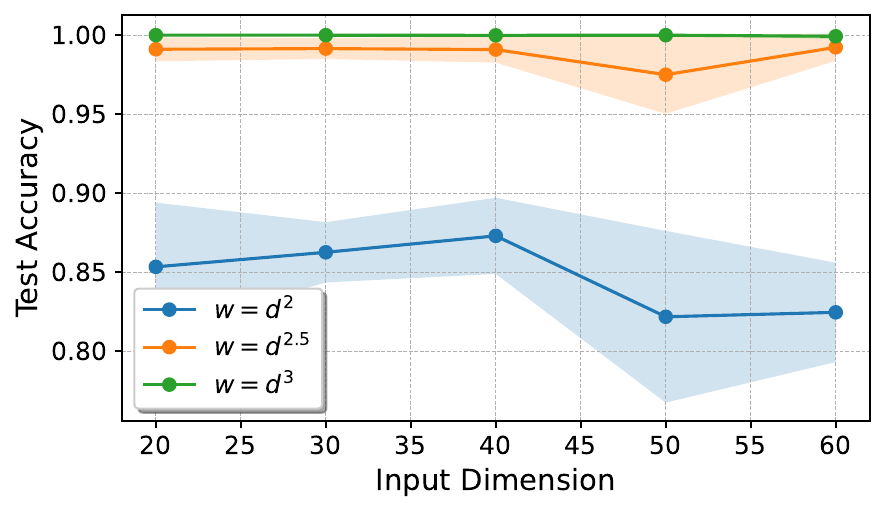}
     \caption{Learning the full parity with a 2-layer network, where only the output layer is trained by SGD with the correlation loss. We report the test accuracy achieved after training, for clipped-ReLU activation (left) and ReLU (right), for different input dimensions ($d$) and hidden layer width ($w$).}
    \label{fig:output_layer_training}
\end{figure}

\end{document}